\documentclass[review, sort&compress, numbers]{elsarticle}
\usepackage[font={footnotesize}]{caption}
\usepackage[fleqn]{amsmath}
\usepackage{graphicx}
\usepackage{lineno,hyperref}
\usepackage{latexsym}
\usepackage{amssymb}
\usepackage[center]{subfigure}
\usepackage{float}
\usepackage{color}
\usepackage{geometry}
\usepackage{diagbox}
\usepackage[section]{placeins}
\usepackage{bm}
\usepackage{algorithm}
\usepackage{algorithmicx}
\usepackage{algpseudocode}
\usepackage{lipsum}
\makeatletter
\usepackage{array}
\usepackage{multirow}
\usepackage{longtable}
\usepackage{rotating}
\geometry{left=3.0cm,right=3.0cm,top=3.0cm,bottom=3.0cm}
\newdefinition{definition}{Definition}
\modulolinenumbers[5]
\newproof{proof}{Proof}
\newtheorem{theorem}{Theorem}

\newtheorem{lemma}{Lemma}
\graphicspath{{./figs/},{./logo/}}

\makeatletter
\renewcommand{\@thesubfigure}{\hskip\subfiglabelskip}
\makeatother
\journal{}










\bibliographystyle{elsarticle-num}

\begin{document}
\begin{frontmatter}
\title{Low-rank tensor completion via tensor joint rank with logarithmic composite norm}

\author[mymainaddress]{Hongbing~Zhang\corref{mycorrespondingauthor}}
\cortext[mycorrespondingauthor]{Corresponding author}
\ead{zhb123abc@163.com}
\address[mymainaddress]{School of Mathematics and Statistics, Lanzhou University, Lanzhou, Gansu 730000, China}
\date{}

\begin{abstract}
Low-rank tensor completion (LRTC) aims to recover a complete low-rank tensor from incomplete observed tensor, attracting extensive attention in various practical applications such as image processing and computer vision. However, current methods often perform well only when there is a sufficient of observed information, and they perform poorly or may fail when the observed information is less than 5\%. In order to improve the utilization of observed information, a new method called the tensor joint rank with logarithmic composite norm (TJLC) method is proposed. This method simultaneously exploits two types of tensor low-rank structures, namely tensor Tucker rank and tubal rank, thereby enhancing the inherent correlations between known and missing elements. To address the challenge of applying two tensor ranks with significantly different directly to LRTC, a new tensor Logarithmic composite norm is further proposed. 
Subsequently, the TJLC model and algorithm for the LRTC problem are proposed. Additionally, theoretical convergence guarantees for the TJLC method are provided. Experiments on various real datasets demonstrate that the proposed method outperforms state-of-the-art methods significantly. Particularly, the proposed method achieves satisfactory recovery even when the observed information is as low as 1\%, and the recovery performance improves significantly as the observed information increases.
\end{abstract}

\begin{keyword}
Low-rank tensor completion, tensor Tucker rank, tensor tubal rank, tensor joint rank, tensor logarithmic composite norm.
\end{keyword}
\end{frontmatter}

\section{Introduction}
As the dimension of real data increases and its structure becomes more complex, tensor, as high-order generalizations of vector and matrix, has received widespread attention from researchers. Currently, tensor play an increasingly important role in various applications such as magnetic resonance image (MRI) data recovery \cite{LIAO2023109624,YANG2022108311}, multispectral/hyperspectral image (HSI/MSI) processing \cite{doi:10.1137/22M1531907,PAN2023109699,li2022nonlinear}, color image/ color video (CV) processing \cite{Lu_2019_CVPR,jiang2023robust,10145070}, face data recovery \cite{song2020robust}, background subtraction \cite{KONG2023109545,5032019109}, and signal reconstruction \cite{ding2022tensor}.

Low-rank tensor completion (LRTC) aims to recover a complete low-rank tensor from severely incomplete observed tensor, and LRTC problem is an important issue in image processing and computer vision research. The low-rank tensor completion problem expresses as follows:
\begin{eqnarray}
\min_{\mathcal{X}}~{\tt rank}(\mathcal{X})~{\tt s.t}.~\mathcal{P}_{\Omega}(\mathcal{T})=\mathcal{P}_{\Omega}(\mathcal{X}),\label{lLRTC}
\end{eqnarray}
where $\mathcal{T}\in\mathbb{R}^{\mathit{I}_{1}\times\mathit{I}_{2}\times\cdots\times\mathit{I}_{N}}$ is the $N$-th order observed tensor; $\mathcal{X}$ is the complete tensor; $\mathcal{P}_{\Omega}(\mathcal{X})$ is a projection operator that keeps the entries of $\mathcal{X}$ in $\Omega$ and sets all others to zero. A crucial issue of problem (\ref{lLRTC}) is how to define the tensor rank. Different from the definition of matrix rank, the definition of tensor rank is not unique. The mainstream definitions of tensor rank are the CANDECOMP/PARAFAC (CP) rank based on CP decomposition \cite{harshman1970foundations}, Tucker rank based on Tucker decomposition \cite{tucker1966some}, and tubal rank \cite{doi:10.1137/110837711} induced by tensor singular value decomposition (t-SVD) \cite{6416568}. Nevertheless, directly solving the CP rank of the given tensor is NP-hard \cite{139201360}. The calculation of Tucker rank requires data to be folded and unfolded, which will cause structural damage to data. The tubal rank can better maintain the data structure compared with CP rank and Tucker rank. Subsequently, Zhang et al. \cite{6909886} defined the tensor nuclear norm based on t-SVD and tensor tubal rank to solve the LRTC problem and obtained the advanced tensor recovery results. 

Currently, researchers have successfully recovered observed tensors using various tensor methods. For example, Xue et al. proposed a multilayer sparsity-based tensor decomposition method for LRTC problem based on the tensor CP rank in \cite{9460114}. Xue et al. proposed the Laplacian scale mixture based on three-layer transform method for LRTC problem based on the tensor Tucker rank in \cite{9694520}. Yu et al. proposed the tensor $\ell^{p}_{r}$ pseudo-norm method for tensor recovery problem based on the tensor tubal rank in \cite{YU2023109343}. Kong et al. \cite{24120002} proposed a Low-tubal-rank tensor completion via local and nonlocal knowledge. Although the above methods have achieved considerable effectiveness, they have not deeply considered cases with very limited observation information, namely when the known information rate is below 5\%. These methods often demonstrate their advantages only when the known information rate is relatively high. Most low-rank tensor completion methods perform poorly or even fail when the observation information is below 5\%. How to achieve better recovery results with fewer observed information has always been an important research topic in LRTC. In order to improve the utilization of observation information, that is, to enhance the intrinsic correlation between known and missing elements, a new method called the tensor joint rank with logarithmic composite norm (TJLC) method is proposed. A new tensor joint rank is proposed, which can simultaneously represent the tensor Tucker rank and tubal rank. The tensor joint rank utilizes the correlation characteristics of tensors from multiple perspectives, including the correlation information between different dimensions of the tensor and between any two dimensions, thereby enhancing the intrinsic correlation between known and missing elements. Although the tensor joint rank greatly enhances the utilization of information, the significant differences in the structures of different tensor ranks make it impossible to directly apply them to the low-rank tensor completion problem. To address this issue, a new tensor logarithmic composite norm is further proposed. It processes singular values in a multiple way, effectively balancing the differences between the tensor Tucker rank and tubal rank.

The main contributions of this paper are summarized below:


Firstly, a new rank representation called the tensor joint rank is proposed, which can simultaneously describe both the tensor Tucker rank and tensor tubal rank. By utilizing the feature information of tensors from multiple perspectives, it enhances the intrinsic correlation between known and missing elements. However, this rank representation method cannot be directly applied to the LRTC problem due to the differences in the two low-rank tensor structures. To address this issue, a new tensor logarithmic composite norm is further proposed, which performs multiple processing on singular values to balance the differences between two tensor ranks. 

Secondly, a new TJLC model is proposed for LRTC problem, ensuring that the recovered complete tensor has both tensor Tucker rank and tensor tubal rank with low-rank tensor structures. The solution algorithm for the model is provided using the alternating direction multipliers method. Furthermore, the convergence guarantee of the proposed method is proved under some assumptions.

Thirdly, Experiments on various real datasets demonstrate that the TJLC method significantly outperforms the state-of-the-art methods. Furthermore, experimental results with extremely limited observation information further validate the strong utilization of observation information by the proposed method. Particularly, the proposed method achieves good recovery even when observation information is as low as 1\%, and the recovery effectiveness significantly improves with the increase of observation information.

The rest of this article is organized as follows. Some preliminary knowledge and background of the tensor are given in Section 2. The tensor joint rank and the tensor logarithmic composite norm and corresponding theorem are presented in Section 3. The proposed model and algorithm are shown in Section 4. Then, in section 5, we study the convergence of the proposed method. In Section 6, the results of extensive experiments are presented. Conclusion and future work are drawn in Section 7.
\section{Preliminaries}
In this section, we list some basic notations and briefly introduce some definitions used throughout the paper. Generally, a lowercase letter and an uppercase letter denote a vector $x$ and a matrix $X$, respectively. An $N$th-order tensor is denoted by a calligraphic upper case letter $\mathcal{X}\in \mathbb{R}^{\mathit{I}_{1}\times\mathit{I}_{2}\times\cdots\times\mathit{I}_{N}}$ and $\mathit{x}_{i_{1},i_{2},\cdots,i_{N}}$ is its $(i_{1},i_{2},\cdots,i_{N})$-th element. The Frobenius norm of a tensor is defined as $\|\mathcal{X}\|_{F}=(\sum_{i_{1},i_{2},\cdots,i_{N}}\mathit{x}_{i_{1},i_{2},\cdots,i_{N}}^{2})^{1/2}$. For a third-order tensor $\mathcal{X}\in\mathbb{R}^{\mathit{I}_{1}\times\mathit{I}_{2}\times\mathit{I}_{3}}$, we use $\bar{\mathcal{X}}$ to denote the tensor generated by performing discrete Fourier transform (DFT) along each tube of $\mathcal{X}$, i.e., $\bar{\mathcal{X}}={\tt fft}(\mathcal{X},[~],3)$. The inverse DFT is computed by command ifft satisfying $\mathcal{X}={\tt ifft}(\bar{\mathcal{X}},[~],3)$), where fft and ifft are MATLAB
commands. More often, the frontal slice $\mathcal{X}(:,:,i)$ is denoted compactly as $\mathcal{X}^{(i)}$.
\begin{definition}[Tensor mode-$n$ unfolding and folding \cite{12345152009}]
	The mode-$n$ unfolding of a tensor $\mathcal{X}\in \mathbb{R}^{\mathit{I}_{1}\times\mathit{I}_{2}\times\cdots\times\mathit{I}_{N}}$ is denoted as a matrix $\mathcal{X}_{(n)}\in\mathbb{R}^{\mathit{I}_{n}\times\mathit{I}_{1}\cdots\mathit{I}_{n-1}\mathit{I}_{n+1}\cdots\mathit{I}_{N}} $. Tensor element $(i_{1}, i_{2},...,i_{N} )$ maps to matrix element $(i_{n}, j)$, where
	\begin{eqnarray*}
		j=1+\sum_{k=1,k\neq n}^{N}(i_{k}-1)\mathit{J}_{k}\quad \text{with}\quad \mathit{J}_{k}=\prod_{m=1,m\neq n}^{k-1}\mathit{I}_{m}. 
	\end{eqnarray*}
	The mode-$n$ unfolding operator and its inverse are respectively denoted as ${\tt unfold}_{n}$ and ${\tt fold}_{n}$, and they satisfy $\mathcal{X}={\tt fold}_{n}(\mathcal{X}_{(n)})={\tt fold}_{n}({\tt unfold}_{n}(\mathcal{X}))$.
\end{definition}
\begin{definition}[Tensor Tucker rank \cite{12345152009}]
	The tensor Tucker rank is defined as a vector, the $i$-th element of which is the rank of the mode-$i$ unfolding matrix, i.e., 
	\begin{eqnarray*}
		{\tt rank}_{tc}:=({\tt rank}(\mathcal{X}_{(1)}), {\tt rank}(\mathcal{X}_{(2)}),\cdots, {\tt rank}(\mathcal{X}_{(N)})),
	\end{eqnarray*}
	where $\mathcal{X}$ is an $N$th-order tensor and $\mathcal{X}_{(i)}~(i=1,2,\dots,N)$ is the mode-$i$ unfolding of $\mathcal{X}$.
\end{definition}
For a third-order tensor $\mathcal{X}\in\mathbb{R}^{\mathit{I}_{1}\times\mathit{I}_{2}\times\mathit{I}_{3}}$, the block circulation operation is defined as
\begin{eqnarray}
	{\tt bcirc}(\mathcal{X}):=
	\begin{pmatrix}
		\mathcal{X}^{(1)}& \mathcal{X}^{(\mathit{I}_{3})}&\dots& \mathcal{X}^{(2)}&\\
		\mathcal{X}^{(2)}& \mathcal{X}^{(1)}&\dots& \mathcal{X}^{(3)}&\\
		\vdots&\vdots&\ddots&\vdots&\\
		\mathcal{X}^{(\mathit{I}_{3})}& \mathcal{X}^{(\mathit{I}_{3}-1)}&\dots& \mathcal{X}^{(1)}&
	\end{pmatrix}\in\mathbb{R}^{\mathit{I}_{1}\mathit{I}_{3}\times\mathit{I}_{2}\mathit{I}_{3}}.\nonumber
\end{eqnarray}
The block diagonalization operation and its inverse operation are respectively determined by 
\begin{eqnarray}
	{\tt bdiag}(\mathcal{X}):=\begin{pmatrix}
		\mathcal{X}^{(1)} & & &\\
		& \mathcal{X}^{(2)} & &\\
		& & \ddots &\\
		& & & \mathcal{X}^{(\mathit{I}_{3})}
	\end{pmatrix} \in\mathbb{R}^{\mathit{I}_{1}\mathit{I}_{3}\times\mathit{I}_{2}\mathit{I}_{3}},\quad {\tt bdfold}({\tt bdiag}(\mathcal{X})):=\mathcal{X}.\nonumber
\end{eqnarray}
The block vectorization operation and its inverse operation are respectively defined as 
\begin{eqnarray}
	{\tt bvec}(\mathcal{X}):=\begin{pmatrix}
		\mathcal{X}^{(1)}\\\mathcal{X}^{(2)}\\\vdots\\\mathcal{X}^{(\mathit{I}_{3})}
	\end{pmatrix}\in\mathbb{R}^{\mathit{I}_{1}\mathit{I}_{3}\times\mathit{I}_{2}},\quad {\tt bvfold}({\tt bvec}(\mathcal{X})):=\mathcal{X}.\nonumber
\end{eqnarray}
\begin{definition}[t-product \cite{6416568}]
	Let $\mathcal{A}\in\mathbb{R}^{\mathit{I}_{1}\times\mathit{I}_{2}\times\mathit{I}_{3}}$ and $\mathcal{B}\in\mathbb{R}^{\mathit{I}_{2}\times\mathit{J}\times\mathit{I}_{3}}$. Then the t-product $\mathcal{A}\ast\mathcal{B}$ is defined to be a tensor of size $\mathit{I}_{1}\times\mathit{J}\times\mathit{I}_{3}$,
	\begin{eqnarray}
		\mathcal{A}\ast\mathcal{B}:={\tt bvfold}({\tt bcirc}(\mathcal{A}){\tt bvec}(\mathcal{B})).\nonumber
	\end{eqnarray}
\end{definition}
	The \textbf{tensor conjugate transpose} of a tensor $\mathcal{A}\in\mathbb{C}^{\mathit{I}_{1}\times\mathit{I}_{2}\times\mathit{I}_{3}}$ is the tensor $\mathcal{A}^{H}\in\mathbb{C}^{\mathit{I}_{2}\times\mathit{I}_{1}\times\mathit{I}_{3}}$ obtained by conjugate transposing each of the frontal slices and then reversing the order of transposed frontal slices 2 through $\mathit{I}_{3}$.
	The \textbf{identity tensor} $\mathcal{I}\in\mathbb{R}^{\mathit{I}_{1}\times\mathit{I}_{1}\times\mathit{I}_{3}}$ is the tensor whose first frontal slice is the $\mathit{I}_{1}\times\mathit{I}_{1}$ identity matrix, and whose other frontal slices are all zeros.
It is clear that ${\tt bcirc}(\mathcal{I})$ is the  $\mathit{I}_{1}\mathit{I}_{3}\times\mathit{I}_{1}\mathit{I}_{3}$ identity matrix. So it is easy to get $\mathcal{A}\ast\mathcal{I}=\mathcal{A}$ and $\mathcal{I}\ast\mathcal{A}=\mathcal{A}$. 
	A tensor $\mathcal{Q}\in\mathbb{R}^{\mathit{I}_{1}\times\mathit{I}_{1}\times\mathit{I}_{3}}$ is \textbf{orthogonal tensor} if it satisfies
		$\mathcal{Q}\ast\mathcal{Q}^{H}=\mathcal{Q}^{H}\ast\mathcal{Q}=\mathcal{I}.$
	A tensor is called \textbf{f-diagonal} if each of its frontal slices is a diagonal matrix.
\begin{theorem}[t-SVD \cite{8606166}]
	Let $\mathcal{X}\in\mathbb{R}^{\mathit{I}_{1}\times\mathit{I}_{2}\times\mathit{I}_{3}}$ be a third-order tensor, then it can be factored as 
	\begin{eqnarray}
		\mathcal{X}=\mathcal{U}\ast\mathcal{S}\ast\mathcal{V}^{H},\nonumber
	\end{eqnarray}
	where $\mathcal{U}\in\mathbb{R}^{\mathit{I}_{1}\times\mathit{I}_{1}\times\mathit{I}_{3}}$ and $\mathcal{V}\in\mathbb{R}^{\mathit{I}_{2}\times\mathit{I}_{2}\times\mathit{I}_{3}}$ are orthogonal tensors, and $\mathcal{S}\in\mathbb{R}^{\mathit{I}_{1}\times\mathit{I}_{2}\times\mathit{I}_{3}}$ is an f-diagonal tensor. 

\end{theorem}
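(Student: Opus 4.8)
The plan is to reduce the tensor factorization to a collection of ordinary matrix singular value decompositions performed in the Fourier domain, exploiting the fact that block-circulant matrices are block-diagonalized by the discrete Fourier transform.

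First I would recall the key algebraic fact underlying the t-product: if $F_{I_3}$ denotes the $I_3\times I_3$ DFT matrix and $\otimes$ the Kronecker product, then
\begin{eqnarray*}
(F_{I_3}\otimes \mathbf{I}_{I_1})\,{\tt bcirc}(\mathcal{X})\,(F_{I_3}^{-1}\otimes \mathbf{I}_{I_2})={\tt bdiag}(\bar{\mathcal{X}}),
\end{eqnarray*}
so that ${\tt bcirc}(\mathcal{X})$ is unitarily similar to the block-diagonal matrix whose $i$-th block is the frontal slice $\bar{\mathcal{X}}^{(i)}$ of the Fourier-domain tensor $\bar{\mathcal{X}}={\tt fft}(\mathcal{X},[~],3)$. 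A direct consequence is that the t-product $\mathcal{A}\ast\mathcal{B}$ corresponds, slice by slice in the Fourier domain, to the ordinary matrix product $\bar{\mathcal{A}}^{(i)}\bar{\mathcal{B}}^{(i)}$, and that the conjugate transpose $\mathcal{A}^{H}$ corresponds to $(\bar{\mathcal{A}}^{(i)})^{H}$ slice-wise.

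Next, for each $i=1,\dots,I_3$ I would take the ordinary matrix SVD of the (in general complex) frontal slice $\bar{\mathcal{X}}^{(i)}=\bar{U}^{(i)}\bar{S}^{(i)}(\bar{V}^{(i)})^{H}$, where $\bar{U}^{(i)}$ and $\bar{V}^{(i)}$ are unitary and $\bar{S}^{(i)}$ is a nonnegative diagonal matrix. Stacking these slices defines Fourier-domain tensors $\bar{\mathcal{U}}$, $\bar{\mathcal{S}}$, $\bar{\mathcal{V}}$, and applying the inverse DFT along the third mode yields candidate tensors $\mathcal{U}={\tt ifft}(\bar{\mathcal{U}},[~],3)$, $\mathcal{S}$, and $\mathcal{V}$. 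By the slice-wise correspondence above, $\mathcal{U}\ast\mathcal{S}\ast\mathcal{V}^{H}$ has Fourier slices $\bar{U}^{(i)}\bar{S}^{(i)}(\bar{V}^{(i)})^{H}=\bar{\mathcal{X}}^{(i)}$, whence $\mathcal{U}\ast\mathcal{S}\ast\mathcal{V}^{H}=\mathcal{X}$. The orthogonality of $\mathcal{U}$ and $\mathcal{V}$ follows because each $\bar{U}^{(i)},\bar{V}^{(i)}$ is unitary, which translates (again slice-wise) into $\mathcal{U}\ast\mathcal{U}^{H}=\mathcal{U}^{H}\ast\mathcal{U}=\mathcal{I}$; and $\mathcal{S}$ is f-diagonal because each $\bar{S}^{(i)}$ is diagonal and the inverse DFT acts entrywise along tubes, preserving the diagonal pattern of every frontal slice.

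The main obstacle is ensuring that the reconstructed tensors $\mathcal{U},\mathcal{S},\mathcal{V}$ are \emph{real}, as the statement requires, rather than merely complex. This rests on the conjugate symmetry of the DFT of a real tensor, namely $\bar{\mathcal{X}}^{(i)}=\overline{\bar{\mathcal{X}}^{(I_3-i+2)}}$ for $i=2,\dots,I_3$, with $\bar{\mathcal{X}}^{(1)}$ real. To exploit it, I would choose the SVDs of conjugate-paired slices consistently: take an arbitrary SVD for the slice indexed by $i$, then set $\bar{U}^{(I_3-i+2)}=\overline{\bar{U}^{(i)}}$, $\bar{S}^{(I_3-i+2)}=\bar{S}^{(i)}$, and $\bar{V}^{(I_3-i+2)}=\overline{\bar{V}^{(i)}}$, handling the real slice $\bar{\mathcal{X}}^{(1)}$ (and $\bar{\mathcal{X}}^{(I_3/2+1)}$ when $I_3$ is even) by a real SVD. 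With this symmetric choice, $\bar{\mathcal{U}},\bar{\mathcal{S}},\bar{\mathcal{V}}$ inherit the conjugate-symmetry pattern characteristic of a DFT of a real tensor, so their inverse DFTs are genuinely real. Verifying that this choice is always available and compatible with all three correspondences (product, transpose, diagonality) is the technical heart of the argument; everything else is a routine translation between the original and Fourier domains.
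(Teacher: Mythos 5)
Your proposal is correct. The paper itself gives no proof of this theorem --- it is imported verbatim from the cited reference --- and your argument is precisely the standard constructive proof given there (and originally in Kilmer--Martin): block-diagonalize ${\tt bcirc}(\mathcal{X})$ via the DFT, take an ordinary SVD of each Fourier-domain frontal slice $\bar{\mathcal{X}}^{(i)}$, enforce the conjugate-symmetric choice of factors across paired slices so that the inverse DFT returns real tensors, and translate unitarity and diagonality back slice-wise. Nothing is missing; the conjugate-pairing step you flag as the technical heart is exactly the point the cited source also has to handle to get real-valued $\mathcal{U},\mathcal{S},\mathcal{V}$.
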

\begin{definition}[Tensor tubal-rank \cite{doi:10.1137/110837711}]
The tubal-rank of a tensor $\mathcal{X}\in\mathbb{R}^{\mathit{I}_{1}\times\mathit{I}_{2}\times\mathit{I}_{3}}$, denoted as ${\tt rank}_{t}(\mathcal{X})$, is defined to be the number of non-zero singular tubes of $\mathcal{S}$, where $\mathcal{S}$ comes from the t-SVD of $\mathcal{X}:\mathcal{Y}=\mathcal{U}\ast\mathcal{S}\ast\mathcal{V}^{H}$. That is 
\begin{eqnarray*}
	{\tt rank}_{t}(\mathcal{X})=\#\{i:\mathcal{S}(i,:,:)\neq0\}.
\end{eqnarray*}
\end{definition}
When $\mathit{I}_{3}=1$, tensor tubal-rank is equivalent to the matrix rank.

\section{Tensor joint rank and logarithmic composite norm}
In this section, we propose the definitions of the tensor joint rank. Then we further propose the tensor logarithmic composite norm as a guarantee for applying tensor joint rank to the LRTC problem. To facilitate the representation of the tensor joint rank, we first provide a new comprehensive definition of tensor mode-$l_{1}l_2$ unfolding and folding.
\begin{definition}[Tensor mode-$l_{1}l_2$ unfolding and folding]
	For an $N$th-order tensor $\mathcal{X}$, its mode-$l_1 l_2$ unfolding is denoted by
	\begin{eqnarray}
			\mathcal{X}_{(l_1 l_2)}={\tt unfold}(\mathcal{X},l_1,l_2)=\left\{\begin{array}{l}
			{\tt unfold}(\mathcal{X},l_1)\in\mathbb{R}^{\mathit{I}_{l_1}\times\mathit{I}_{1}\cdots\mathit{I}_{l_1-1}\mathit{I}_{l_1+1}\cdots\mathit{I}_{N}},\:l_1=l_2,
			\\{\tt unfold}(\mathcal{X},l_1,l_2)\in\mathbb{R}^{\mathit{I}_{l_{1}}\times\mathit{I}_{l_{2}}\times\prod_{s\neq l_{1},l_{2}}\mathit{I}_{s}},  l_1\neq l_2.
		\end{array}
		\right.
	\end{eqnarray}
When $l_1=l_2$, $\mathcal{X}_{(l_1 l_2)}$ is a matrix. Mathematically, tensor element $(i_{1}, i_{2},...,i_{N} )$ maps to matrix element $(i_{l_1}, j)$, where $	j=1+\sum_{k=1,k\neq l_1}^{N}(i_{k}-1)\mathit{J}_{k}~ \text{with}~ \mathit{J}_{k}=\prod_{m=1,m\neq n}^{k-1}\mathit{I}_{m}$. When $l_1\neq l_2$, $\mathcal{X}_{(l_1 l_2)}$ is a third-order tensor. Mathematically, the  $(i_{1}, i_{2},...,i_{N} )$-th element of $\mathcal{X}$ maps to the $(i_{l_{1}},i_{l_{2}},j)$-th element of $\mathcal{X}_{(l_{1}l_{2})}$, where $j=1+\sum_{s=1,s\neq l_{1},s\neq l_{2}}^{N}(i_{s}-1)\mathit{J}_{s}~ \text{with}~ \mathit{J}_{s}=\prod_{m=1,m\neq l_{1},m\neq l_{2}}^{s-1}\mathit{I}_{m}. $

The tensor mode-$l_{1}l_{2}$ unfolding operator and its inverse operation are respectively denoted as $\mathcal{X}_{(l_{1}l_{2})}:={\tt unfold}(\mathcal{X},l_{1},l_{2})$ and $\mathcal{X}:={\tt fold}(\mathcal{X}_{(l_{1}l_{2})},l_{1},l_{2})$.
\end{definition}
Next, based on tensor mode-$l_{1}l_2$ unfolding and folding, we propose the definition of tensor joint rank.
\begin{definition}[Tensor joint rank]
	The tensor joint rank is a vector consisting of the tubal ranks of all mode-$l_1l_2$ ($l_{1}\leqslant l_{2}$) unfolding tensors, i.e.,
	\begin{eqnarray*}
		&&{\tt rank}_{TJ}(\mathcal{X}):=({\tt rank}_{t}(\mathcal{X}_{(11)}),{\tt rank}_{t}(\mathcal{X}_{(12)}),\cdots, {\tt rank}_{t}(\mathcal{X}_{(1N)}),\\&&\qquad\qquad\qquad~~{\tt rank}_{t}(\mathcal{X}_{(22)}),\cdots,{\tt rank}_{t}(\mathcal{X}_{(N-1N)}),{\tt rank}_{t}(\mathcal{X}_{(NN)}))\in\mathbb{R}^{N(N+1)/2}.
	\end{eqnarray*}
\end{definition}

It is not difficult to see that tensor joint rank possesses two kinds of low-rank tensor structures, namely, tensor Tucker rank and tensor tubal rank. Therefore, the proposed tensor joint rank combines the advantages of tensor Tucker rank and tensor tubal rank. On one hand, the mode-$l_1l_2$ unfolding of the tensor ($l_1=l_2$) can capture the global correlations in multidimensional data. On the other hand, the mode-$l_1l_2$ unfolding of the tensor ($l_1\neq l_2$) describes the pairwise correlations between each pair of dimensions in the multidimensional data. In summary, the tensor joint rank can more comprehensively utilize the correlations of multiple low-rank structures between known and missing elements, thereby ensuring high-quality tensor completion. However, considering the significant structural differences between the two tensor ranks, directly applying tensor joint rank to LRTC problem may weaken the recovery performance. Next, to solve the above problem, a new tensor logarithmic composite norm is further proposed.
\begin{definition}[Tensor logarithmic composite norm]
	The tensor logarithmic composite norm of $\mathcal{X}\in\mathbb{R}^{\mathit{I}_{1}\times\mathit{I}_{2}\times\mathit{I}_{3}}$, denoted by $\|\mathcal{X}\|_{LC}$, is defined as follows:
		\begin{eqnarray}
		&&\|\mathcal{X}\|_{LC}=\frac{1}{\mathit{I}_{3}}\sum_{i=1}^{\mathit{I}_{3}}\sum_{j=1}^{R}\omega_{j,i}\log(g(\sigma_{j}(\bar{\mathcal{X}}^{(i)}))+1) ,\label{LC}		
			\end{eqnarray}
		where
				\begin{eqnarray*}
	&&g(\sigma_{j}(\bar{\mathcal{X}}^{(i)}))=\left\{\begin{array}{l}
			 \nu\sigma_{j}(\bar{\mathcal{X}}^{(i)})-\sigma_{j}(\bar{\mathcal{X}}^{(i)})^{2}/(2\vartheta),\:\sigma_{j}(\bar{\mathcal{X}}^{(i)})\leqslant \nu\vartheta,
			\\ \nu^{2}\vartheta/2 ,\qquad\qquad\qquad\qquad\quad\,\:\sigma_{j}(\bar{\mathcal{X}}^{(i)})> \nu\vartheta,
		\end{array}
		\right.
		\\&&\omega_{j,i}=1/(c+e^{-w_{R-j+1,i}}),
	\end{eqnarray*}
and $\nu>0,\vartheta>0,R=\min\{\mathit{I}_{1},\mathit{I}_{2}\}$; $\sigma_{j}(\bar{\mathcal{X}}^{(i)})$ is $j$-th singular value of  $\bar{\mathcal{X}}^{(i)}$.
\end{definition}

In tensor recovery problem, larger singular values typically correspond to significant information such as contours, sharp edges, and smooth regions, while smaller singular values are mainly composed of noise or outliers \cite{Gu_2014_CVPR}. The tensor logarithmic composite norm includes multiple singular value processing. The first layer $g(\cdot)$ of processing mainly aims to separate large singular values and protect them during the solution process. The second layer involves the logarithmic penalty, which enhances the penalty for smaller singular values. The final layer is a weighting scheme applied to the overall singular values, increasing sensitivity to different singular values of the same tensor. When approximating the tensor joint rank using the tensor logarithmic composite norm, it effectively brings each rank component to the same level, which solves the problem of significant differences in different low-rank tensor structures. Furthermore, to facilitate the solution of the proposed method in LRTC problem, a proximal operator for tensor logarithmic composite norm is proposed.
\begin{theorem}[Proximal operator for tensor logarithmic composite norm]
Consider the tensor logarithmic composite norm given in (\ref{LC}). Its proximal operator denoted by $\mathit{S}:\mathbb{R}^{\mathit{I}_{1}\times\mathit{I}_{2}\times\mathit{I}_{3}}\to\mathbb{R}^{\mathit{I}_{1}\times\mathit{I}_{2}\times\mathit{I}_{3}}$, $\nu>0,\vartheta>0,$ and $R=\min\{\mathit{I}_{1},\mathit{I}_{2}\}$ and defined as follows:
	\begin{eqnarray}
		\mathit{S}(\mathcal{Y})=\arg\min_{\mathcal{L}}\{\frac{\rho}{2}\|\mathcal{L}-\mathcal{Y}\|_{F}^{2}+\|\mathcal{L}\|_{LC}\},\label{prox}
	\end{eqnarray}
	is given by
	\begin{eqnarray}
		\mathit{S}(\mathcal{Y})=\mathcal{U}\ast\mathcal{S}_{1}\ast\mathcal{V}^{H},
		\label{opewtgn}
	\end{eqnarray}
	where $\mathcal{U}$ and $\mathcal{V}$ are derived from the t-SVD of $\mathcal{Y}=\mathcal{U}\ast\mathcal{S}_{2}\ast\mathcal{V}^{H}$. More importantly, the $i$-th front slice of DFT of $\mathcal{S}_{1}$ and $\mathcal{S}_{2}$, i.e., $\bar{\mathcal{S}}^{(i)}_{1}=\sigma(\bar{\mathcal{L}}^{(i)})$ and $\bar{\mathcal{S}}^{(i)}_{2}=\sigma(\bar{\mathcal{Y}}^{(i)})$, have the following relationship $
	\sigma_{j}(\bar{\mathcal{L}}^{(i)})=S_{LC}(\sigma_{j}(\bar{\mathcal{Y}}^{(i)}))$.
	\label{thPTLC}
\end{theorem}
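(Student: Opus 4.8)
The plan is to reduce the tensor proximal problem to a family of decoupled scalar problems by exploiting the unitary structure of the DFT and the t-SVD, and then to solve the scalar problem induced by the piecewise penalty $g$ and the logarithm.

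First I would pass to the Fourier domain along the third mode. By Parseval's identity for the DFT, $\|\mathcal{L}-\mathcal{Y}\|_{F}^{2}=\frac{1}{\mathit{I}_{3}}\|\bar{\mathcal{L}}-\bar{\mathcal{Y}}\|_{F}^{2}=\frac{1}{\mathit{I}_{3}}\sum_{i=1}^{\mathit{I}_{3}}\|\bar{\mathcal{L}}^{(i)}-\bar{\mathcal{Y}}^{(i)}\|_{F}^{2}$, while by definition $\|\mathcal{L}\|_{LC}$ in (\ref{LC}) is already a sum over the frontal slices of $\bar{\mathcal{L}}$. Hence, after multiplying the objective in (\ref{prox}) by $\mathit{I}_{3}$, it separates into $\mathit{I}_{3}$ independent matrix problems of the form $\min_{L}\frac{\rho}{2}\|L-\bar{\mathcal{Y}}^{(i)}\|_{F}^{2}+\sum_{j=1}^{R}\omega_{j,i}\log(g(\sigma_{j}(L))+1)$, one for each frontal slice $i$. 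It therefore suffices to characterize the minimizer of each matrix subproblem and then recombine.

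Next I would show that each matrix minimizer shares its singular vectors with $\bar{\mathcal{Y}}^{(i)}$. The penalty $\sum_{j}\omega_{j,i}\log(g(\sigma_{j}(L))+1)$ depends on $L$ only through its singular values, hence is invariant under $L\mapsto PLQ^{H}$ for unitary $P,Q$; the coupling with $\bar{\mathcal{Y}}^{(i)}$ enters only through the cross term $-\rho\,\mathrm{Re}\langle L,\bar{\mathcal{Y}}^{(i)}\rangle$. By von Neumann's trace inequality this cross term is maximized (for fixed singular values of $L$) exactly when $L$ and $\bar{\mathcal{Y}}^{(i)}$ share left/right singular vectors, which pins down $\mathcal{U},\mathcal{V}$ as the t-SVD factors of $\mathcal{Y}$. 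A rearrangement argument in the spirit of the weighted nuclear norm analysis of Gu et al. \cite{Gu_2014_CVPR} then shows that, because the weights $\omega_{j,i}$ are arranged in non-descending order in $j$ (equivalently, larger singular values receive smaller weights, which is the role of the index reversal $R-j+1$), the optimal pairing keeps the singular values of $L$ in the same decreasing order as those of $\bar{\mathcal{Y}}^{(i)}$. This collapses the matrix problem to $R$ independent scalar problems $\min_{\ell\geqslant 0}\frac{\rho}{2}(\ell-\sigma_{j}(\bar{\mathcal{Y}}^{(i)}))^{2}+\omega_{j,i}\log(g(\ell)+1)$.

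Finally I would solve the scalar problem, which is precisely the definition of $S_{LC}$. I would split according to the two branches of $g$: on $\{\ell>\nu\vartheta\}$ the penalty equals the constant $\log(\nu^{2}\vartheta/2+1)$, so the quadratic alone governs and $\ell=\sigma_{j}(\bar{\mathcal{Y}}^{(i)})$ whenever the unconstrained optimum lands in this region, which is the mechanism that protects the large singular values; on $\{\ell\leqslant\nu\vartheta\}$ one has $g(\ell)=\nu\ell-\ell^{2}/(2\vartheta)$, and the stationarity condition $\rho(\ell-\sigma_{j})+\omega_{j,i}\frac{\nu-\ell/\vartheta}{\nu\ell-\ell^{2}/(2\vartheta)+1}=0$ must be solved and compared against the boundary $\ell=\nu\vartheta$ and the trivial point $\ell=0$. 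Assembling the branches yields $\sigma_{j}(\bar{\mathcal{L}}^{(i)})=S_{LC}(\sigma_{j}(\bar{\mathcal{Y}}^{(i)}))$, and folding back through the inverse DFT and the t-product recovers the stated form $S(\mathcal{Y})=\mathcal{U}\ast\mathcal{S}_{1}\ast\mathcal{V}^{H}$. I expect the main obstacle to be this scalar subproblem, since $\log(g(\ell)+1)$ is nonconvex on $\{\ell\leqslant\nu\vartheta\}$: the stationarity equation reduces to a cubic-type relation, and one must verify that the global minimizer is correctly selected among the interior stationary points, the boundary, and $\ell=0$, so that $S_{LC}$ is single-valued under $\nu>0,\vartheta>0$. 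A secondary technical point is justifying the non-descending weight ordering that makes the von Neumann and rearrangement step valid, without which the minimizer need not inherit the ordering of $\bar{\mathcal{Y}}^{(i)}$.
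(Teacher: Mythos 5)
Your proposal follows essentially the same route as the paper's proof: separate the objective over the DFT frontal slices, invoke von Neumann's trace inequality to align the singular vectors of $\mathcal{L}$ with those of $\mathcal{Y}$, reduce to per-singular-value scalar problems, and solve the branch $\ell\leqslant\nu\vartheta$ via the cubic stationarity equation while comparing interior stationary points against $\ell=0$ and the boundary. The only difference is that you explicitly flag the weight-ordering condition needed to justify the von Neumann/rearrangement step with the non-uniform weights $\omega_{j,i}$, a point the paper passes over silently, so your outline is if anything slightly more careful than the published argument.
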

\begin{proof}
	Let $\mathcal{Y}=\mathcal{U}\ast\mathcal{S}_{2}\ast\mathcal{V}^{H}$ and $\mathcal{L}=\mathcal{W}\ast\mathcal{S}_{1}\ast\mathcal{R}^{H}$ be the t-SVD of $\mathcal{Y}$ and $\mathcal{L}$, respectively. Consider
	\begin{eqnarray}
		&&\mathit{S}(\mathcal{Y})=\arg\min_{\mathcal{L}}\frac{\rho}{2}\|\mathcal{L}-\mathcal{Y}\|_{F}^{2}+\|\mathcal{L}\|_{LC}\nonumber
		\\&&\qquad\;=\arg\min_{\mathcal{L}}\frac{\rho}{2}\|\mathcal{W}\ast\mathcal{S}_{1}\ast\mathcal{R}^{H}-\mathcal{U}\ast\mathcal{S}_{2}\ast\mathcal{V}^{H}\|_{F}^{2}\nonumber
		+\|\mathcal{L}\|_{LC}\nonumber
		\\&&\qquad\;=\arg\min_{\bar{\mathcal{L}}}\frac{1}{\mathit{I}_{3}}(\sum_{i=1}^{\mathit{I}_{3}}\frac{\rho}{2}\|\bar{\mathcal{W}}^{(i)}\ast\bar{\mathcal{S}_{1}}^{(i)}\ast\bar{\mathcal{R}}^{(i)H}-\bar{\mathcal{U}}^{(i)}\ast\bar{\mathcal{S}_{2}}^{(i)}\ast\bar{\mathcal{V}}^{(i)H}\|_{F}^{2}+\|\bar{\mathcal{L}}^{(i)}\|_{LC}).\label{proflc}
	\end{eqnarray}
	It can be found that (\ref{proflc}) is separable and can be divided into $\mathit{I}_{3}$ sub-problems. For the $i$-th sub-problem:
	\begin{eqnarray}
		&&\arg\min_{\bar{\mathcal{L}}^{(i)}}\frac{\rho}{2}\|\bar{\mathcal{W}}^{(i)}\ast\bar{\mathcal{S}_{1}}^{(i)}\ast\bar{\mathcal{R}}^{(i)H}-\bar{\mathcal{U}}^{(i)}\ast\bar{\mathcal{S}_{2}}^{(i)}\ast\bar{\mathcal{V}}^{(i)H}\|_{F}^{2}\nonumber
		+\|\bar{\mathcal{L}}^{(i)}\|_{LC}\nonumber
		\\&&=\arg\min_{\bar{\mathcal{L}}^{(i)}}\frac{\rho}{2}{\tt Tr}(\bar{\mathcal{S}_{1}}^{(i)}\bar{\mathcal{S}_{1}}^{(i)H})+\frac{\rho}{2}{\tt Tr}(\bar{\mathcal{S}_{2}}^{(i)}\bar{\mathcal{S}_{2}}^{(i)H})\nonumber
		+ \rho {\tt Tr}(\bar{\mathcal{L}}^{(i)H}\bar{\mathcal{Y}}^{(i)})+\|\bar{\mathcal{L}}^{(i)}\|_{LC}.
	\end{eqnarray}
	Invoking von Neumann's trace inequality \cite{mirsky1975trace}, we can write
	\begin{eqnarray}
		&&\arg\min_{\bar{\mathcal{L}}^{(i)}}\frac{\rho}{2}\|\bar{\mathcal{W}}^{(i)}\ast\bar{\mathcal{S}_{1}}^{(i)}\ast\bar{\mathcal{R}}^{(i)H}-\bar{\mathcal{U}}^{(i)}\ast\bar{\mathcal{S}_{2}}^{(i)}\ast\bar{\mathcal{V}}^{(i)H}\|_{F}^{2}\nonumber
		+\|\bar{\mathcal{L}}^{(i)}\|_{LC}\nonumber
		\\&&\geqslant\arg\min_{\bar{\mathcal{S}_{1}}^{(i)}}\frac{\rho}{2}{\tt Tr}(\bar{\mathcal{S}_{1}}^{(i)}\bar{\mathcal{S}_{1}}^{(i)H})+\frac{\rho}{2}{\tt Tr}(\bar{\mathcal{S}_{2}}^{(i)}\bar{\mathcal{S}_{2}}^{(i)H})\nonumber
		+\rho {\tt Tr}(\bar{\mathcal{S}_{2}}^{(i)}\bar{\mathcal{S}_{1}}^{(i)H})+\|\bar{\mathcal{L}}^{(i)}\|_{LC}\nonumber\nonumber
		\\&&=\arg\min_{\sigma(\bar{\mathcal{L}}^{(i)})}\frac{\rho}{2}\|\sigma(\bar{\mathcal{L}}^{(i)})-\sigma(\bar{\mathcal{Y}}^{(i)})\|_{F}^{2}+\|\bar{\mathcal{L}}^{(i)}\|_{LC}.\nonumber
		\\&&=\sum_{j=1}^{R}\arg\min_{\sigma_{j}(\bar{\mathcal{L}}^{(i)})}\frac{\rho}{2}(\sigma_{j}(\bar{\mathcal{L}}^{(i)})-\sigma_{j}(\bar{\mathcal{Y}}^{(i)}))^{2}+\omega_{j,i}\log(g(\sigma_{j}(\bar{\mathcal{L}}^{(i)}))+1).
	\end{eqnarray}
	The equality holds when $\bar{\mathcal{W}}^{(i)}=\bar{\mathcal{U}}^{(i)}$ and $\bar{\mathcal{R}}^{(i)}=\bar{\mathcal{V}}^{(i)}$. Hence, the optimal solution to (\ref{proflc}) is obtained by solving the	problem below:
		\begin{eqnarray}
		\sigma_{j}(\bar{\mathcal{L}}^{(i)})=\mathit{S}_{LC}(\sigma_{j}(\bar{\mathcal{Y}}^{(i)}))=\arg\min_{\sigma_{j}(\bar{\mathcal{L}}^{(i)})}\frac{\rho}{2}(\sigma_{j}(\bar{\mathcal{L}}^{(i)})-\sigma_{j}(\bar{\mathcal{Y}}^{(i)}))^{2}+\omega_{j,i}\log(g(\sigma_{j}(\bar{\mathcal{L}}^{(i)}))+1).\label{prox1}
			\end{eqnarray}
For simplicity, let $l:=\sigma_{j}(\bar{\mathcal{L}}^{(i)})$ and $y:=\sigma_{j}(\bar{\mathcal{Y}}^{(i)})$. Then, let 
\begin{eqnarray*}
	&&h(l)=\frac{\rho}{2}(l-y)^{2}+\omega_{j,i}\log(g(l)+1)
	\\&&~~~~\,~=\left\{\begin{array}{l}
		\frac{\rho}{2}(l-y)^{2}+\omega_{j,i}\log \left( \nu l-l^{2}/(2\vartheta)+1\right),\: l\leqslant \nu\vartheta,
		\\\frac{\rho}{2}(l-y)^{2}+\omega_{j,i}\log \left( \nu^{2}\vartheta/2+1 \right),\qquad\:\:\,\: l> \nu\vartheta.
	\end{array}
	\right.\nonumber
\end{eqnarray*}
According to the definition of $h(l)$, when $ l> \nu\vartheta$, $\mathit{S}_{LC}(y) = y$.
 
Next, we consider the case $ l\leqslant \nu\vartheta$. Let the derivatives of the objective function $h(l)$ with respect to $l$ be zero. Therefore, we have
\begin{eqnarray}
	\rho(l-y)+2\omega_{j,i}(\nu\vartheta-l)/(2\nu\vartheta l- l^{2}+2\vartheta)=0.\label{Dgl}
\end{eqnarray}
Due to $2\nu\vartheta l- l^{2}+2\vartheta>0,\rho>0$, equation (\ref{Dgl}) can be transformed into the following form:
\begin{eqnarray}
	-l^{3}+(2\nu\vartheta+y) l^{2}+(-(2\omega_{j,i})/\rho+2\vartheta-2\nu\vartheta y) l+((2\omega_{j,i}\nu\vartheta)/\rho-2\vartheta y)=0.\label{SDgl}
\end{eqnarray}
Let $a=-1,b=2\nu\vartheta+y,c=-2\omega_{j,i}/\rho+2\vartheta-2\nu\vartheta y,d=(2\omega_{j,i}\nu\vartheta)/\rho-2\vartheta y,A=b^{2}-3ac,B=bc-9ad,C=c^{2}-3bd,\Delta=B^{2}-4AC$, $l_{1}, l_{2},$ and $l_{3}$ are the three solutions of equation (\ref{SDgl}). These results are derived by considering different values of $A,B$, and $\Delta$.
\\ 1) Case-1: $A=B=0$. The solution to equation (\ref{SDgl}) in this case are $l_{1}=l_{2}=l_{3}=-c/b$. 
\\ 2) Case-2: $\Delta>0$. The solution to equation (\ref{SDgl}) in this case are as follows:
\begin{eqnarray}
	&&l_{1}=\frac{-b-(\sqrt[3]{K_{1}}+\sqrt[3]{K_{2}})}{3a},\nonumber
	\\&&l_{2}=\frac{-b+0.5(\sqrt[3]{K_{1}}+\sqrt[3]{K_{2}})+0.5\sqrt{3}(\sqrt[3]{K_{1}}-\sqrt[3]{K_{2}})i}{3a},\nonumber
	\\&&l_{3}=\frac{-b+0.5(\sqrt[3]{K_{1}}+\sqrt[3]{K_{2}})-0.5\sqrt{3}(\sqrt[3]{K_{1}}-\sqrt[3]{K_{2}})i}{3a},\nonumber
\end{eqnarray}
where $K_{1}=Ab+1.5a(-B+\sqrt{B^{2}-4AC}),~K_{2}=Ab+1.5a(-B-\sqrt{B^{2}-4AC})$. Since equation (\ref{prox1}) is in the real number domain, only $l_1$ is retained as the solution in this case.
\\ 3) Case-3: $\Delta=0$. The solution to equation (\ref{SDgl}) in this case are $l_{1}=B/A-b/a,~ l_{2}=l_{3}=-B/(2A)$.
\\ 4) Case-4: $\Delta<0$. The solution to equation (\ref{SDgl}) in this case are as follows:
\begin{eqnarray}
	&&l_{1}=(-b-2\sqrt{A}\cos{\frac{\theta}{3}})/(3a),\nonumber
	\\&&l_{2}=(-b+\sqrt{A}(\cos{\frac{\theta}{3}}-\sqrt{3}\sin{\frac{\theta}{3}}))/(3a),\nonumber
	\\&&l_{3}=(-b+\sqrt{A}(\cos{\frac{\theta}{3}}-\sqrt{3}\sin{\frac{\theta}{3}}))/(3a),\nonumber
\end{eqnarray}
where $\theta=\arccos{T},T=(2Ab-3aB)/(2A\sqrt{A})$.

In addition, we need to further consider whether $l_{i}~(i=1,2,3)$ are within the domain, as well as what the optimal solution is. Therefore, we will take the following steps:
\\Step 1: $l_{i}=\min\{\max\{l_{i},0\},\nu\vartheta\}~(i=1,2,3)$. 
\\Step 2: $h(l^{\star})=\min\{h(l_{i}),h(0),h(\nu\vartheta)\}~(i=1,2,3)$.
\\Step 3: The optimal solution at this moment is as follows:
\begin{eqnarray*}
	l^{\star}=\left\{\begin{array}{l}
		l_{i},\quad h(l^{\star})=h(l_{i}),
		\\0,\quad h(l^{\star})=h(0),
		\\y,\quad h(l^{\star})=h(\nu\vartheta).
	\end{array}
	\right.
\end{eqnarray*}
To sum up:
\begin{eqnarray}
	\sigma_{j}(\bar{\mathcal{L}}^{(i)})=\mathit{S}_{LC}(y)=\left\{\begin{array}{l}
		l^{\star},\: l\leqslant \nu\vartheta,
		\\y~,\:\,\: l> \nu\vartheta.
	\end{array}
	\right.\nonumber
\end{eqnarray}
\end{proof}

\section{TJLC model and solving algorithm}
LRTC problem aims at estimating the missing elements from an incomplete observation tensor. First, let's represent the rank in problem (\ref{lLRTC}) in terms of the tensor joint rank.
\begin{eqnarray}
	\min_{\mathcal{X}}\,{\tt rank}_{TJ}(\mathcal{X})~{\tt s.t.}~\mathcal{P}_{\Omega}(\mathcal{T})=\mathcal{P}_{\Omega}(\mathcal{X}),
\end{eqnarray}
where $\mathcal{T}\in\mathbb{R}^{\mathit{I}_{1}\times\mathit{I}_{2}\times\cdots\times\mathit{I}_{N}}$ is the $N$-th order observed tensor; $\mathcal{X}$ is the complete tensor; $\mathcal{P}_{\Omega}(\mathcal{X})$ is a projection operator that keeps the entries of $\mathcal{X}$ in $\Omega$ and sets all others to zero. 
Subsequently, we apply the tensor logarithmic composite norm to each rank component of the tensor joint rank. The proposed TJLC model is formulated as follow
\begin{eqnarray}
	\min_{\mathcal{X}}\sum_{1\leqslant l_1\leqslant l_2\leqslant N}\beta_{l_1l_2}\|\mathcal{X}_{(l_1l_2)}\|_{LC}~ {\tt s.t.}~ \mathcal{P}_{\Omega}(\mathcal{T})=\mathcal{P}_{\Omega}(\mathcal{X}),\label{TJLC}
\end{eqnarray}
where parameter $\beta_{l_1l_2}$ further balances the weight of different rank components, and $\sum_{1\leqslant l_1\leqslant l_2\leqslant N}\beta_{l_1l_2}=1$. Then, we introduce auxiliary tensor set $\{\mathcal{Z}_{l_1l_2}=\mathcal{X}\}_{1\leqslant l_1\leqslant l_2\leqslant N}$, i.e., $ \{\mathcal{Z}_{l_1l_2(l_1l_2)}=\mathcal{X}_{(l_1l_2)}\}_{1\leqslant l_1\leqslant l_2\leqslant N}$, and consider the augmented Lagrangian function of (\ref{TJLC})
\begin{eqnarray}
&&L(\mathcal{X},\mathcal{Z}_{l_1l_2},\mathcal{Q}_{l_1l_2};\mu_{l_1l_2})
=\sum_{1\leqslant l_1\leqslant l_2\leqslant N} ( \beta_{l_1l_2}\|\mathcal{Z}_{l_1l_2(l_1l_2)}\|_{LC}\nonumber
\\&&\qquad\qquad\qquad\qquad\qquad\quad+\frac{\mu_{l_1l_2}}{2}\|\mathcal{X}-\mathcal{Z}_{l_1l_2}+\mathcal{Q}_{l_1l_2}/\mu_{l_1l_2}\|_{F}^{2}) ~ {\tt s.t.}~ \mathcal{P}_{\Omega}(\mathcal{T}-\mathcal{X})=\mathbf{0},\label{TJLC2}
\end{eqnarray}
where $\{\mathcal{Q}_{l_1l_2}\}_{1\leqslant l_1\leqslant l_2\leqslant N}$ are Lagrangian multipliers; $\{\mu_{l_1l_2}>0\}_{1\leqslant l_1\leqslant l_2\leqslant N}$ are the augmented Lagrangian parameters. We omit the specific number of iterations, and denote the variable updated by the iteration as $(\cdot)^{+}$. The update equations are derived in the following. 

\textbf{Update $\{\mathcal{Z}_{l_1l_2}\}_{1\leqslant l_1\leqslant l_2\leqslant N}$}: In a single iteration, the update of each variable $\mathcal{Z}_{l_1l_2}$ is independent, so we can solve the subproblem of each $\mathcal{Z}_{l_1l_2}$ individually. Fix other variables, and the corresponding optimization is as follows: 
\begin{eqnarray}
	&&\mathcal{Z}_{l_1l_2(l_1l_2)}^{+}=\arg\min_{\mathcal{Z}_{l_1l_2(l_1l_2)}}\beta_{l_1l_2}\|\mathcal{Z}_{l_1l_2(l_1l_2)}\|_{LC}
	+\frac{\mu_{l_1l_2}}{2}\|\mathcal{X}-\mathcal{Z}_{l_1l_2}+\mathcal{Q}_{l_1l_2}/\mu_{l_1l_2}\|_{F}^{2}\nonumber
	\\&&~~=\arg\min_{\mathcal{Z}_{l_1l_2(l_1l_2)}}\beta_{l_1l_2}\|\mathcal{Z}_{l_1l_2(l_1l_2)}\|_{LC}
	+\frac{\mu_{l_1l_2}}{2}\|\mathcal{X}_{(l_1l_2)}-\mathcal{Z}_{l_1l_2(l_1l_2)}+\mathcal{Q}_{l_1l_2(l_1l_2)}/\mu_{l_1l_2}\|_{F}^{2},
\end{eqnarray}
where the second equality is due to the fact that the unfolding operation only rearranges the elements without changing the value of $\|\cdot\|_{F}^2$. Recalling Theorem \ref{thPTLC}, the solution to the above optimization is given by:
\begin{eqnarray}
	\mathcal{Z}_{l_1l_2(l_1l_2)}^{+}=\mathit{S}(\mathcal{X}_{(l_1l_2)}+\mathcal{Q}_{l_1l_2(l_1l_2)}/\mu_{l_1l_2}),\label{UPZ2}
\end{eqnarray}
where $\mathit{S}$ denotes the proximal operator defined in (\ref{opewtgn}).
\begin{algorithm}[!h]
	\caption{TJLC} 
	\hspace*{0.02in} {\bf Input:} 
	An incomplete tensor $\mathcal{T}$, the index set of the known elements $\Omega$, convergence criteria $\epsilon=10^{-4}$, maximum iteration number $K$. \\
	\hspace*{0.02in} {\bf Initialization:} 
	$\mathcal{X}^{0}=\mathcal{T}_{\Omega}$, $\{\mathcal{Z}_{l_1l_2}^{0}=\mathcal{X}^{0},\mu_{l_1l_2}^{0}>0\}_{1\leqslant l_1\leqslant l_2\leqslant N}$, $\eta>1$.
	\begin{algorithmic}
		\While{not converged and $k<K$} 
		\State Updating $\mathcal{Z}_{l_1l_2}^{k}$ via (\ref{UPZ2});
		\State Updating $\mathcal{X}^{k}$ via (\ref{UPX2});	
		\State Updating the multiplier $\mathcal{Q}_{l_1l_2}^{k}$ via (\ref{UPQ2});
		\State $\mu_{l_1l_2}^{k}=\eta\mu_{l_1l_2}^{k-1}$, $k=k+1$;
		\State Check the convergence conditions $\|\mathcal{X}^{k+1}-\mathcal{X}^{k}\|^{2}_{F}/\|\mathcal{X}^{k}\|^{2}_{F}\leq\epsilon$.
		\EndWhile
		\State \Return $\mathcal{X}^{k+1}$.
	\end{algorithmic}
	\hspace*{0.02in} {\bf Output:} 
	$\mathcal{X}$.
	\label{TC}\end{algorithm}

\textbf{Update $\mathcal{X}$}: The closed form of $\mathcal{X}$ can be derived by setting the derivative of (\ref{TJLC2}) to zero. The corresponding optimization is as follows: 
\begin{eqnarray}
	&&\mathcal{X}^{+}=\arg\min_{\mathcal{X}}\sum_{1\leqslant l_1\leqslant l_2\leqslant N}(\frac{\mu_{l_1l_2}}{2}\|\mathcal{X}-\mathcal{Z}_{l_1l_2}+\mathcal{Q}_{l_1l_2}/\mu_{l_1l_2}\|_{F}^{2}) ~ {\tt s.t.}~ \mathcal{P}_{\Omega}(\mathcal{T}-\mathcal{X})=\mathbf{0},
\end{eqnarray}
Then, we can now update $\mathcal{X}$ by the following equation:
\begin{eqnarray}
	\mathcal{X}^{+}=\mathcal{P}_{\Omega^{\perp}}(\frac{\sum_{1\leqslant l_1\leqslant l_2\leqslant N}\mu_{l_1l_2}(\mathcal{Z}_{l_1l_2}-\mathcal{Q}_{l_1l_2}/\mu_{l_1l_2})}{\sum_{1\leqslant l_1\leqslant l_2\leqslant N}\mu_{l_1l_2}})+\mathcal{P}_{\Omega}(\mathcal{T}).\label{UPX2}
\end{eqnarray}
where the vertical symbol $^{\perp}$ in the upper right corner of $\Omega$ indicates the complement set $\Omega^{\perp}$ of $\Omega$.

\textbf{Update $\{\mathcal{Q}_{l_1l_2}\}_{1\leqslant l_1\leqslant l_2\leqslant N}$}: Finally, multiplier $\mathcal{Q}_{l_1l_2}$ is updated as follows:
\begin{eqnarray}
\mathcal{Q}_{l_1l_2}^{+}=\mathcal{Q}_{l_1l_2}+\mu_{l_1l_2}(\mathcal{X}^{+}-\mathcal{Z}_{l_1l_2}^{+}).\label{UPQ2}
\end{eqnarray}
The TJLC model computation is given in Algorithm \ref{TC}. The main per-iteration cost lies in the update of $\mathcal{Z}_{l_1l_2}$, which requires computing t-SVD . The per-iteration complexity is $O(LE(\sum_{1\leqslant l_{1}< l_{2}\leqslant N}[log(le_{l_{1}l_{2}})+\min(\mathit{I}_{l_{1}},\mathit{I}_{l_{2}})]+\sum_{1\leqslant l_{1} \leqslant N}\mathit{I}_{l_{1}}))$, where $LE=\prod_{i=1}^{N}\mathit{I}_{i}$ and $le_{l_{1}l_{2}}=LE/(\mathit{I}_{l_{1}}\mathit{I}_{l_{2}})$.

\section{Convergence analysis}
To prove the theoretical convergence analysis of the proposed algorithm \ref{TC}, we first have the following lemma.
\begin{lemma}\label{le2}
Sequences $\{\mathcal{X}^{k}\},\{\mathcal{Z}_{l_1l_2}^{k}\}_{1\leqslant l_1\leqslant l_2\leqslant N}$ are bounded if the sequence $\{\mathcal{Q}_{l_1l_2}^{k}\}_{1\leqslant l_1\leqslant l_2\leqslant N}$ are bounded and $\{\sum_{j=1}^{\infty}(\mu_{l_1l_2}^{j}+\mu_{l_1l_2}^{j-1})/(\mu_{l_1l_2}^{j-1})^2<\infty\}_{1\leqslant l_1\leqslant l_2\leqslant N}$.
\end{lemma}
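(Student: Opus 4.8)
The plan is to monitor the whole run through the augmented Lagrangian $L_k:=L(\mathcal{X}^{k},\{\mathcal{Z}_{l_1l_2}^{k}\},\{\mathcal{Q}_{l_1l_2}^{k}\};\{\mu_{l_1l_2}^{k}\})$, showing first that $\sup_k L_k<\infty$ and then reading off the boundedness of the primal iterates. I would split the work into two stages: (i) a one-step estimate that bounds the increment $L_{k+1}-L_k$ by a quantity that is summable under the stated hypothesis, and (ii) an extraction step that converts $\sup_k L_k<\infty$ into bounds on $\{\mathcal{Z}_{l_1l_2}^{k}\}$ and $\{\mathcal{X}^{k}\}$.

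For stage (i) I would break the increment across the three updates. Since $\mathcal{Z}_{l_1l_2}^{k+1}$ and then $\mathcal{X}^{k+1}$ are exact minimizers of $L$ in their respective blocks with the multipliers and penalties frozen, the primal sweep is non-increasing, i.e. $L(\mathcal{X}^{k+1},\mathcal{Z}^{k+1},\mathcal{Q}^{k};\mu^{k})\le L_k$. It then remains to estimate the change produced by the multiplier and penalty updates. Writing $\mu^k$ for $\mu_{l_1l_2}^k$ and using $\mathcal{Q}_{l_1l_2}^{k+1}=\mathcal{Q}_{l_1l_2}^{k}+\mu^{k}(\mathcal{X}^{k+1}-\mathcal{Z}_{l_1l_2}^{k+1})$ to substitute the residual $\mathcal{X}^{k+1}-\mathcal{Z}_{l_1l_2}^{k+1}=(\mathcal{Q}_{l_1l_2}^{k+1}-\mathcal{Q}_{l_1l_2}^{k})/\mu^{k}$ into the penalty term and expanding the squares, the remaining increment reduces to a combination of $\|\mathcal{Q}_{l_1l_2}^{k+1}-\mathcal{Q}_{l_1l_2}^{k}\|_F^2$ and $\|\mathcal{Q}_{l_1l_2}^{k}\|_F^2$ weighted by $1/\mu^{k}$, $1/\mu^{k+1}$ and $(\mu^{k+1}-\mu^{k})/(\mu^{k})^2$. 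Invoking the assumed bound $\|\mathcal{Q}_{l_1l_2}^{k}\|_F\le M$, each such summand is dominated by $CM^2(\mu_{l_1l_2}^{k+1}+\mu_{l_1l_2}^{k})/(\mu_{l_1l_2}^{k})^2$, so $L_{k+1}\le L_k+CM^2\sum_{l_1l_2}(\mu_{l_1l_2}^{k+1}+\mu_{l_1l_2}^{k})/(\mu_{l_1l_2}^{k})^2$. Telescoping and invoking $\sum_j(\mu_{l_1l_2}^{j}+\mu_{l_1l_2}^{j-1})/(\mu_{l_1l_2}^{j-1})^2<\infty$ then gives a uniform bound $\sup_k L_k\le L_0+C'<\infty$.

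For stage (ii) I would separate the two non-negative pieces of $L_k$, the composite terms $\sum\beta_{l_1l_2}\|\mathcal{Z}_{l_1l_2(l_1l_2)}^{k}\|_{LC}$ and the quadratic penalties. Since $\{\mathcal{Q}^k\}$ is bounded and $\mu^k\ge\mu^0>0$, the cross and constant parts of each penalty are controlled, so the residuals $\mathcal{X}^{k}-\mathcal{Z}_{l_1l_2}^{k}=(\mathcal{Q}_{l_1l_2}^{k}-\mathcal{Q}_{l_1l_2}^{k-1})/\mu_{l_1l_2}^{k-1}$ stay bounded (indeed vanish); hence it suffices to bound one family, say $\{\mathcal{Z}_{l_1l_2}^{k}\}$, and the rest follow. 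To bound $\mathcal{Z}_{l_1l_2}^{k+1}$ I would use its closed form from Theorem \ref{thPTLC}: it is obtained from $\mathcal{X}^{k}+\mathcal{Q}_{l_1l_2}^{k}/\mu^{k}$ by leaving $\mathcal{U},\mathcal{V}$ untouched and replacing each singular value $y$ by $S_{LC}(y)\le y$, whence $\|\mathcal{Z}_{l_1l_2}^{k+1}\|_F\le\|\mathcal{X}^{k}\|_F+M/\mu^{k}$; substituting this into the closed form (\ref{UPX2}) for $\mathcal{X}^{k+1}$, whose $\Omega$-component is frozen at $\mathcal{P}_\Omega(\mathcal{T})$ and whose $\Omega^\perp$-component is a $\mu$-weighted average of the $\mathcal{Z}_{l_1l_2}^{k+1}$, is intended to close the recursion.

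The hard part will be stage (ii). The real subtlety is that the logarithmic composite norm is \emph{not} coercive: because $g(\cdot)$ saturates at $\nu^2\vartheta/2$, each $\|\cdot\|_{LC}$ is uniformly bounded, so, unlike the nuclear-norm case, one cannot infer boundedness of $\mathcal{Z}_{l_1l_2}^{k}$ from $\sup_k L_k<\infty$ alone, since a bounded penalty cannot dominate a growing Frobenius norm and $S_{LC}$ leaves above-threshold singular values unchanged. One must therefore lean on the explicit shrinkage-plus-projection structure rather than a sublevel-set argument. Even then the single inequality $\|\mathcal{Z}_{l_1l_2}^{k+1}\|_F\le\|\mathcal{X}^{k}\|_F+M/\mu^{k}$ only yields $a_{k+1}\le\sqrt{\|\mathcal{P}_\Omega(\mathcal{T})\|_F^2+a_k^2}+\varepsilon_k$ with $\varepsilon_k$ summable, which by itself permits $O(\sqrt{k})$ growth through the orthogonal splitting $\|\mathcal{X}^{k+1}\|_F^2=\|\mathcal{P}_\Omega(\mathcal{T})\|_F^2+\|\mathcal{P}_{\Omega^\perp}(\mathcal{X}^{k+1})\|_F^2$. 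Rigorously absorbing the constant data contribution, presumably by exploiting that the surviving large singular values are precisely those pinned by the observed entries, rather than settling for the crude recursion, is the genuine obstacle; it may well require a mild additional assumption to make fully precise.
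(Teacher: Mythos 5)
Your stage (i) is the paper's argument almost verbatim: the paper establishes the identity
$L(\mathcal{X}^{k},\mathcal{Z}_{l_1l_2}^{k},\mathcal{Q}_{l_1l_2}^{k};\mu_{l_1l_2}^{k})=L(\mathcal{X}^{k},\mathcal{Z}_{l_1l_2}^{k},\mathcal{Q}_{l_1l_2}^{k-1};\mu_{l_1l_2}^{k-1})+\sum(\mu_{l_1l_2}^{k}+\mu_{l_1l_2}^{k-1})/(2(\mu_{l_1l_2}^{k-1})^2)\|\mathcal{Q}_{l_1l_2}^{k}-\mathcal{Q}_{l_1l_2}^{k-1}\|_{F}^{2}$, uses the non-increase of the primal sweep, telescopes, and invokes the boundedness of $\mathcal{Q}_{l_1l_2}^{k}$ together with the summability hypothesis to conclude $\sup_k L_k<\infty$. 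Your bookkeeping of the extra $\|\mathcal{Q}\|_F^2/\mu$ terms coming from the completed-square form is harmless and matches in substance.

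The divergence is in stage (ii), and there you have put your finger on a real defect of the paper's proof rather than missed its idea. The paper's entire stage (ii) is the assertion that, since $L_k$ is bounded and each summand of the Lagrangian is nonnegative, the term $\beta_{l_1l_2}\|\mathcal{Z}^{k}_{l_1l_2(l_1l_2)}\|_{LC}$ is bounded and \emph{therefore} $\{\mathcal{Z}_{l_1l_2}^{k}\}$ is bounded, after which $\mathcal{X}^{k}$ is read off the quadratic term. That is precisely the coercivity inference you reject, and you are right to reject it: since $g(\cdot)\leqslant\nu^{2}\vartheta/2$ and $\omega_{j,i}\leqslant 1/c$, one has $\|\mathcal{Z}\|_{LC}\leqslant (R/c)\log(\nu^{2}\vartheta/2+1)$ for \emph{every} $\mathcal{Z}$, so boundedness of this term carries no information whatsoever about $\|\mathcal{Z}\|_F$. (The argument would be valid for the tensor nuclear norm, which is coercive; it is vacuous for $\|\cdot\|_{LC}$.) Your attempted repair via the shrinkage property $\|S(\mathcal{Y})\|_F\leqslant\|\mathcal{Y}\|_F$ of the proximal operator and the $\Omega/\Omega^{\perp}$ splitting of (\ref{UPX2}) is the natural alternative, and your accounting is correct that it only yields a recursion of the form $a_{k+1}^{2}\leqslant\|\mathcal{P}_{\Omega}(\mathcal{T})\|_F^{2}+(a_k+\varepsilon_k)^{2}$ with $\varepsilon_k$ summable, which permits $O(\sqrt{k})$ growth and does not close. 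So your proof is incomplete exactly where you say it is, but the paper's proof has the same hole and simply asserts it away; the lemma as stated does not follow from the given hypotheses by either argument, and an additional assumption (coercivity of the regularizer, or an a priori bound on the iterates) appears to be genuinely needed.
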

\begin{proof}
	By simple manipulation, we can get,
	\begin{eqnarray}
		&&L(\mathcal{X}^{k},\mathcal{Z}_{l_1l_2}^{k},\mathcal{Q}_{l_1l_2}^{k};\mu_{l_1l_2}^{k})\nonumber
		\\&&=L(\mathcal{X}^{k},\mathcal{Z}_{l_1l_2}^{k},\mathcal{Q}_{l_1l_2}^{k-1};\mu_{l_1l_2}^{k-1})+\sum_{1\leqslant l_{1}< l_{2}\leqslant N}(\mu_{l_1l_2}^{k}+\mu_{l_1l_2}^{k-1})/(2(\mu_{l_1l_2}^{k-1})^2)\|\mathcal{Q}_{l_1l_2}^{k}-\mathcal{Q}_{l_1l_2}^{k-1}\|_{F}^{2}\nonumber.
	\end{eqnarray}
Then, it follows that,
\begin{eqnarray}
	&&L(\mathcal{X}^{k+1},\mathcal{Z}_{l_1l_2}^{k+1},\mathcal{Q}_{l_1l_2}^{k};\mu_{l_1l_2}^{k})\nonumber
	\\&&~~\leqslant L(\mathcal{X}^{k},\mathcal{Z}_{l_1l_2}^{k},\mathcal{Q}_{l_1l_2}^{k};\mu_{l_1l_2}^{k})\nonumber
	\\&&~~\leqslant L(\mathcal{X}^{k},\mathcal{Z}_{l_1l_2}^{k},\mathcal{Q}_{l_1l_2}^{k-1};\mu_{l_1l_2}^{k-1})+\sum_{1\leqslant l_{1}< l_{2}\leqslant N}(\mu_{l_1l_2}^{k}+\mu_{l_1l_2}^{k-1})/(2(\mu_{l_1l_2}^{k-1})^2)\|\mathcal{Q}_{l_1l_2}^{k}-\mathcal{Q}_{l_1l_2}^{k-1}\|_{F}^{2}\nonumber
	\\&&~~\leqslant L(\mathcal{X}^{1},\mathcal{Z}_{l_1l_2}^{1},\mathcal{Q}_{l_1l_2}^{0};\mu_{l_1l_2}^{0})+\sum_{j=1}^{k}\sum_{1\leqslant l_{1}< l_{2}\leqslant N}(\mu_{l_1l_2}^{j}+\mu_{l_1l_2}^{j-1})/(2(\mu_{l_1l_2}^{j-1})^2)\|\mathcal{Q}_{l_1l_2}^{j}-\mathcal{Q}_{l_1l_2}^{j-1}\|_{F}^{2}\nonumber.	
\end{eqnarray}
By the bounded property of $\|\mathcal{Q}_{l_1l_2}^{j}-\mathcal{Q}_{l_1l_2}^{j-1}\|_{F}^{2}$, as well as under the given condition on $\{\mu_{l_1l_2}^{k}\}_{1\leqslant l_{1}< l_{2}\leqslant N}$, the right hand side of the inequality is bounded, so $L(\mathcal{X}^{k+1},\mathcal{Z}_{l_1l_2}^{k+1},\mathcal{Q}_{l_1l_2}^{k};\mu_{l_1l_2}^{k})$ is bounded.
\begin{eqnarray}
L(\mathcal{X}^{k+1},\mathcal{Z}_{l_1l_2}^{k+1},\mathcal{Q}_{l_1l_2}^{k};\mu_{l_1l_2}^{k})=\sum_{1\leqslant l_1\leqslant l_2\leqslant N} ( \beta_{l_1l_2}\|\mathcal{Z}^{k+1}_{l_1l_2(l_1l_2)}\|_{LC}+\frac{\mu_{l_1l_2}}{2}\|\mathcal{X}^{k+1}-\mathcal{Z}^{k+1}_{l_1l_2}+\mathcal{Q}^{k}_{l_1l_2}/\mu^{k}_{l_1l_2}\|_{F}^{2}).\nonumber
\end{eqnarray}
The terms on the right side of the equation are nonnegative and the terms on the left side of the equation are bounded, so $\{\mathcal{Z}_{l_1l_2}^{k}\}_{1\leqslant l_{1}< l_{2}\leqslant N}$ are bounded. By observing the last regular term
on the right side of the equation, $\mathcal{X}^{k}$ is bounded. Therefore, $\{\mathcal{X}^{k}\}$, and $\{\mathcal{Z}_{l_1l_2}^{k}\}_{1\leqslant l_{1}< l_{2}\leqslant N}$ are all bounded. The proof is completed.
\end{proof}
\begin{theorem}
	The sequence $L(\mathcal{X}^{k},\mathcal{Z}_{l_1l_2}^{k},\mathcal{Q}_{l_1l_2}^{k})$ generated by algorithm \ref{TC} has at least one accumulation point $L(\mathcal{X}^{\star},\mathcal{Z}_{l_1l_2}^{\star},,\mathcal{Q}_{l_1l_2}^{\star})$. Then, $L(\mathcal{X}^{\star},\mathcal{Z}_{l_1l_2}^{\star})$ is a stationary point of optimization problem (\ref{TJLC}) under the condition that $\{\lim\limits_{k\to\infty}\mu_{l_1l_2}^{k}(\mathcal{Z}_{l_1l_2}^{k+1}-\mathcal{Z}_{l_1l_2}^{k})=0\}_{1\leqslant l_{1}< l_{2}\leqslant N}$, the sequence $\{\mathcal{Q}_{l_1l_2}^{k}\}_{1\leqslant l_1\leqslant l_2\leqslant N}$ are bounded and $\{\sum_{j=1}^{\infty}(\mu_{l_1l_2}^{j}+\mu_{l_1l_2}^{j-1})/(\mu_{l_1l_2}^{j-1})^2<\infty\}_{1\leqslant l_1\leqslant l_2\leqslant N}$.
\end{theorem}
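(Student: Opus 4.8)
The plan is to treat the two assertions of the theorem separately: the existence of an accumulation point, and the KKT/stationarity characterization of any such point. For the first, I would invoke Lemma \ref{le2}: under the stated summability condition $\sum_{j}(\mu_{l_1l_2}^{j}+\mu_{l_1l_2}^{j-1})/(\mu_{l_1l_2}^{j-1})^2<\infty$ and the boundedness of $\{\mathcal{Q}_{l_1l_2}^{k}\}$, the sequences $\{\mathcal{X}^{k}\}$ and $\{\mathcal{Z}_{l_1l_2}^{k}\}$ are bounded. Hence the whole triple $(\mathcal{X}^{k},\mathcal{Z}_{l_1l_2}^{k},\mathcal{Q}_{l_1l_2}^{k})$ lies in a bounded subset of a finite-dimensional Euclidean space, and Bolzano--Weierstrass supplies a convergent subsequence $k_j$ with limit $(\mathcal{X}^{\star},\mathcal{Z}_{l_1l_2}^{\star},\mathcal{Q}_{l_1l_2}^{\star})$; by continuity of $L$, $L(\mathcal{X}^{k_j},\mathcal{Z}_{l_1l_2}^{k_j},\mathcal{Q}_{l_1l_2}^{k_j})\to L(\mathcal{X}^{\star},\mathcal{Z}_{l_1l_2}^{\star},\mathcal{Q}_{l_1l_2}^{\star})$, which is the claimed accumulation point.

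For the stationarity claim I would verify the KKT conditions of (\ref{TJLC}) one at a time, starting with the clean ones. Primal feasibility: from the multiplier update (\ref{UPQ2}), $\mathcal{X}^{k+1}-\mathcal{Z}_{l_1l_2}^{k+1}=(\mathcal{Q}_{l_1l_2}^{k+1}-\mathcal{Q}_{l_1l_2}^{k})/\mu_{l_1l_2}^{k}$; since $\mu_{l_1l_2}^{k}=\eta^{k}\mu_{l_1l_2}^{0}\to\infty$ (because $\eta>1$) while the numerator stays bounded, the right-hand side vanishes and $\mathcal{X}^{\star}=\mathcal{Z}_{l_1l_2}^{\star}$ for every $(l_1,l_2)$. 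Moreover (\ref{UPX2}) enforces $\mathcal{P}_{\Omega}(\mathcal{X}^{k})=\mathcal{P}_{\Omega}(\mathcal{T})$ at every iteration, so in the limit $\mathcal{P}_{\Omega}(\mathcal{T}-\mathcal{X}^{\star})=\mathbf{0}$. The dual condition from the $\mathcal{X}$-subproblem is equally direct: its first-order condition on $\Omega^{\perp}$ reads $\mathcal{P}_{\Omega^{\perp}}(\sum_{l_1\leqslant l_2}[\mu_{l_1l_2}^{k}(\mathcal{X}^{k+1}-\mathcal{Z}_{l_1l_2}^{k+1})+\mathcal{Q}_{l_1l_2}^{k}])=\mathbf{0}$, which by (\ref{UPQ2}) is exactly $\mathcal{P}_{\Omega^{\perp}}(\sum_{l_1\leqslant l_2}\mathcal{Q}_{l_1l_2}^{k+1})=\mathbf{0}$, and passing to the limit yields $\mathcal{P}_{\Omega^{\perp}}(\sum_{l_1\leqslant l_2}\mathcal{Q}_{l_1l_2}^{\star})=\mathbf{0}$.

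The remaining and most delicate condition is stationarity in $\mathcal{Z}_{l_1l_2}$. I would write the first-order optimality of the proximal update (\ref{UPZ2}) as $\mathcal{Q}_{l_1l_2}^{k}+\mu_{l_1l_2}^{k}(\mathcal{X}^{k}-\mathcal{Z}_{l_1l_2}^{k+1})\in\beta_{l_1l_2}\partial\|\mathcal{Z}_{l_1l_2(l_1l_2)}^{k+1}\|_{LC}$, and then split the left-hand side as $\mathcal{Q}_{l_1l_2}^{k}+\mu_{l_1l_2}^{k}(\mathcal{X}^{k}-\mathcal{Z}_{l_1l_2}^{k})+\mu_{l_1l_2}^{k}(\mathcal{Z}_{l_1l_2}^{k}-\mathcal{Z}_{l_1l_2}^{k+1})$. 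The last term vanishes along the sequence precisely by the hypothesis $\lim_{k}\mu_{l_1l_2}^{k}(\mathcal{Z}_{l_1l_2}^{k+1}-\mathcal{Z}_{l_1l_2}^{k})=\mathbf{0}$, while the middle term is controlled through (\ref{UPQ2}) together with the boundedness of $\{\mathcal{Q}_{l_1l_2}^{k}\}$ and the feasibility residual $\mathcal{X}^{k}-\mathcal{Z}_{l_1l_2}^{k}\to\mathbf{0}$, so that the entire left-hand side converges to $\mathcal{Q}_{l_1l_2}^{\star}$ along $k_j$. Finally I would pass to the limit inside the subdifferential inclusion to obtain $\mathcal{Q}_{l_1l_2}^{\star}\in\beta_{l_1l_2}\partial\|\mathcal{Z}_{l_1l_2(l_1l_2)}^{\star}\|_{LC}$ and collect this with the primal and dual relations above to conclude that $(\mathcal{X}^{\star},\mathcal{Z}_{l_1l_2}^{\star})$ is a stationary point of (\ref{TJLC}).

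I expect the main obstacle to be exactly this last limit through the subdifferential: because $\|\cdot\|_{LC}$ is nonconvex (it composes a concave logarithm with the capped function $g$ and a singular-value reweighting), I cannot appeal to ordinary convex subdifferential continuity and must instead rely on the closedness/outer-semicontinuity of the limiting subdifferential of $\|\cdot\|_{LC}$. The supporting difficulty is making the vanishing of the two $\mu_{l_1l_2}^{k}$-scaled cross terms fully rigorous, since each is a product of an unbounded factor $\mu_{l_1l_2}^{k}$ with a vanishing one; this is where the stated hypotheses on $\mu_{l_1l_2}^{k}$ and on the boundedness of $\{\mathcal{Q}_{l_1l_2}^{k}\}$ do the real work, and any gap in the argument would surface here rather than in the bookkeeping of the other KKT conditions.
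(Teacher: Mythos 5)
Your proposal follows essentially the same route as the paper: boundedness from Lemma \ref{le2} plus Bolzano--Weierstrass for the accumulation point, feasibility via the multiplier update combined with the summability of $1/\mu_{l_1l_2}^{j}$, and the KKT conditions extracted from the first-order optimality of the $\mathcal{Z}_{l_1l_2}$- and $\mathcal{X}$-subproblems together with the hypothesis $\mu_{l_1l_2}^{k}(\mathcal{Z}_{l_1l_2}^{k+1}-\mathcal{Z}_{l_1l_2}^{k})\to 0$ and boundedness of the multipliers. The only substantive differences are cosmetic and in your favor: you keep the projection $\mathcal{P}_{\Omega^{\perp}}$ in the dual condition for $\mathcal{X}$ (the paper drops it), and you explicitly flag the need for outer semicontinuity of the limiting subdifferential of the nonconvex $\|\cdot\|_{LC}$, which the paper passes over silently.
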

\begin{proof}
The sequence $L(\mathcal{X}^{k},\mathcal{Z}_{l_1l_2}^{k},\mathcal{Q}_{l_1l_2}^{k})$ generated by algorithm \ref{TC} is bounded as proven in lemma \ref{le2}. By the Bolzano–Weierstrass theorem, the sequence
has at least one accumulation point $(\mathcal{X}^{\star},\mathcal{Z}_{l_1l_2}^{\star},\mathcal{Q}_{l_1l_2}^{\star})$. Without loss of generality, we can assume that the sequence $(\mathcal{X}^{k},\mathcal{Z}_{l_1l_2}^{k},\mathcal{Q}_{l_1l_2}^{k})$ converges to $(\mathcal{X}^{\star},\mathcal{Z}_{l_1l_2}^{\star},\mathcal{Q}_{l_1l_2}^{\star})$. Actually, as $\sum_{j=1}^{\infty}1/(\mu_{l_1l_2}^{j})<\sum_{j=1}^{\infty}(\mu_{l_1l_2}^{j}+\mu_{l_1l_2}^{j-1})/(\mu_{l_1l_2}^{j-1})^2<\infty$ also satisfies. It follows the date rule of $\mathcal{Q}_{l_1l_2}^{k}$ that $\lim\limits_{k\to\infty}\mathcal{X}^{k}-\mathcal{Z}_{l_1l_2}^{k}=\lim\limits_{k\to\infty}(\mathcal{Q}_{l_1l_2}^{k}-\mathcal{Q}_{l_1l_2}^{k-1})/\mu^{k-1}=0$, that is, $\mathcal{X}^{\star}=\mathcal{Z}_{l_1l_2}^{\star}$. Therefore, the feasible condition is satisfied.

First, we list the Karush–Kuhn–Tucker (KKT) conditions the $L(\mathcal{X}^{\star},\mathcal{Z}_{l_1l_2}^{\star},\mathcal{Q}_{l_1l_2}^{\star})$ satisfies,
\begin{eqnarray*}
	\left\{ \begin{array}{l}
		0\in\dfrac{\partial(\|\mathcal{Z}_{l_1l_2}\|_{LC})}{\partial\mathcal{Z}_{l_1l_2}}\lvert_{\mathcal{Z}_{l_1l_2}^{\star}}-\mathcal{Q}_{l_1l_2}^{\star},
		\\\sum_{1\leqslant l_{1}< l_{2}\leqslant N}\mathcal{Q}_{l_1l_2}^{\star}=0.
	\end{array}
	\right.
\end{eqnarray*}

For $\mathcal{Z}_{l_1l_2}^{k+1}$, we have 
\begin{eqnarray}
	&&\dfrac{\partial L(\mathcal{X}^{k},\mathcal{Z}_{l_1l_2},\mathcal{Q}_{l_1l_2}^{k})}{\partial\mathcal{Z}_{l_1l_2}}\lvert_{\mathcal{Z}_{l_1l_2}^{k+1}}=\dfrac{\partial(\|\mathcal{Z}_{l_1l_2}\|_{LC})}{\partial\mathcal{Z}_{l_1l_2}}\lvert_{\mathcal{Z}_{l_1l_2}^{k+1}}-\mathcal{Q}_{l_1l_2}^{k}-\mu_{l_1l_2}^{k}(\mathcal{X}^{k+1}-\mathcal{Z}_{l_1l_2}^{k})\nonumber
	\\&&\qquad\qquad\qquad\,~~\qquad\qquad=\dfrac{\partial(\|\mathcal{Z}_{l_1l_2}\|_{LC})}{\partial\mathcal{Z}_{l_1l_2}}\lvert_{\mathcal{Z}_{l_1l_2}^{k+1}}-\mathcal{Q}_{l_1l_2}^{k+1}-\mu_{l_1l_2}^{k}(\mathcal{Z}_{l_1l_2}^{k+1}-\mathcal{Z}_{l_1l_2}^{k}).\nonumber
\end{eqnarray}
In terms of provided condition $\lim\limits_{k\to\infty}\mu_{l_1l_2}^{k}(\mathcal{Z}_{l_1l_2}^{k+1}-\mathcal{Z}_{l_1l_2}^{k})=0$, and the bounded
properties of $\mathcal{Z}_{l_1l_2}^{k+1},\mathcal{Q}_{l_1l_2}^{k+1}$, and $\dfrac{\partial(\|\mathcal{Z}_{l_1l_2}\|_{LC})}{\partial\mathcal{Z}_{l_1l_2}}\lvert_{\mathcal{Z}_{l_1l_2}^{\star}}$, and the formula of $\partial(\|\mathcal{Z}_{l_1l_2}\|_{LC})$, we can obtain that $0\in\dfrac{\partial(\|\mathcal{Z}_{l_1l_2}\|_{LC})}{\partial\mathcal{Z}_{l_1l_2}}\lvert_{\mathcal{Z}_{l_1l_2}^{\star}}-\mathcal{Q}_{l_1l_2}^{\star}$.

Similarly, for $\mathcal{X}^{k+1}$, it is noted that,
\begin{eqnarray*}
	&&\nabla_{\mathcal{X}}L(\mathcal{X},\mathcal{Z}_{l_1l_2}^{k},\mathcal{Q}_{l_1l_2}^{k})\lvert_{\mathcal{X}^{k+1}}
	\\&&~~=\sum_{1\leqslant l_{1}< l_{2}\leqslant N}\mathcal{Q}_{l_1l_2}^{k}+\mu_{l_1l_2}^{k}(\mathcal{X}^{k+1}-\mathcal{Z}_{l_1l_2}^{k+1})
	\\&&~~=\sum_{1\leqslant l_{1}< l_{2}\leqslant N}\mathcal{Q}_{l_1l_2}^{k+1}.
\end{eqnarray*}
By the bounded condition of $\{\mathcal{Q}_{l_1l_2}^{k}\}_{1\leqslant l_{1}< l_{2}\leqslant N}$, we can obtain that $\sum_{1\leqslant l_{1}< l_{2}\leqslant N}\mathcal{Q}_{l_1l_2}^{\star}=0$.

Through the above-mentioned, $L(\mathcal{X}^{\star},\mathcal{Z}_{l_1l_2}^{\star},\mathcal{Q}_{l_1l_2}^{\star})$ satisfies the KKT conditions of problem (\ref{TJLC}). Therefore, $L(\mathcal{X}^{\star},\mathcal{Z}_{l_1l_2}^{\star})$ is a stationary point of optimization problem (\ref{TJLC}). This completes our proof.
\end{proof}

\section{Experiments}
In this section, we evaluate the performance of the proposed TJLC method. We use three kinds real-world of tensor data: magnetic resonance image (MRI), multispectral image (MSI), and color video (CV), with different scenes and environments. 
\subsection{Experimental setup}
We employ four picture quality indices (PQIs) to measure the quality of the recovered results, i.e., the peak signal-to-noise rate (PSNR) value, the structural similarity (SSIM) value \cite{1284395}, the feature similarity (FSIM) value \cite{5705575}, and erreur relative globale adimensionnelle de synth$\grave{e}$se (ERGAS) value \cite{2432002352}. The ERGAS value is the smaller the better, and the PSNR, SSIM, and FSIM values are the bigger the better. The optimal results will be highlighted in bold. All tests are implemented on the Windows 11 platform and MATLAB (R2019a) with an 13th Gen Intel Core i5-13600K 3.50 GHz and 32 GB of RAM.

Suppose $I^{c}$ and $I^{r}$ stand for the complete and reference images, respectively; $I_{1}$ and $I_{2}$ represent the spatial size of the image.
\\(a) PSNR is formulated as:
	\begin{eqnarray*}
	\text{PSNR}:=10\log_{10}(\frac{255^{2}\times I_{1}I_{2}}{\|I^{r}-I^{c}\|_F^{2}})
		\end{eqnarray*}
(b) SSIM is defined as:
	\begin{eqnarray*}
	\text{SSIM}:=\dfrac{(2\mu_{c}\mu_{r}+C_{1})(2\sigma_{cr}+C_{2})}{(\mu^{2}_{c}+\mu^{2}_{r}+C_{1})(\sigma_{c}^{2}+\sigma_{r}^{2}+C_{2})},
		\end{eqnarray*}
where $\mu_{c},\mu_{r}$ are average of $I^{c},I^{r}$ respectively; $\sigma_{c},\sigma_{r}$ are the variance of $I^{c},I^{r}$ respectively; $\sigma_{cr}$ is the covariance of $I^{c}$ and $I^{r}$. 
\\(c) FSIM is set as:
	\begin{eqnarray*}
	\text{FSIM}:=(\sum_{x\in\Omega} S_{L}(x)\cdot PC_{m}(x))/(\sum_{x\in\Omega} PC_{m}(x)),
		\end{eqnarray*}
where $S_{L}(x)$ is derived from the phase congruency and the image gradient magnitude of
$I^{c}$ and $I^{r}$ ; $PC_{m}(x)$ is the maximum phase congruency of $PC_{c}$ (for $I^{c}$) and $PC_{r}$ (for
$I^{r}$ ); and $\Omega$ represents the entire airspace of the image. 

When calculating the PSNR, SSIM and FSIM values of the tensor image, the slices of the tensor are calculated first, and the average value is taken at last.
\\(d) For $\mathcal{X},\mathcal{Y}\in \mathbb{R}^{\mathit{I}_{1}\times\mathit{I}_{2}\times\mathit{I}_{3}}$, ERGAS is defined as:
\begin{eqnarray*}
	\text{ERGAS}:=100\sqrt{\dfrac{\sum_{i=1}^{\mathit{I}_{3}}({\tt mse}(\mathcal{Y}(:,:,i)-\mathcal{X}(:,:,i))/({\tt mean2}(\mathcal{Y}(:,:,i))))}{\mathit{I}_{3}}},\nonumber
\end{eqnarray*}
where ${\tt mse}$ is mean squared error performance function, and ${\tt mean2}$ is average of matrix elements.
\\(e) For a tensor $\mathcal{X}\in \mathbb{R}^{\mathit{I}_{1}\times\mathit{I}_{2}\times\cdots
	\times\mathit{I}_{N}}$, its missing rate (MR) is defined as $(1-{\tt num}(\Omega)/(\mathit{I}_{1}\times\mathit{I}_{2}\times\cdots
	\times\mathit{I}_{N}))\times100\%$, where $\Omega$ denotes an index set of observed data; ${\tt num}(\Omega)$ is the total number of elements in index set $\Omega$. 

The compared LRTC methods are as follows: HaLRTC \cite{6138863}, LRTCTV \cite{3120171}, representing state-of-the-art for the Tucker-decomposition-based method; and TNN \cite{7782758}, PSTNN \cite{1122020112680}, FTNN \cite{9115254}, WSTNN \cite{2020170}, BEMCP \cite{ZHANG2024110253} representing state-of-the-art for the t-SVD-based method. All comparison methods use the corresponding optimal parameters for the experiment

\subsection{MRI data}
We test the performance of the proposed TJLC method and the the compared method on MRI\footnote{http://brainweb.bic.mni.mcgill.ca/brainweb/selection\_normal.html} data with the size of $181\times217\times181$. Firstly, we demonstrate the visual effect recovered by MRI data at missing rates of 90\% and 95\% in Fig. \ref{MRIF}. The proposed TJLC method is clearly superior to the compared methods. From Fig. \ref{MRIF}, it can be seen that the TJLC method proposed in this study produces clearer texture recovery in images compared to the suboptimal BEMCP method. Then, we list the average quantitative results of frontal slice of MRI restored by all methods at different missing rates in Table \ref{MRIT}. At the missing rate of 95\%, the PSNR value of the proposed TJLC method is 1.322 dB higher than that of the suboptimal BEMCP method, and the values of SSIM, FSIM, and ERGAS are also better than the suboptimal BEMCP method. For missing rate of 90\%, the proposed TJLC method also achieves a PSNR value that is 1.064 dB higher than the suboptimal BEMCP method. Furthermore, the proposed TJLC method outperforms the suboptimal method in terms of other quantitative metrics.
\begin{table}[!h]
	\centering
	\caption{The PSNR, SSIM, FSIM and ERGAS values for MRI data tested by observed and the eight utilized LRTC methods.}
	\label{MRIT}
	{\footnotesize 
		\begin{tabular}{|c|cccc|cccc|}
			\hline
			MR       & \multicolumn{4}{c|}{95\%}                                            & \multicolumn{4}{c|}{90\%}                                           \\ \hline
			PQIs     & PSNR            & SSIM           & FSIM           & ERGAS            & PSNR            & SSIM           & FSIM           & ERGAS           \\ \hline
			Observed & 11.398          & 0.310          & 0.530          & 1021.184         & 11.633          & 0.323          & 0.565          & 994.004         \\
			HaLRTC   & 17.282          & 0.297          & 0.636          & 538.555          & 20.112          & 0.438          & 0.725          & 390.982         \\
			TNN      & 22.701          & 0.470          & 0.743          & 304.037          & 26.064          & 0.642          & 0.812          & 205.606         \\
			LRTCTV & 19.370          & 0.597          & 0.702          & 432.969          & 22.880          & 0.749          & 0.805          & 293.425         \\
			PSTNN    & 17.043          & 0.243          & 0.640          & 544.389          & 22.854          & 0.484          & 0.755          & 297.299         \\
			FTNN     & 24.792          & 0.689          & 0.835          & 234.158          & 28.330          & 0.826          & 0.895          & 151.795         \\
			WSTNN    & 25.534          & 0.708          & 0.825          & 211.914          & 29.063          & 0.837          & 0.887          & 139.068         \\
			BEMCP    & 29.772          & 0.835          & 0.884          & 126.076          & 33.383          & 0.913          & 0.928          & 83.053          \\
			TJLC     & \textbf{31.094} & \textbf{0.886} & \textbf{0.906} & \textbf{107.356} & \textbf{34.447} & \textbf{0.942} & \textbf{0.942} & \textbf{72.728} \\ \hline
		\end{tabular}}%
\end{table}
\begin{figure}[!h] 
	\centering  
	\vspace{0cm} 
\subfigtopskip=2pt 
\subfigbottomskip=2pt 
\subfigure[Original]{
	\begin{minipage}[b]{0.095\linewidth}
		\includegraphics[width=1\linewidth]{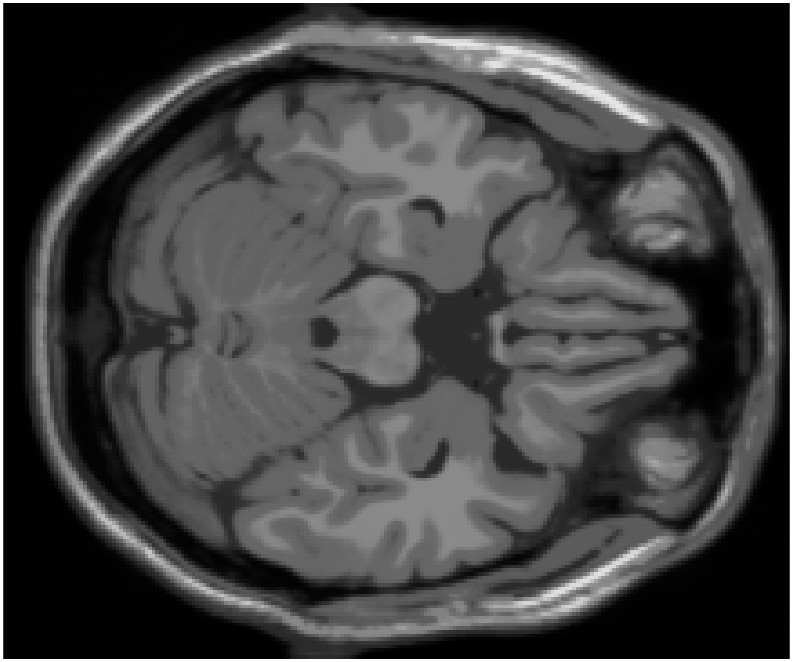}\\
		\includegraphics[width=1\linewidth]{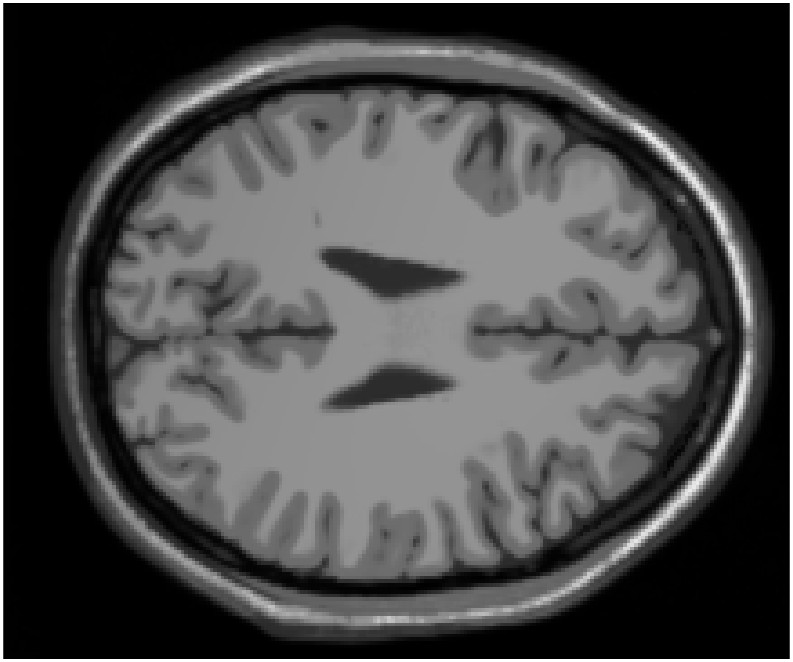}
\end{minipage}}\subfigure[Observed]{
	\begin{minipage}[b]{0.095\linewidth}
		\includegraphics[width=1\linewidth]{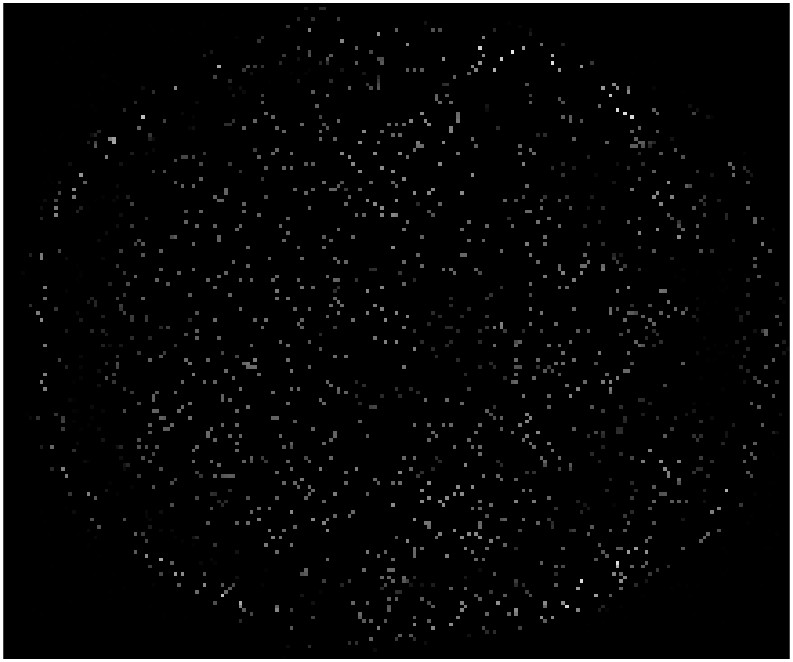}\\
		\includegraphics[width=1\linewidth]{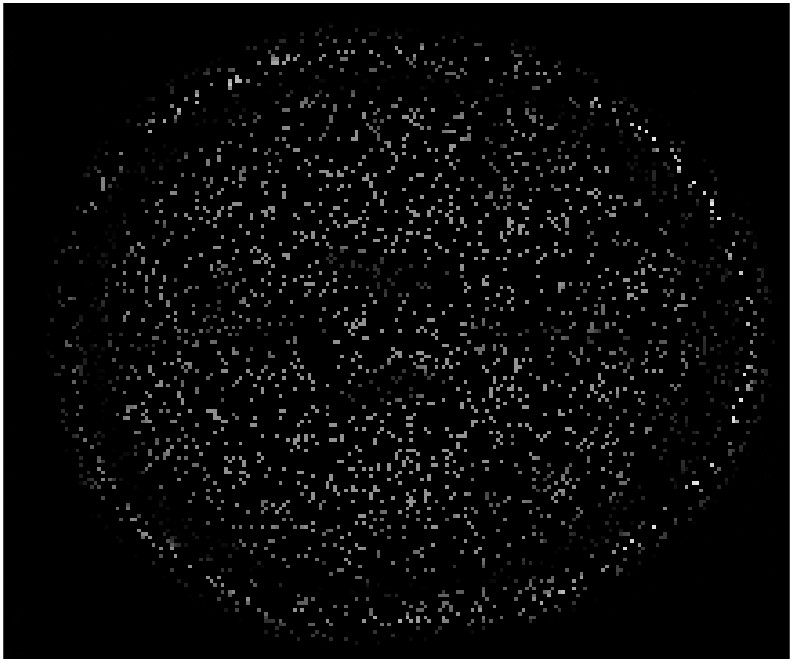}
\end{minipage}}\subfigure[HaLRTC]{
	\begin{minipage}[b]{0.095\linewidth}
		\includegraphics[width=1\linewidth]{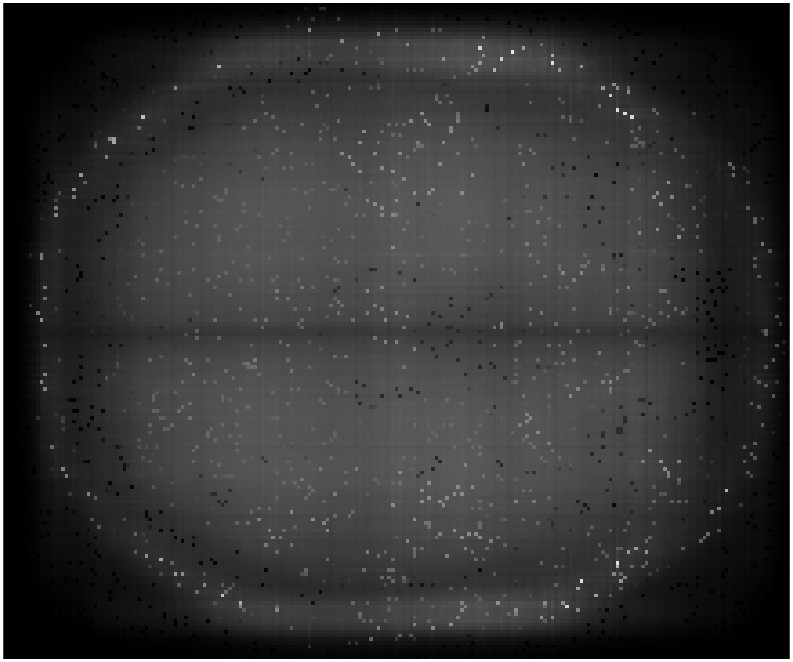}\\
		\includegraphics[width=1\linewidth]{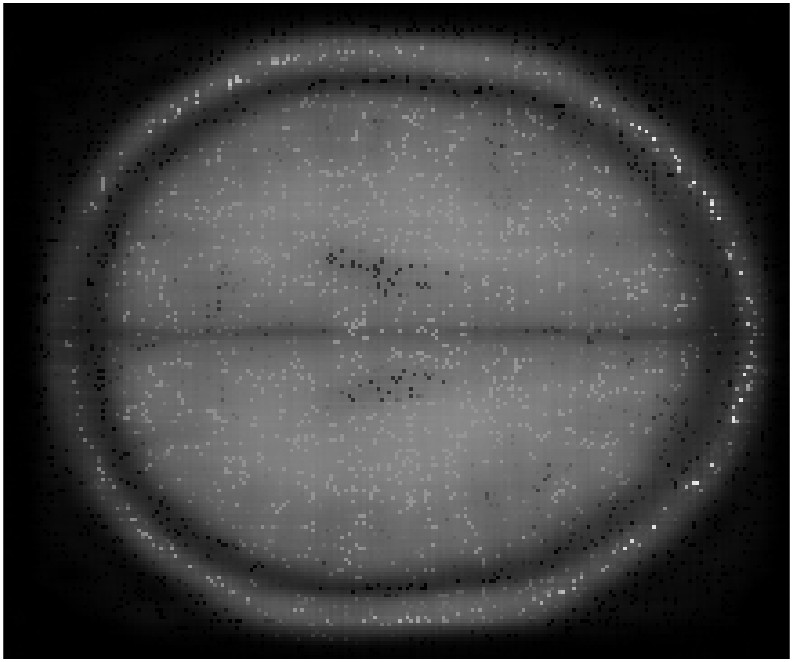}
\end{minipage}}\subfigure[TNN]{
	\begin{minipage}[b]{0.095\linewidth}
		\includegraphics[width=1\linewidth]{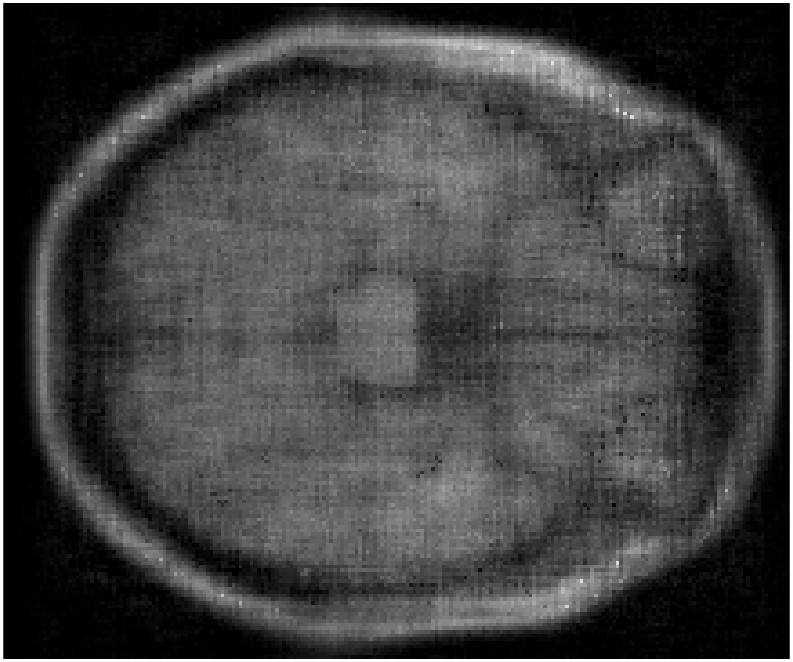}\\
		\includegraphics[width=1\linewidth]{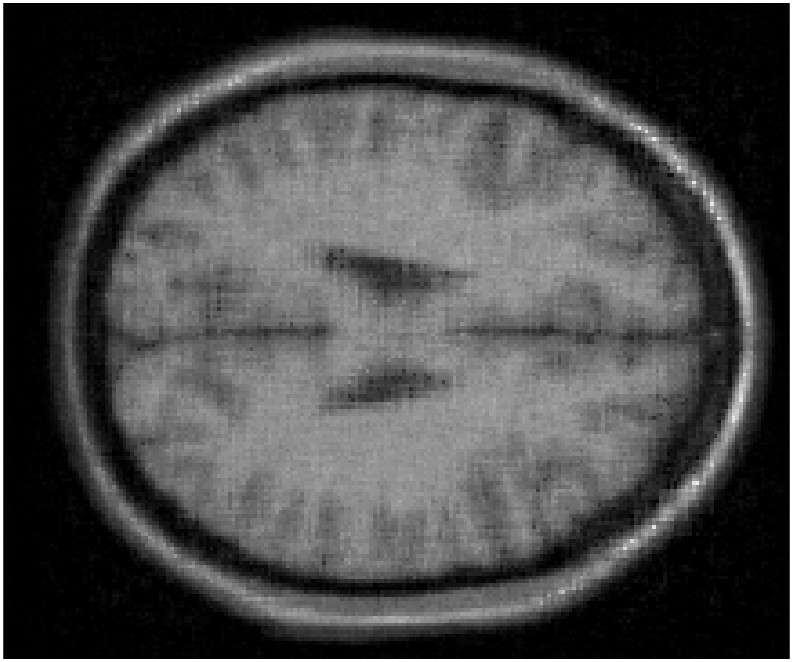}
\end{minipage}}\subfigure[LRTCTV]{
	\begin{minipage}[b]{0.095\linewidth}
		\includegraphics[width=1\linewidth]{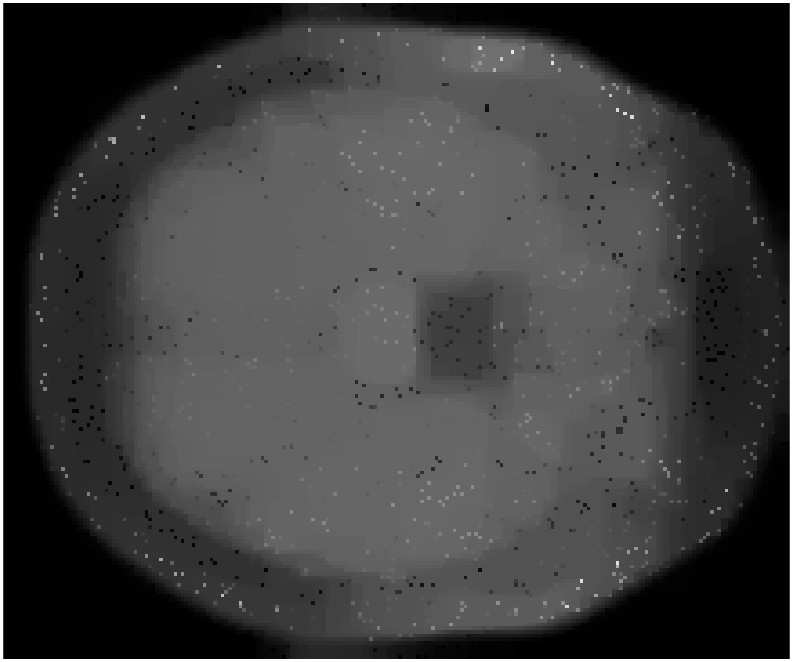}\\
		\includegraphics[width=1\linewidth]{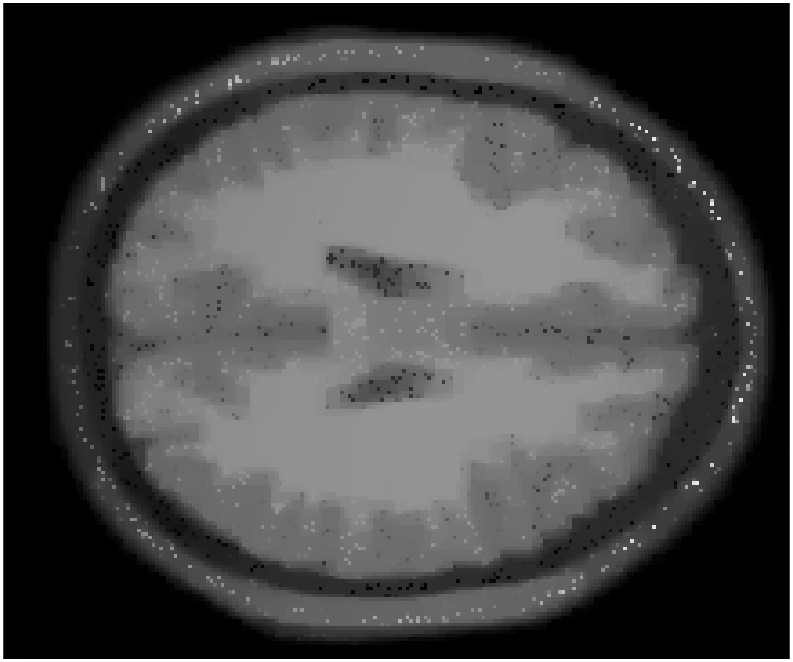}
\end{minipage}}\subfigure[PSTNN]{
	\begin{minipage}[b]{0.095\linewidth}
		\includegraphics[width=1\linewidth]{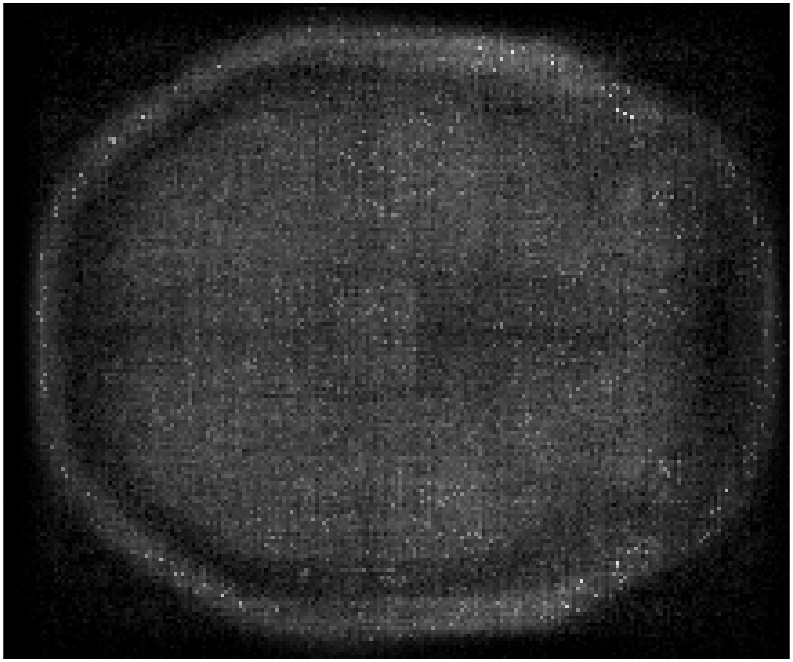}\\
		\includegraphics[width=1\linewidth]{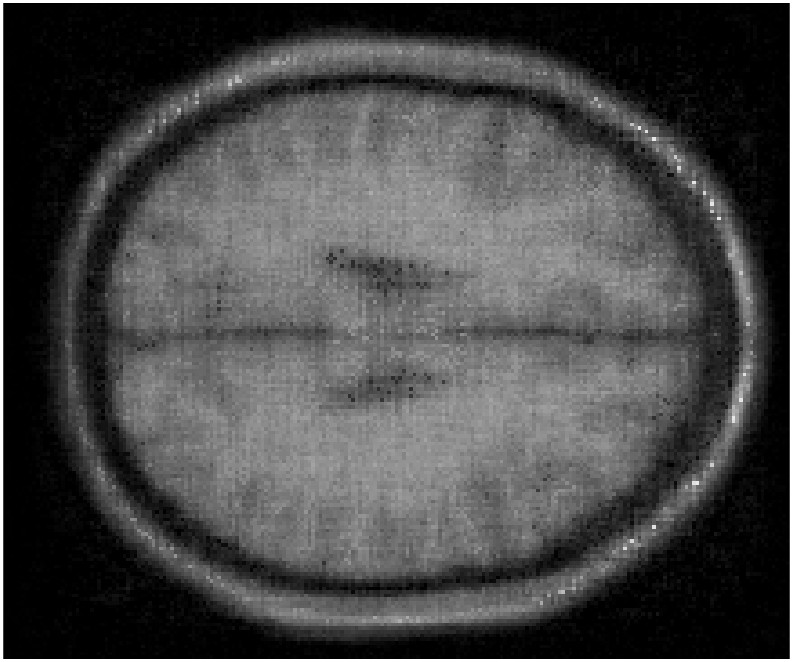}
\end{minipage}}\subfigure[FTNN]{
	\begin{minipage}[b]{0.095\linewidth}
		\includegraphics[width=1\linewidth]{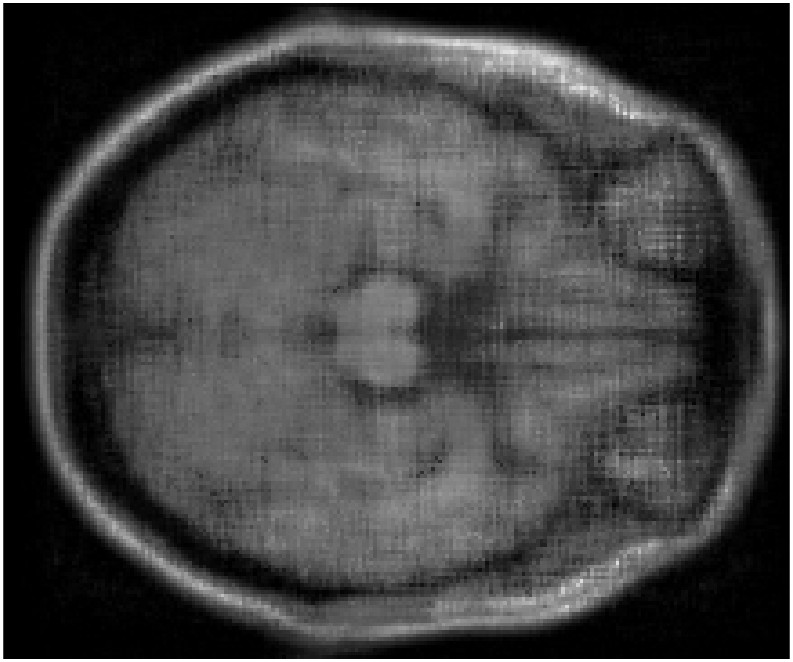}\\
		\includegraphics[width=1\linewidth]{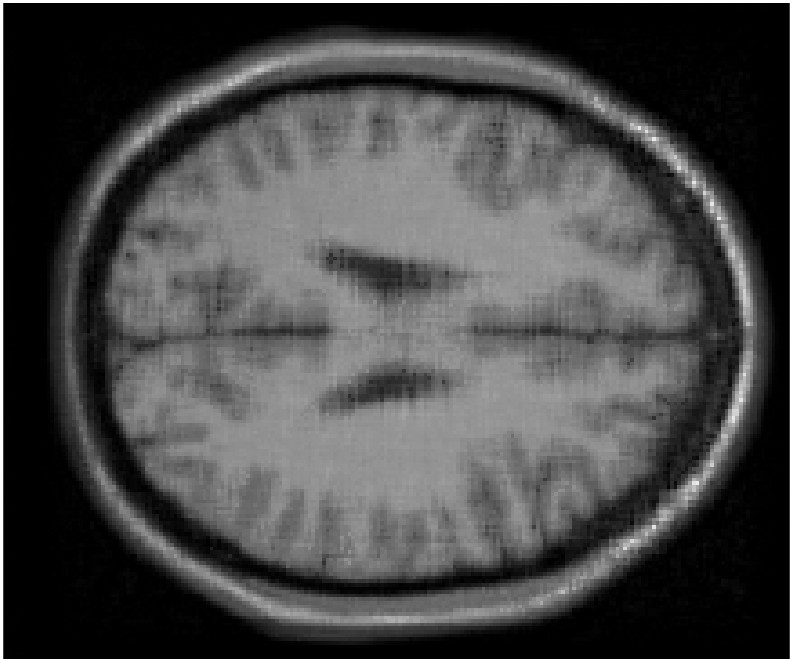}
\end{minipage}}\subfigure[WSTNN]{
	\begin{minipage}[b]{0.095\linewidth}
		\includegraphics[width=1\linewidth]{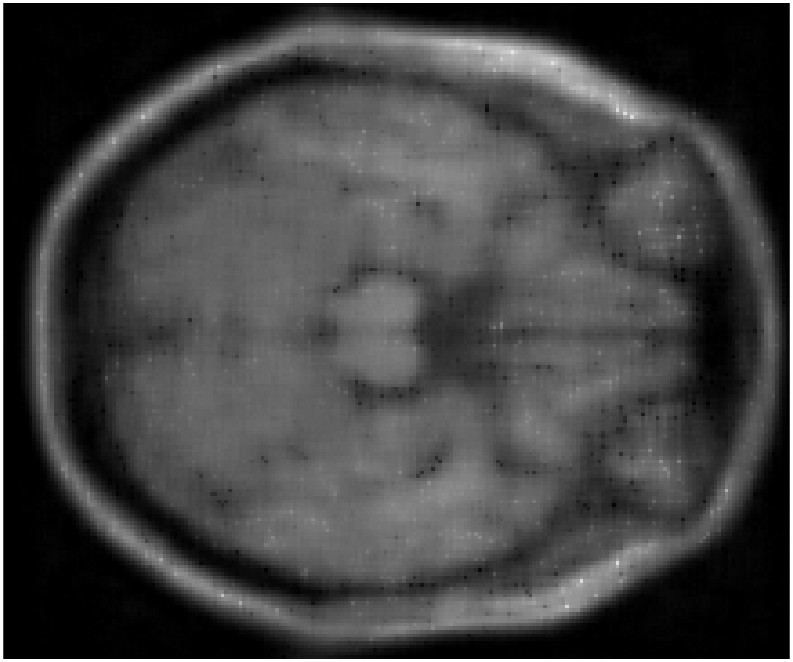}\\
		\includegraphics[width=1\linewidth]{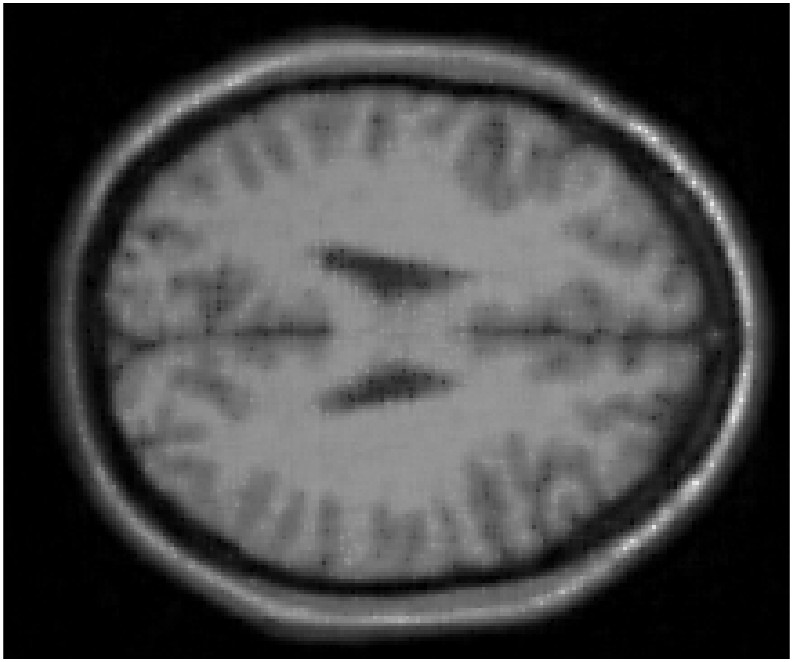}
\end{minipage}}\subfigure[BEMCP]{
	\begin{minipage}[b]{0.095\linewidth}
		\includegraphics[width=1\linewidth]{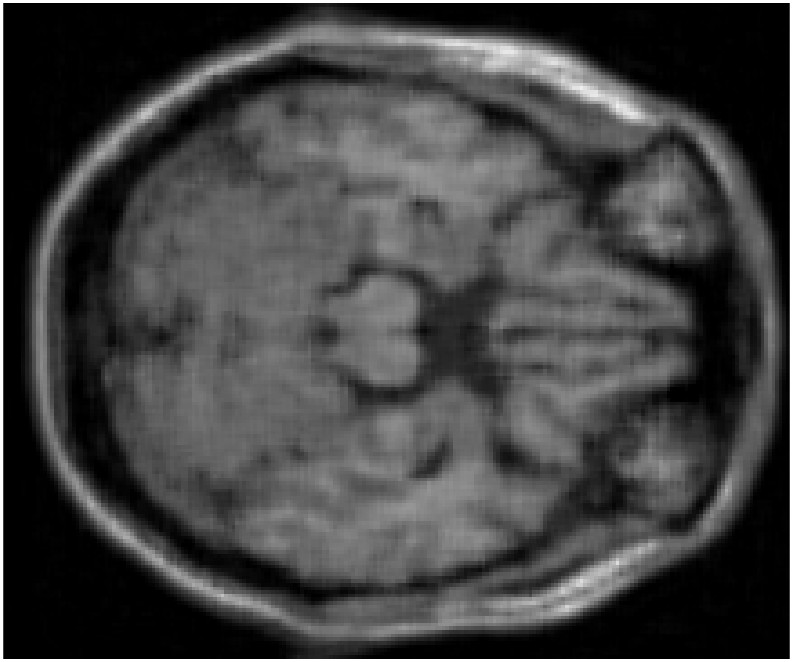}\\
		\includegraphics[width=1\linewidth]{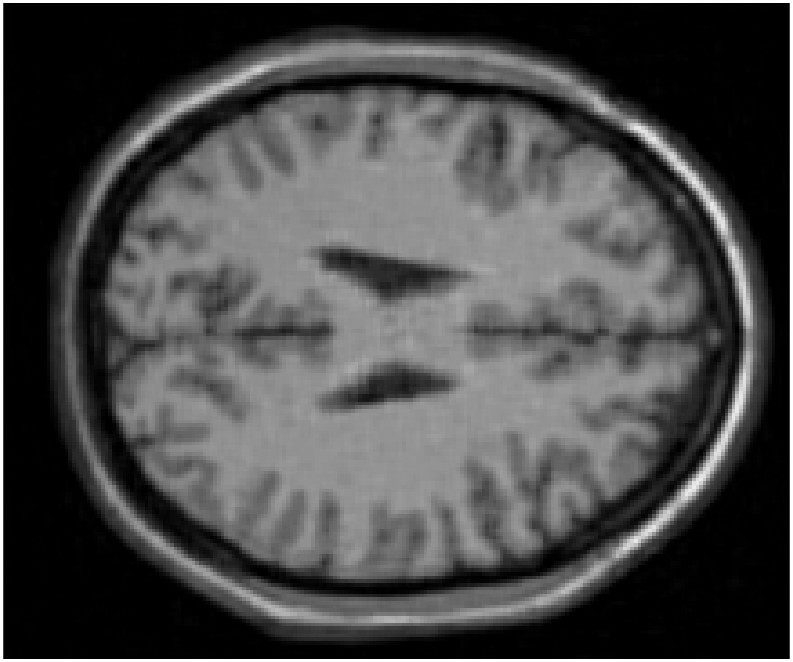}
\end{minipage}}\subfigure[TJLC]{
	\begin{minipage}[b]{0.095\linewidth}
		\includegraphics[width=1\linewidth]{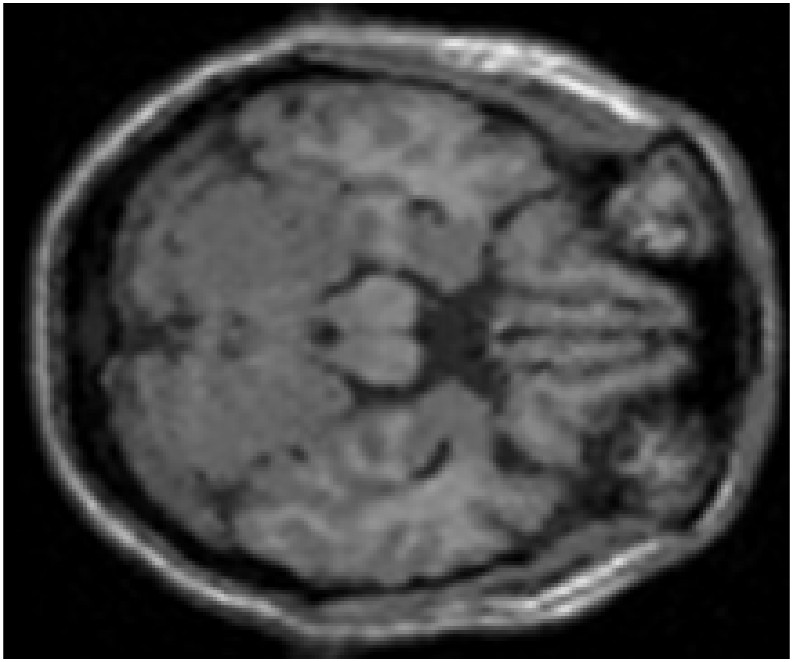}\\
		\includegraphics[width=1\linewidth]{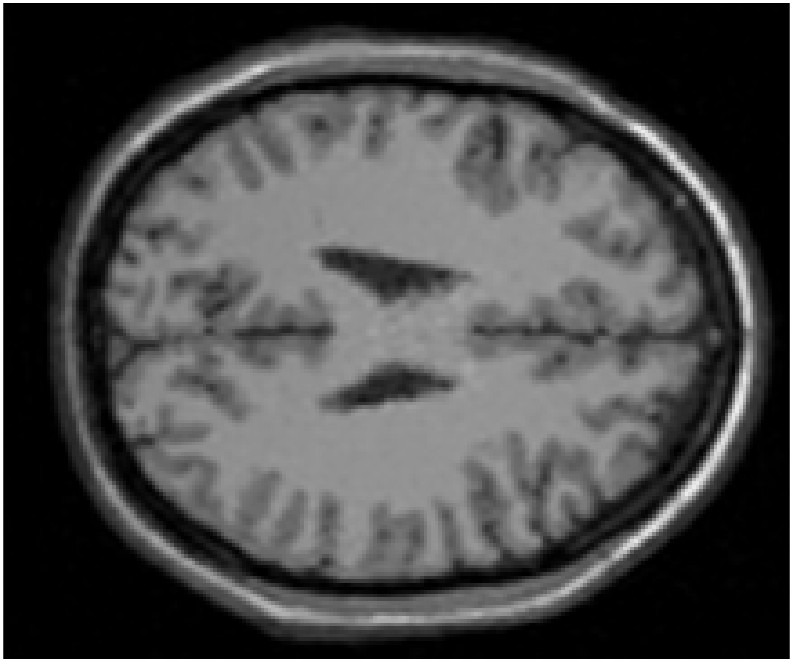}
\end{minipage}}
	\caption{Visual results for MRI data. MR: top row is 95\%, and last  row is 90\%. The corresponding slices in each row are: 50, 100.}
	\label{MRIF}
\end{figure}

\subsection{MSI data}
We test four MSIs in the dataset CAVE\footnote{http://www.cs.columbia.edu/CAVE/databases/multispectral/} (respectively named clay, chart\_and\_stuffed\_toy, balloons, cd). All testing data are of size $256\times256\times31$. Fig. \ref{MSIF} demonstrates the visual results for different missing rates and different spectral bands. From Fig. \ref{MSIF}, the visual effect of the proposed TJLC method is better than the compared method at all missing rates. Specifically, as shown in Fig. \ref{MSIF}, it can be observed that the face of the toy in the image of `` chart\_and\_stuffed\_toy " restored by the proposed TJLC method appears very smooth and almost identical to the original image. In contrast, the image restored by the suboptimal BEMCP method exhibit many small black dots and lack smoothness. To further highlight the superiority of the proposed TJLC method, the average quantitative results of four MSIs are listed in Table \ref{MSIT}. It can be seen that the proposed TJLC method has a great improvement compared to the suboptimal method. The PSNR value at 90\% missing rate is 3.212 dB higher than the suboptimal BEMCP method, and even reaches 3.398 dB at 95\% missing rate.
\begin{figure}[!h] 
	\centering  
	\vspace{0cm} 
\subfigtopskip=2pt 
\subfigbottomskip=2pt 
\subfigure[Original]{
	\begin{minipage}[b]{0.095\linewidth}
		\includegraphics[width=1\linewidth]{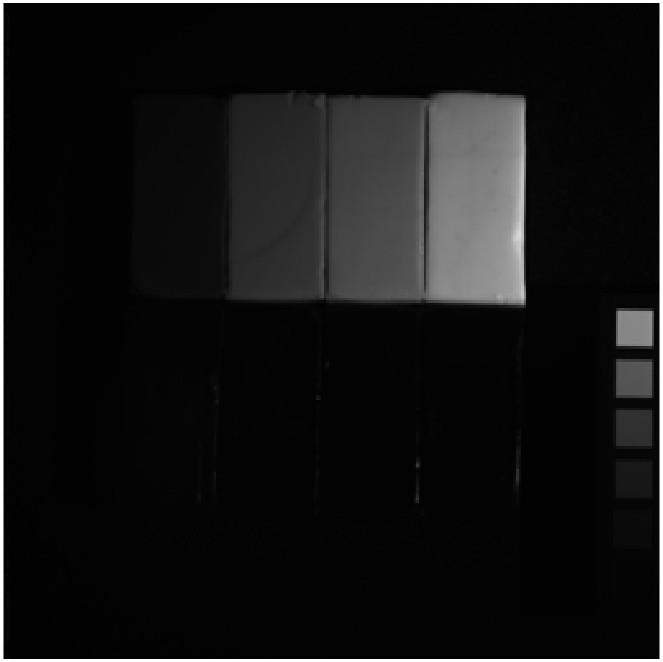}\\
		\includegraphics[width=1\linewidth]{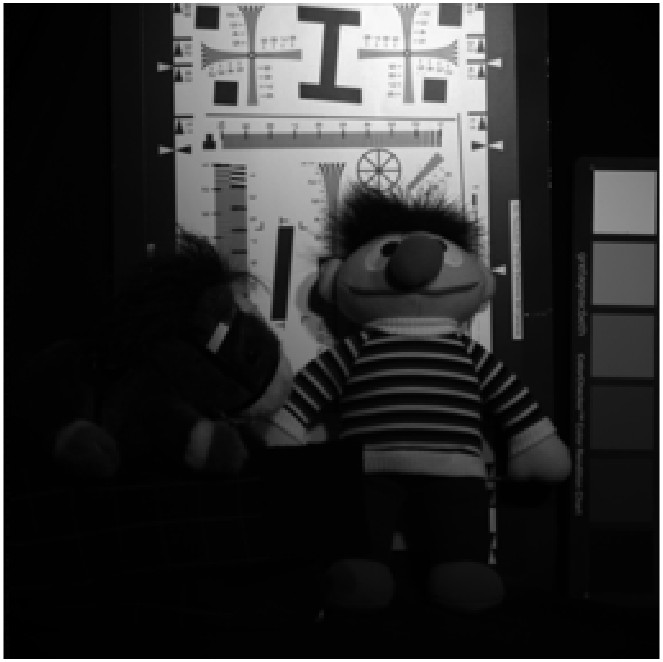}\\
		\includegraphics[width=1\linewidth]{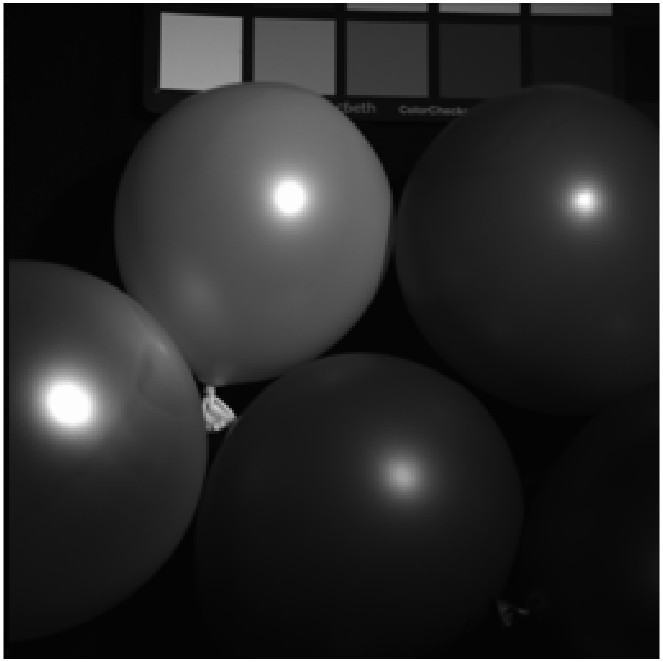}\\
		\includegraphics[width=1\linewidth]{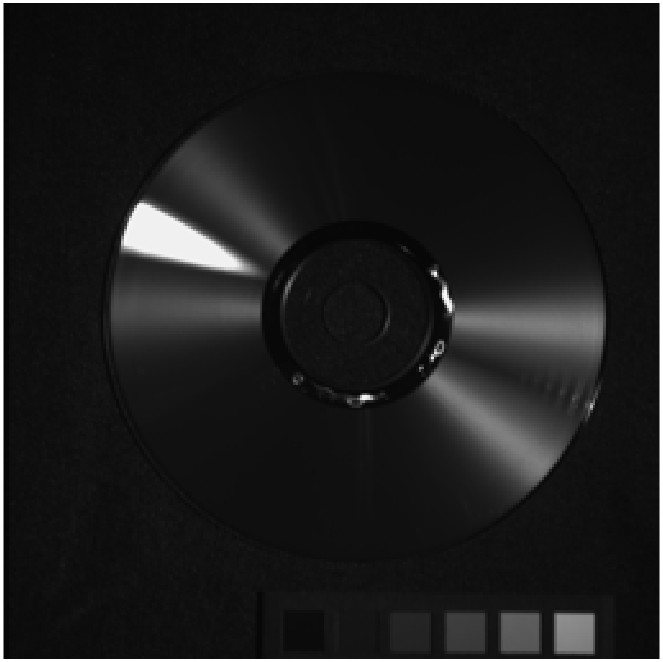}
\end{minipage}}\subfigure[Observed]{
	\begin{minipage}[b]{0.095\linewidth}
		\includegraphics[width=1\linewidth]{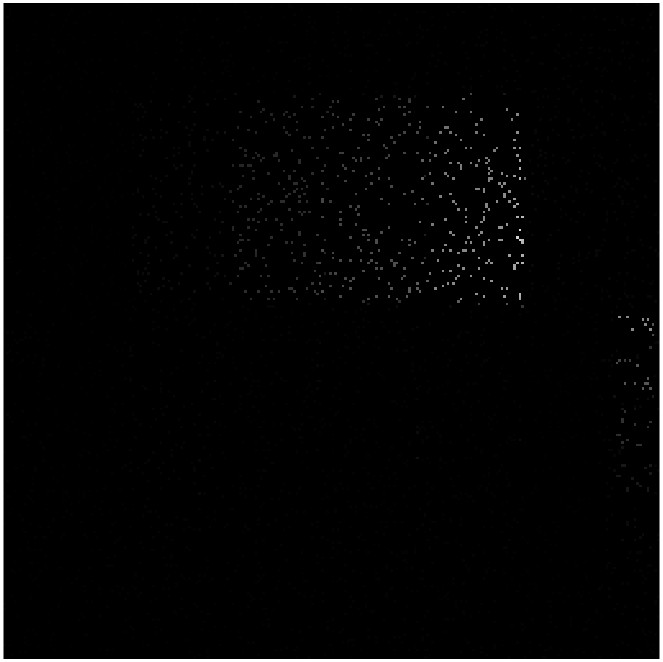}\\
		\includegraphics[width=1\linewidth]{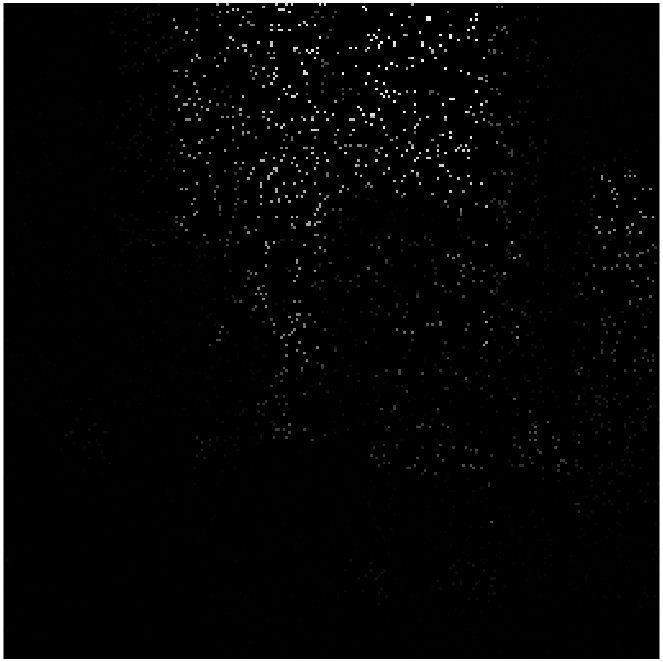}\\
		\includegraphics[width=1\linewidth]{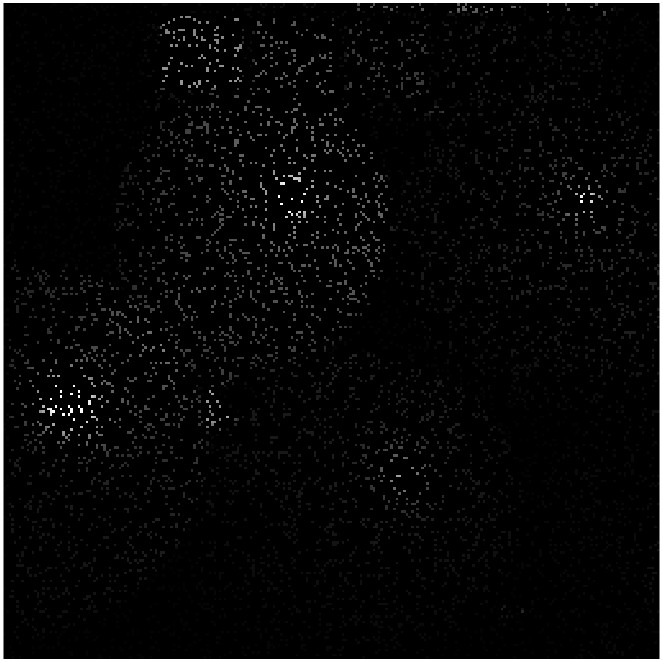}\\
		\includegraphics[width=1\linewidth]{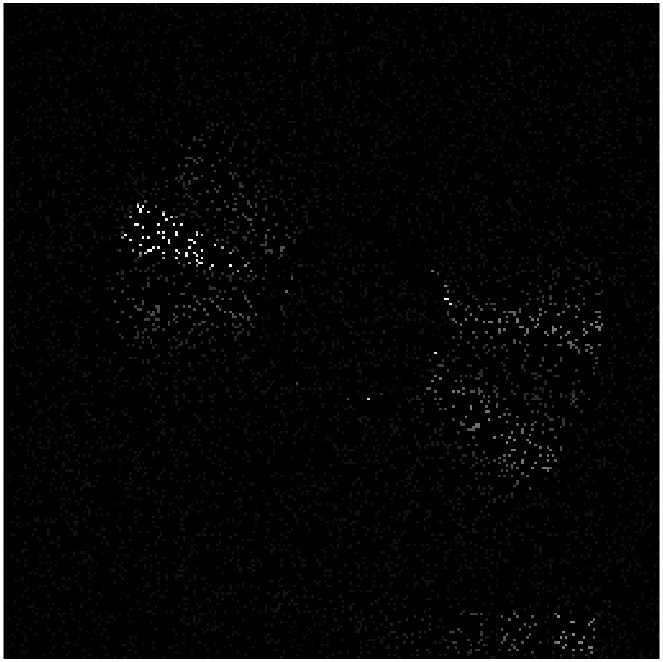}
\end{minipage}}\subfigure[HaLRTC]{
	\begin{minipage}[b]{0.095\linewidth}
		\includegraphics[width=1\linewidth]{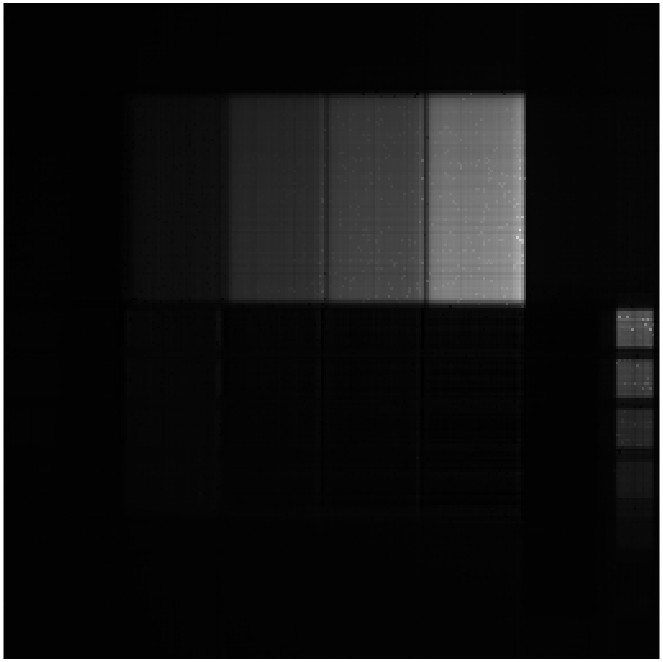}\\
		\includegraphics[width=1\linewidth]{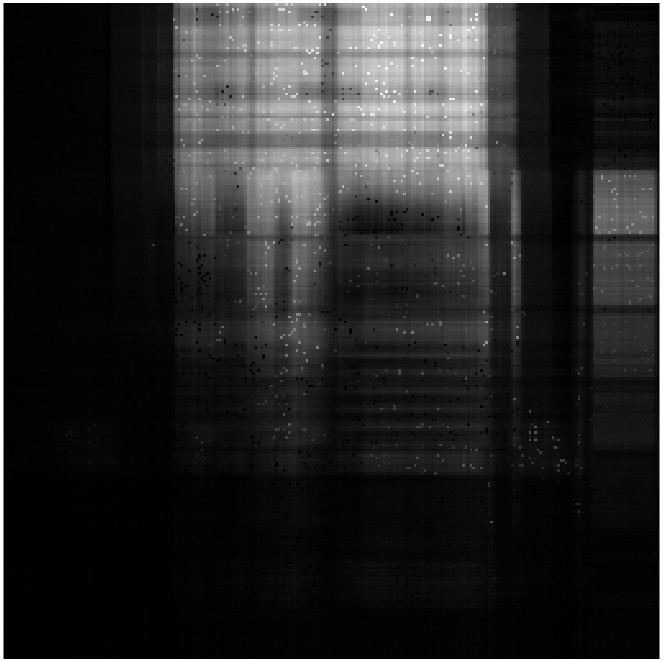}\\
		\includegraphics[width=1\linewidth]{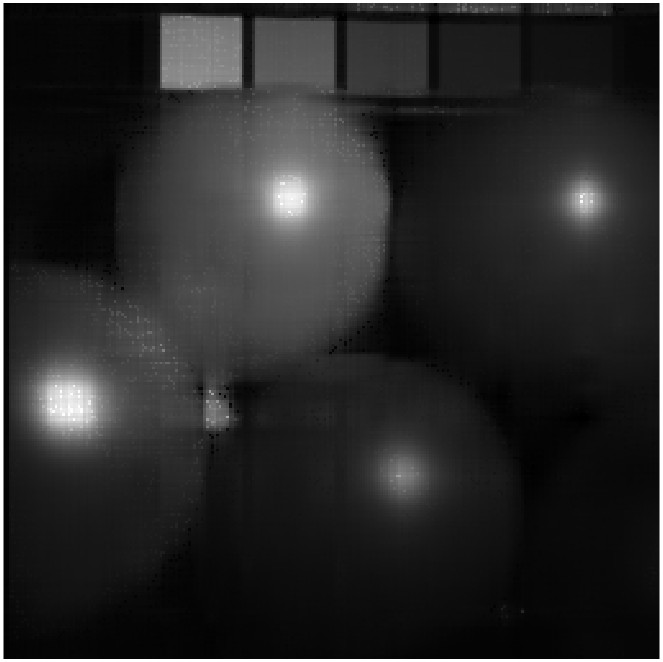}\\
		\includegraphics[width=1\linewidth]{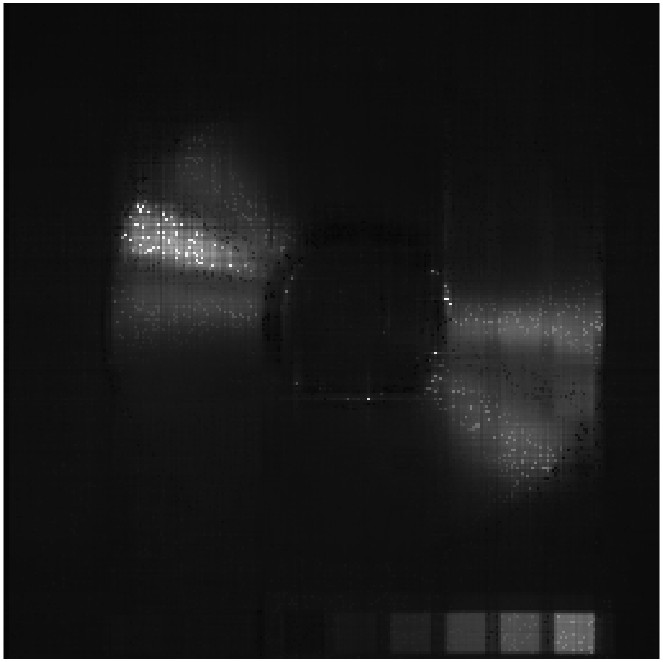}
\end{minipage}}\subfigure[TNN]{
	\begin{minipage}[b]{0.095\linewidth}
		\includegraphics[width=1\linewidth]{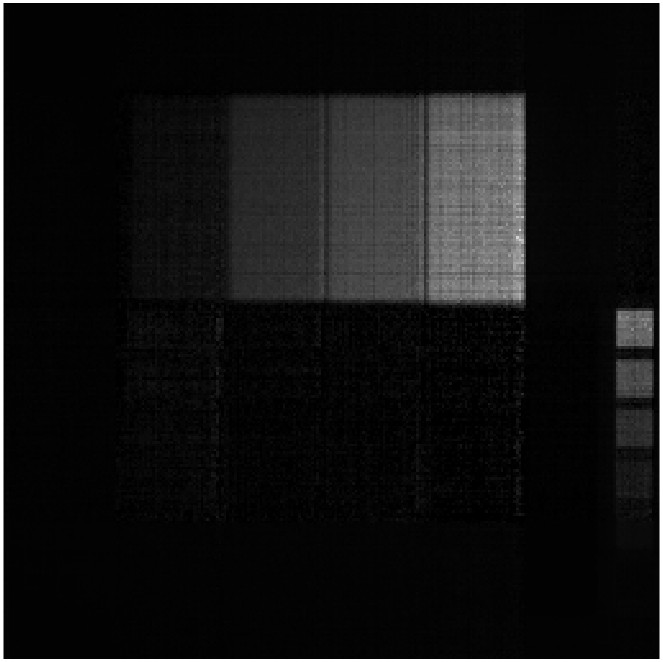}\\
		\includegraphics[width=1\linewidth]{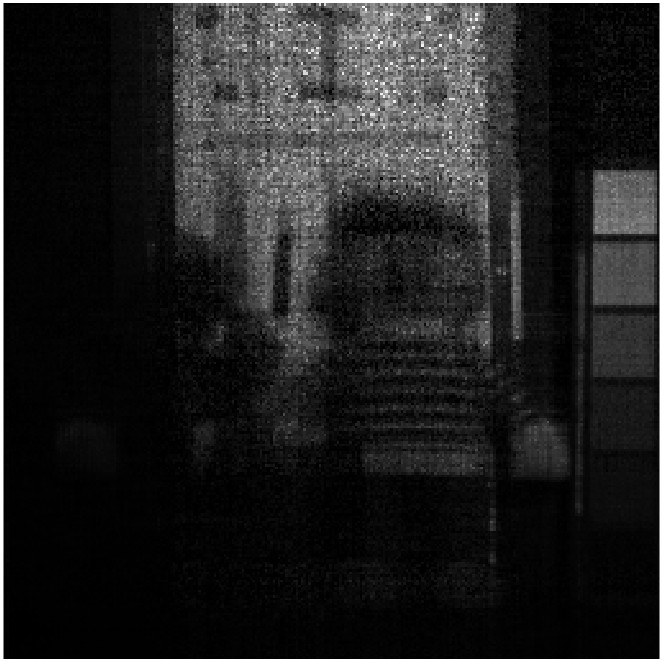}\\
		\includegraphics[width=1\linewidth]{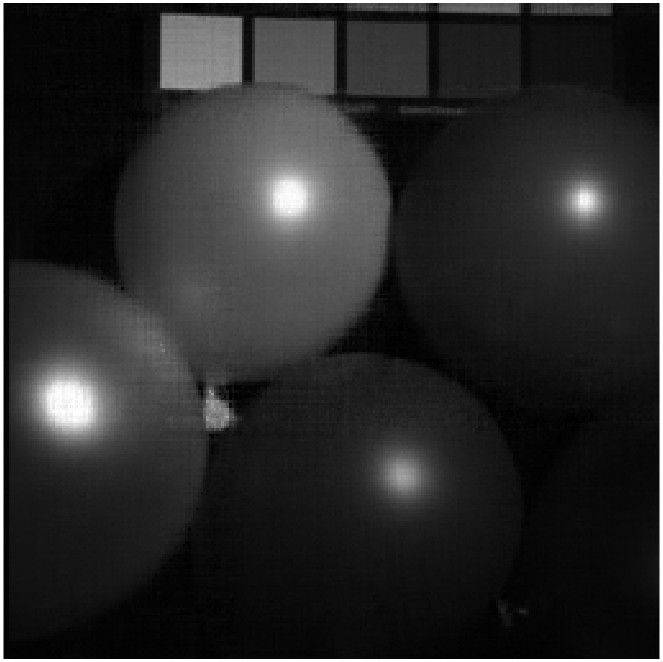}\\
		\includegraphics[width=1\linewidth]{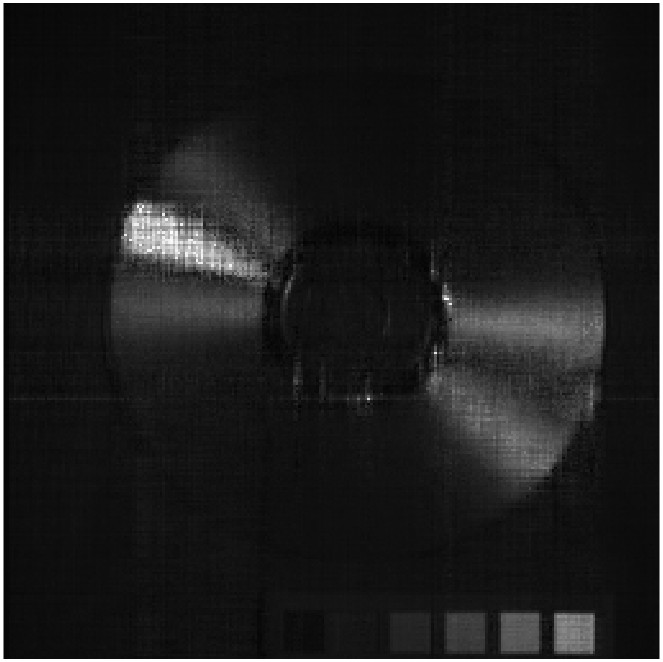}
\end{minipage}}\subfigure[LRTCTV]{
	\begin{minipage}[b]{0.095\linewidth}
		\includegraphics[width=1\linewidth]{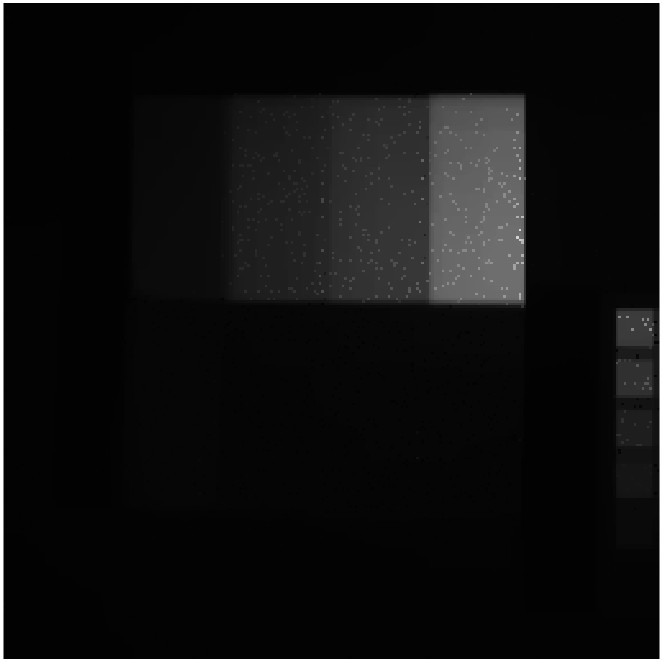}\\
		\includegraphics[width=1\linewidth]{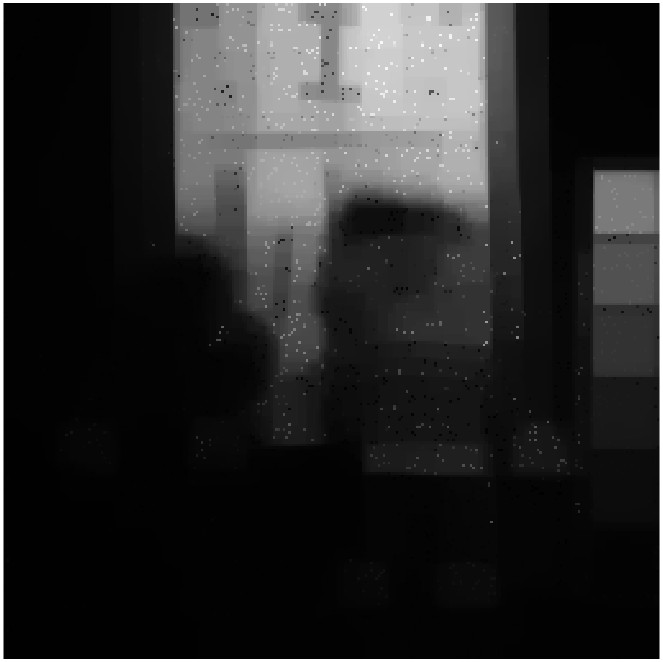}\\
		\includegraphics[width=1\linewidth]{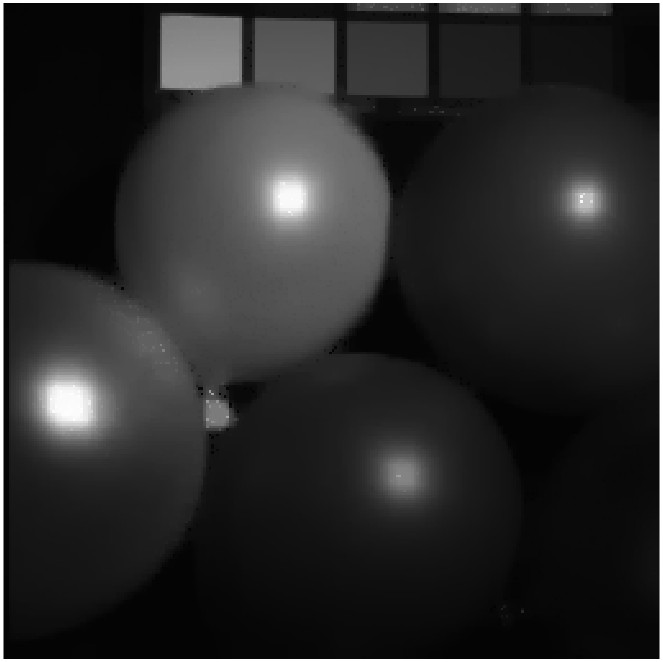}\\
		\includegraphics[width=1\linewidth]{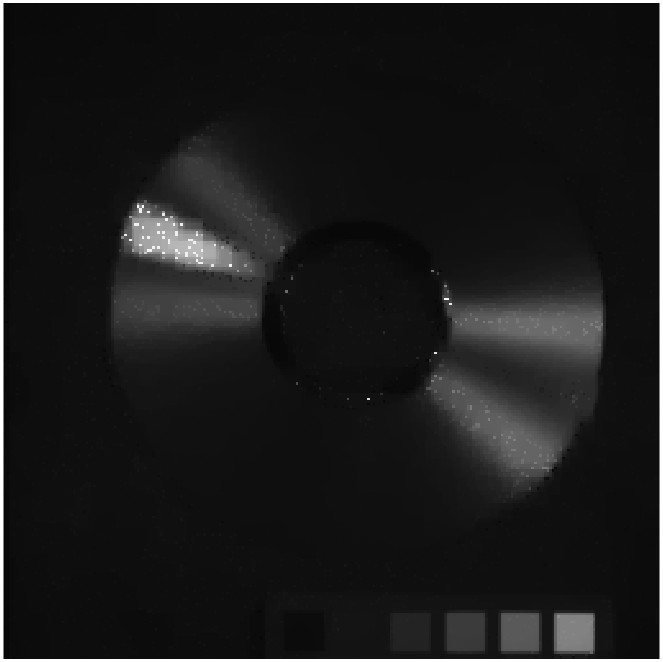}
\end{minipage}}\subfigure[PSTNN]{
	\begin{minipage}[b]{0.095\linewidth}
		\includegraphics[width=1\linewidth]{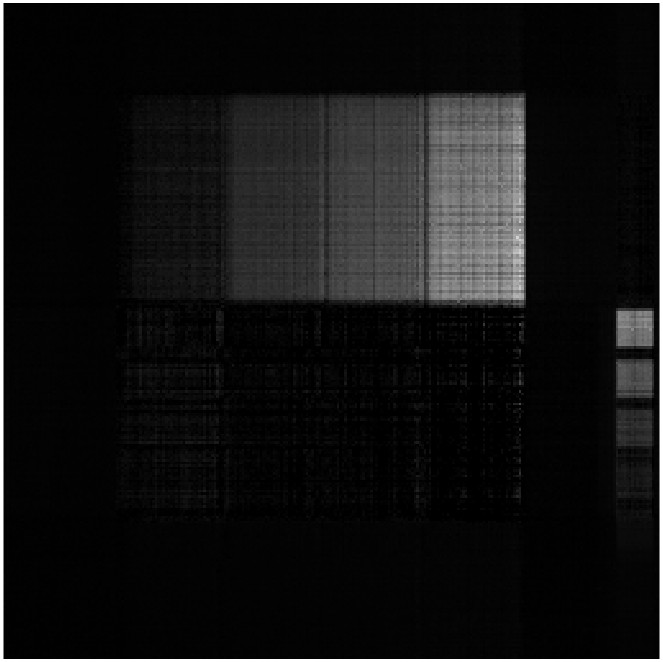}\\
		\includegraphics[width=1\linewidth]{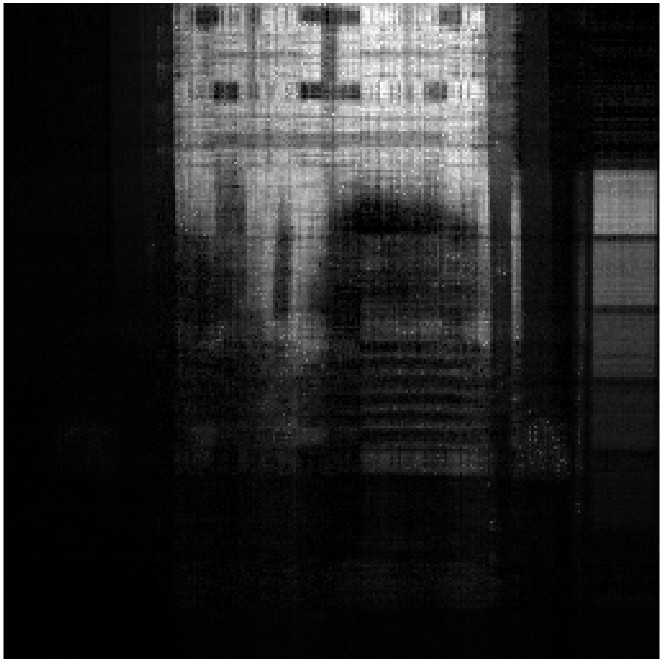}\\
		\includegraphics[width=1\linewidth]{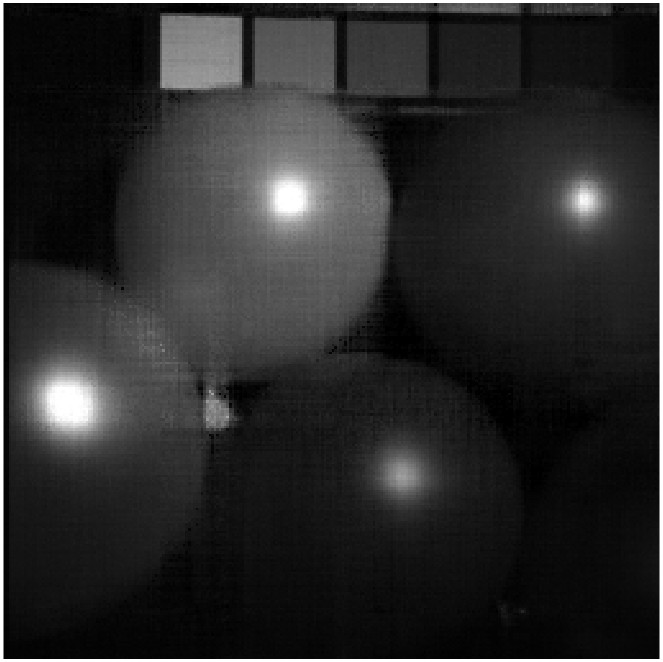}\\
		\includegraphics[width=1\linewidth]{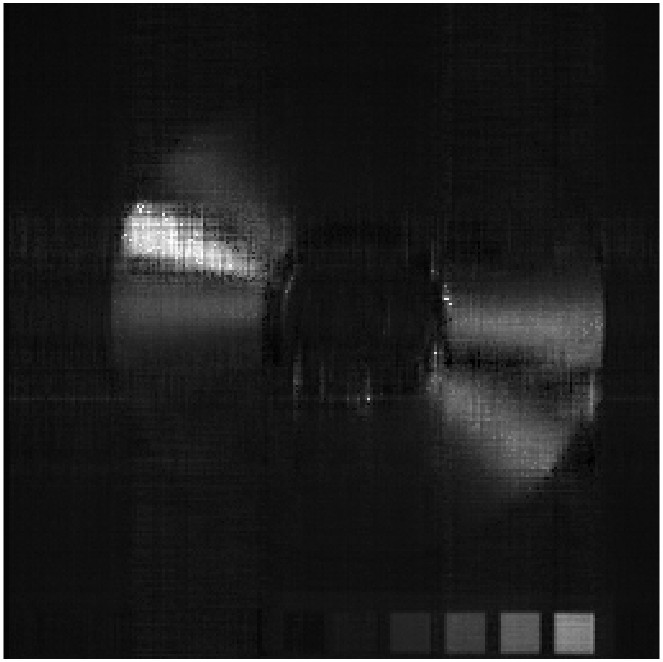}
\end{minipage}}\subfigure[FTNN]{
	\begin{minipage}[b]{0.095\linewidth}
		\includegraphics[width=1\linewidth]{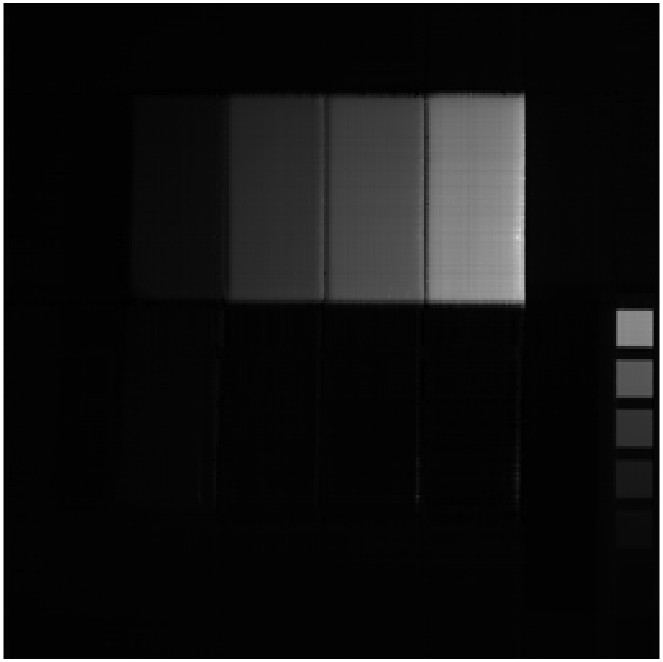}\\
		\includegraphics[width=1\linewidth]{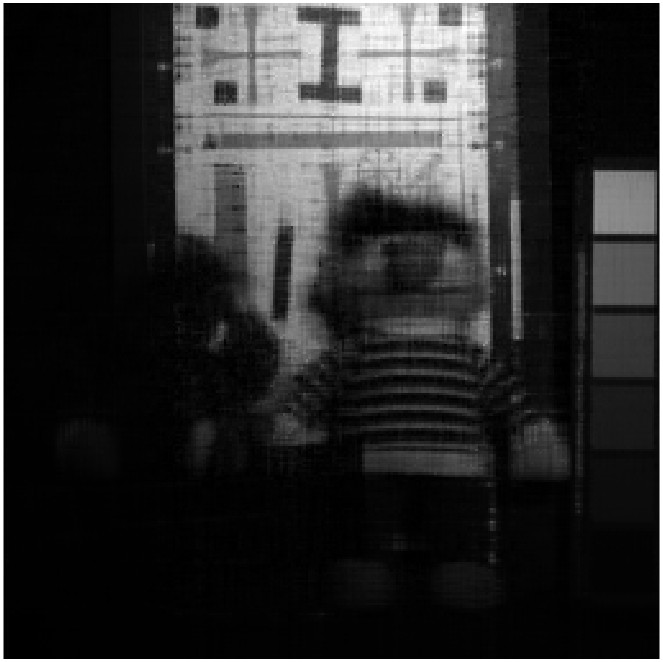}\\
		\includegraphics[width=1\linewidth]{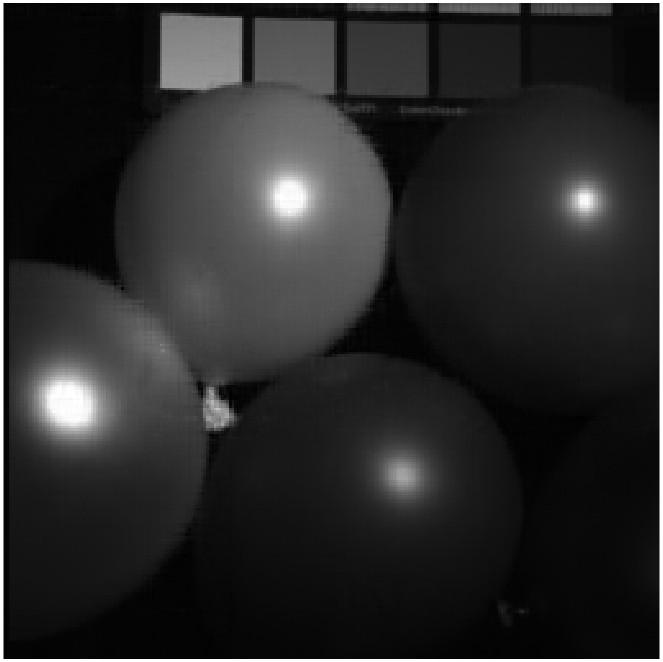}\\
		\includegraphics[width=1\linewidth]{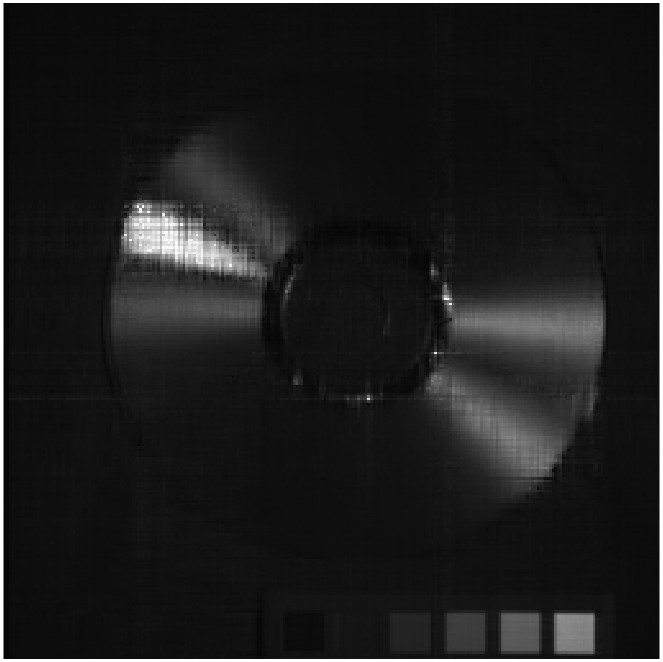}
\end{minipage}}\subfigure[WSTNN]{
	\begin{minipage}[b]{0.095\linewidth}
		\includegraphics[width=1\linewidth]{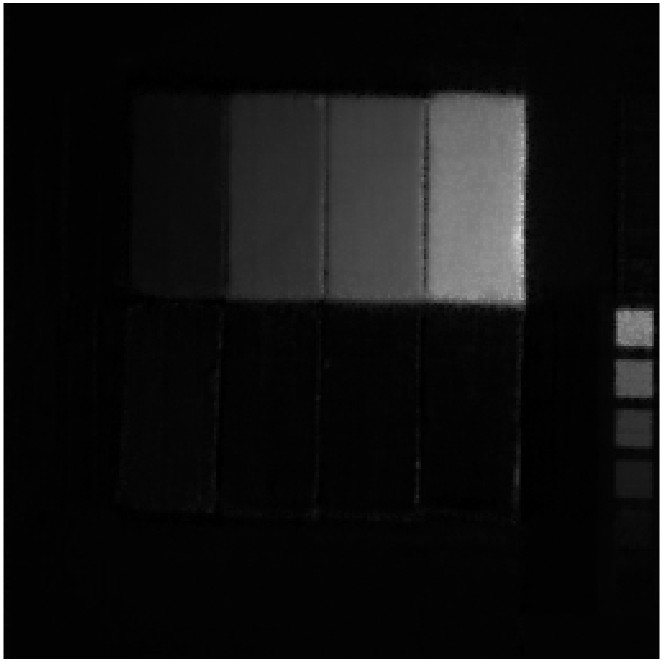}\\
		\includegraphics[width=1\linewidth]{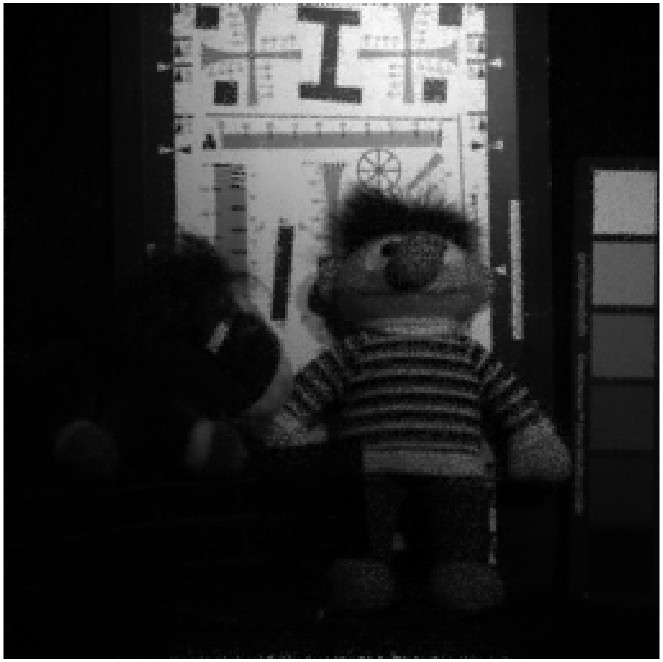}\\
		\includegraphics[width=1\linewidth]{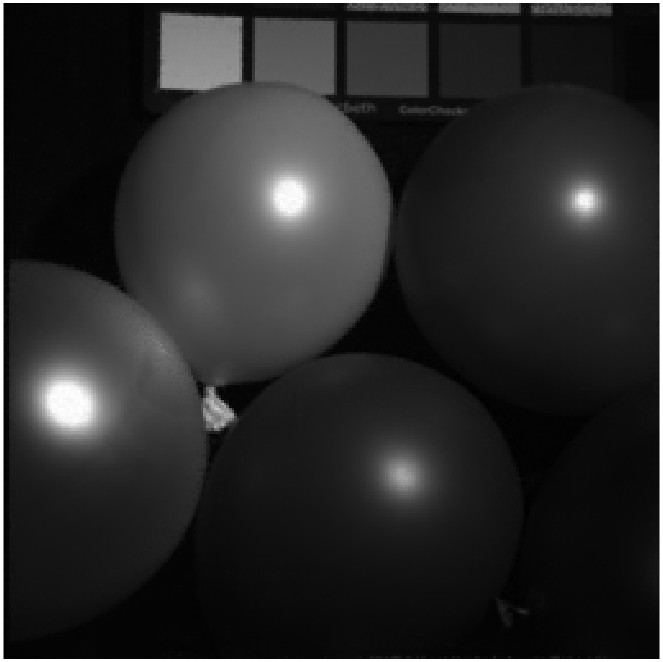}\\
		\includegraphics[width=1\linewidth]{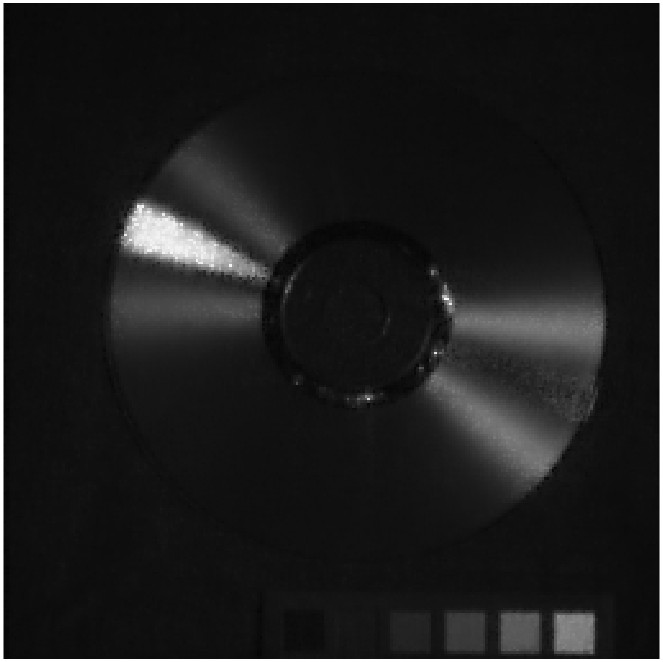}
\end{minipage}}\subfigure[BEMCP]{
	\begin{minipage}[b]{0.095\linewidth}
		\includegraphics[width=1\linewidth]{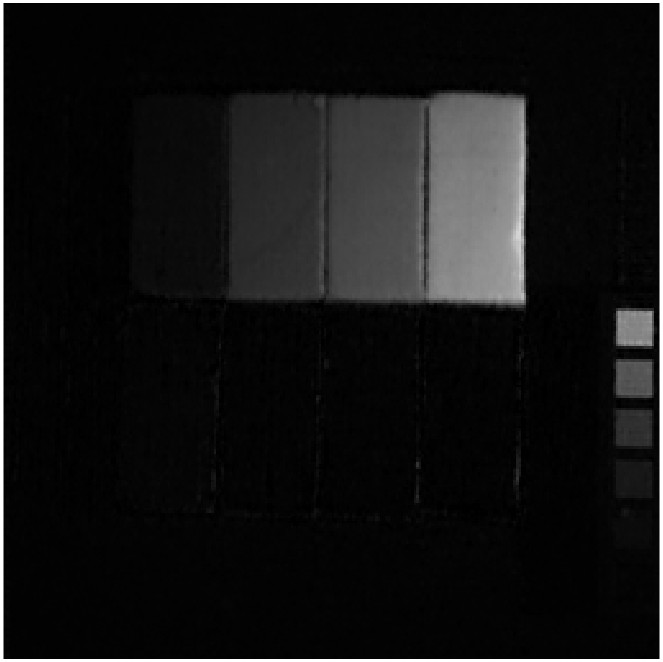}\\
		\includegraphics[width=1\linewidth]{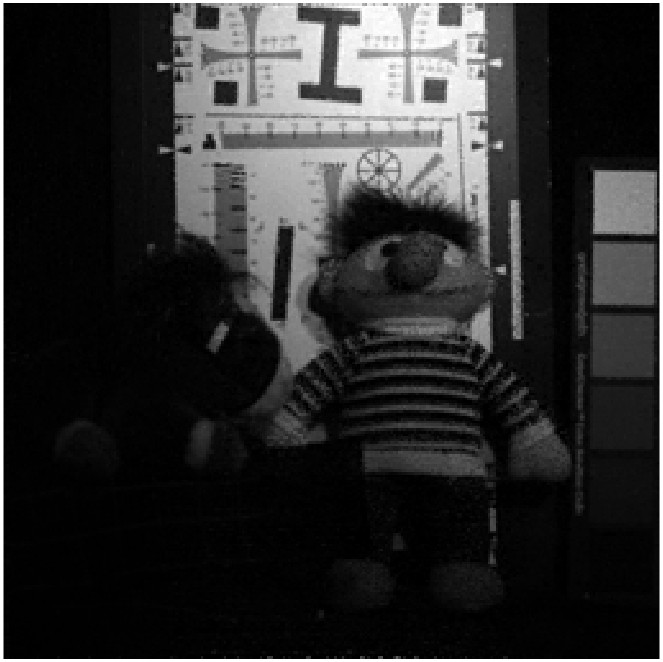}\\
		\includegraphics[width=1\linewidth]{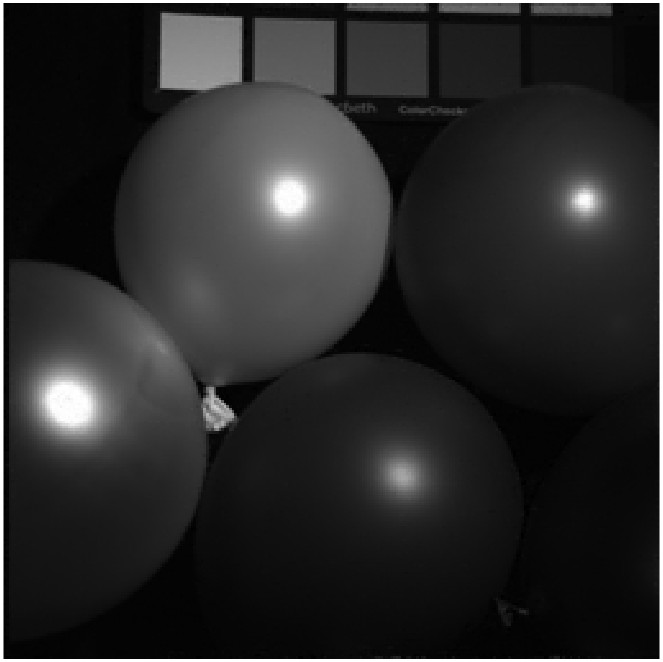}\\
		\includegraphics[width=1\linewidth]{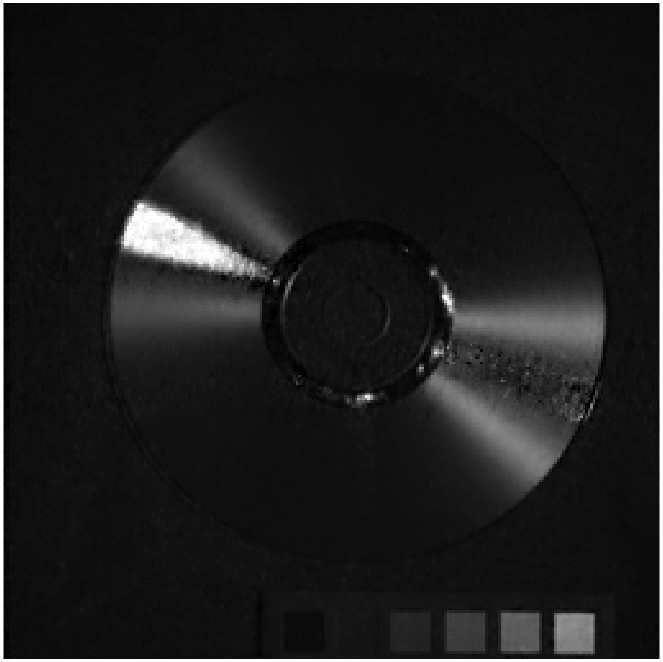}
\end{minipage}}\subfigure[TJLC]{
	\begin{minipage}[b]{0.095\linewidth}
		\includegraphics[width=1\linewidth]{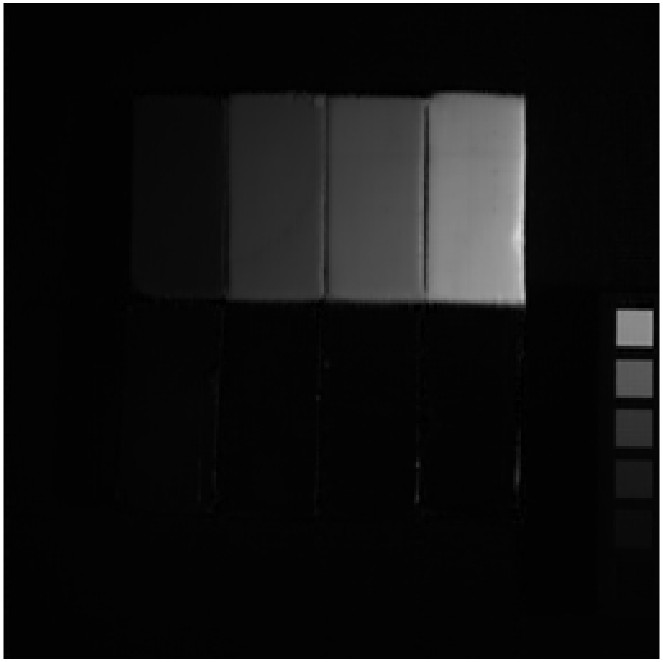}\\
		\includegraphics[width=1\linewidth]{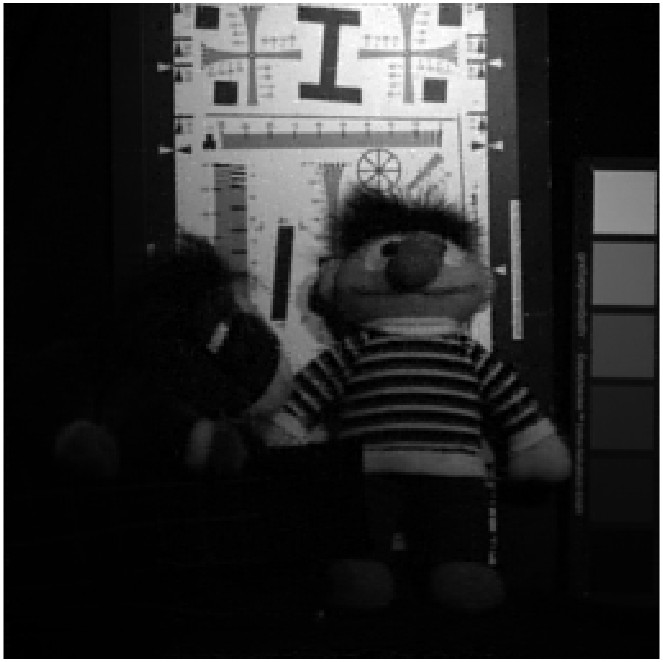}\\
		\includegraphics[width=1\linewidth]{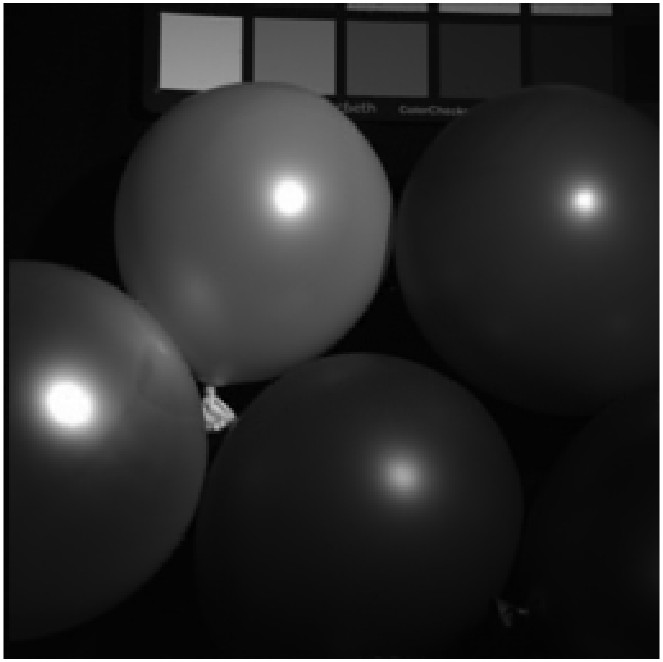}\\
		\includegraphics[width=1\linewidth]{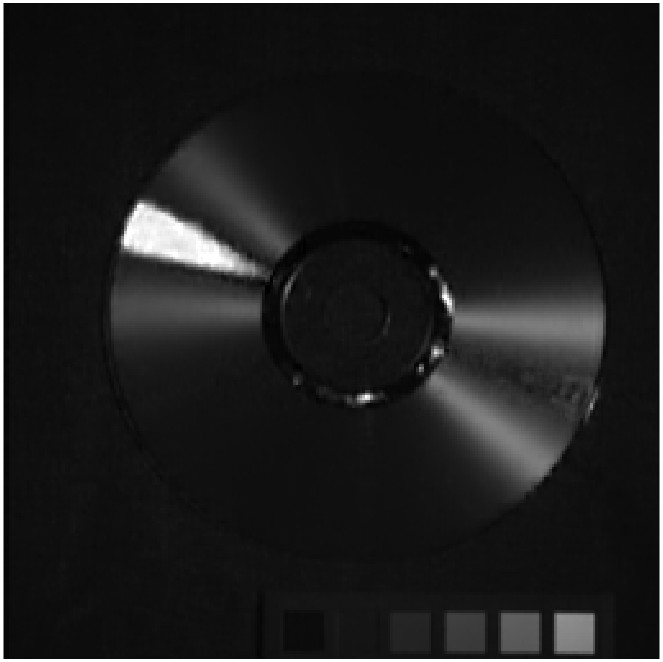}
\end{minipage}}
	\caption{Visual results for MSI data. The rows of MSIs are in order: clay, chart\_and\_stuffed\_toy, balloons, cd. MR: top two rows are 95\%, and last two rows are 90\%. The corresponding bands in each row are: 15, 20, 25, 30.}
	\label{MSIF}
\end{figure}

\begin{table}[!h]
	\centering
	\caption{The average PSNR, SSIM, FSIM and ERGAS values for four MSIs tested by observed and the eight utilized LRTC methods.}
	\label{MSIT}
{\footnotesize 		
	\begin{tabular}{|c|cccc|cccc|}
			\hline
			MR       & \multicolumn{4}{c|}{95\%}                                           & \multicolumn{4}{c|}{90\%}                                           \\ \hline
			PQIs     & PSNR            & SSIM           & FSIM           & ERGAS           & PSNR            & SSIM           & FSIM           & ERGAS           \\ \hline
			Observed & 15.216          & 0.161          & 0.687          & 874.799         & 15.452          & 0.203          & 0.668          & 851.367         \\
			HaLRTC   & 25.720          & 0.848          & 0.872          & 276.368         & 30.850          & 0.912          & 0.920          & 164.226         \\
			TNN      & 24.799          & 0.766          & 0.821          & 306.299         & 33.086          & 0.912          & 0.920          & 132.621         \\
			LRTCTV & 25.662          & 0.872          & 0.887          & 273.522         & 32.079          & 0.941          & 0.940          & 139.816         \\
			PSTNN    & 26.071          & 0.742          & 0.803          & 252.326         & 31.526          & 0.883          & 0.900          & 145.272         \\
			FTNN     & 33.307          & 0.936          & 0.936          & 124.535         & 37.550          & 0.970          & 0.967          & 76.659          \\
			WSTNN    & 34.934          & 0.960          & 0.955          & 94.610          & 40.012          & 0.984          & 0.980          & 56.094          \\
			BEMCP    & 38.005          & 0.964          & 0.956          & 68.354          & 43.376          & 0.984          & 0.981          & 39.328          \\
			TJLC     & \textbf{41.403} & \textbf{0.984} & \textbf{0.980} & \textbf{47.987} & \textbf{46.588} & \textbf{0.994} & \textbf{0.991} & \textbf{28.066} \\ \hline
		\end{tabular}
	}%
\end{table}

\subsection{CV data}
We test six color videos\footnote{http://trace.eas.asu.edu/yuv/} (respectively named akiyo, hall, foreman, news, highway, container) of size $144 \times 176 \times 50$. Firstly, we demonstrate the visual results in our experiment in Fig. \ref{CVF}. It is not hard to see from Fig. \ref{CVF} that the recovery of the proposed TJLC method on the visual effect is more better. Specifically, as shown in Fig. \ref{CVF}, it can be observed that the suboptimal BEMCP method produces noticeable anomalies in the background of the `` news " image, while the proposed TJLC method exhibits almost no anomalies and the overall background is clear. Furthermore, we list the average quantitative results of six CVs in Table \ref{CVT}. When the missing rate is 90\%, the PSNR value of the proposed TJLC method is 0.939 dB higher than the suboptimal BEMCP method. In addition, at the missing rate of 95\%, the PSNR value of the proposed TJLC method is at least 0.883 dB higher than that of the suboptimal method.

\begin{table}[!h]
	\centering
	\caption{The average PSNR, SSIM, FSIM and ERGAS values for six CVs tested by observed and the eight utilized LRTC methods.}
	\label{CVT}
{\footnotesize 		
	\begin{tabular}{|c|cccc|cccc|}
			\hline
			MR       & \multicolumn{4}{c|}{95\%}                                           & \multicolumn{4}{c|}{90\%}                                           \\ \hline
			PQIs     & PSNR            & SSIM           & FSIM           & ERGAS           & PSNR            & SSIM           & FSIM           & ERGAS           \\ \hline
			Observed & 5.628           & 0.011          & 0.425          & 1207.399        & 5.863           & 0.019          & 0.421          & 1175.185        \\
			HaLRTC   & 17.345          & 0.510          & 0.709          & 328.433         & 21.391          & 0.647          & 0.788          & 207.663         \\
			TNN      & 27.832          & 0.809          & 0.902          & 100.813         & 31.469          & 0.886          & 0.941          & 67.719          \\
			LRTCTV & 19.515          & 0.606          & 0.716          & 274.267         & 21.235          & 0.677          & 0.788          & 230.001         \\
			PSTNN    & 14.749          & 0.306          & 0.664          & 422.958         & 27.650          & 0.816          & 0.904          & 101.060         \\
			FTNN     & 25.561          & 0.781          & 0.879          & 133.283         & 29.046          & 0.871          & 0.925          & 87.222          \\
			WSTNN    & 30.109          & 0.901          & 0.937          & 77.051          & 33.679          & 0.942          & 0.963          & 52.736          \\
			BEMCP    & 32.191          & 0.915          & 0.953          & 61.269          & 35.675          & 0.947          & 0.972          & 42.018          \\
			TJLC     & \textbf{33.074} & \textbf{0.927} & \textbf{0.958} & \textbf{55.429} & \textbf{36.614} & \textbf{0.957} & \textbf{0.975} & \textbf{38.075} \\ \hline
		\end{tabular}}%
\end{table}

\begin{figure}[!h] 

	\vspace{0cm} 
		\subfigtopskip=2pt 
		\subfigbottomskip=2pt 
	\subfigure[Original]{
	\begin{minipage}[b]{0.095\linewidth}
		\includegraphics[width=1\linewidth]{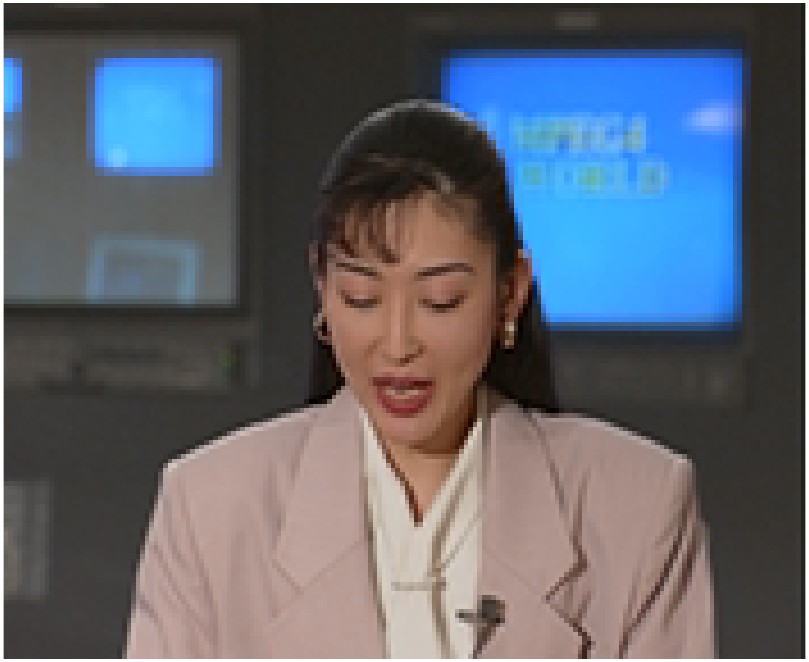}\\
		\includegraphics[width=1\linewidth]{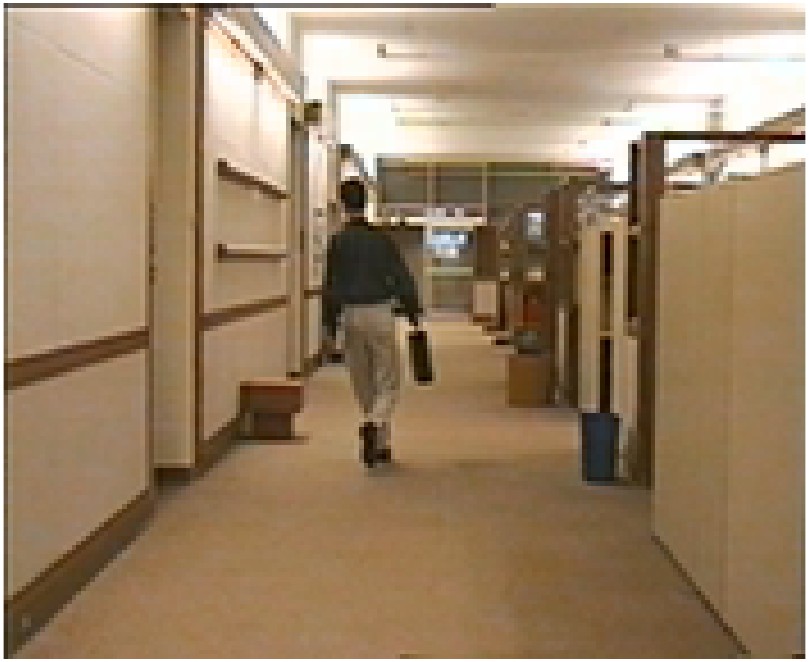}\\
		\includegraphics[width=1\linewidth]{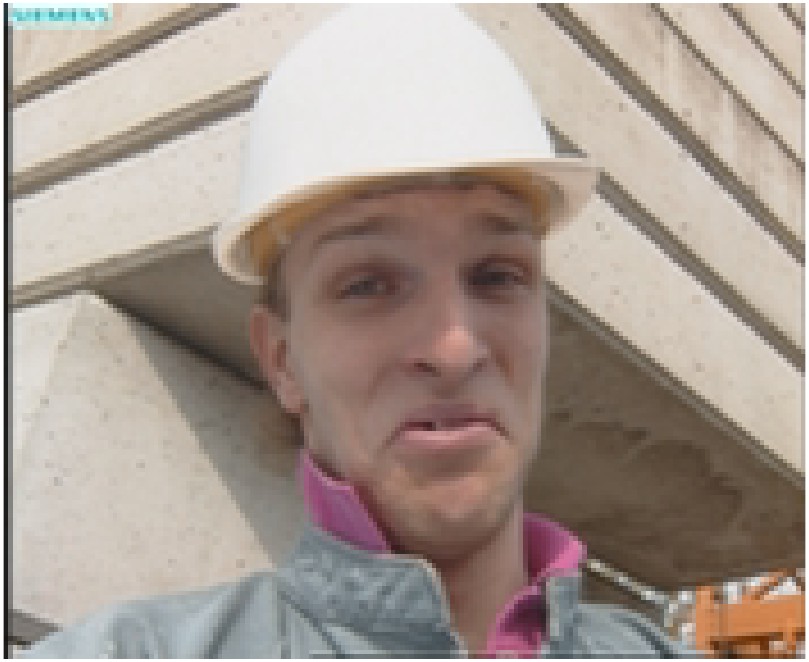}\\
		\includegraphics[width=1\linewidth]{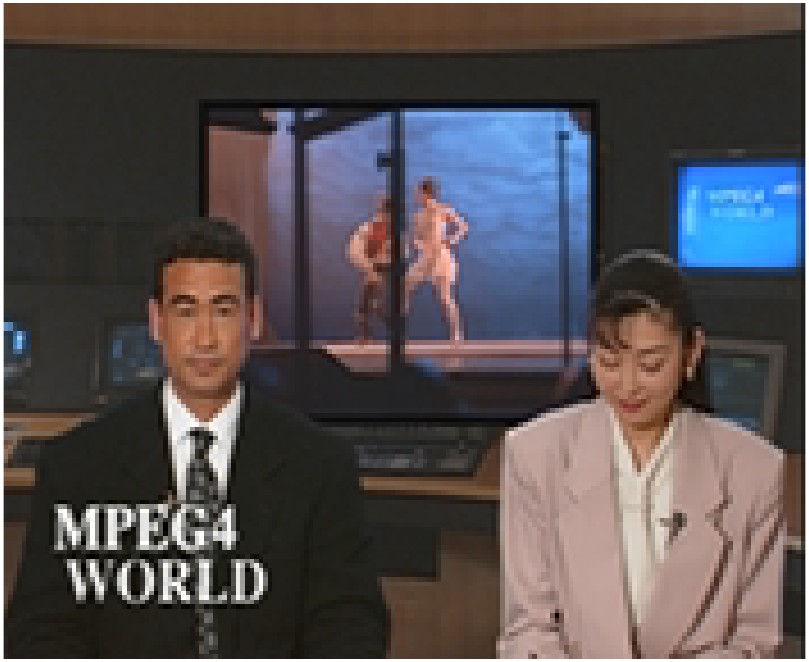}\\
		\includegraphics[width=1\linewidth]{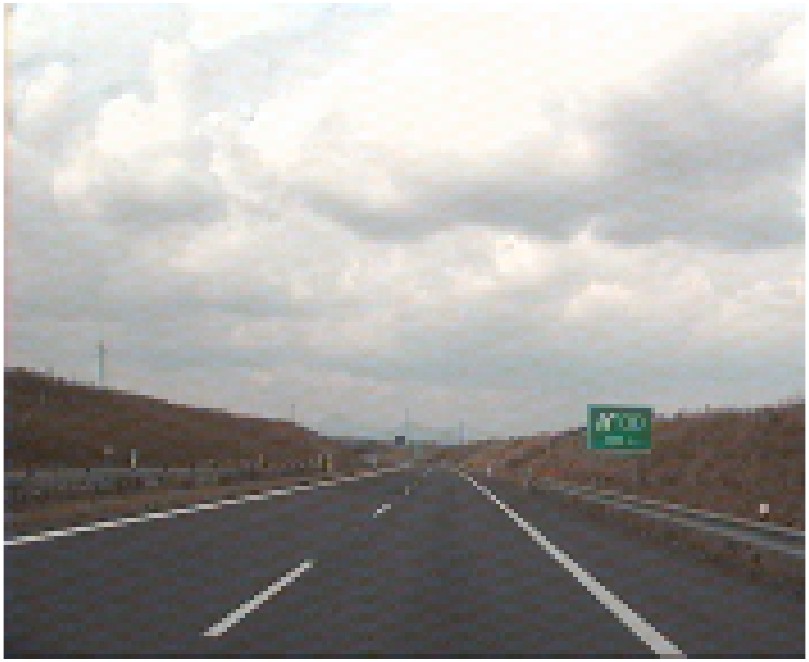}\\
		\includegraphics[width=1\linewidth]{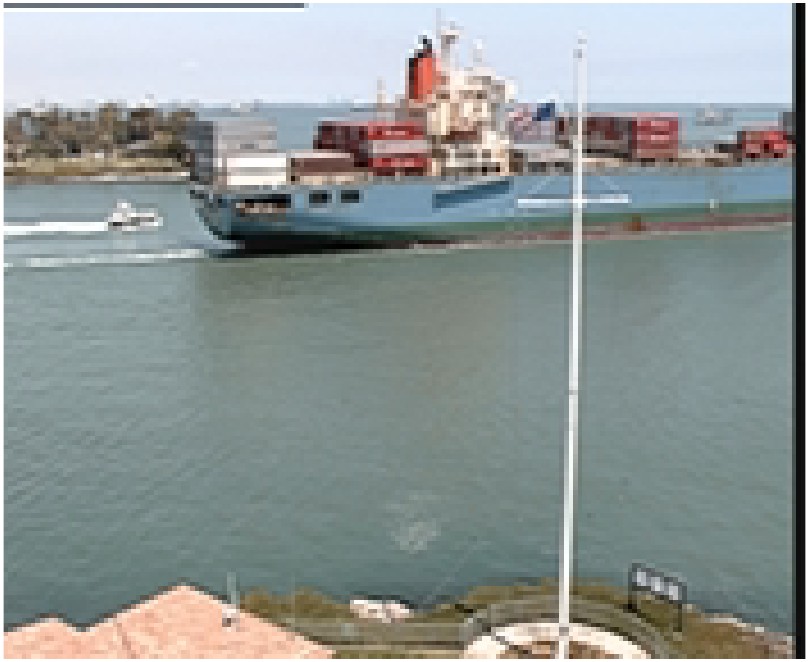} 
\end{minipage}}\subfigure[Observed]{
	\begin{minipage}[b]{0.095\linewidth}
		\includegraphics[width=1\linewidth]{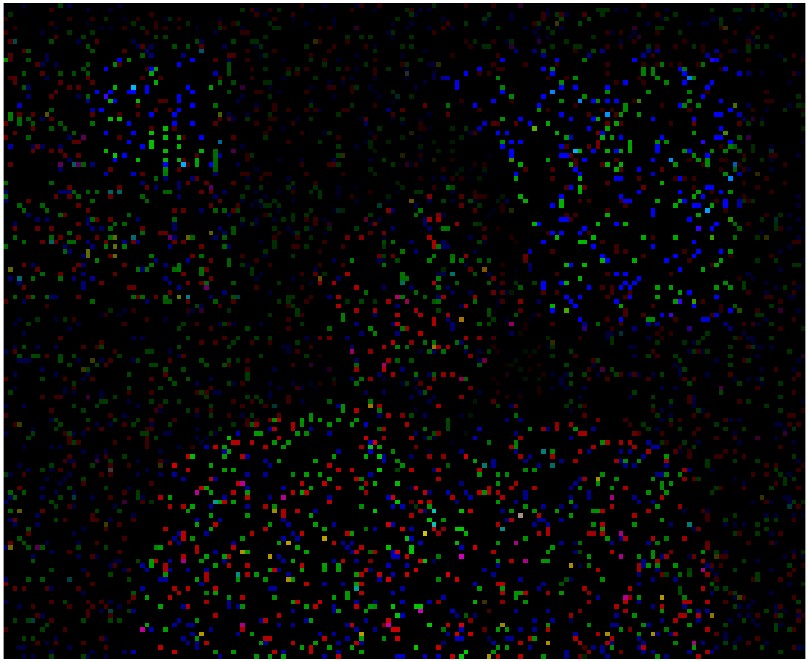}\\
		\includegraphics[width=1\linewidth]{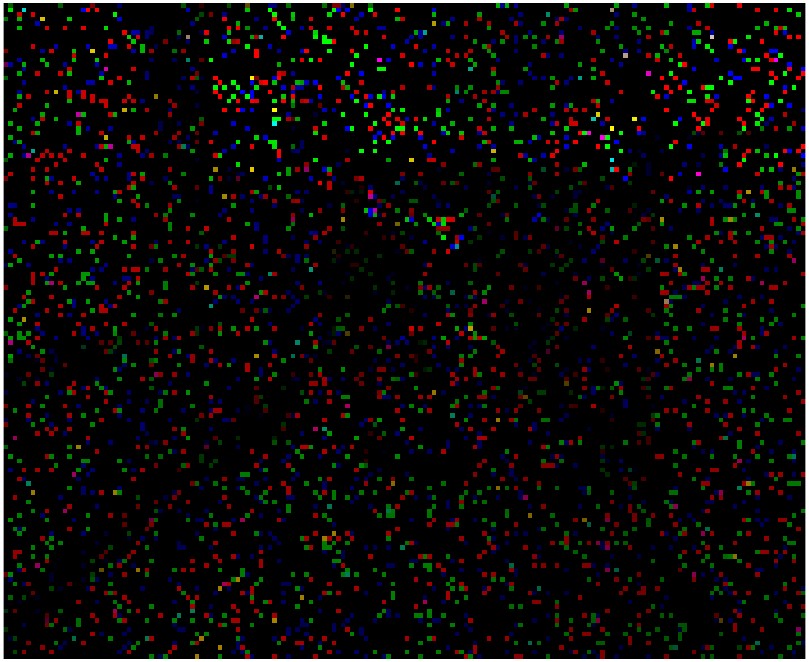}\\
		\includegraphics[width=1\linewidth]{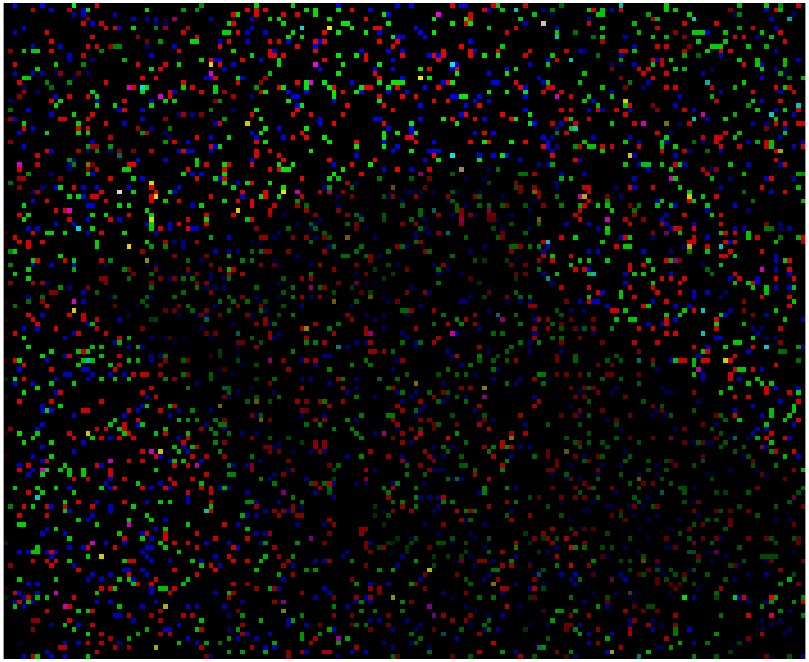}\\
		\includegraphics[width=1\linewidth]{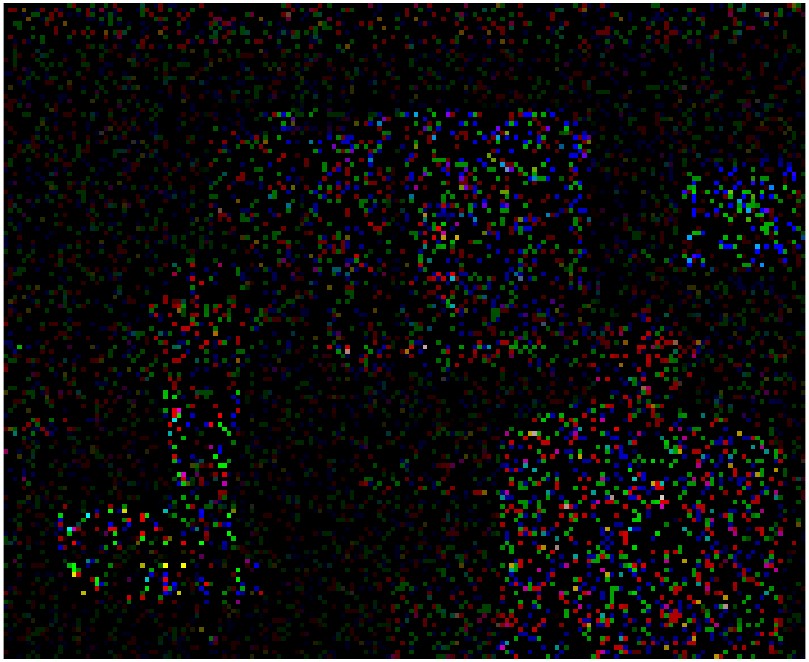}\\
		\includegraphics[width=1\linewidth]{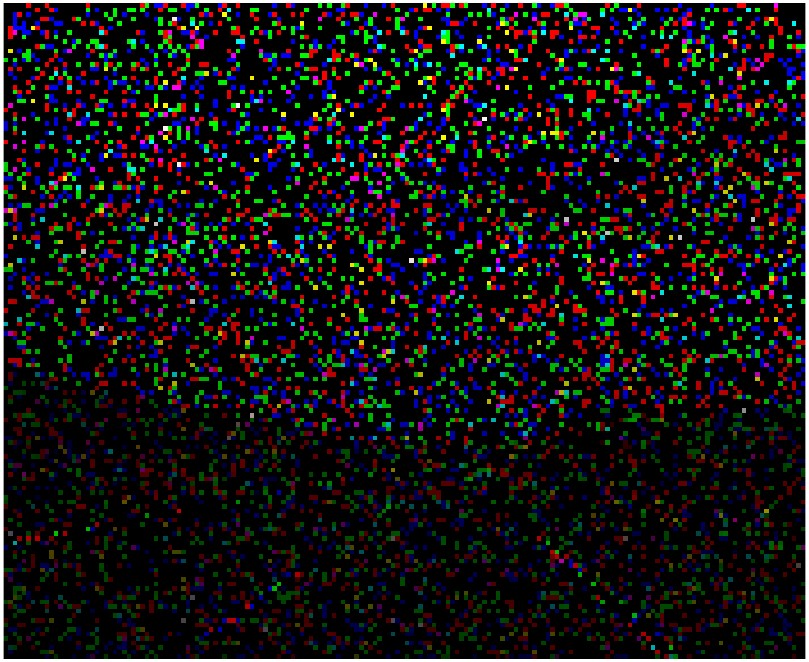}\\
		\includegraphics[width=1\linewidth]{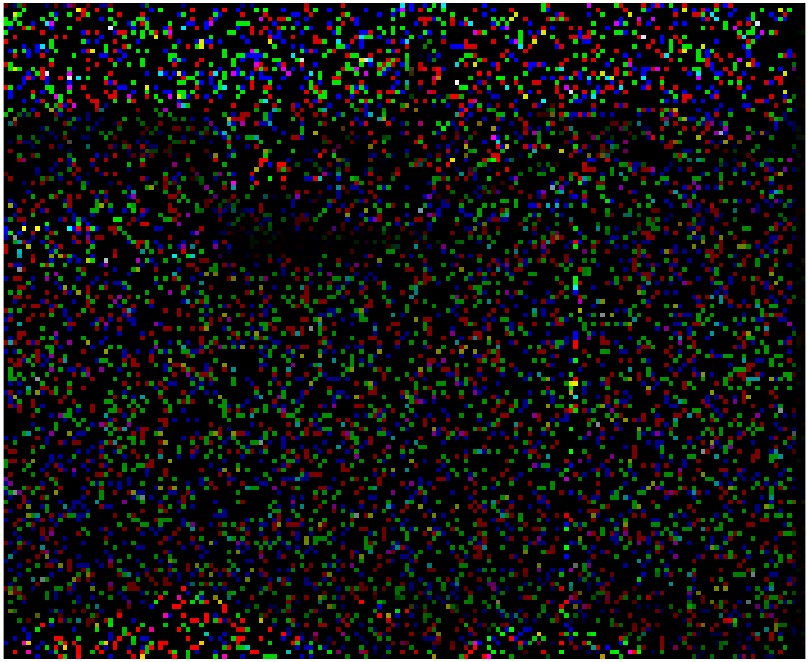} 
\end{minipage}}\subfigure[HaLRTC]{
\begin{minipage}[b]{0.095\linewidth}
\includegraphics[width=1\linewidth]{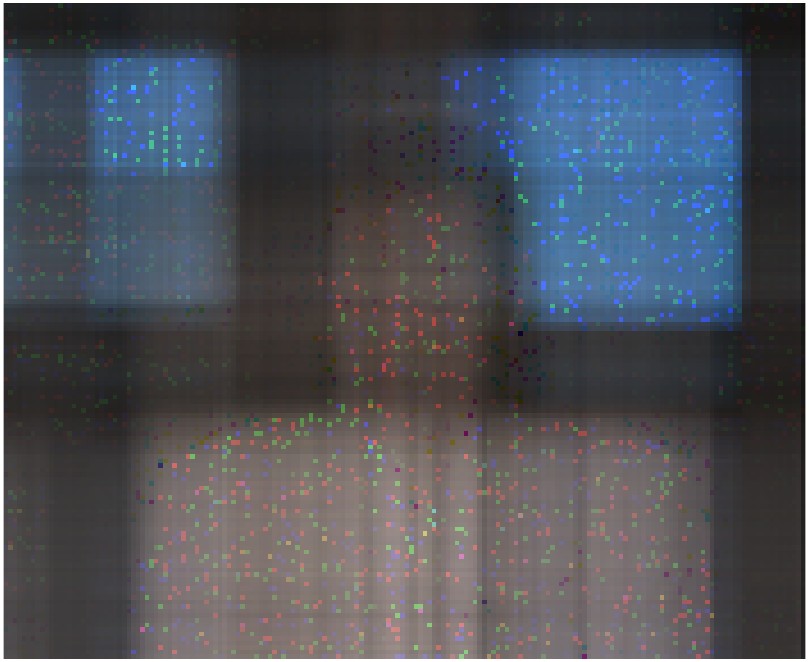}\\
\includegraphics[width=1\linewidth]{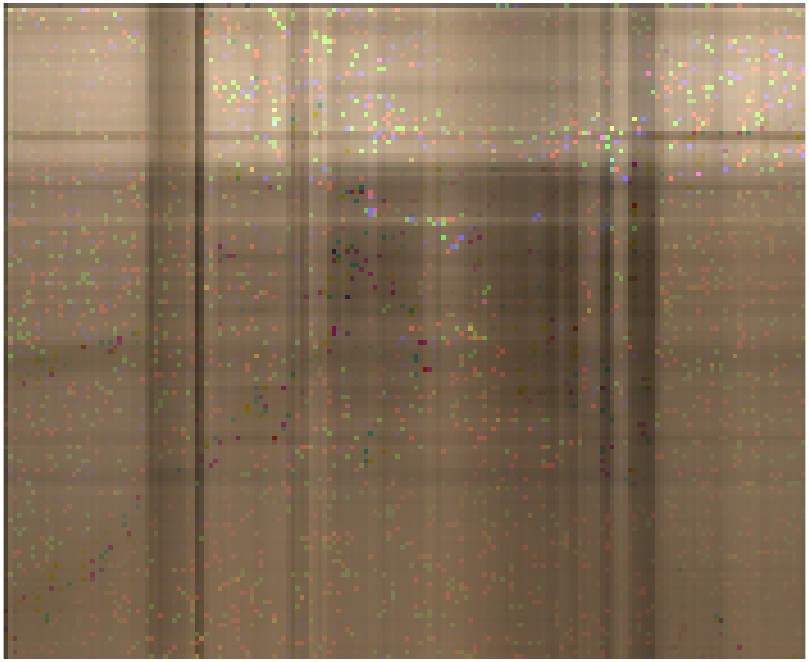}\\
\includegraphics[width=1\linewidth]{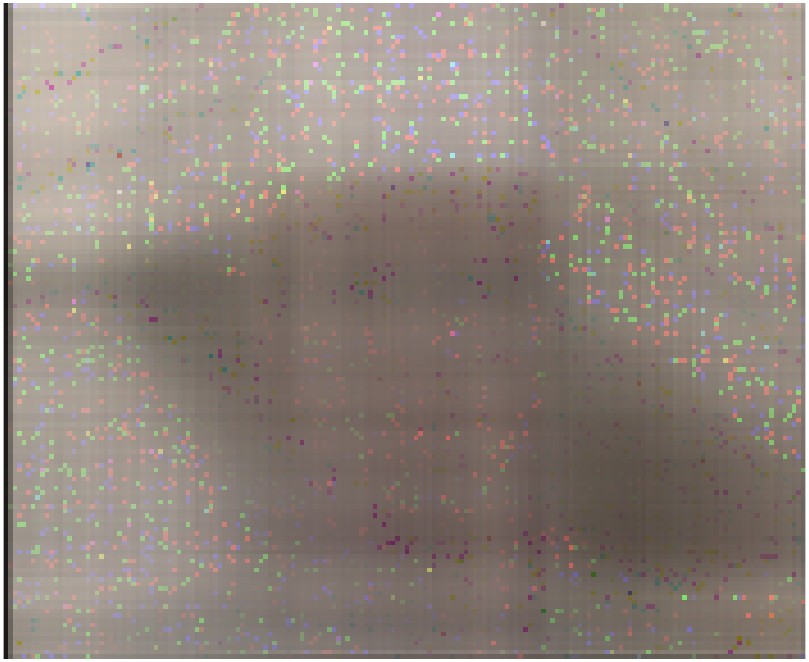}\\
\includegraphics[width=1\linewidth]{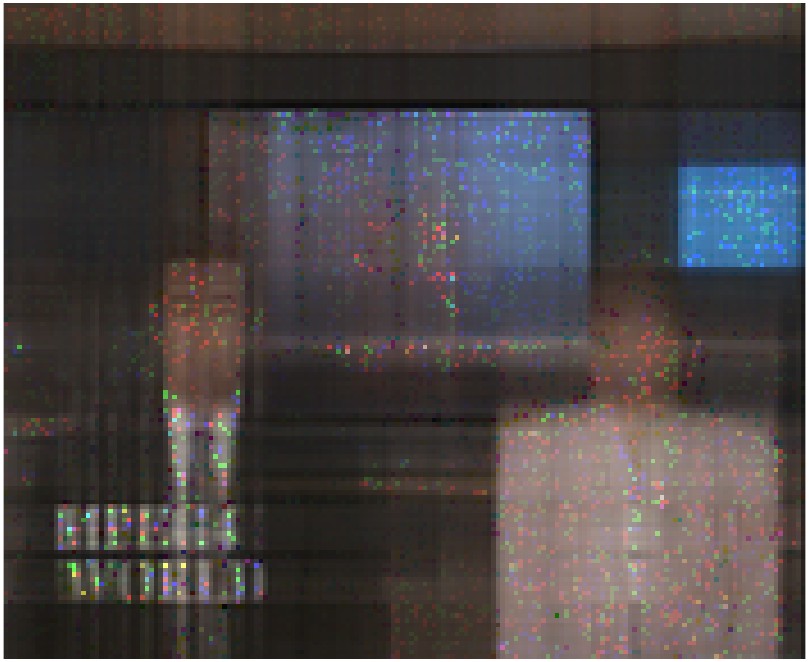}\\
\includegraphics[width=1\linewidth]{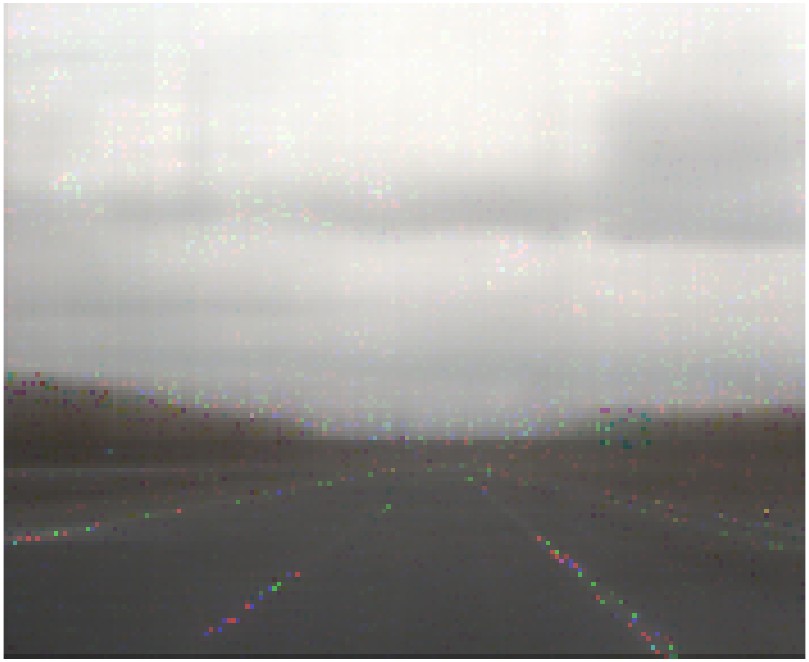}\\
\includegraphics[width=1\linewidth]{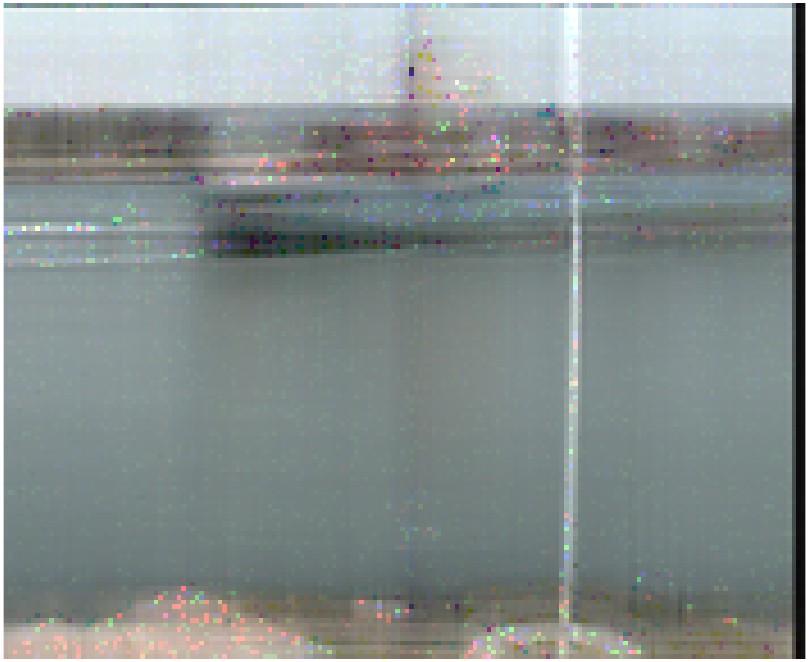} 
\end{minipage}}\subfigure[TNN]{
\begin{minipage}[b]{0.095\linewidth}
\includegraphics[width=1\linewidth]{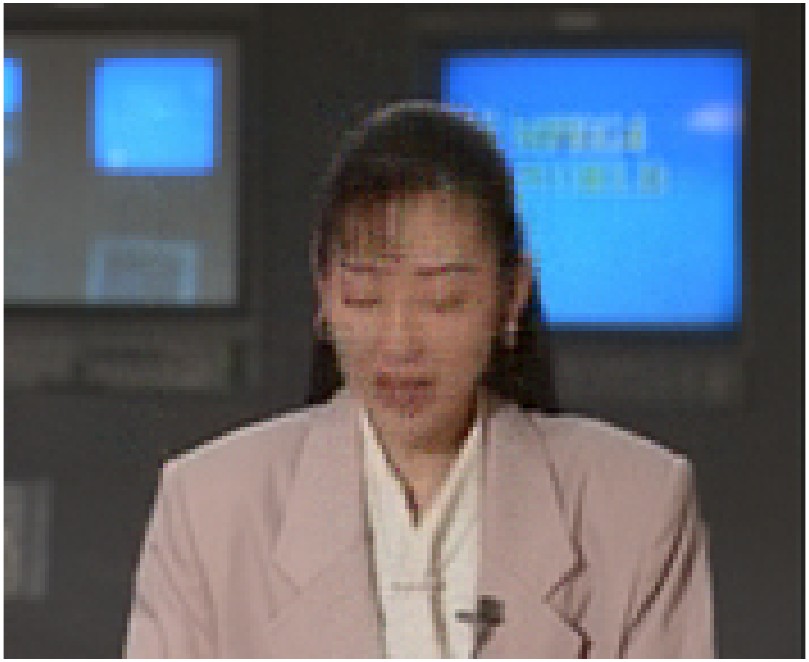}\\
\includegraphics[width=1\linewidth]{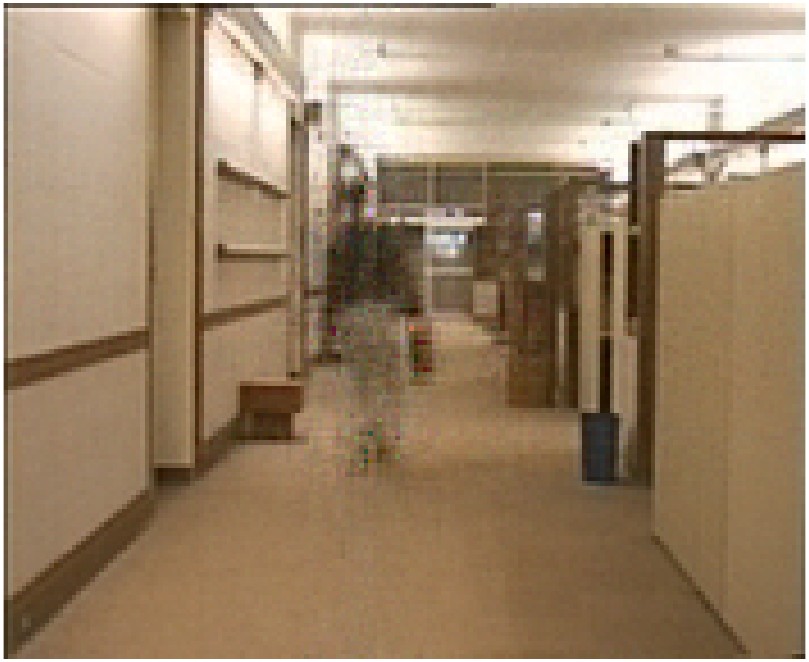}\\
\includegraphics[width=1\linewidth]{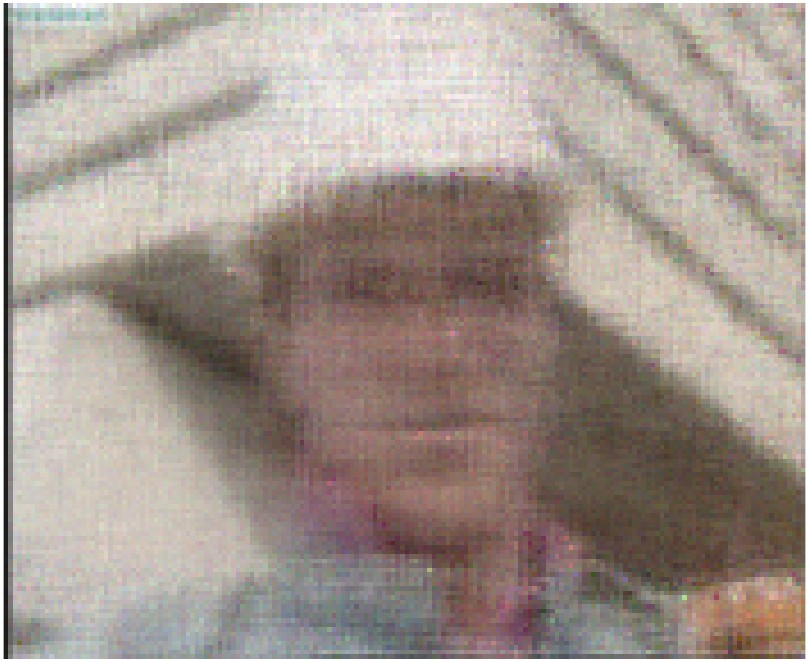}\\
\includegraphics[width=1\linewidth]{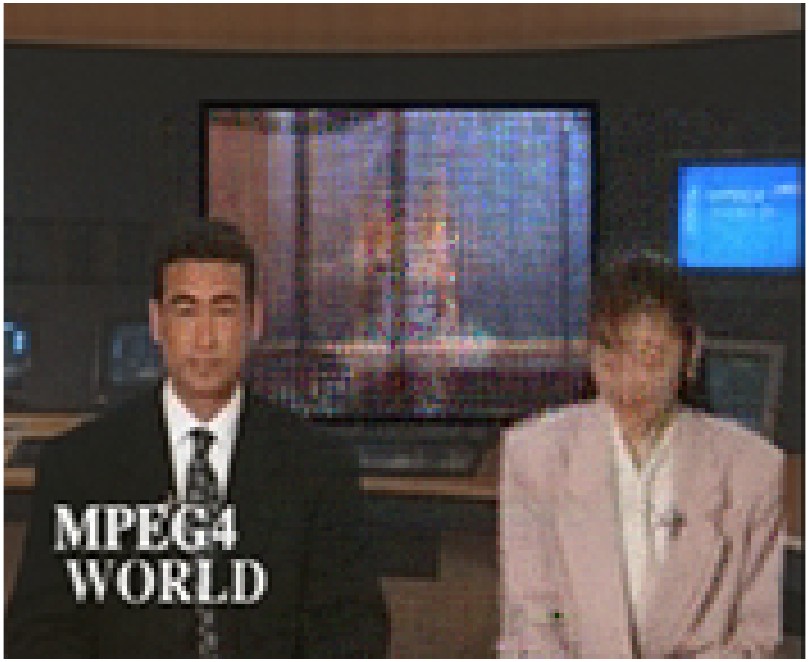}\\
\includegraphics[width=1\linewidth]{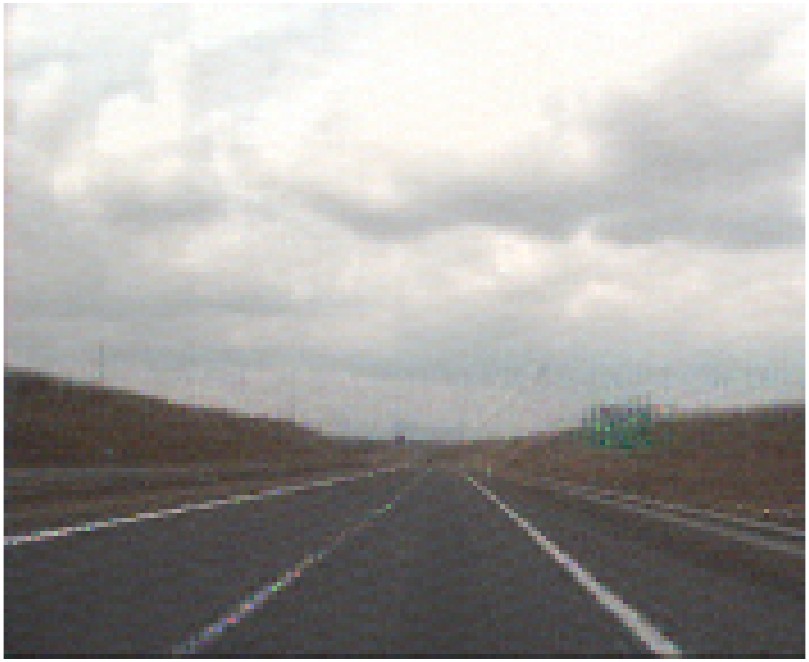}\\
\includegraphics[width=1\linewidth]{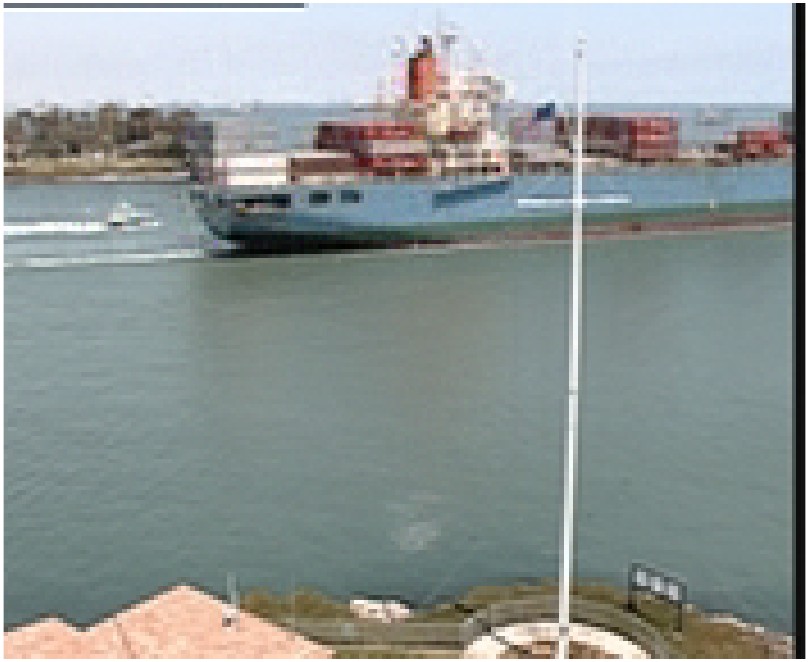} 
\end{minipage}}\subfigure[LRTCTV]{
\begin{minipage}[b]{0.095\linewidth}
\includegraphics[width=1\linewidth]{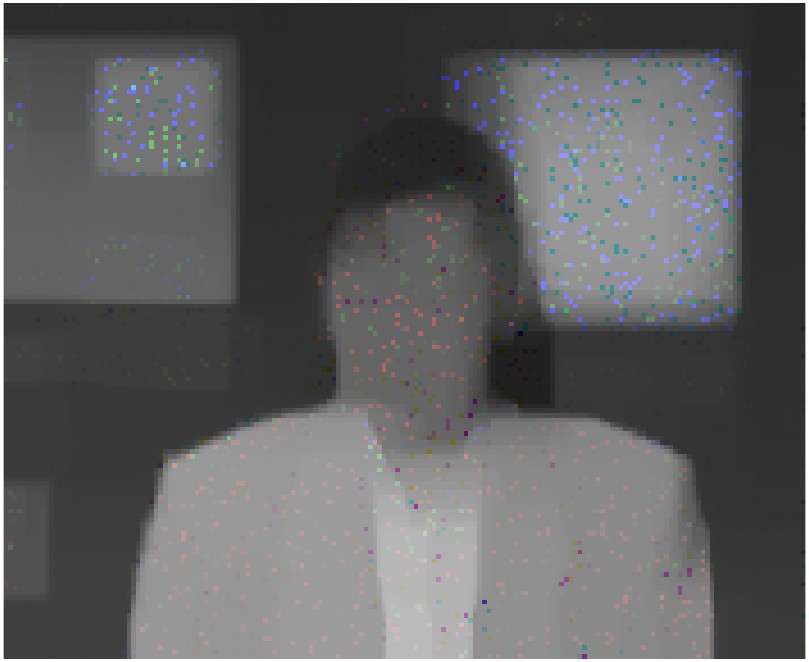}\\
\includegraphics[width=1\linewidth]{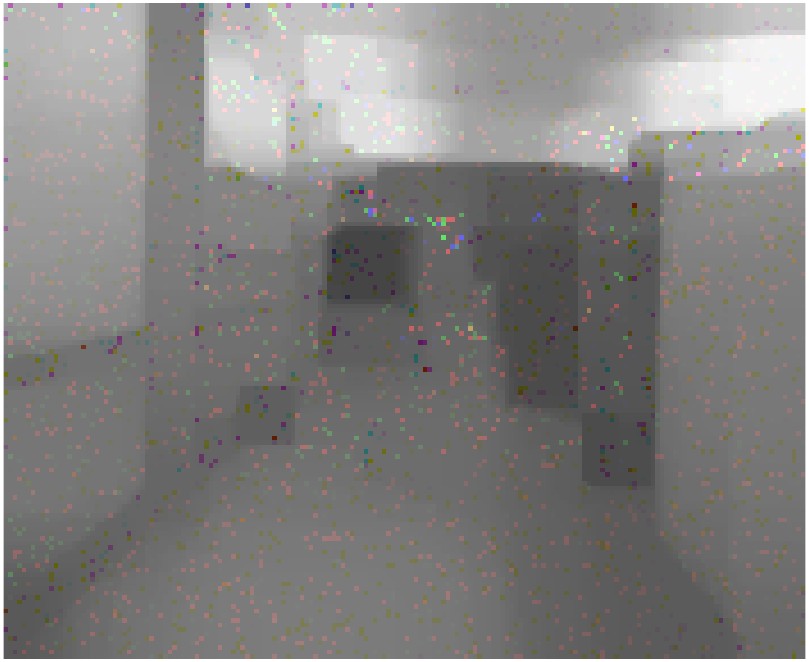}\\
\includegraphics[width=1\linewidth]{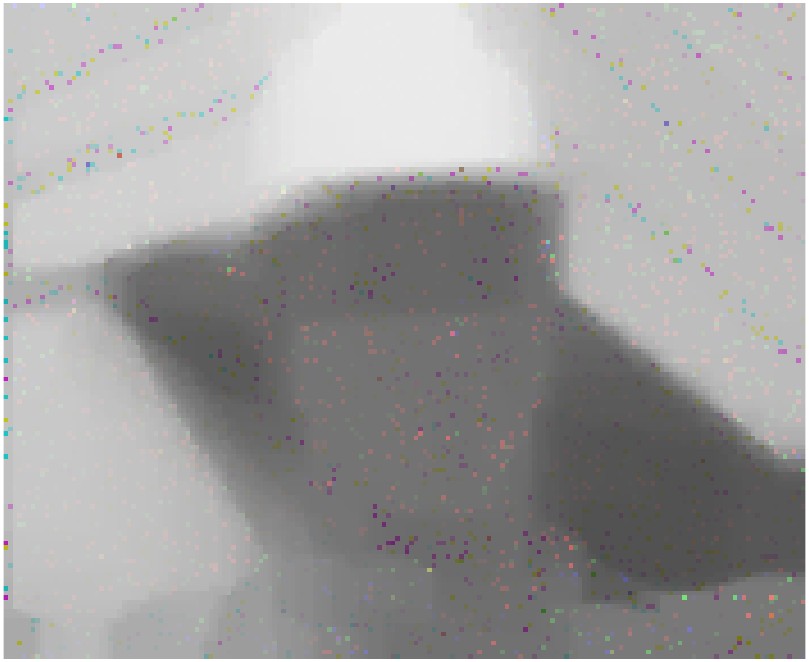}\\
\includegraphics[width=1\linewidth]{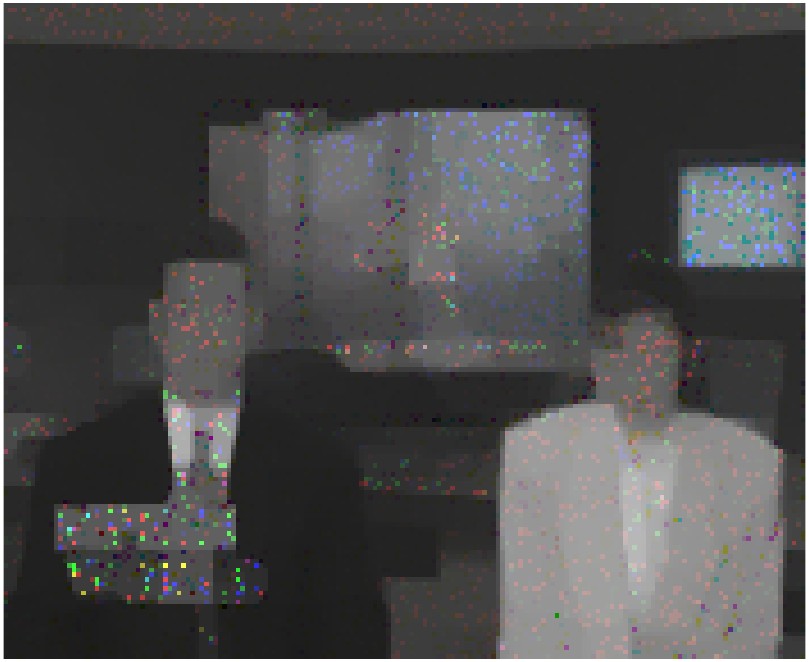}\\
\includegraphics[width=1\linewidth]{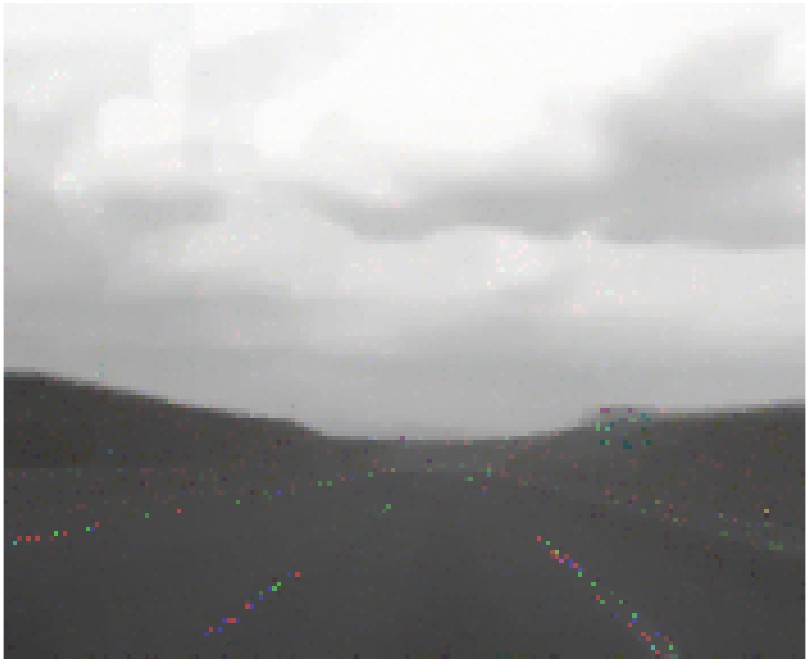}\\
\includegraphics[width=1\linewidth]{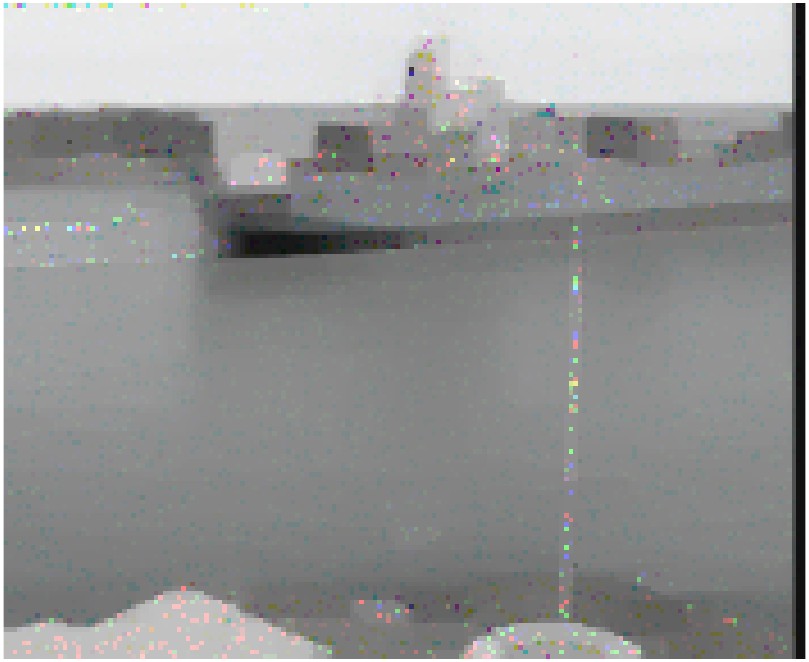} 
\end{minipage}}\subfigure[PSTNN]{
\begin{minipage}[b]{0.095\linewidth}
\includegraphics[width=1\linewidth]{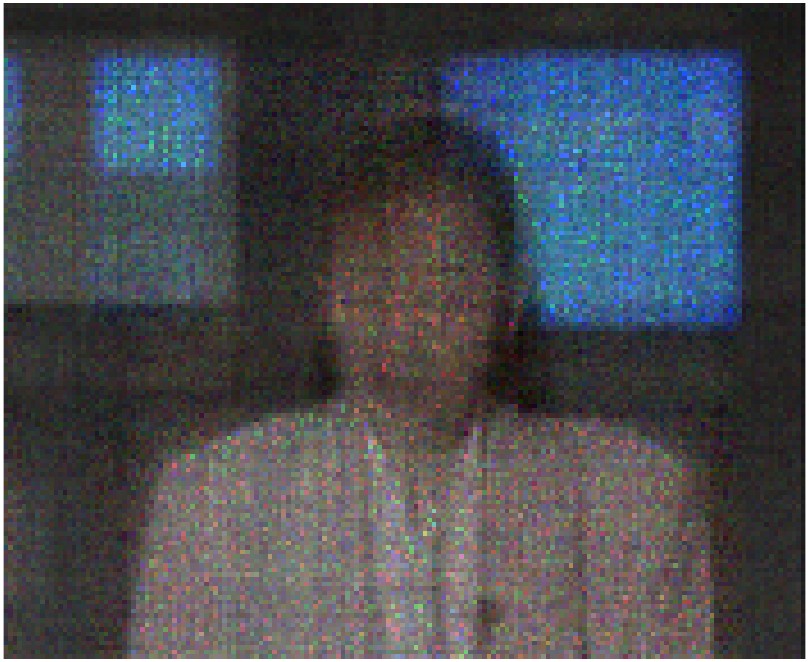}\\
\includegraphics[width=1\linewidth]{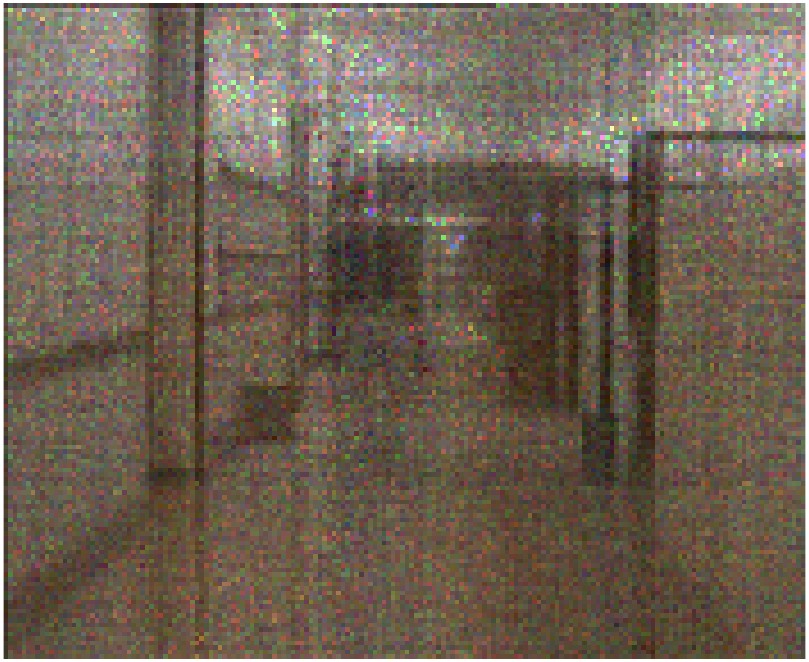}\\
\includegraphics[width=1\linewidth]{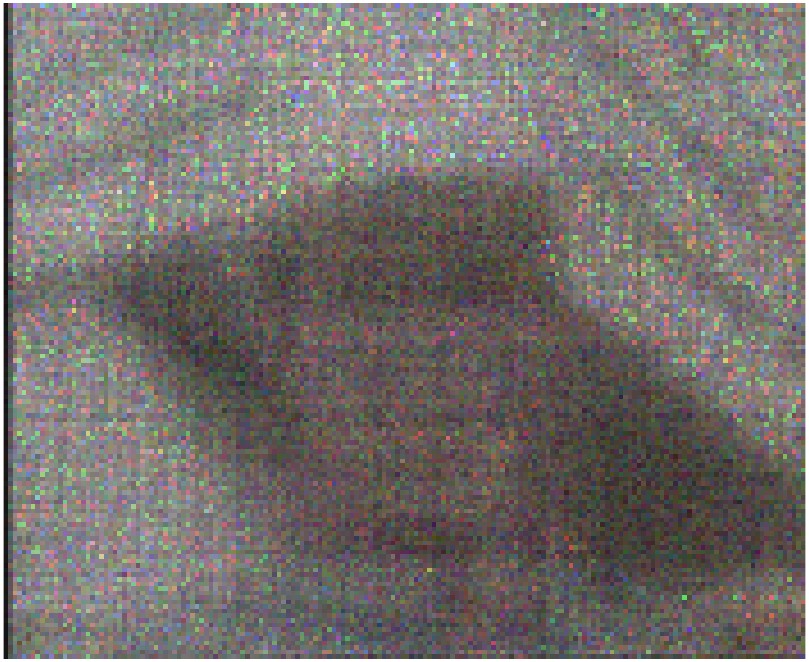}\\
\includegraphics[width=1\linewidth]{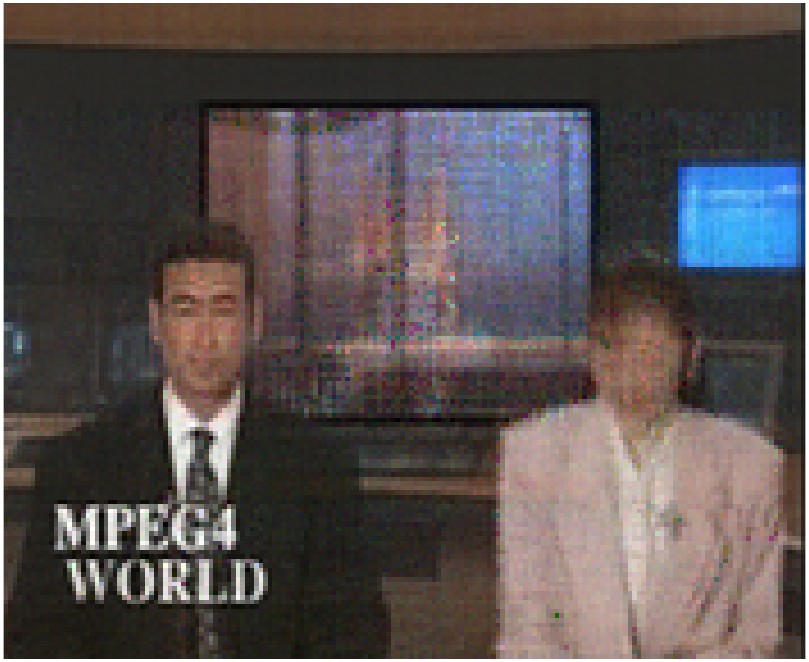}\\
\includegraphics[width=1\linewidth]{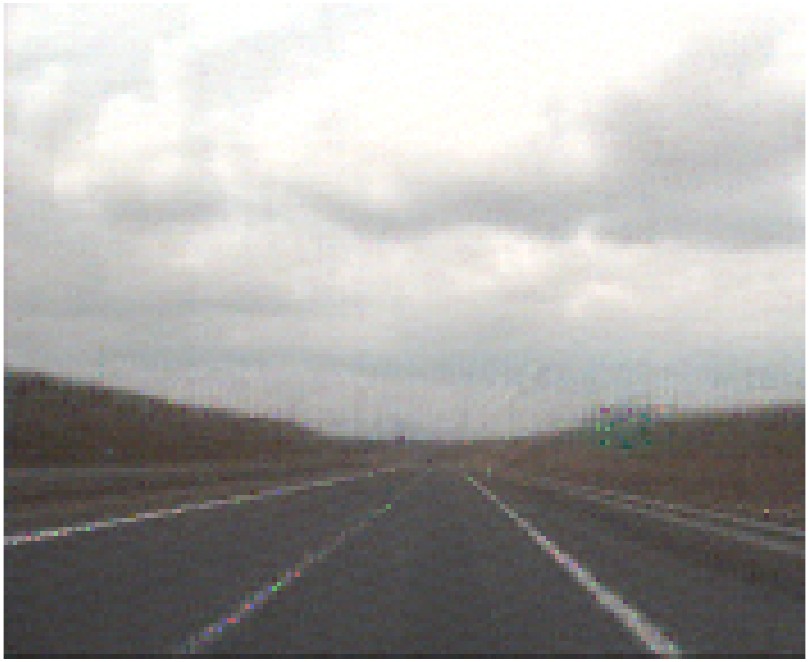}\\
\includegraphics[width=1\linewidth]{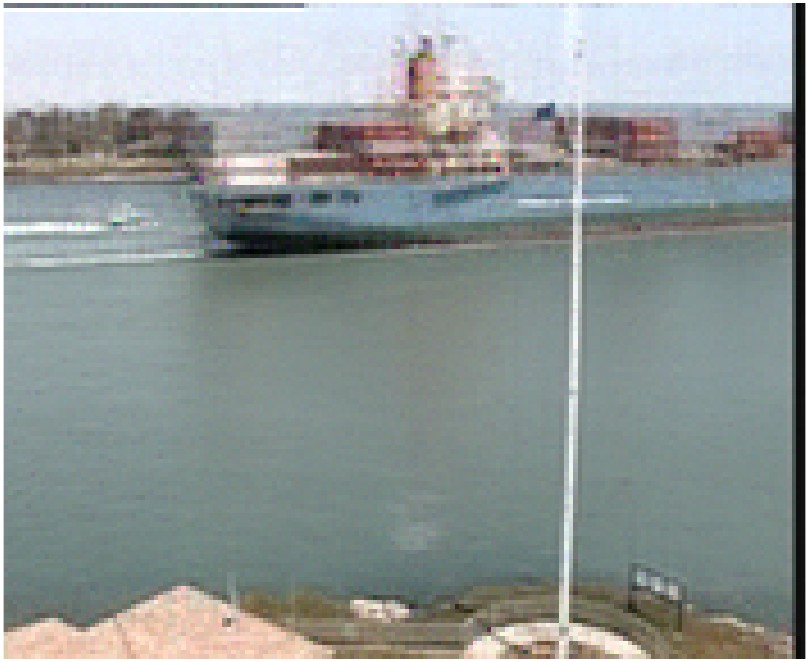} 
\end{minipage}}\subfigure[FTNN]{
\begin{minipage}[b]{0.095\linewidth}
\includegraphics[width=1\linewidth]{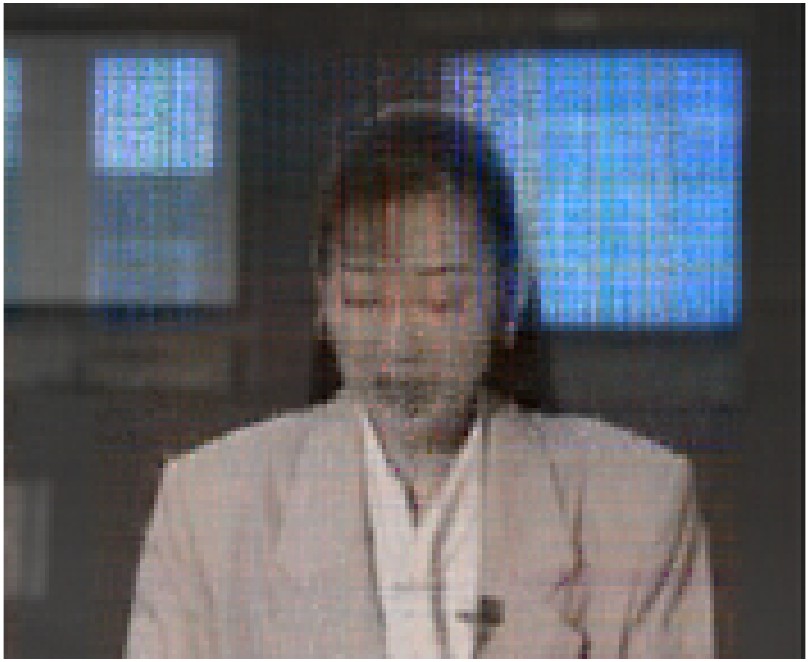}\\
\includegraphics[width=1\linewidth]{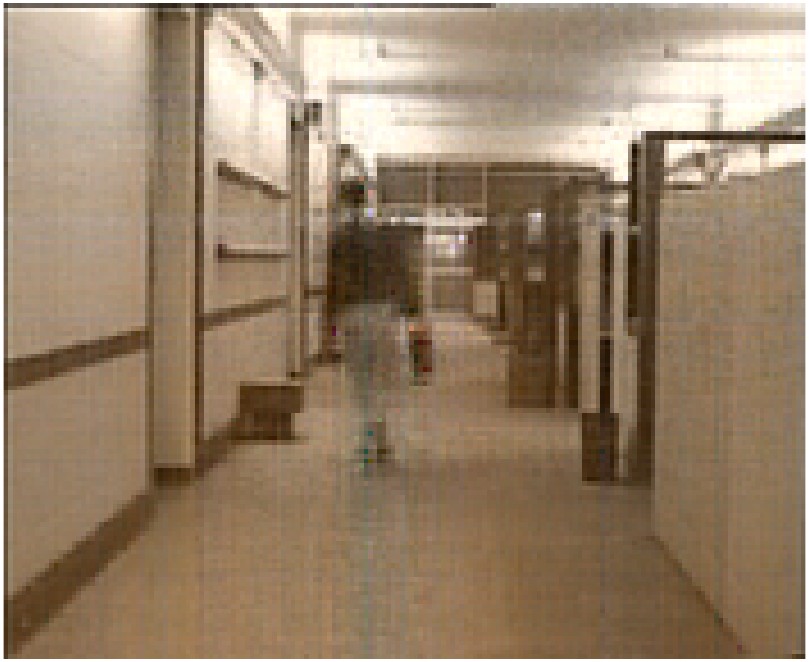}\\
\includegraphics[width=1\linewidth]{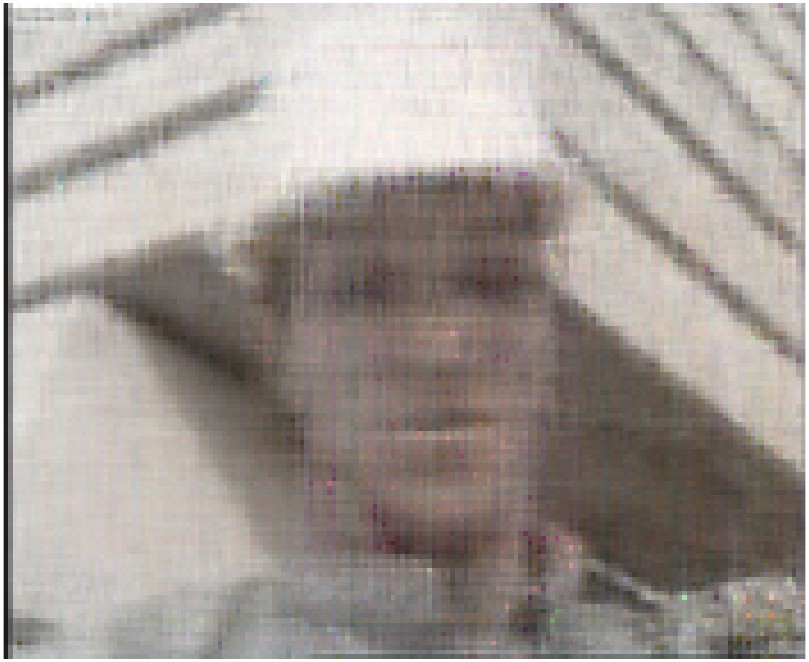}\\
\includegraphics[width=1\linewidth]{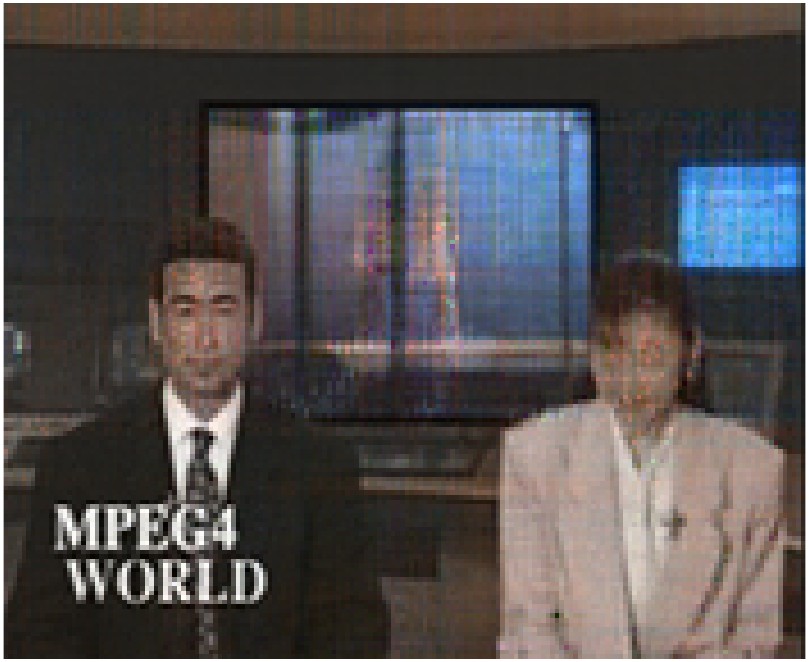}\\
\includegraphics[width=1\linewidth]{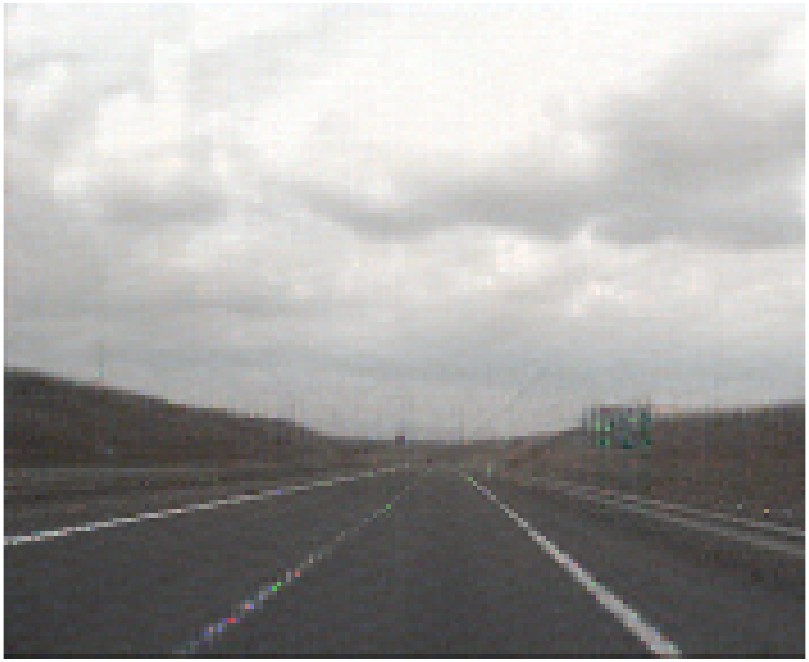}\\
\includegraphics[width=1\linewidth]{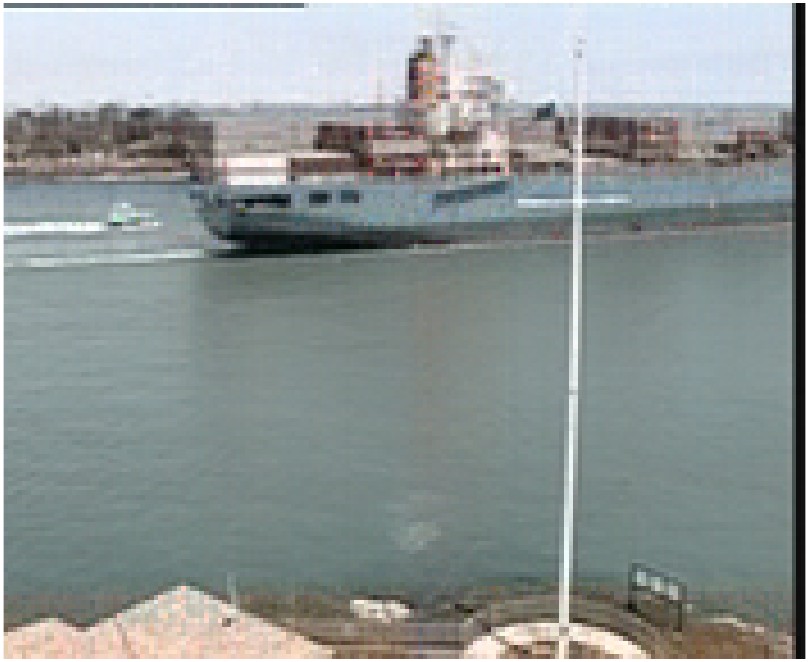} 
\end{minipage}}\subfigure[WSTNN]{
\begin{minipage}[b]{0.095\linewidth}
\includegraphics[width=1\linewidth]{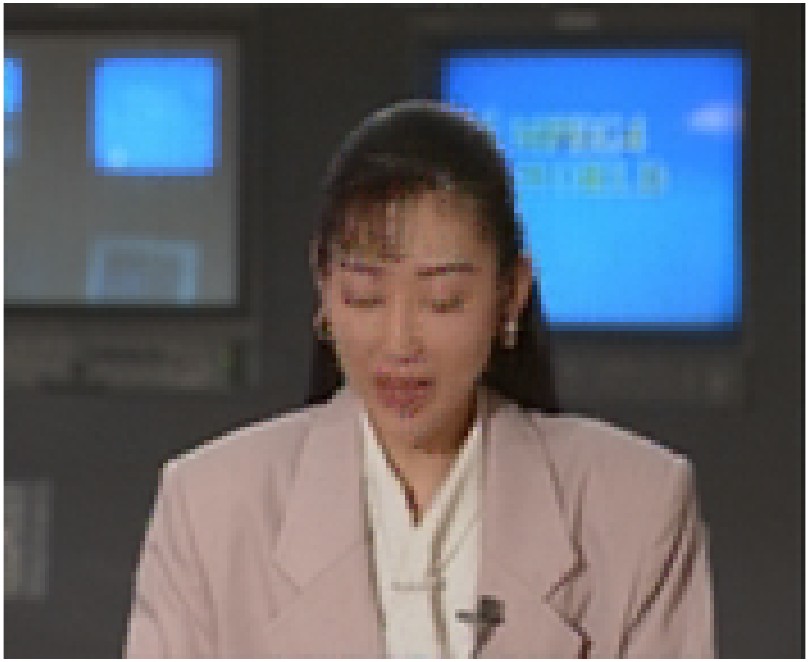}\\
\includegraphics[width=1\linewidth]{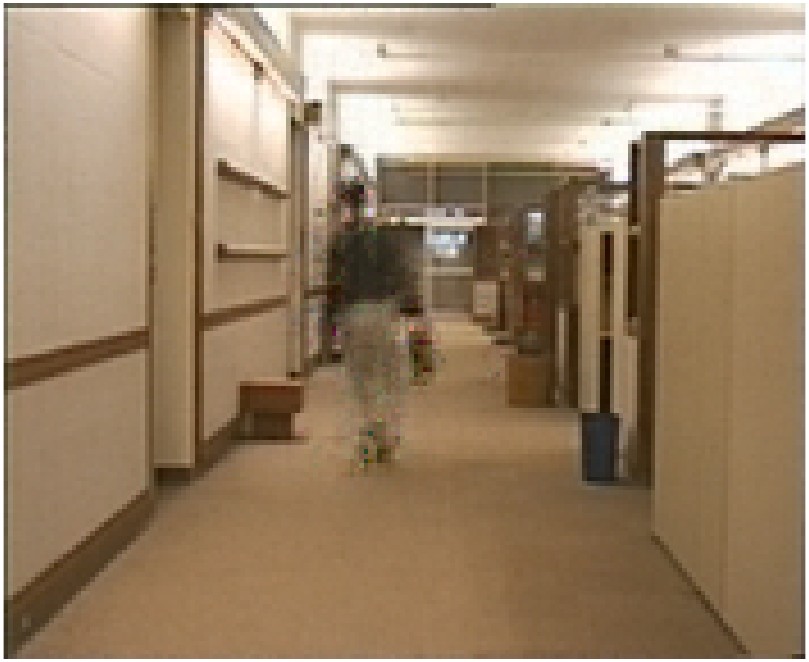}\\
\includegraphics[width=1\linewidth]{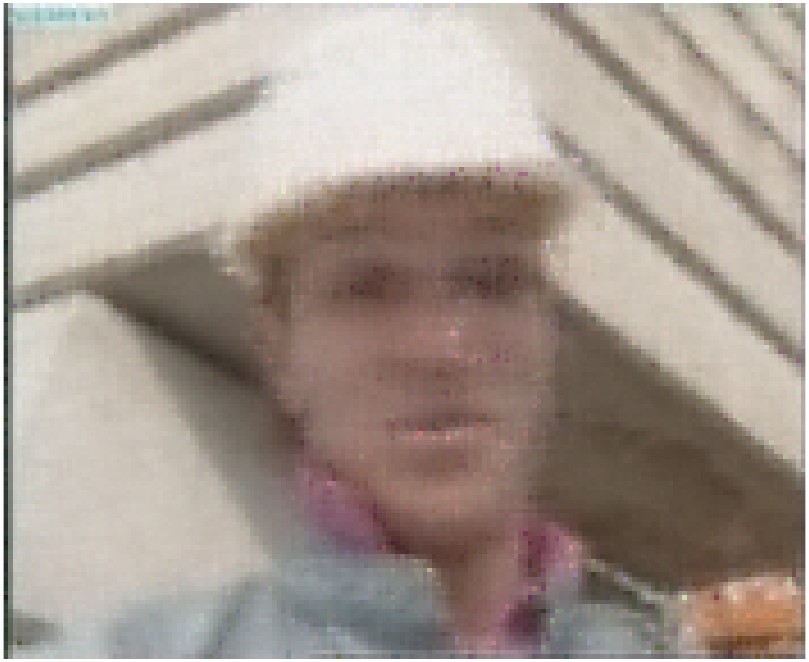}\\
\includegraphics[width=1\linewidth]{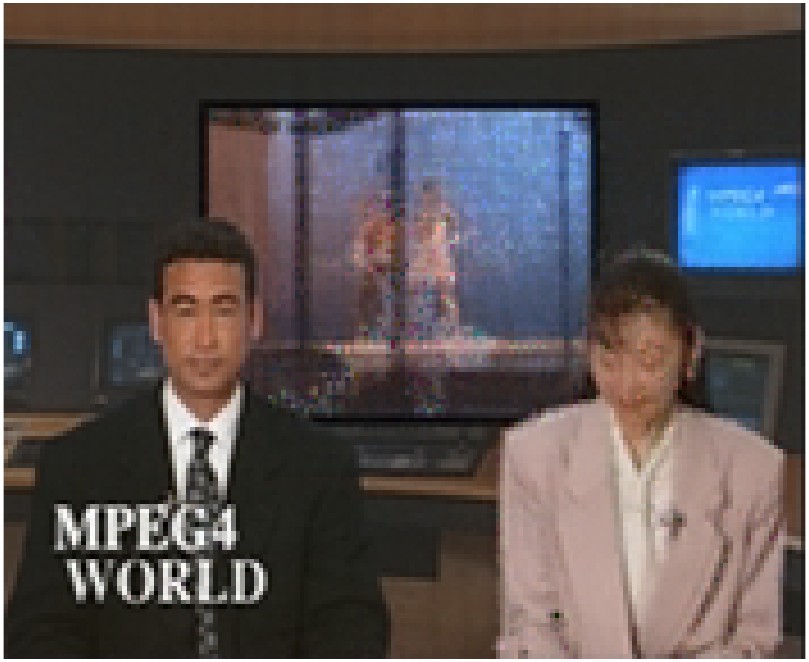}\\
\includegraphics[width=1\linewidth]{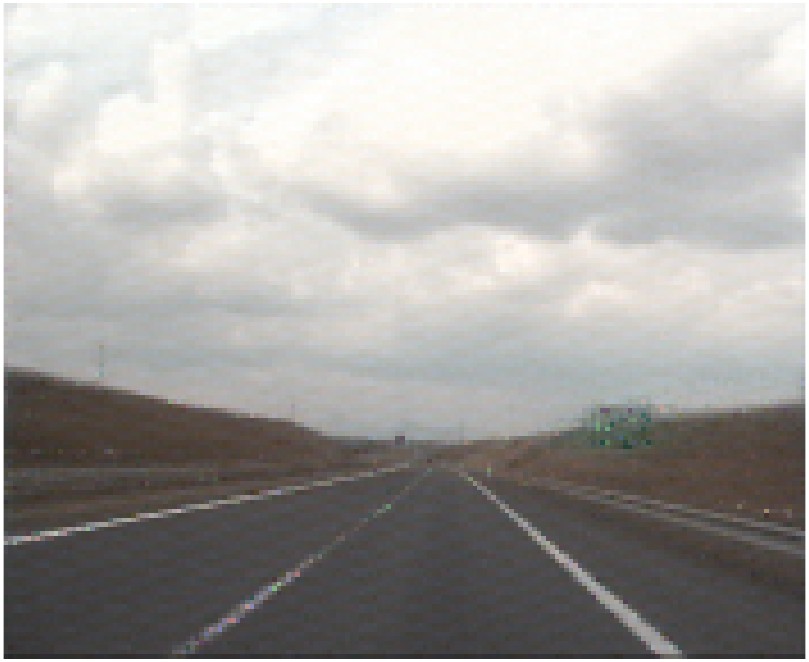}\\
\includegraphics[width=1\linewidth]{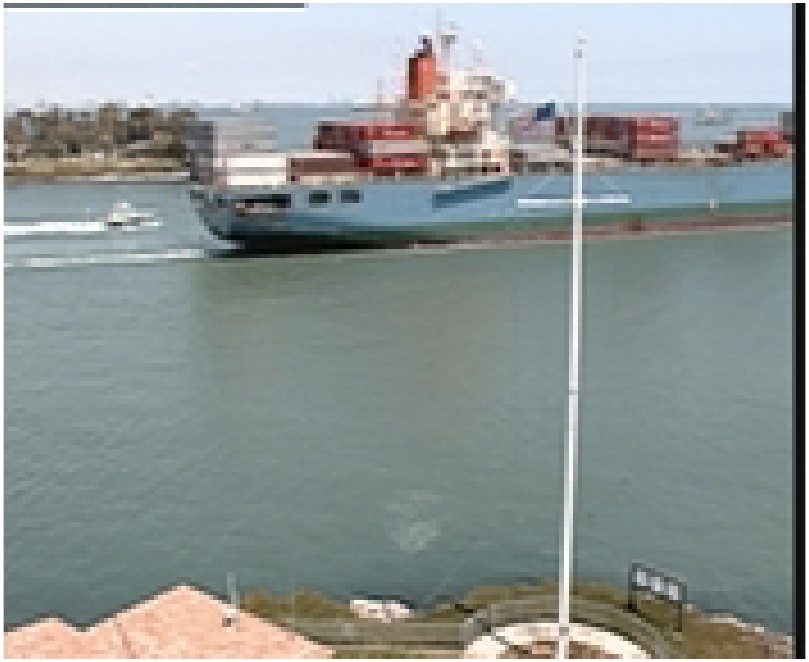} 
\end{minipage}}\subfigure[BEMCP]{
\begin{minipage}[b]{0.095\linewidth}
\includegraphics[width=1\linewidth]{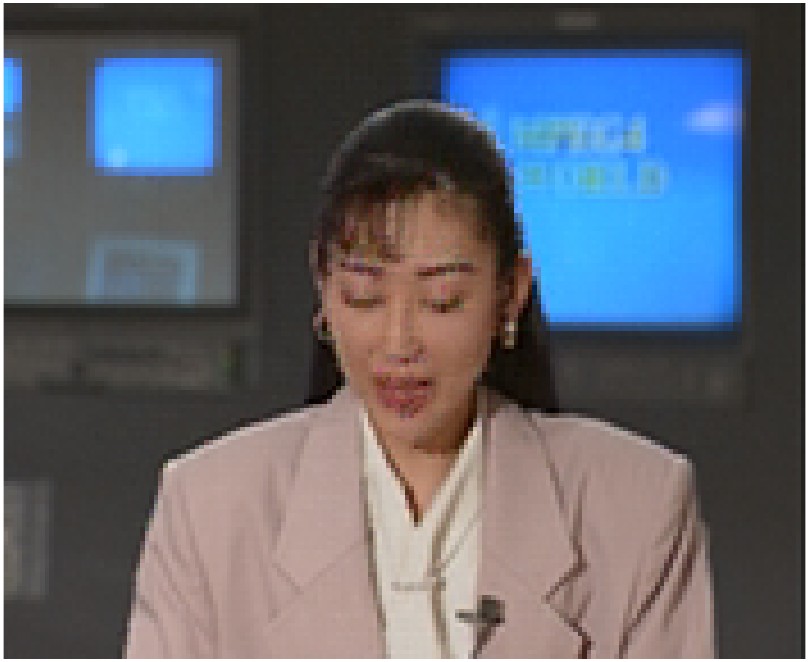}\\
\includegraphics[width=1\linewidth]{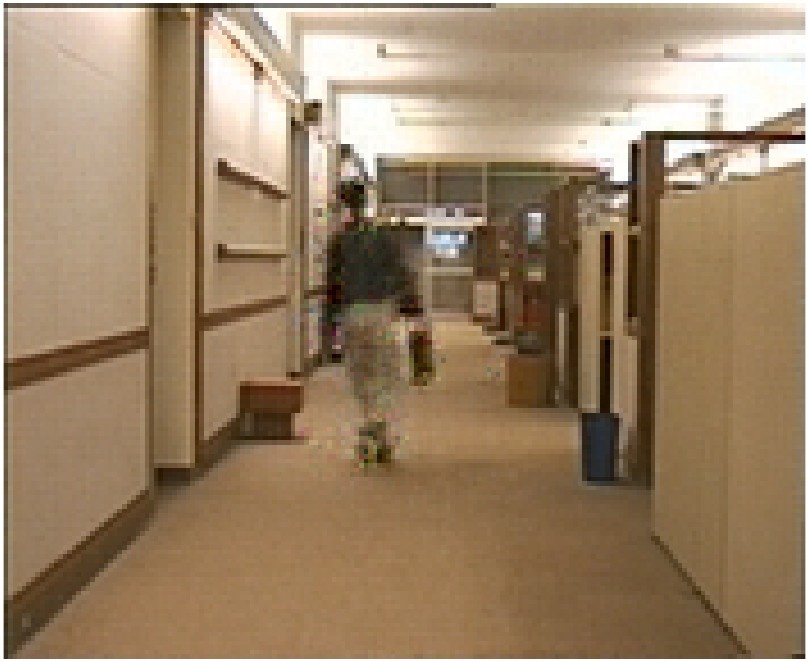}\\
\includegraphics[width=1\linewidth]{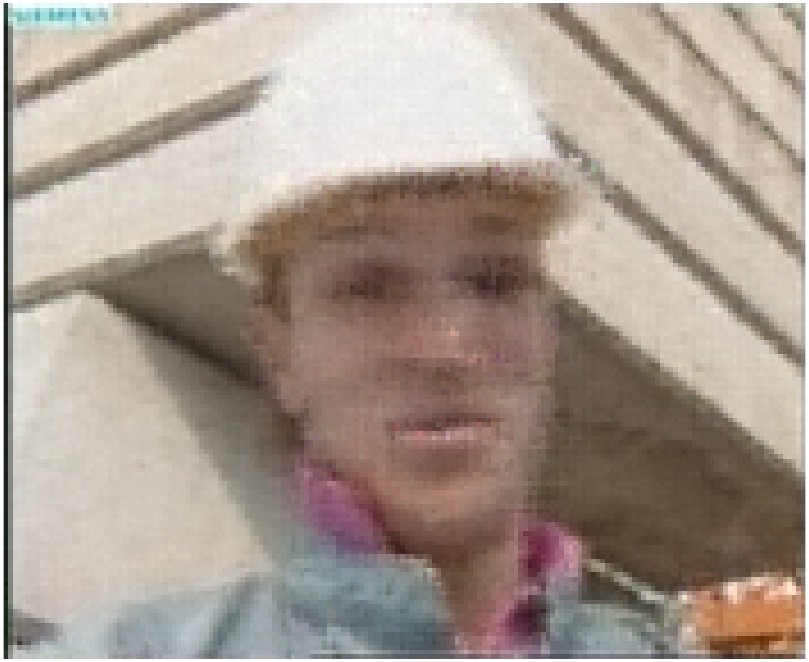}\\
\includegraphics[width=1\linewidth]{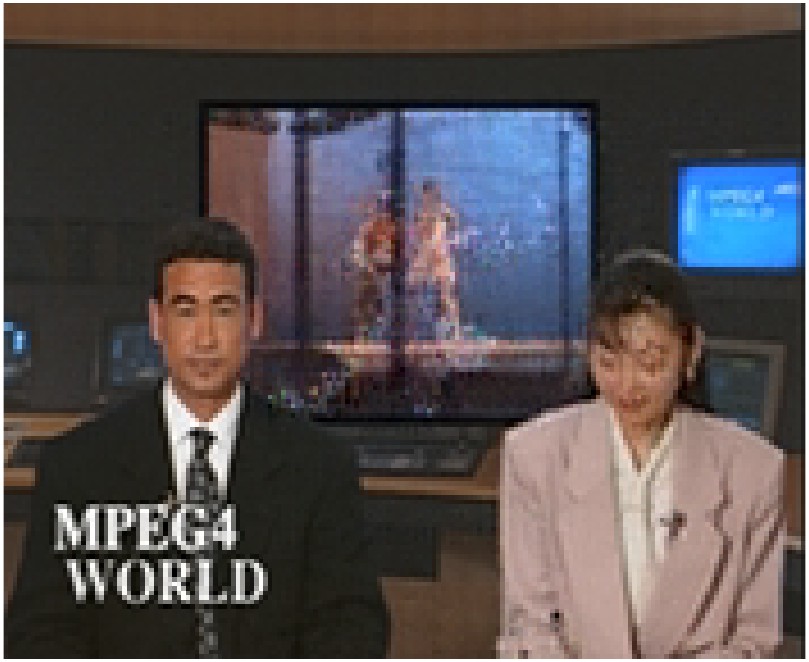}\\
\includegraphics[width=1\linewidth]{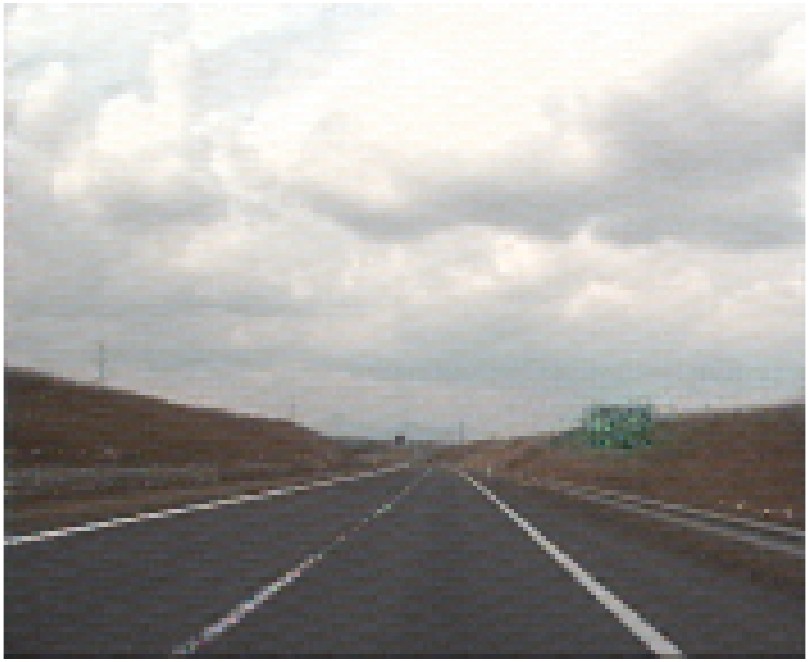}\\
\includegraphics[width=1\linewidth]{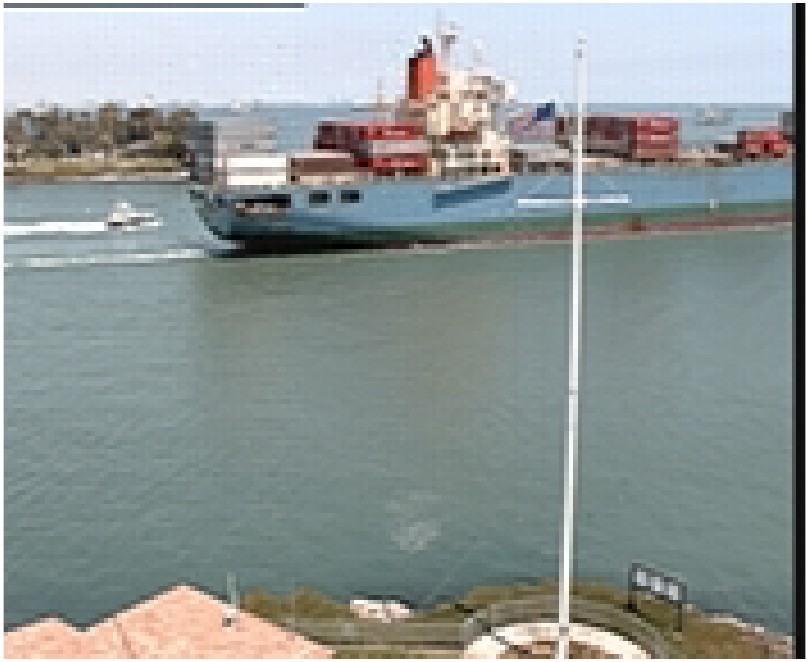} 
\end{minipage}}\subfigure[TJLC]{
\begin{minipage}[b]{0.095\linewidth}
\includegraphics[width=1\linewidth]{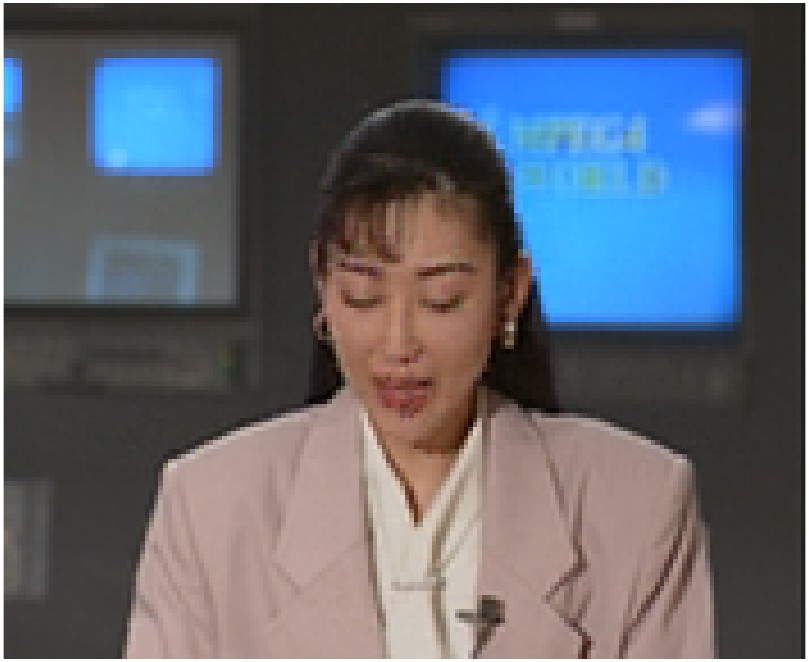}\\
\includegraphics[width=1\linewidth]{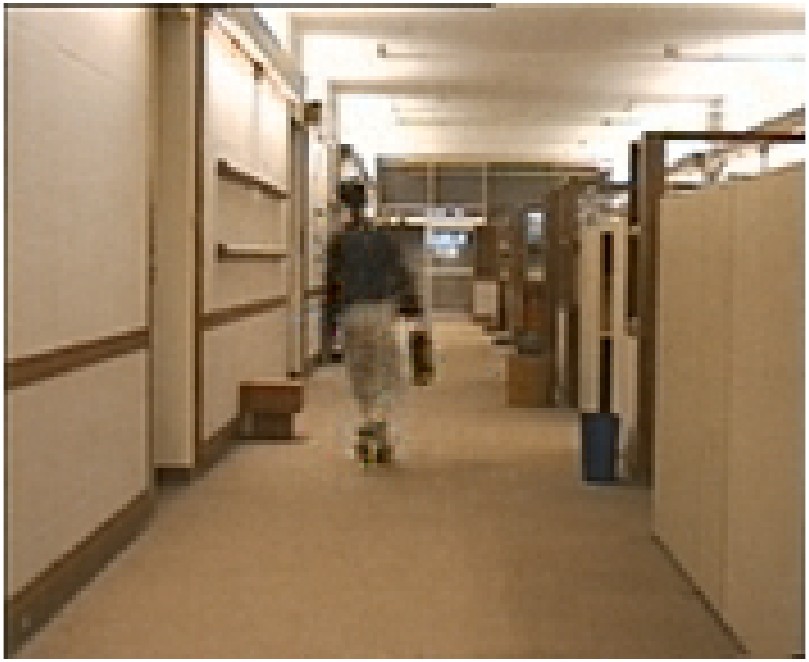}\\
\includegraphics[width=1\linewidth]{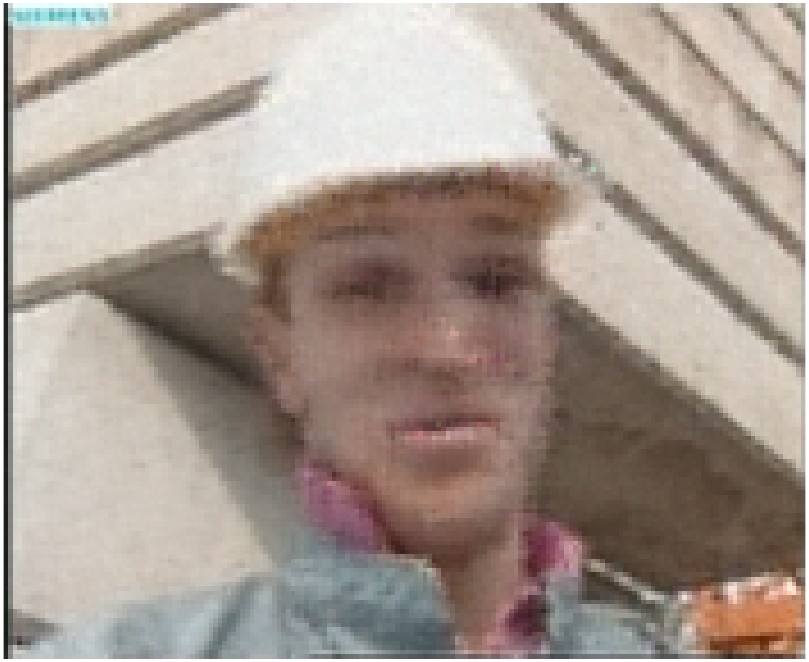}\\
\includegraphics[width=1\linewidth]{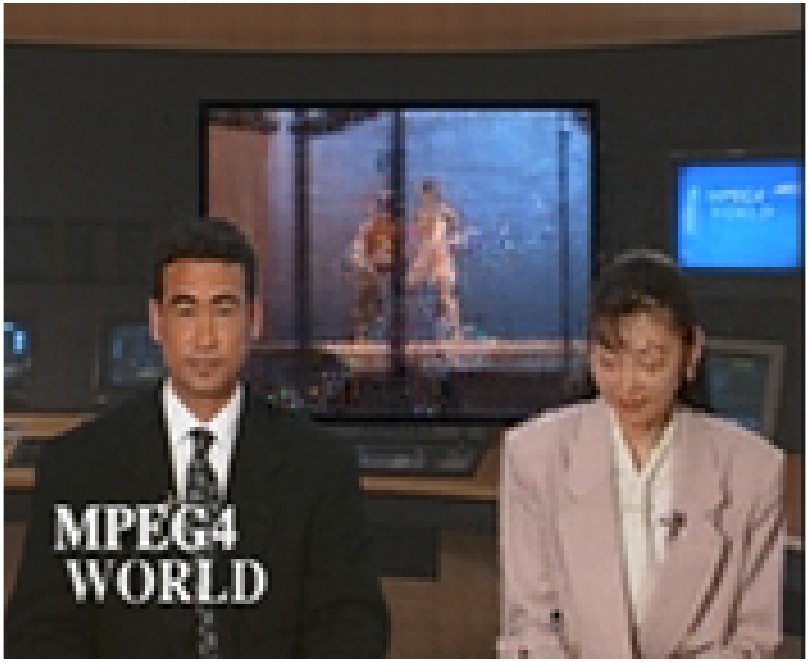}\\
\includegraphics[width=1\linewidth]{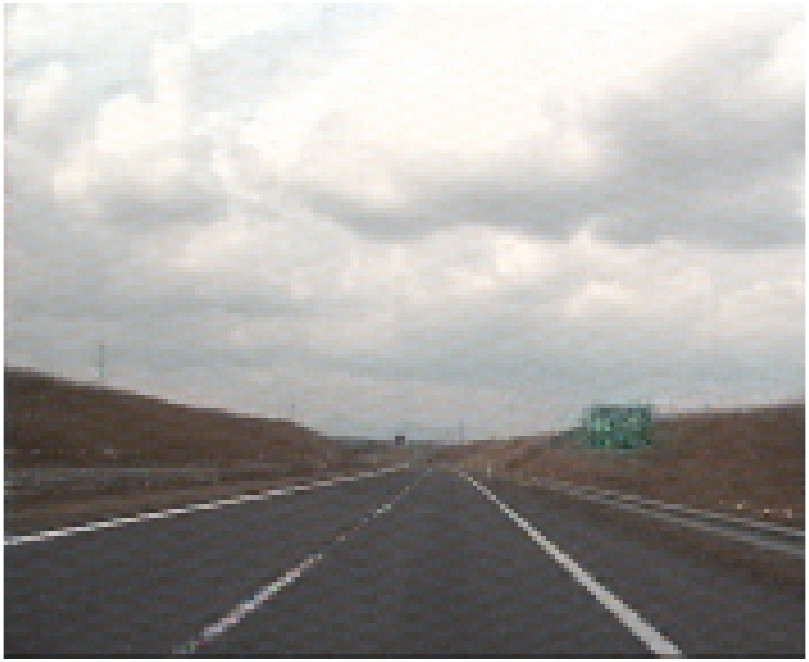}\\
\includegraphics[width=1\linewidth]{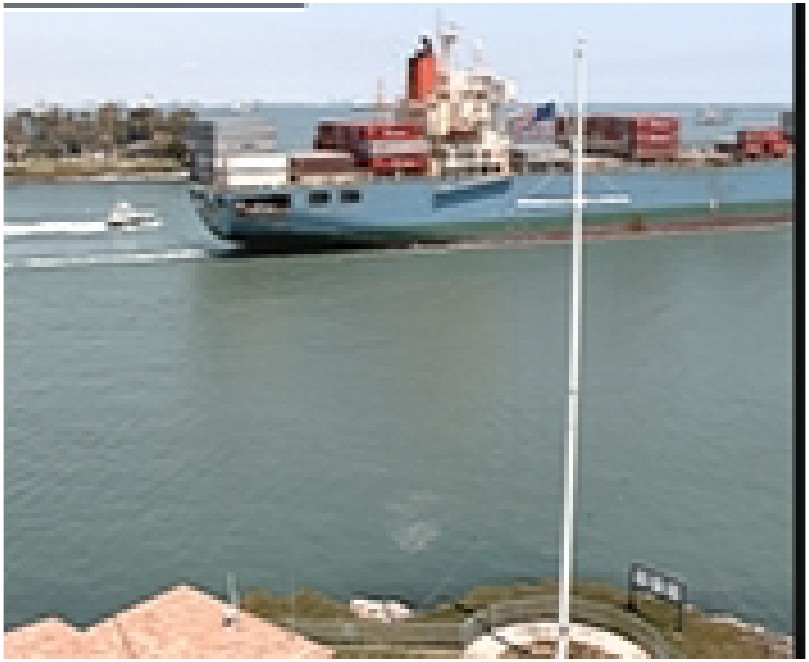} 
\end{minipage}}

	\caption{Visual results for CV data. The rows of CVs are in order: akiyo, hall, foreman, news, highway, container. MR: top three rows are 95\%, and last three rows are 90\%. The corresponding bands in each row are: 5, 15, 20, 40, 30, 45.}

	\label{CVF}
\end{figure}

\subsection{Parameters setting}
The proposed TJLC method includes parameters $\beta$, $\mu$, $\eta$, $\nu$, $\vartheta$, and $\omega$. Firstly, $\beta=(\beta_{11},\cdots,\beta_{NN})$ and $\sum_{1\leqslant l_1\leqslant l_2\leqslant N}\beta_{l_1l_2}=1$. To obtain the desired value for $\beta$ that meets the conditions, we first set each component individually, and then divide it by the sum of the overall components, i.e., $\beta=(\alpha_{11},\alpha_{12},\cdots,\alpha_{NN})/(\sum_{1\leqslant l_1\leqslant l_2\leqslant N}\alpha_{l_1l_2})$.  Furthermore, for convenience, let's $\mu = \beta/\tau$. Table \ref{TCPARA} shows the parameters of parameters $\beta$, $\mu$, $\eta$, $\nu$, and $\vartheta$ on different experiments. Due to the significant differences in MSI data, the optimal parameter settings also differ greatly. Therefore, we specifically list the parameters for the four MSIs. Besides, for MRI and MSI datas, the weights are set to $\mathcal{Z}_{l_1l_2(l_1l_2)}\in\mathbb{R}^{\mathit{I}_{L1}\times\mathit{I}_{L2}\times\mathit{I}_{L3}}$: $\omega_{(l1l2)j,i}=1/(c+e^{-w_{(l1l2)R-j+1,i}})$, where $c=0.8, R=\min\{\mathit{I}_{L1},\mathit{I}_{L2}\}$, $w_{(l1l2)R-j+1,i}=(R\times\sigma_{j}(\bar{\mathcal{W}}_{l1l2}^{(i)}))/(m_{i})$, $\sigma_{j}(\bar{\mathcal{W}}_{l1l2}^{(i)})$ is the $(j,j,i)$-th singular value of $\bar{\mathcal{W}}_{l1l2}$, $\mathcal{W}_{l1l2}=\mathcal{X}_{(l1l2)}+\mathcal{Q}_{l1l2(l1l2)}/\mu_{l1l2}$ and $m_{i}=\max\{\sigma_{j}(\bar{\mathcal{W}}_{l1l2}^{(i)}), j=1,2,\dots,R\}$.
For CV data, the weights are set to $\mathcal{Z}_{l_1l_2(l_1l_2)}\in\mathbb{R}^{\mathit{I}_{L1}\times\mathit{I}_{L2}\times\mathit{I}_{L3}}$: $\omega_{j,i}=1/(c+e^{-w_{R-j+1,i}})$, where $c=0.8, R=\min\{\mathit{I}_{L1},\mathit{I}_{L2}\}$, $w_{R-j+1,i}=\sigma_{j}(\bar{\mathcal{W}}_{l1l2}^{(i)})$, $\sigma_{j}(\bar{\mathcal{W}}_{l1l2}^{(i)})$ is the $(j,j,i)$-th singular value of $\bar{\mathcal{W}}_{l1l2}$, $\mathcal{W}_{l1l2}=\mathcal{X}_{(l1l2)}+\mathcal{Q}_{l1l2(l1l2)}/\mu_{l1l2}$.
\begin{table}[!h]
	\centering
	\caption{Parameters under different experiments of the proposed TJLC method.}
	\label{TCPARA}
{\footnotesize
	\begin{tabular}{|c|ccccc|}
			\hline
			Data                     & $\alpha  $     & $\tau$ & $\eta$ & $\nu$ & $\vartheta$ \\ \hline
			MRI                      & (1,1,1,1,1,1)               & 10000               & 1.1                 & 1                  & 500                      \\ \hline
			clay                     & (0.01,0.001,1,0.1,1,0.001)  & 10000               & 1.1                 & 2.5                & 500                      \\
			chart\_and\_stuffed\_toy & (0.1,0.001,1,0.1,1,0.001)   & 10000               & 1.1                 & 1                  & 500                      \\
			balloons                 & (0.1,0.001,1,0.1,1,0.01)    & 10000               & 1.1                 & 2.5                & 500                      \\
			cd                       & (0.1,0.01,1,0.1,1,0.01)     & 10000               & 1.1                 & 0.5                & 500                      \\ \hline
			CV                       & (0.1,1,1,1,0.1,1,1,1,1,0.1) & 100000              & 1.1                 & 0.1                & 1000                     \\ \hline
		\end{tabular}}%
\end{table}

\subsection{Convergence behavior}
In order to observe the convergence behaviors of the TJLC algorithm, the relative change (RE) of two successive recovered tensors is defined: $RE:=\|\mathcal{X}^{k+1}-\mathcal{X}^{k}\|^{2}_{F}/\|\mathcal{X}^{k}\|^{2}_{F}$. Fig. \ref{convergence} shows the iterative RE of various data, i.e. MRI, MSI, and CV, by the proposed TJLC method. Here, MSI1 and MSI2 refer to the two multispectral images `` balloons " and `` cd " respectively. CV1 and CV2 refer to the two color videos `` news " and `` container " respectively. As the iteration progresses, the RE can become smaller, which guarantees the convergence of the proposed algorithms. The decrease in the RE further verifies the theory.
\begin{figure}[!h]
	\centering
	\subfigure[MR=95\%]{
		\includegraphics[width=0.5\linewidth]{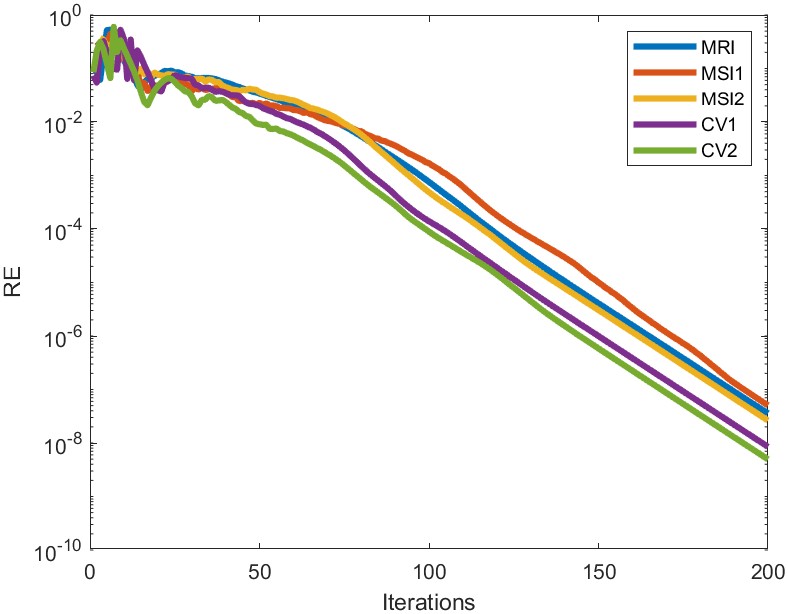}
	}\subfigure[MR=90\%]{
		\includegraphics[width=0.5\linewidth]{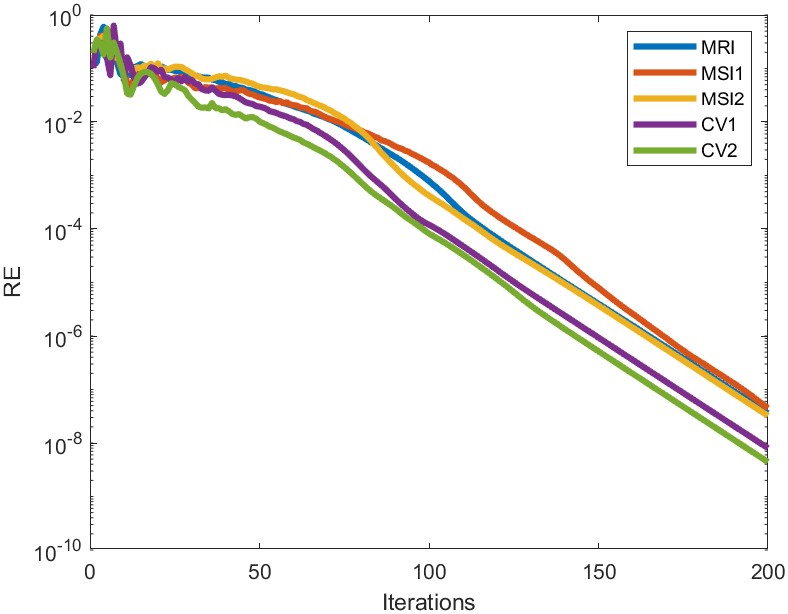}
	}
	\caption{The convergence behaviours of TJLC algorithm for MRI, MSI and CV datas.}
	\label{convergence}
\end{figure}
\subsection{Discussion}

\begin{figure}[!h] 

	\vspace{0cm} 
	\subfigtopskip=2pt 
	\subfigbottomskip=2pt 
	\subfigure[Original]{
		\begin{minipage}[b]{0.105\linewidth}
			\includegraphics[width=1\linewidth]{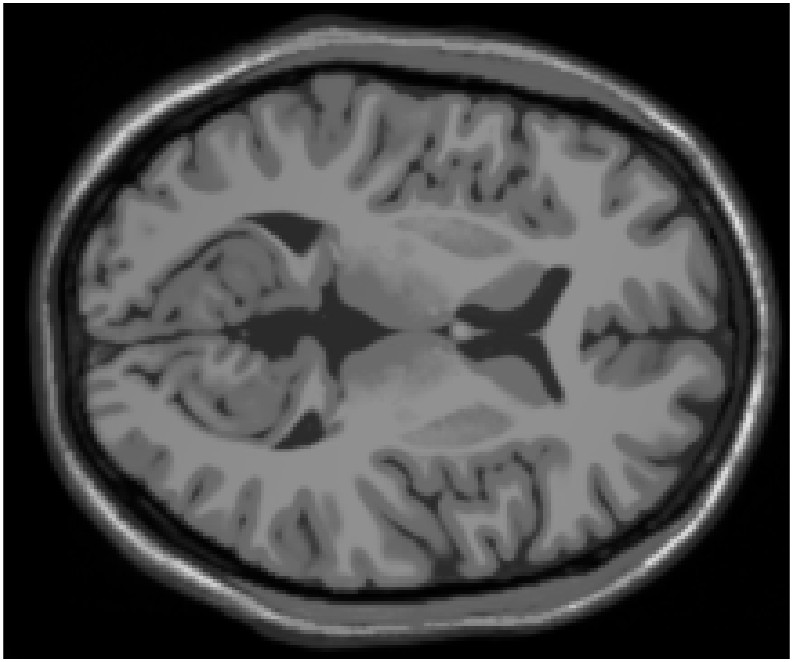}\\
			\includegraphics[width=1\linewidth]{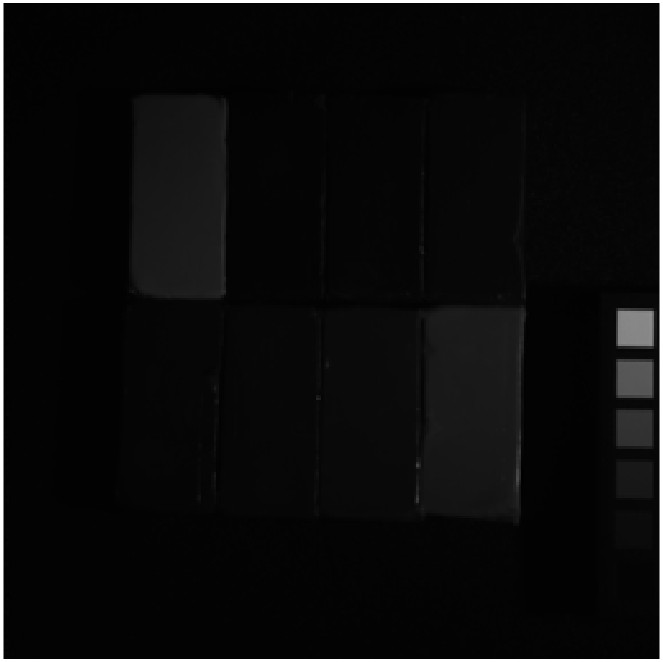}\\
			\includegraphics[width=1\linewidth]{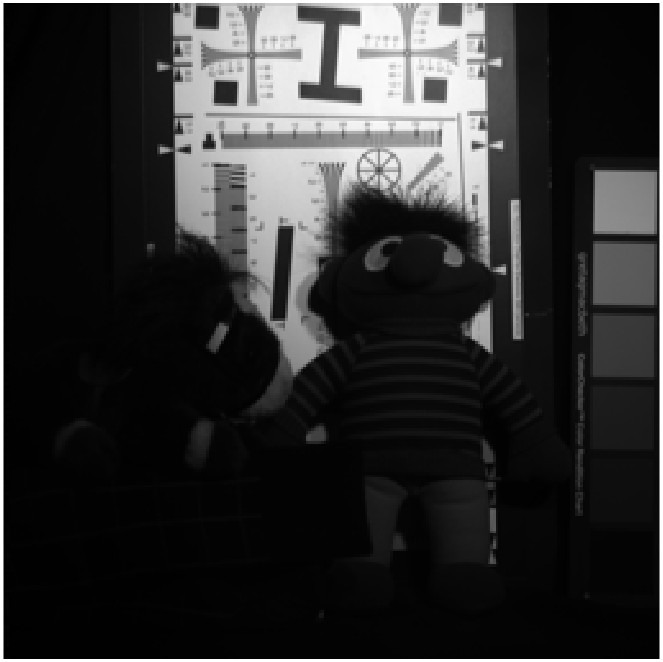}\\
			\includegraphics[width=1\linewidth]{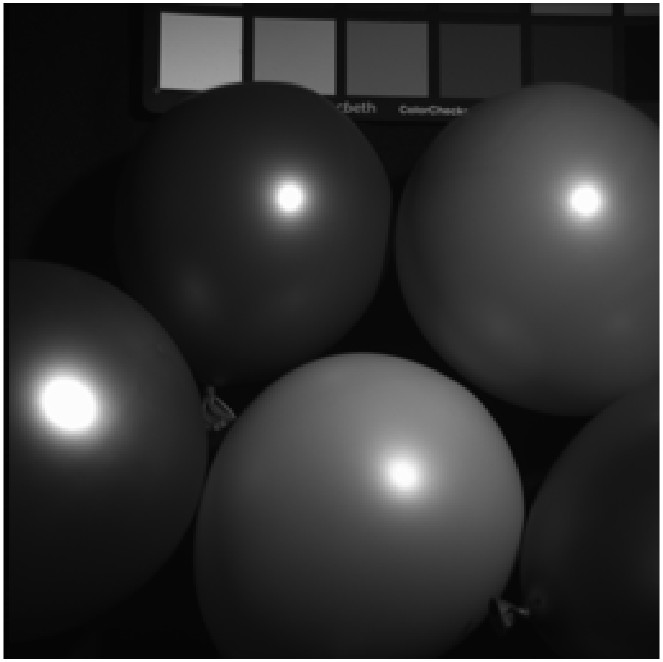}\\
			\includegraphics[width=1\linewidth]{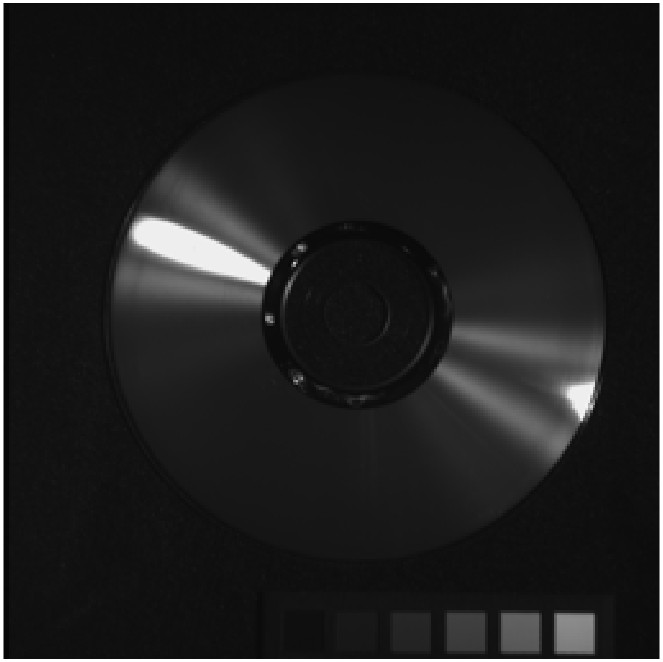}\\
			\includegraphics[width=1\linewidth]{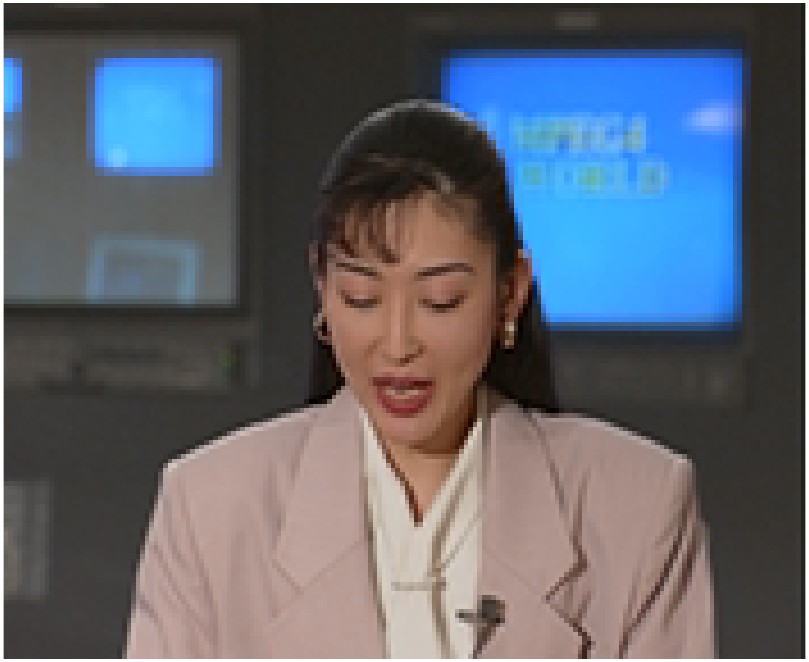}\\
			\includegraphics[width=1\linewidth]{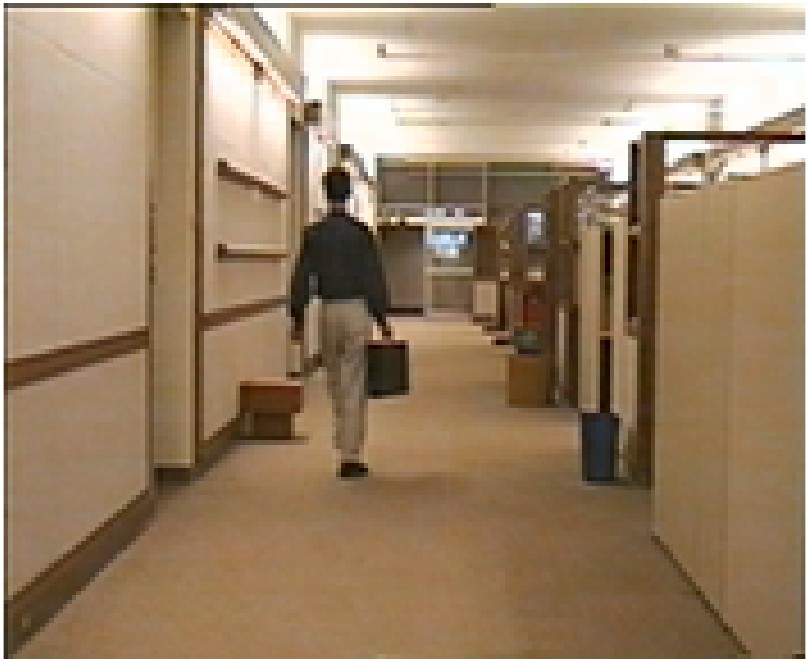}\\
			\includegraphics[width=1\linewidth]{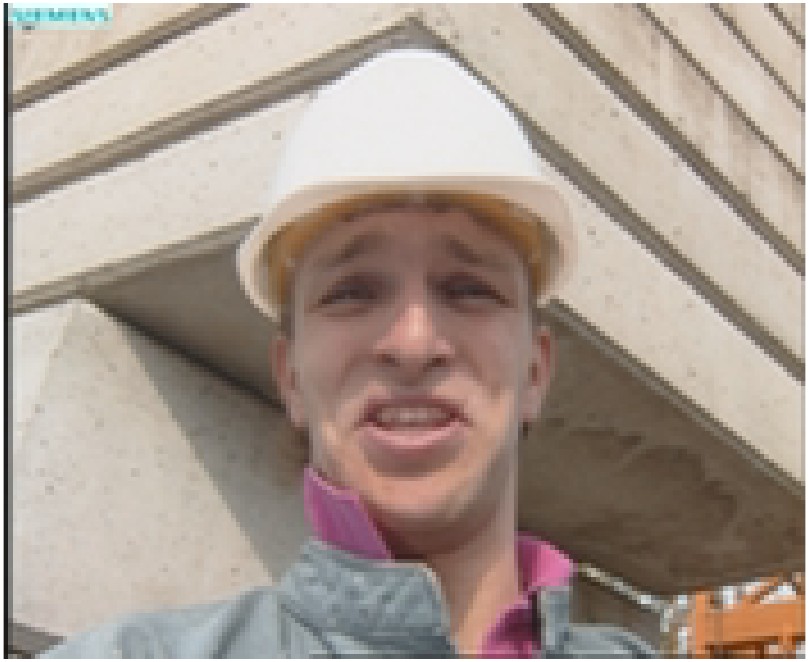}\\
			\includegraphics[width=1\linewidth]{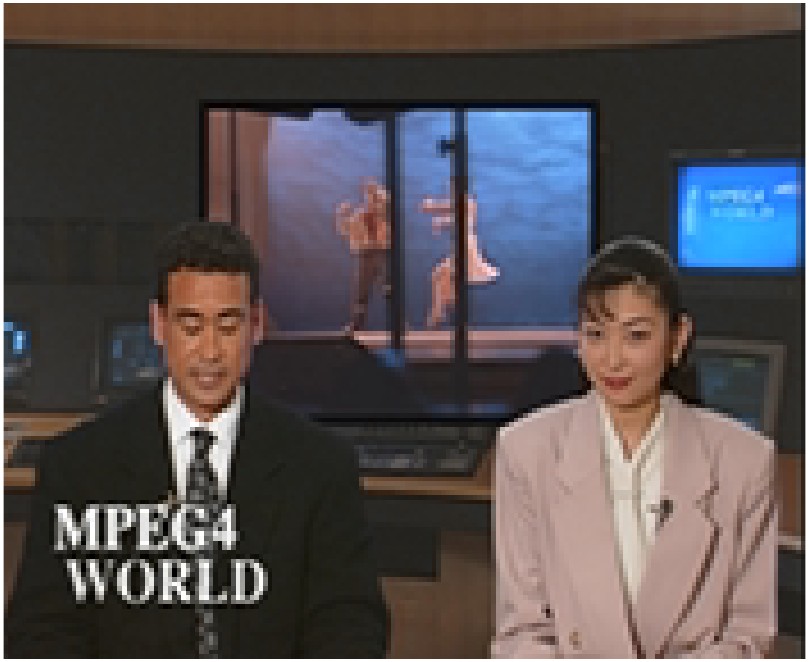}\\
			\includegraphics[width=1\linewidth]{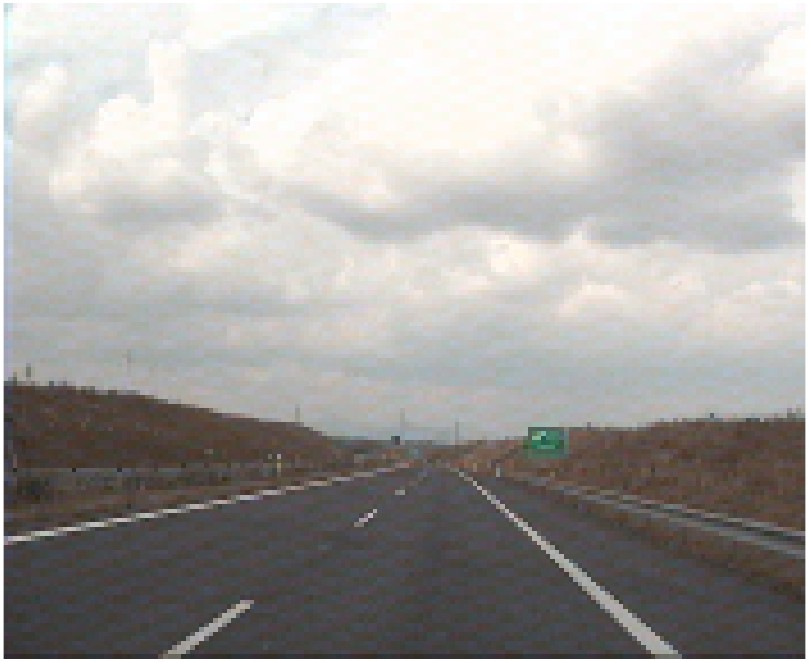}\\
			\includegraphics[width=1\linewidth]{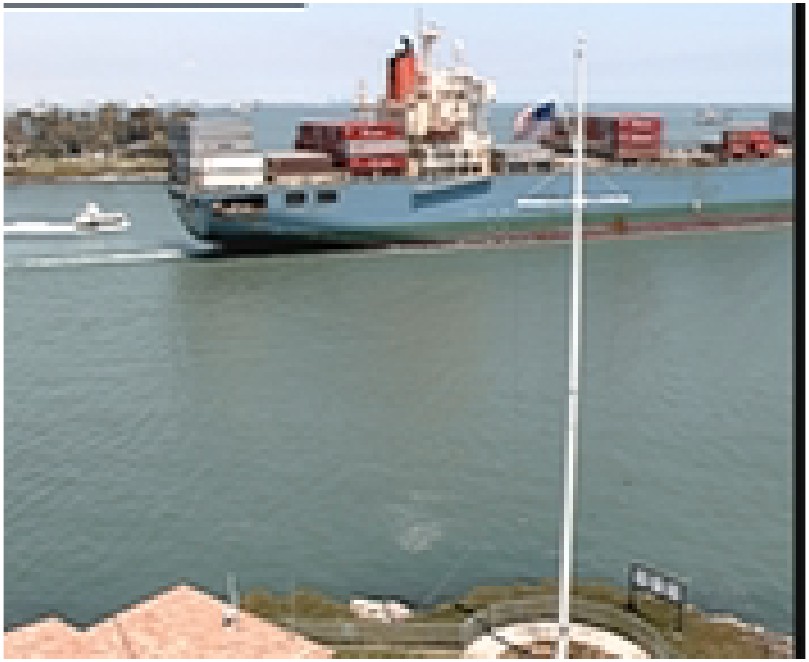} 
	\end{minipage}}\subfigure[MR=96\%]{
		\begin{minipage}[b]{0.105\linewidth}
			\includegraphics[width=1\linewidth]{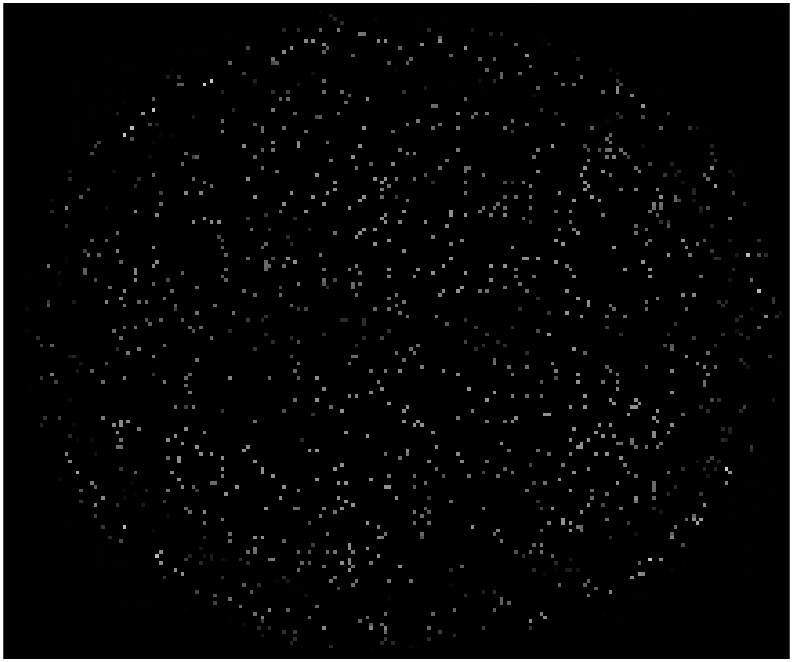}\\
			\includegraphics[width=1\linewidth]{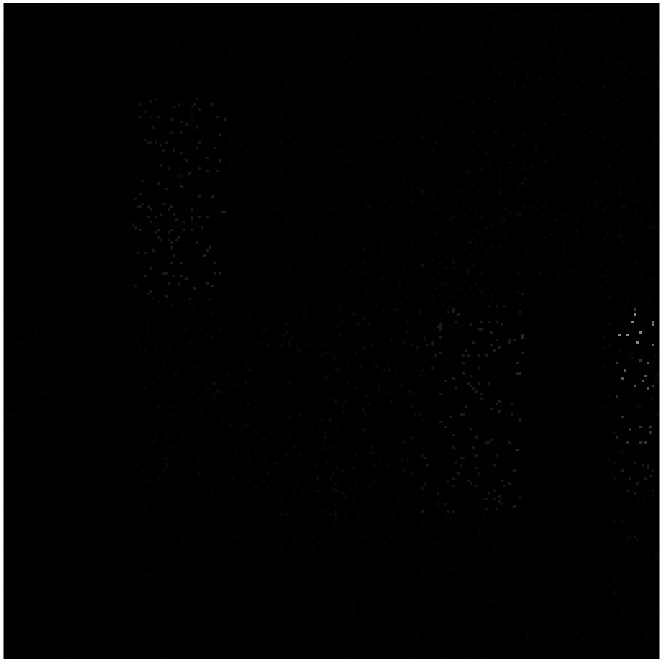}\\
			\includegraphics[width=1\linewidth]{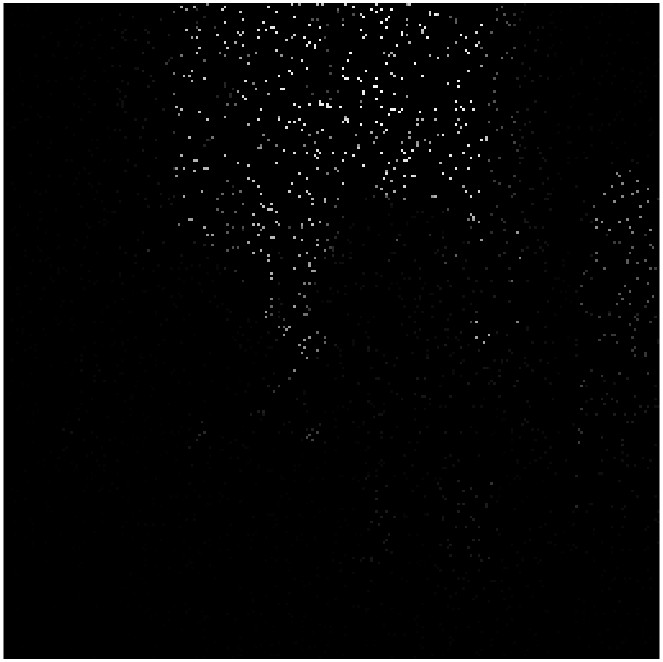}\\
			\includegraphics[width=1\linewidth]{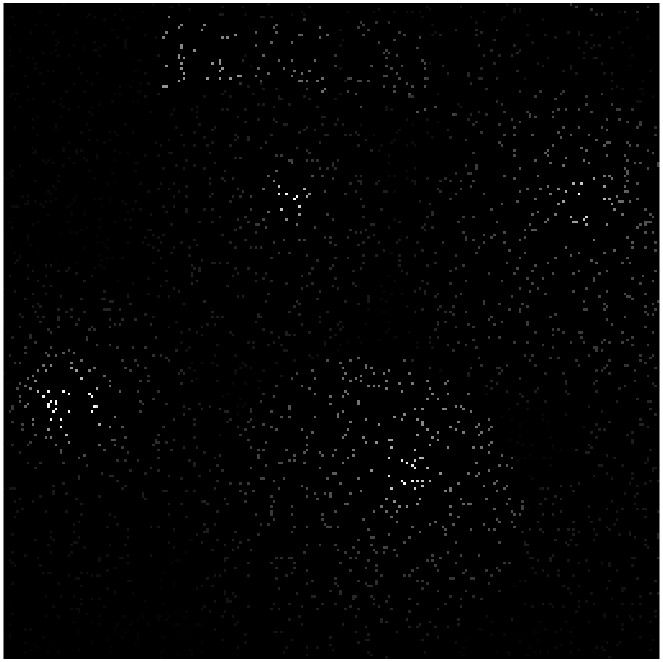}\\
			\includegraphics[width=1\linewidth]{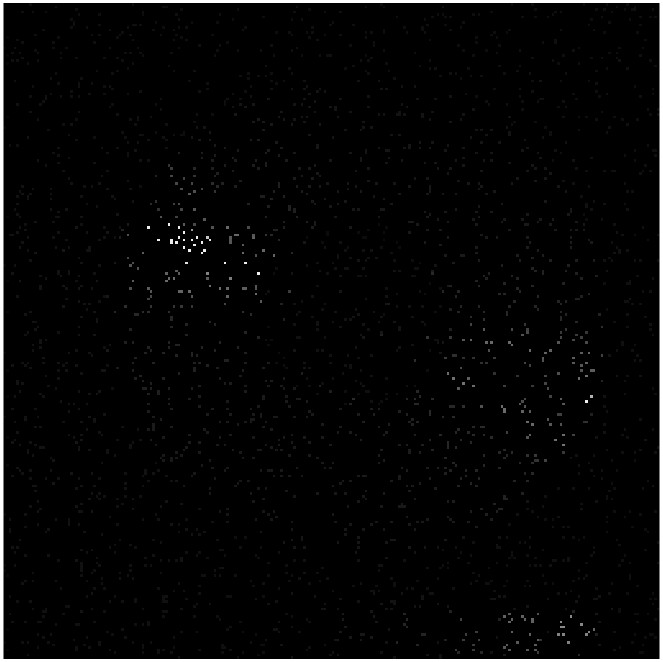}\\
			\includegraphics[width=1\linewidth]{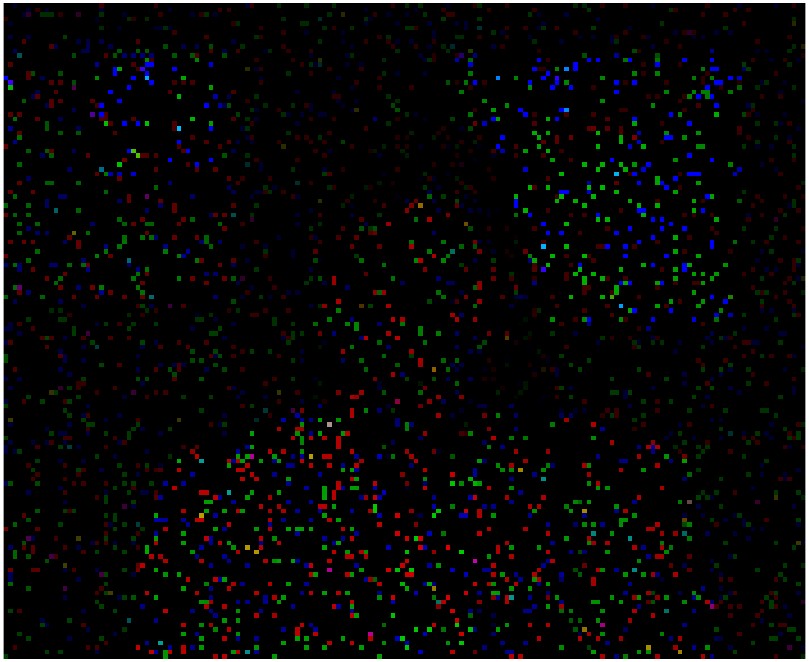}\\
			\includegraphics[width=1\linewidth]{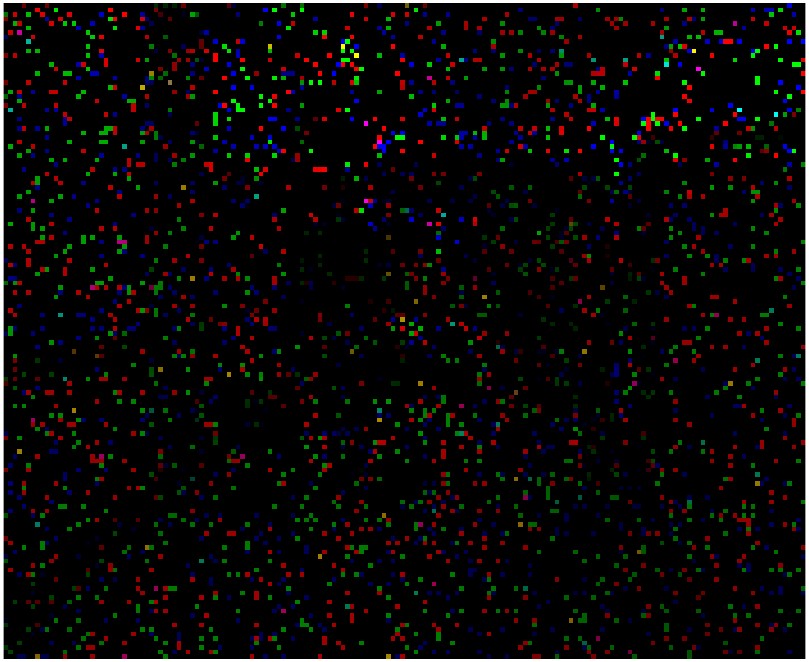}\\
			\includegraphics[width=1\linewidth]{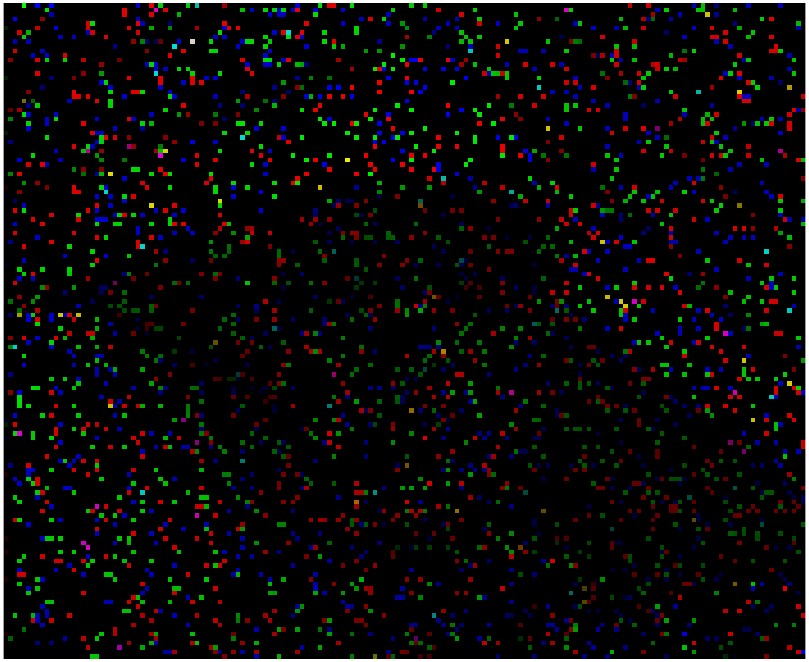}\\
			\includegraphics[width=1\linewidth]{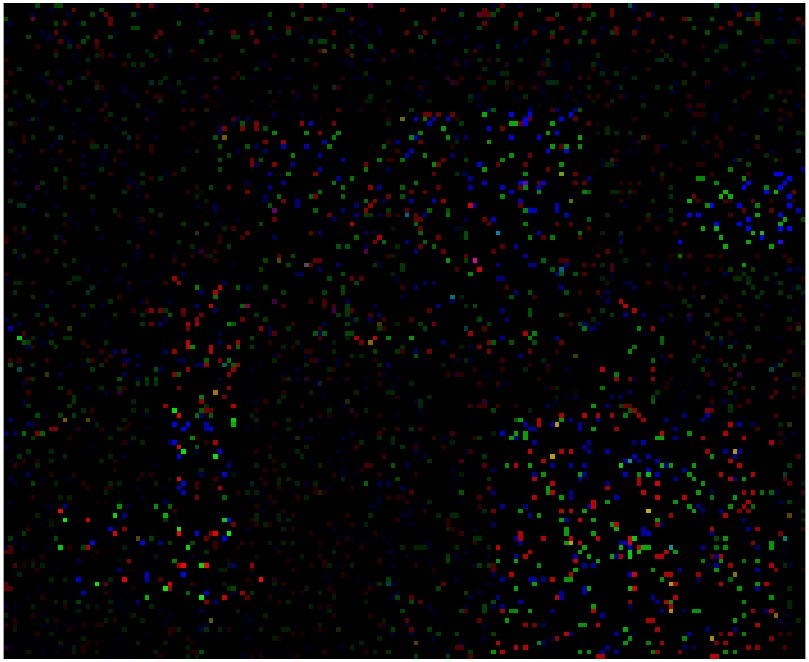}\\
			\includegraphics[width=1\linewidth]{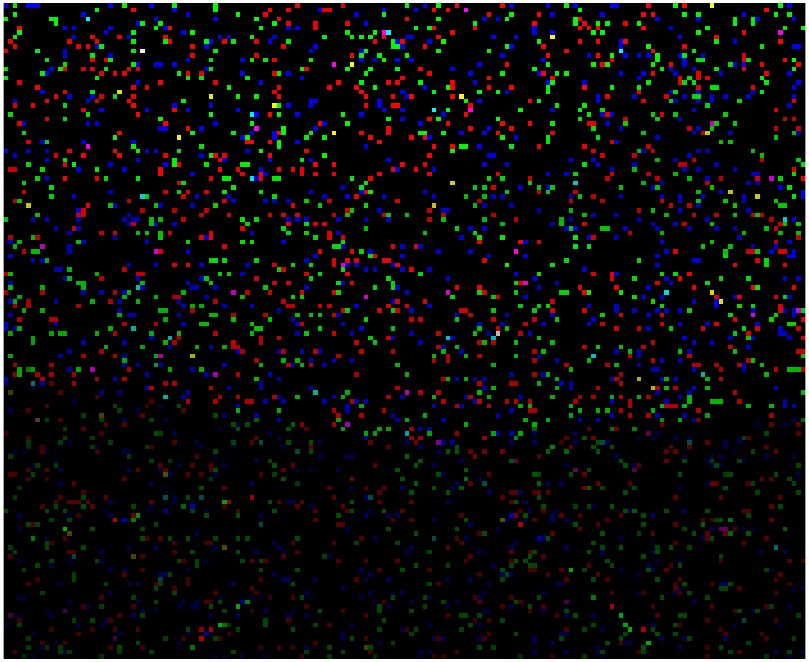}\\
			\includegraphics[width=1\linewidth]{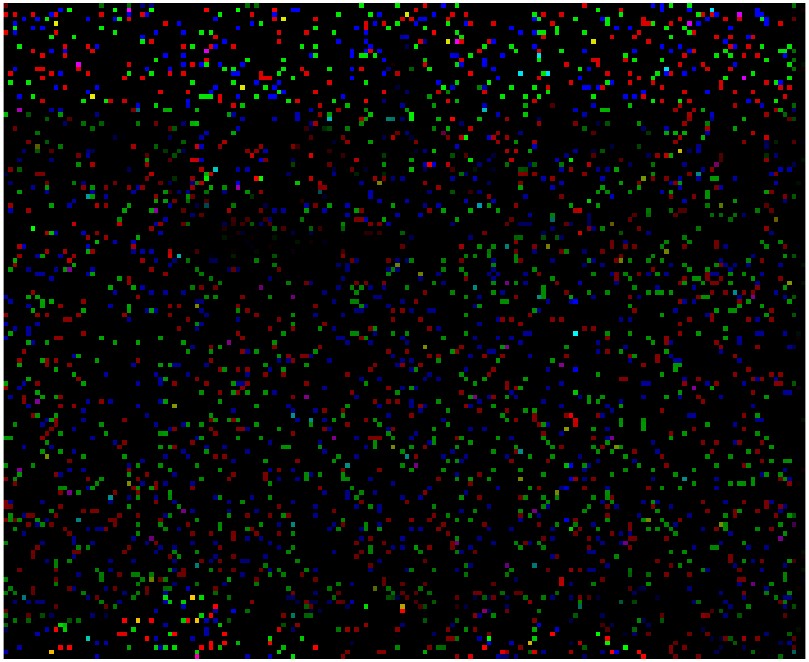}  
	\end{minipage}}\subfigure[TJLC]{
		\begin{minipage}[b]{0.105\linewidth}
			\includegraphics[width=1\linewidth]{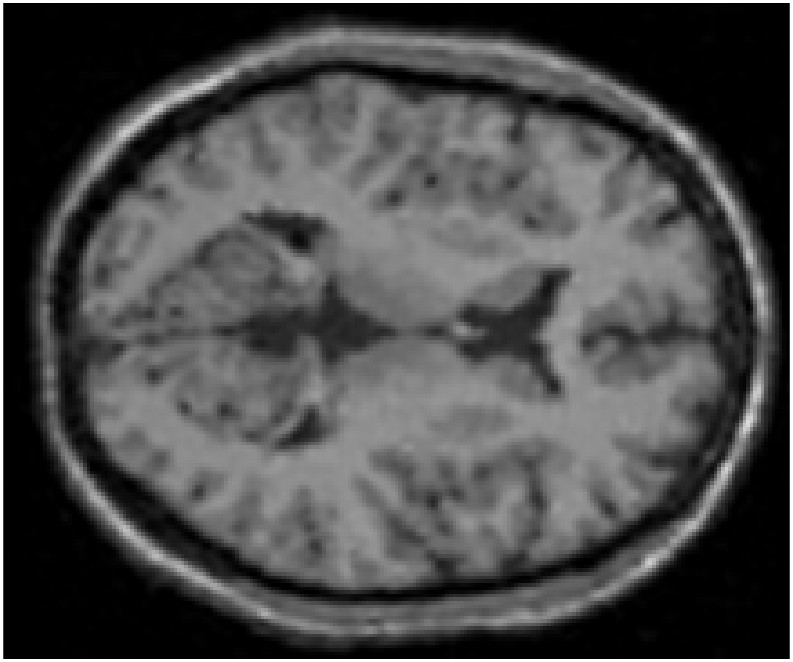}\\
			\includegraphics[width=1\linewidth]{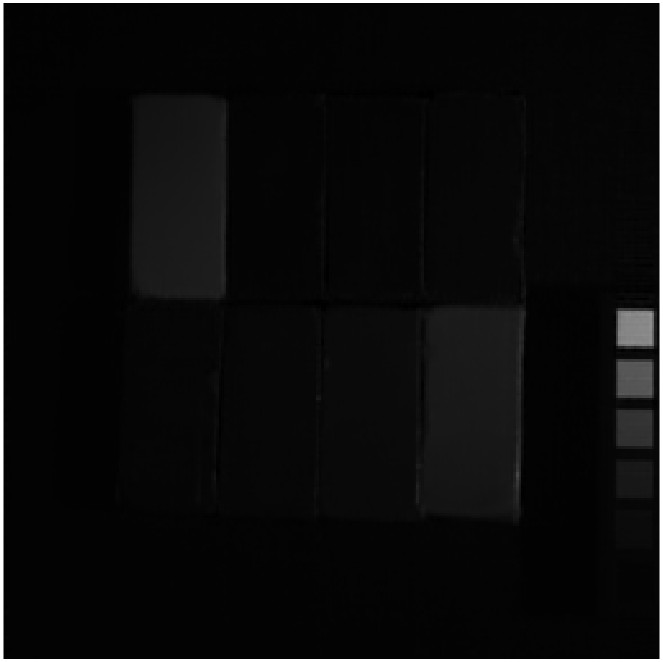}\\
			\includegraphics[width=1\linewidth]{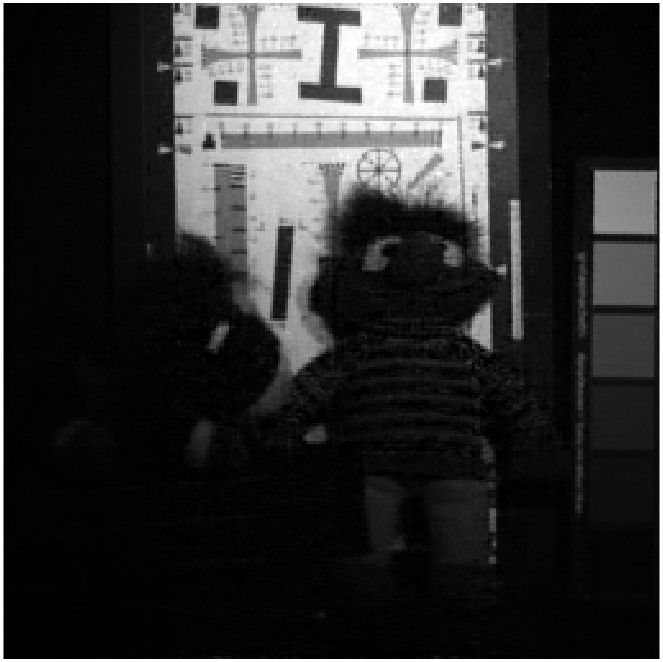}\\
			\includegraphics[width=1\linewidth]{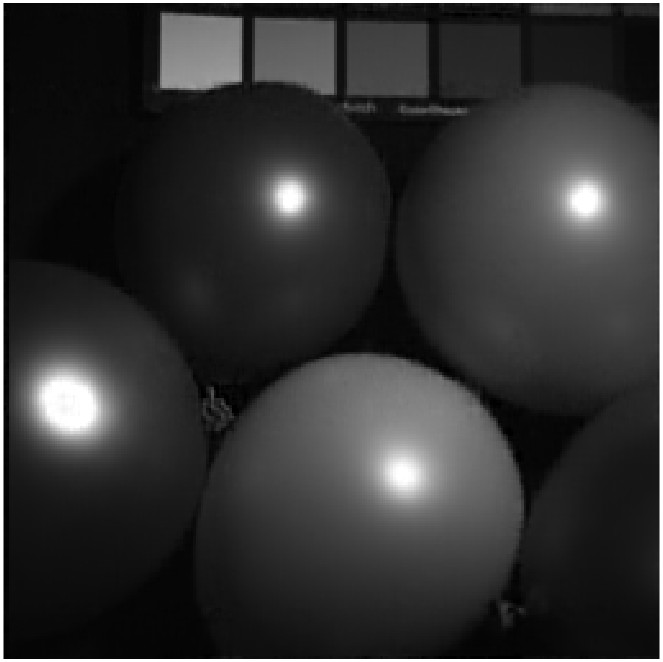}\\
			\includegraphics[width=1\linewidth]{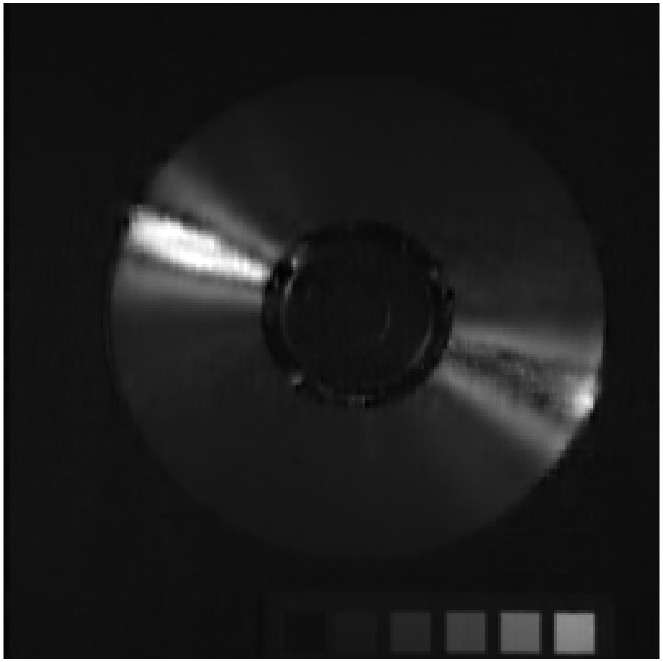}\\
			\includegraphics[width=1\linewidth]{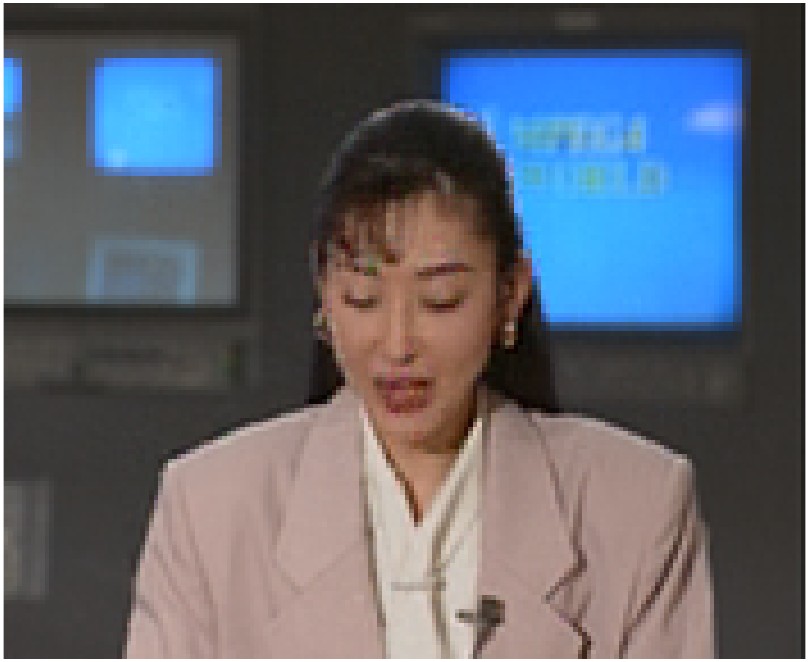}\\
			\includegraphics[width=1\linewidth]{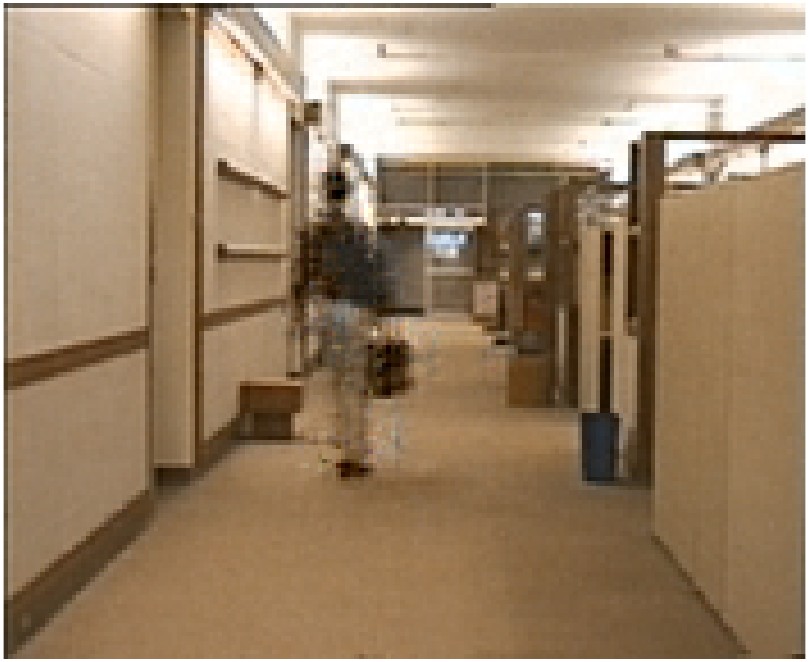}\\
			\includegraphics[width=1\linewidth]{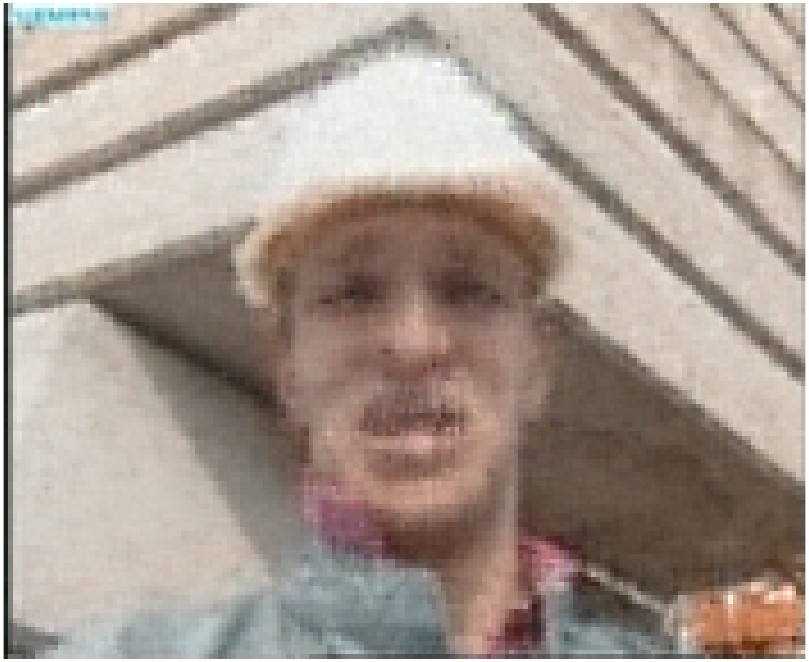}\\
			\includegraphics[width=1\linewidth]{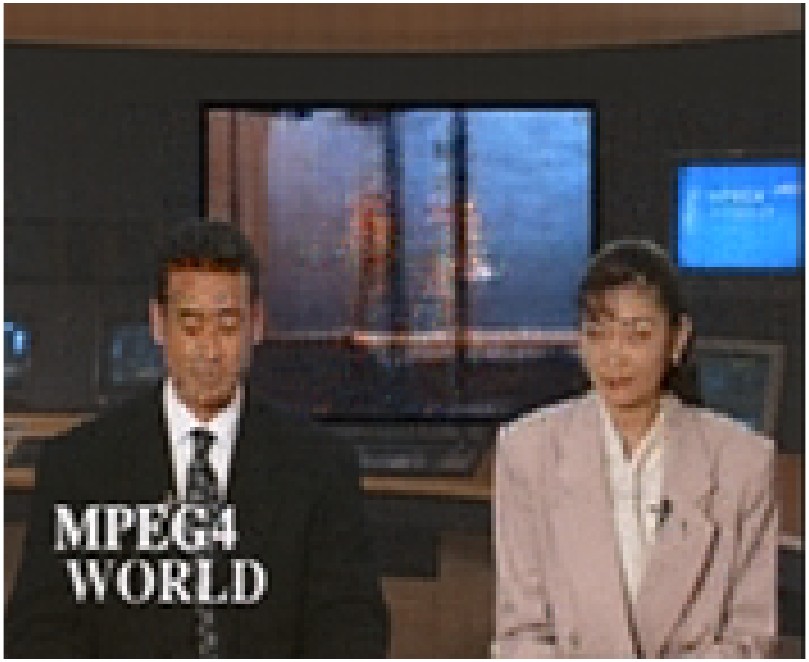}\\
			\includegraphics[width=1\linewidth]{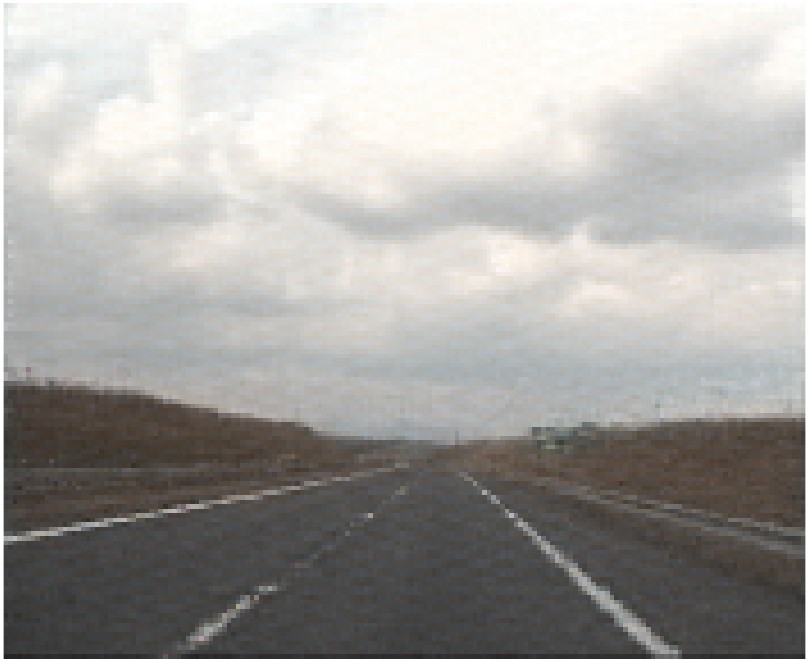}\\
			\includegraphics[width=1\linewidth]{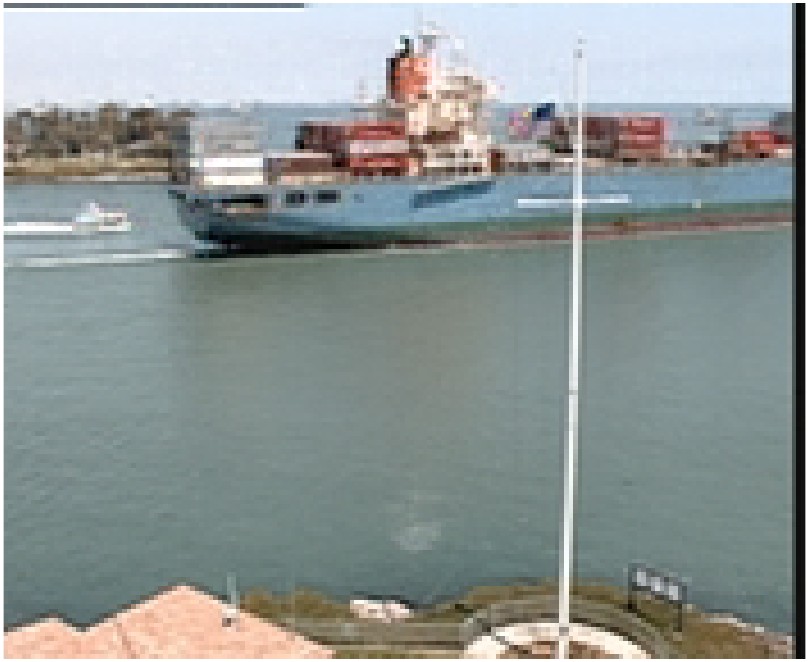} 
	\end{minipage}}\subfigure[Original]{
		\begin{minipage}[b]{0.105\linewidth}
			\includegraphics[width=1\linewidth]{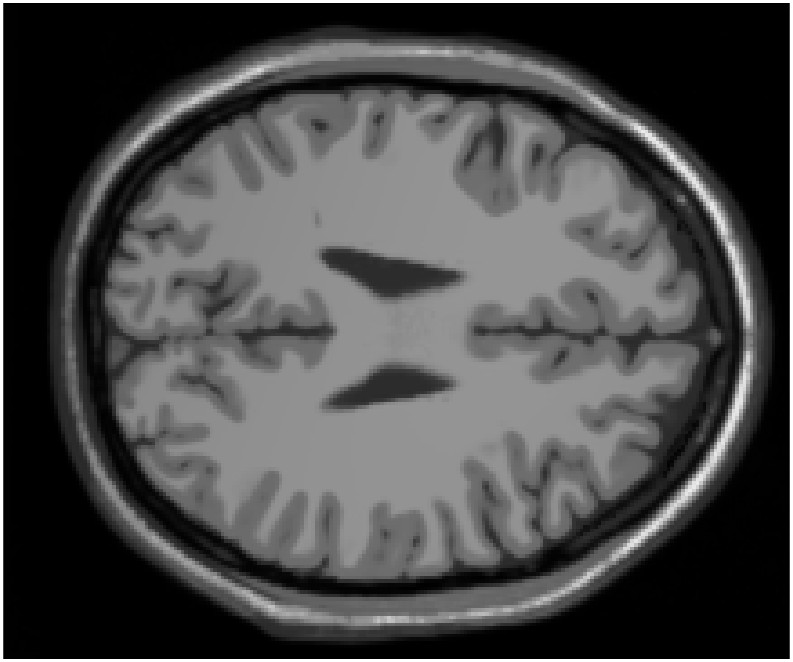}\\
			\includegraphics[width=1\linewidth]{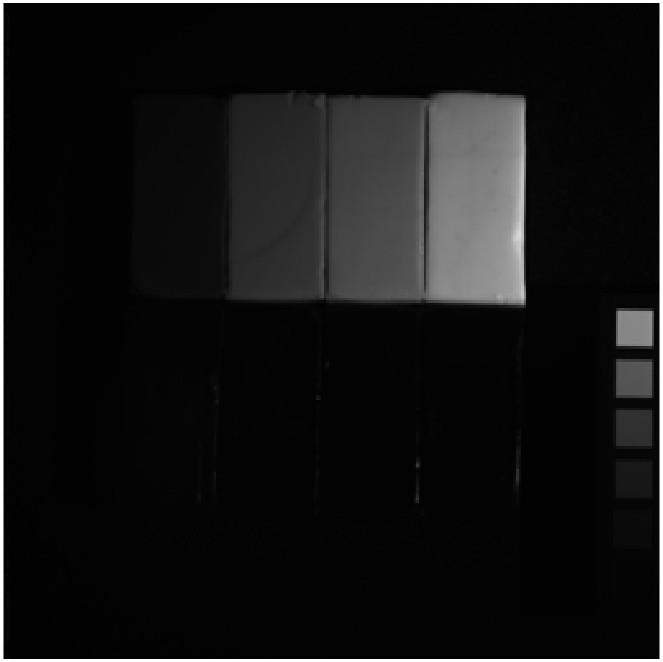}\\
			\includegraphics[width=1\linewidth]{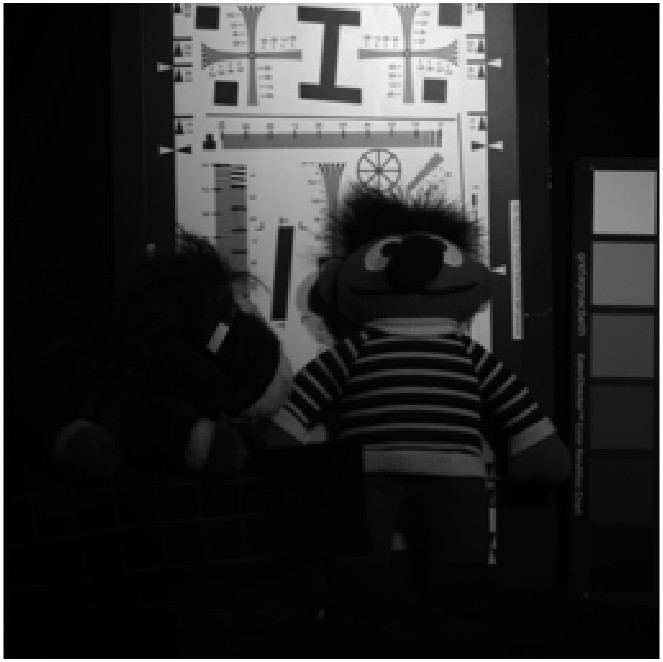}\\
			\includegraphics[width=1\linewidth]{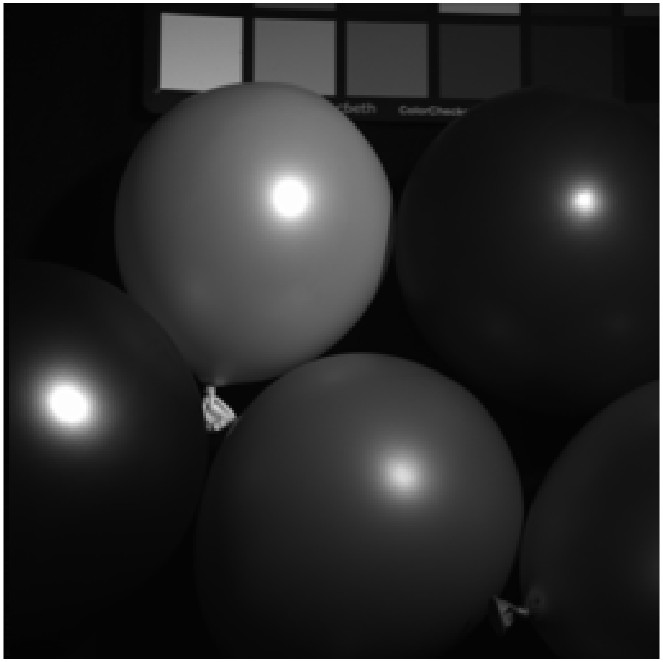}\\
			\includegraphics[width=1\linewidth]{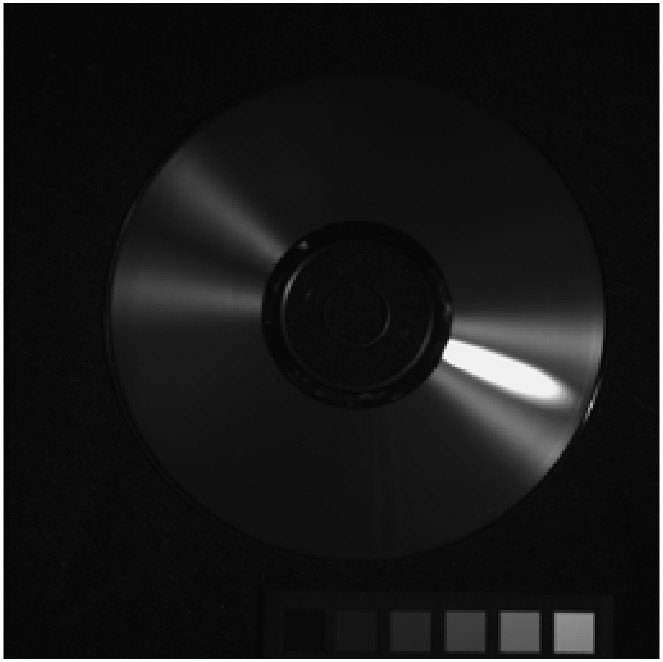}\\
			\includegraphics[width=1\linewidth]{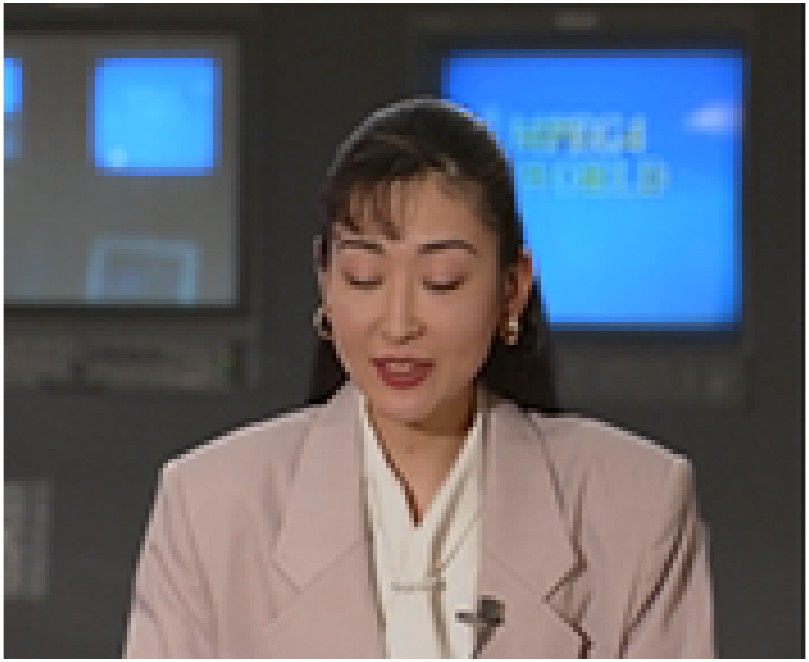}\\
			\includegraphics[width=1\linewidth]{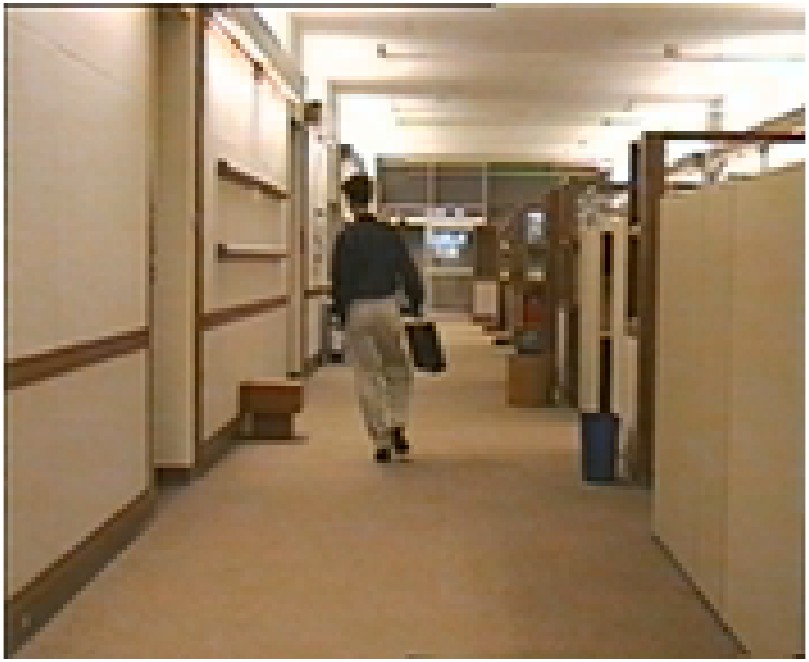}\\
			\includegraphics[width=1\linewidth]{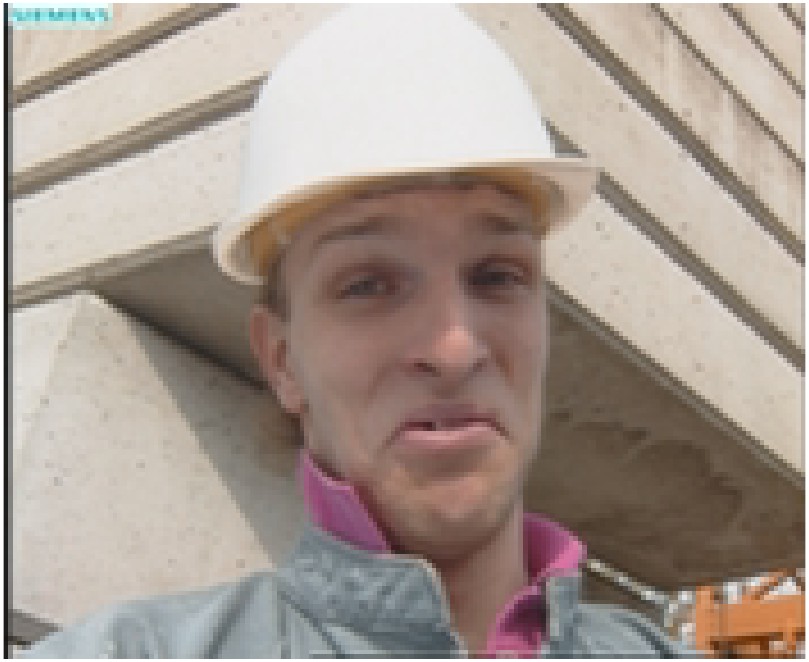}\\
			\includegraphics[width=1\linewidth]{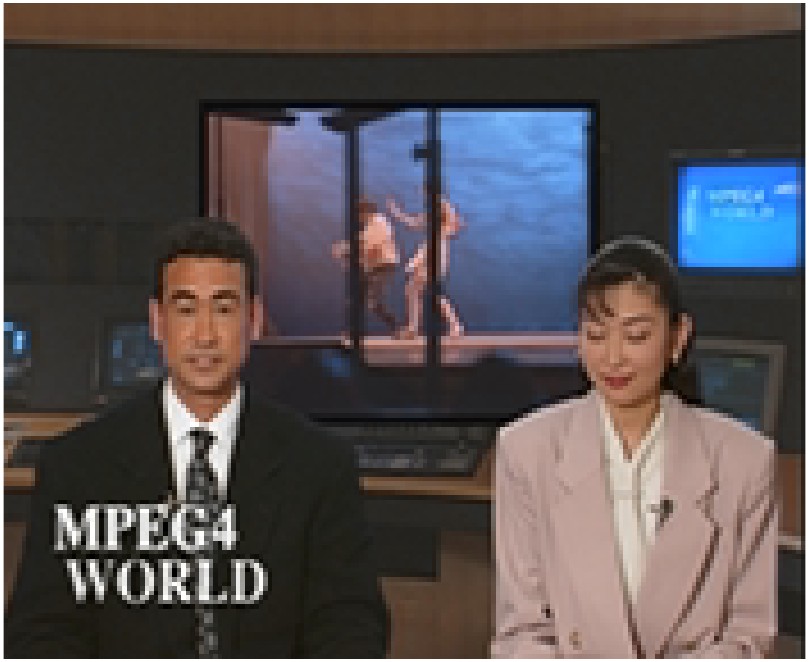}\\
			\includegraphics[width=1\linewidth]{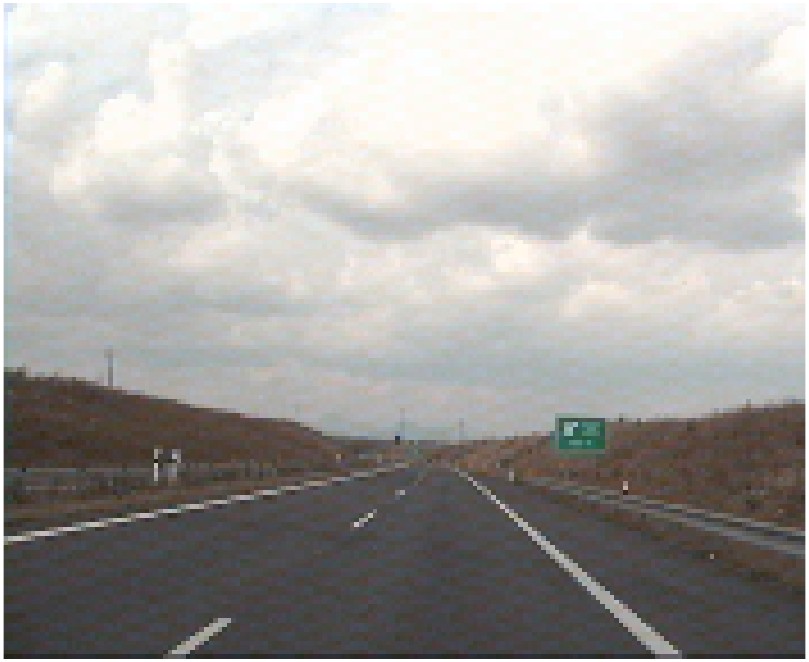}\\
			\includegraphics[width=1\linewidth]{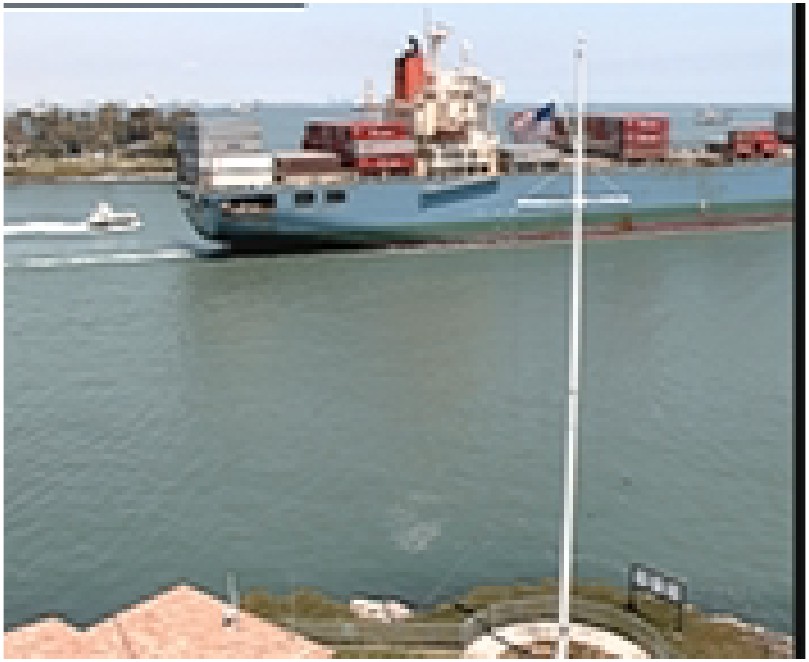} 
	\end{minipage}}\subfigure[MR=98\%]{
		\begin{minipage}[b]{0.105\linewidth}
			\includegraphics[width=1\linewidth]{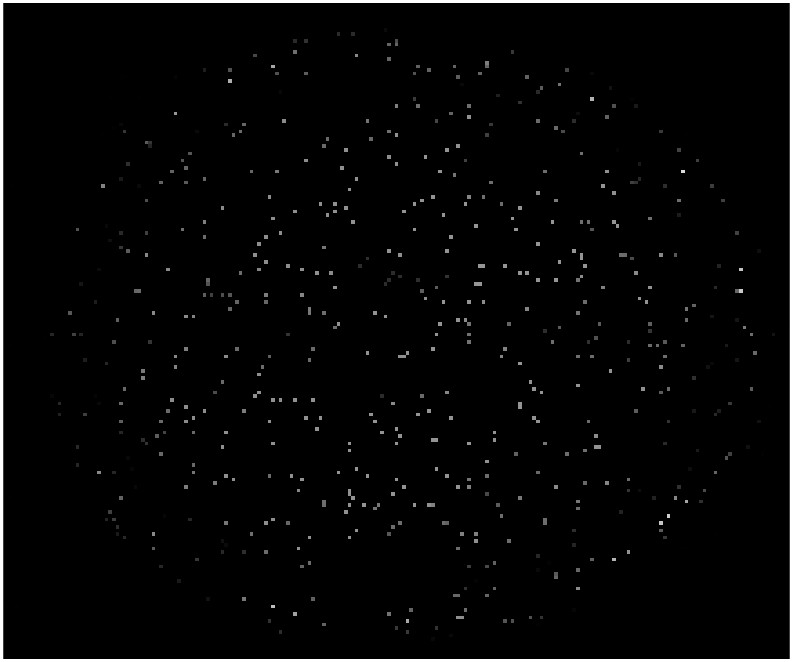}\\
			\includegraphics[width=1\linewidth]{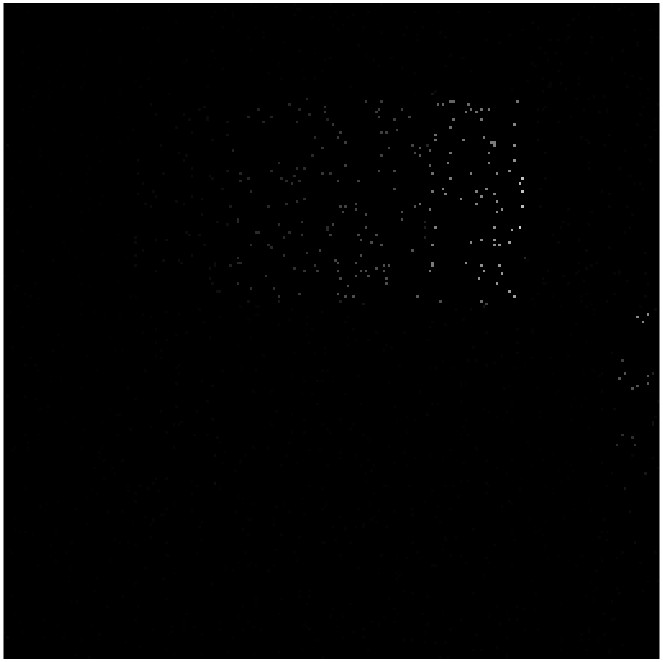}\\
			\includegraphics[width=1\linewidth]{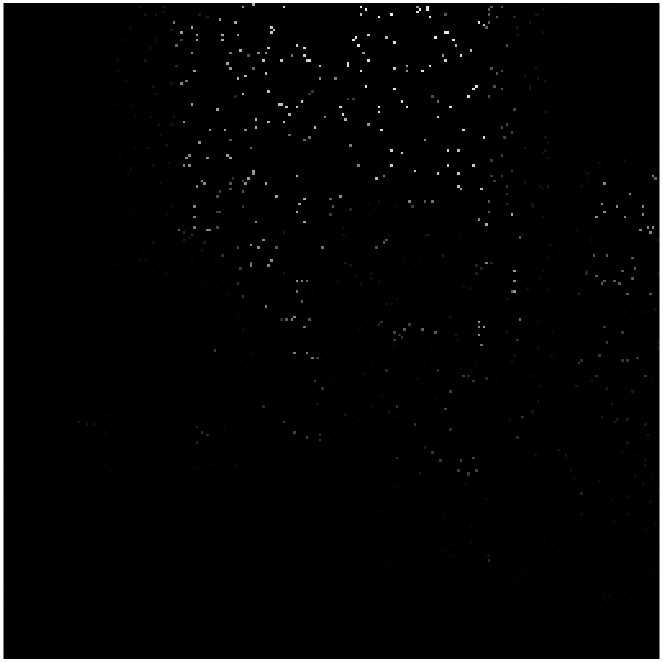}\\
			\includegraphics[width=1\linewidth]{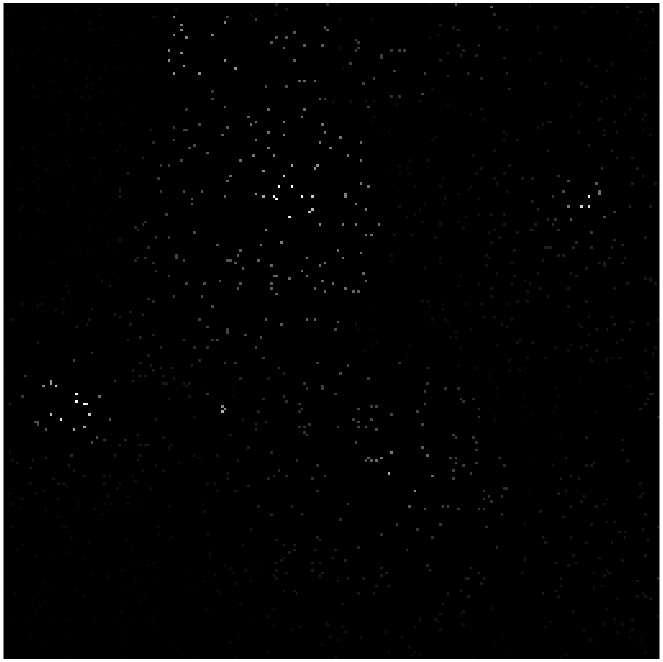}\\
			\includegraphics[width=1\linewidth]{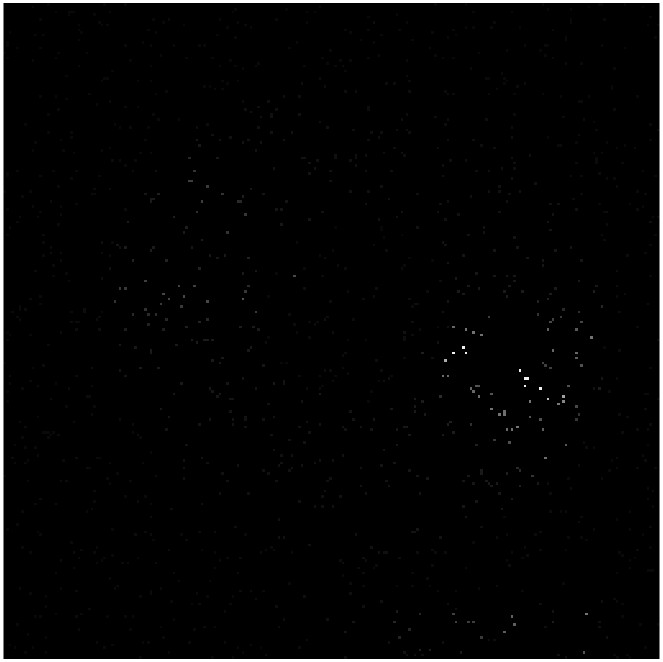}\\
			\includegraphics[width=1\linewidth]{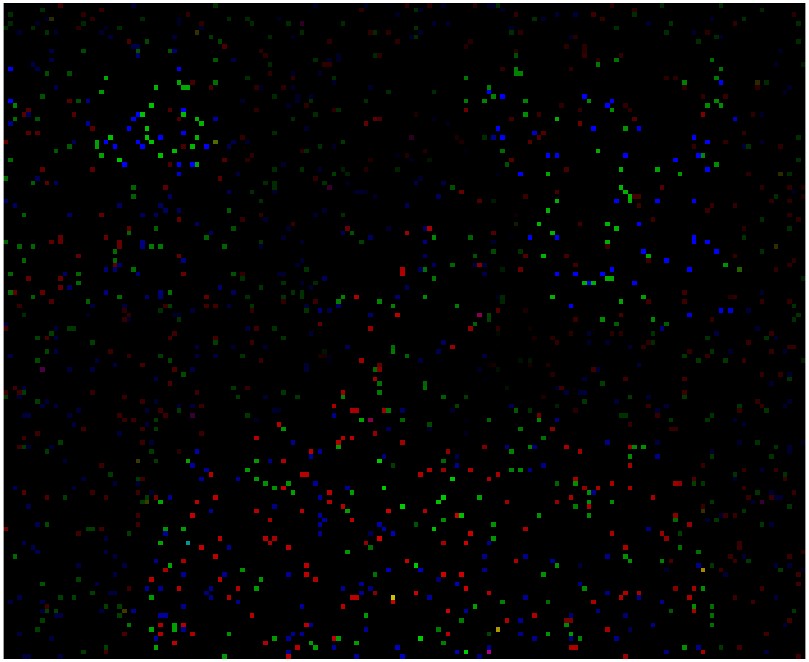}\\
			\includegraphics[width=1\linewidth]{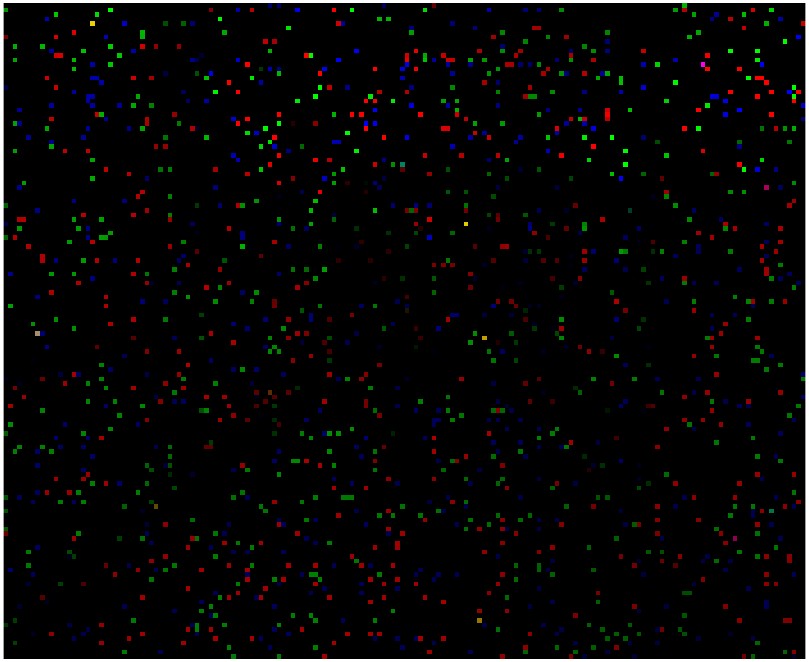}\\
			\includegraphics[width=1\linewidth]{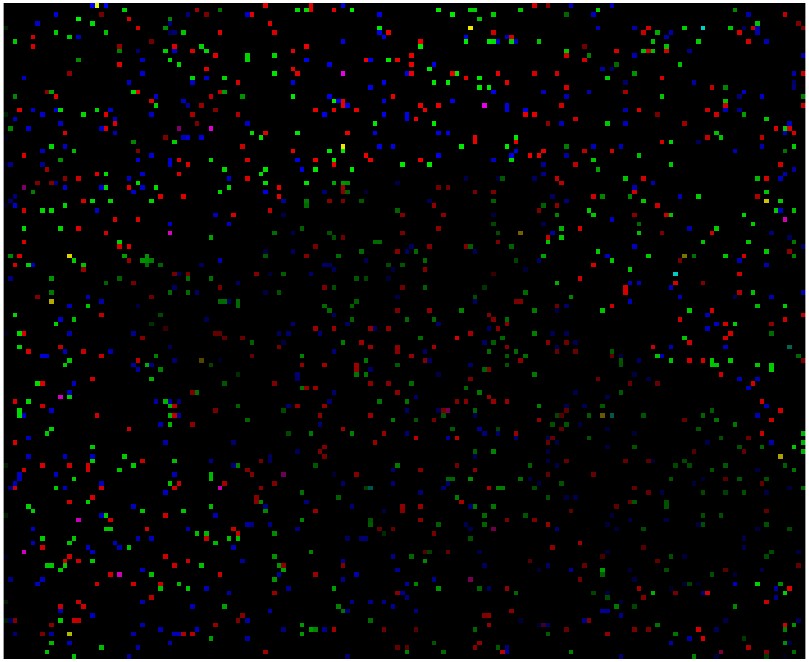}\\
			\includegraphics[width=1\linewidth]{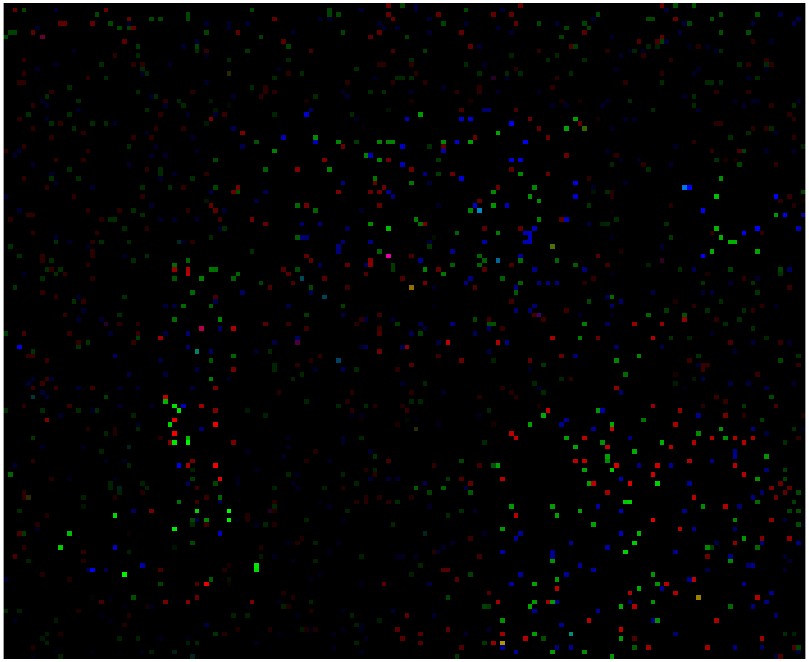}\\
			\includegraphics[width=1\linewidth]{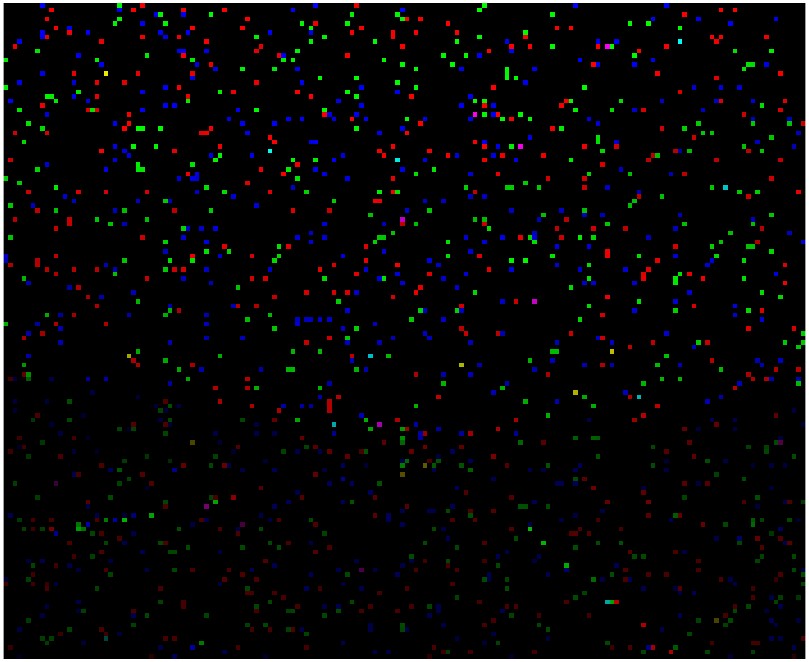}\\
			\includegraphics[width=1\linewidth]{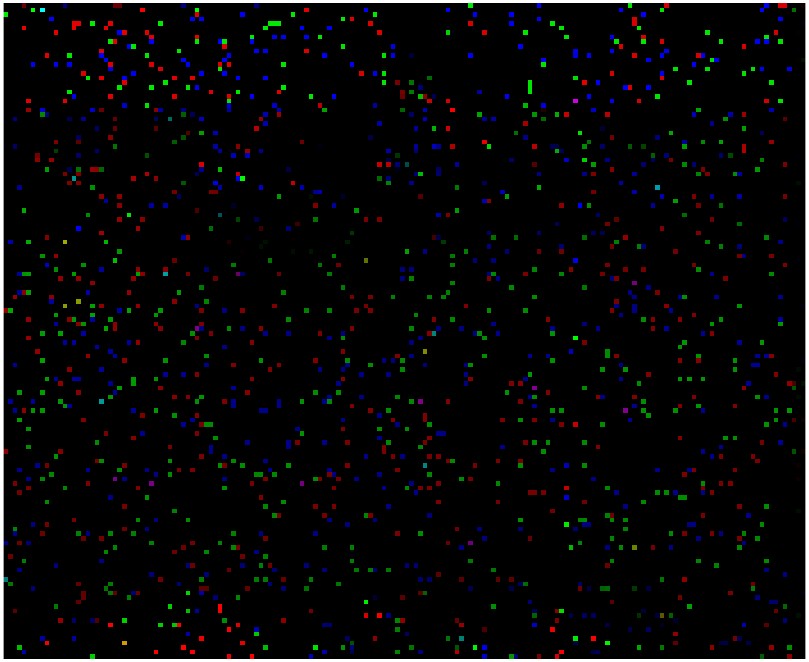} 
	\end{minipage}}\subfigure[TJLC]{
		\begin{minipage}[b]{0.105\linewidth}
			\includegraphics[width=1\linewidth]{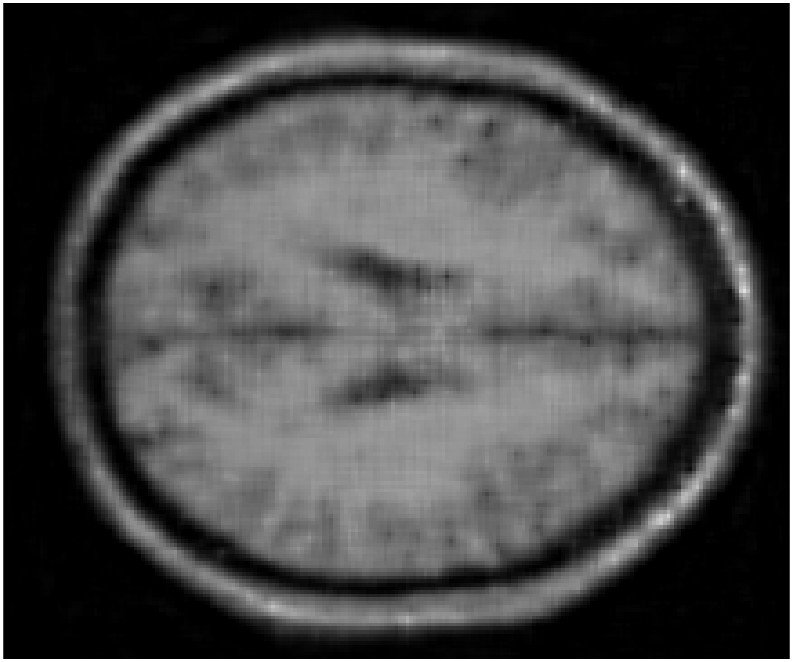}\\
			\includegraphics[width=1\linewidth]{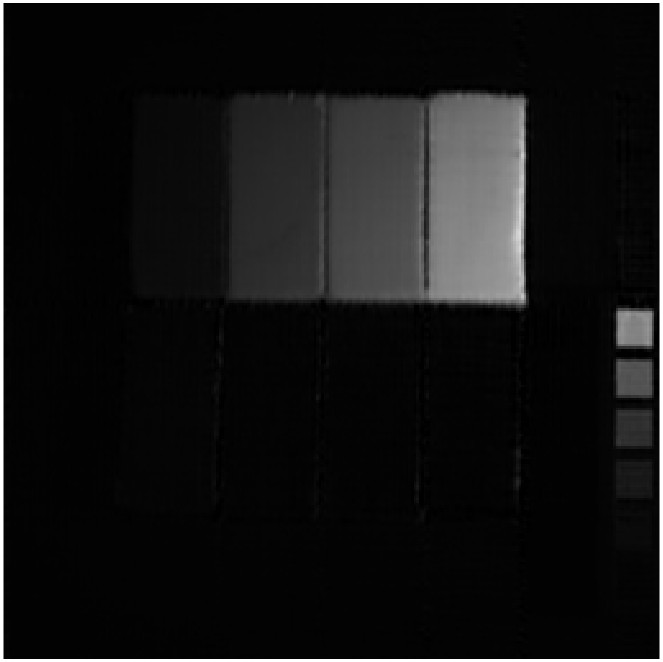}\\
			\includegraphics[width=1\linewidth]{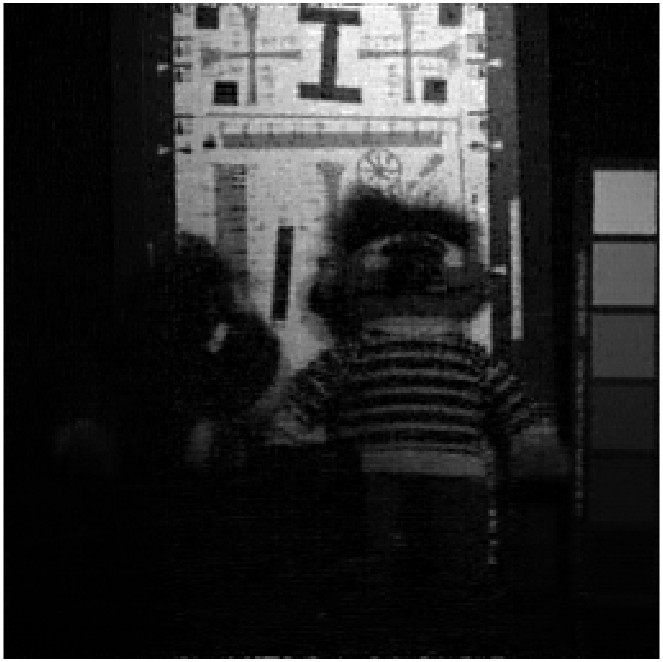}\\
			\includegraphics[width=1\linewidth]{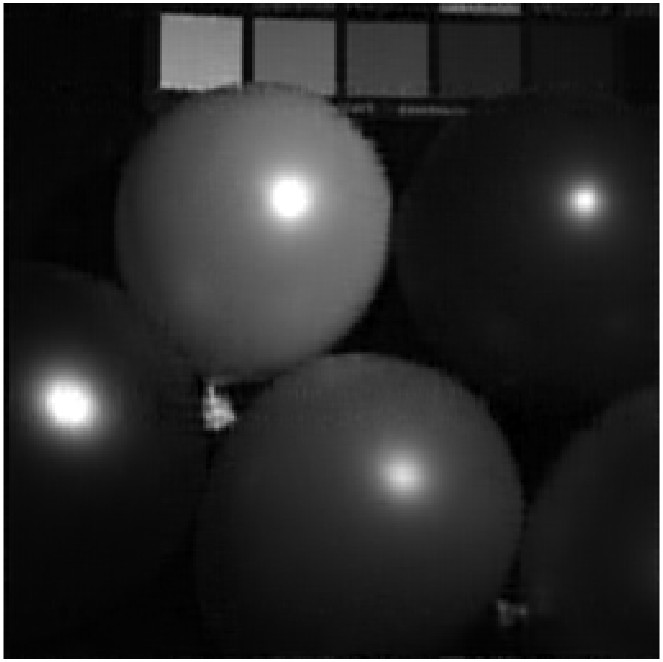}\\
			\includegraphics[width=1\linewidth]{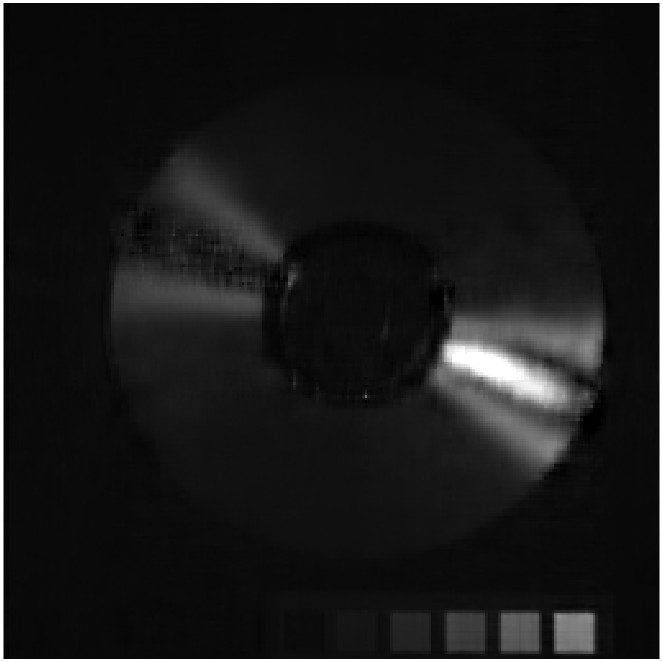}\\
			\includegraphics[width=1\linewidth]{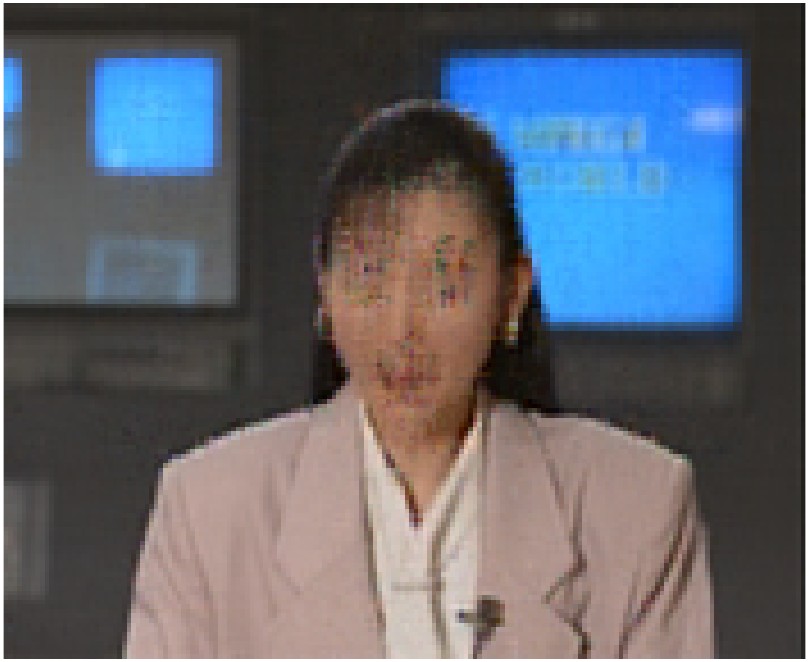}\\
			\includegraphics[width=1\linewidth]{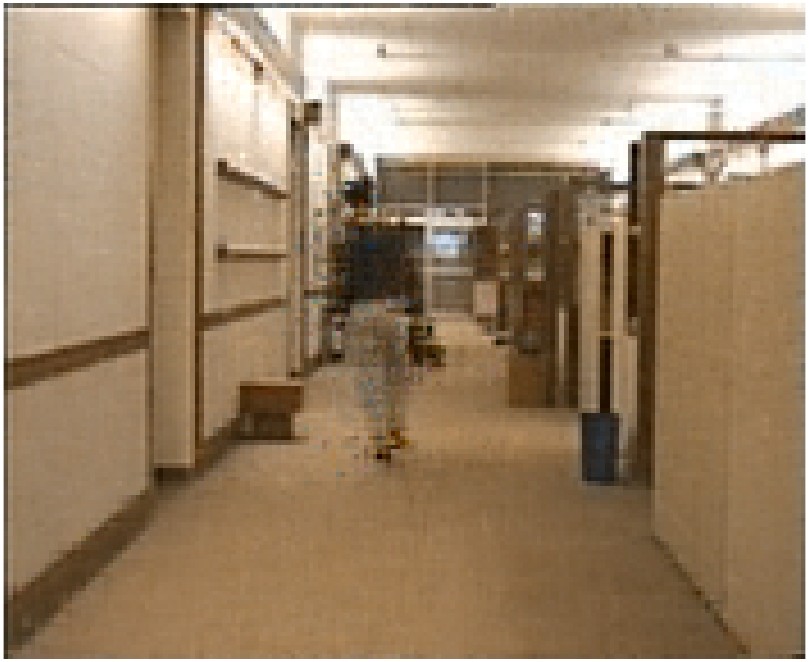}\\
			\includegraphics[width=1\linewidth]{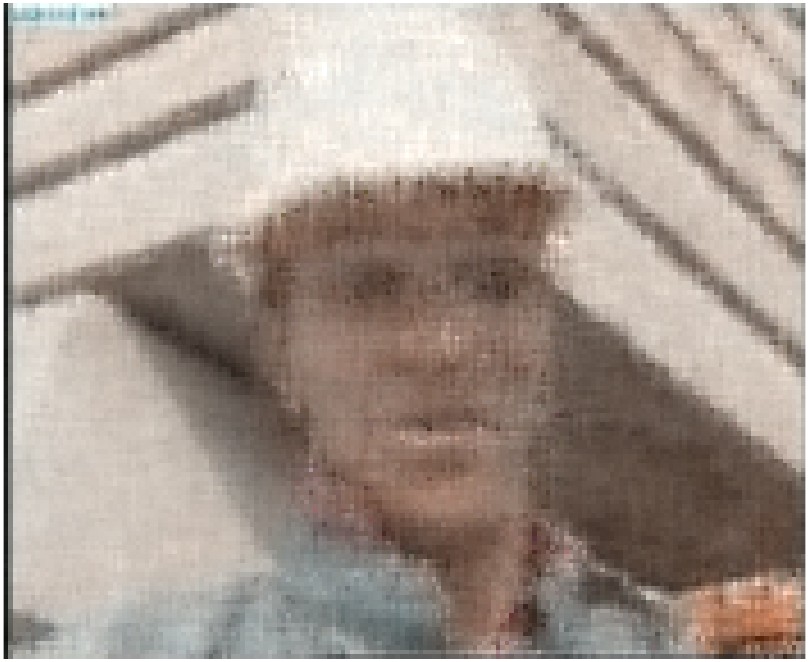}\\
			\includegraphics[width=1\linewidth]{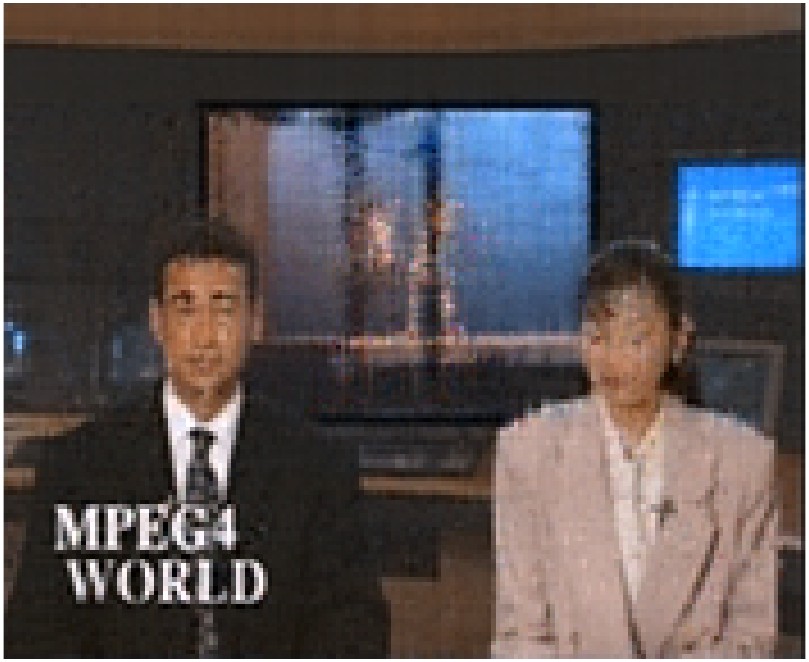}\\
			\includegraphics[width=1\linewidth]{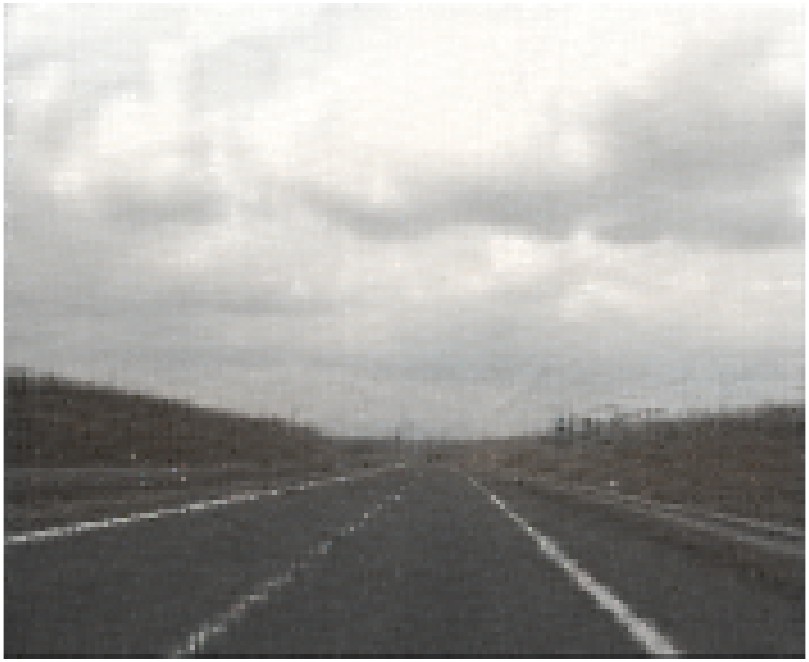}\\
			\includegraphics[width=1\linewidth]{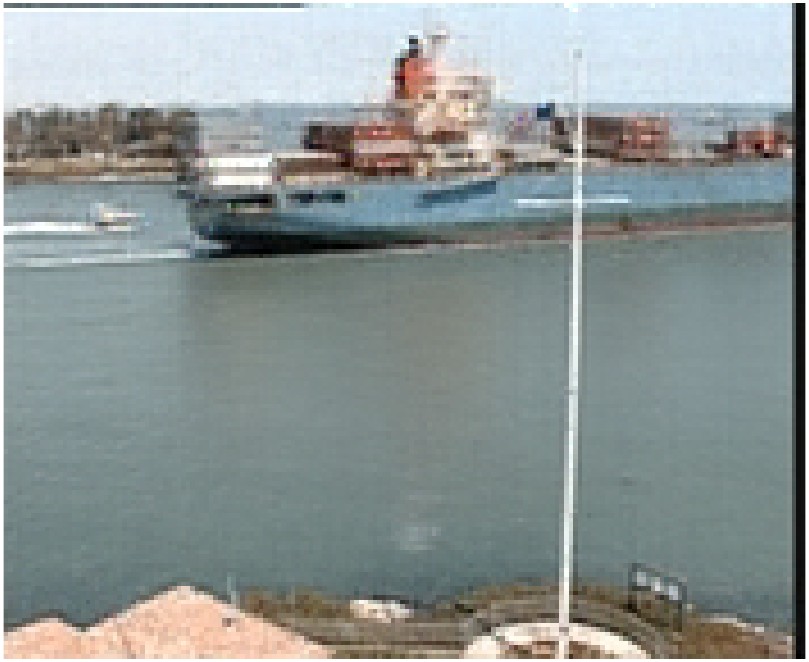} 
	\end{minipage}}\subfigure[Original]{
		\begin{minipage}[b]{0.105\linewidth}
			\includegraphics[width=1\linewidth]{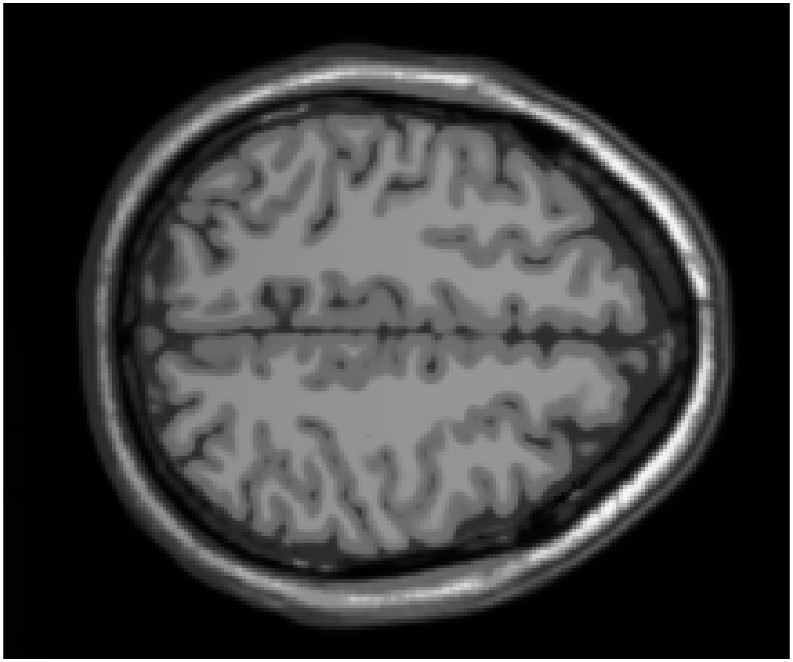}\\
			\includegraphics[width=1\linewidth]{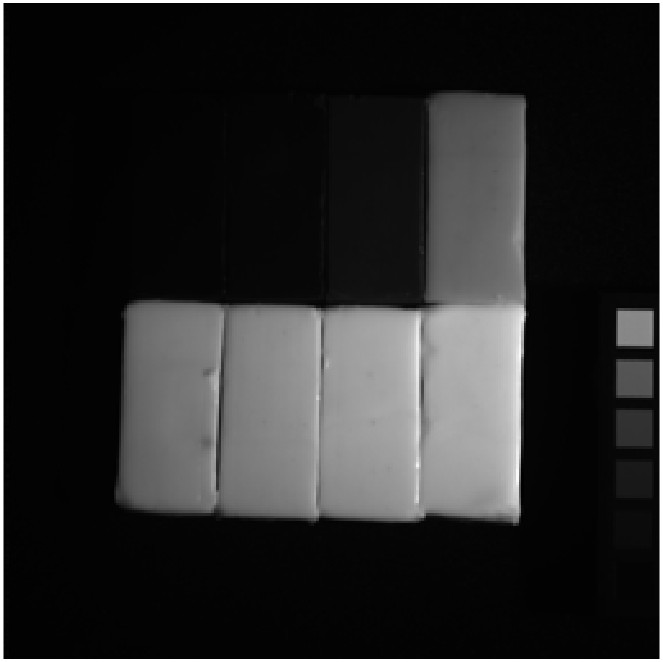}\\
			\includegraphics[width=1\linewidth]{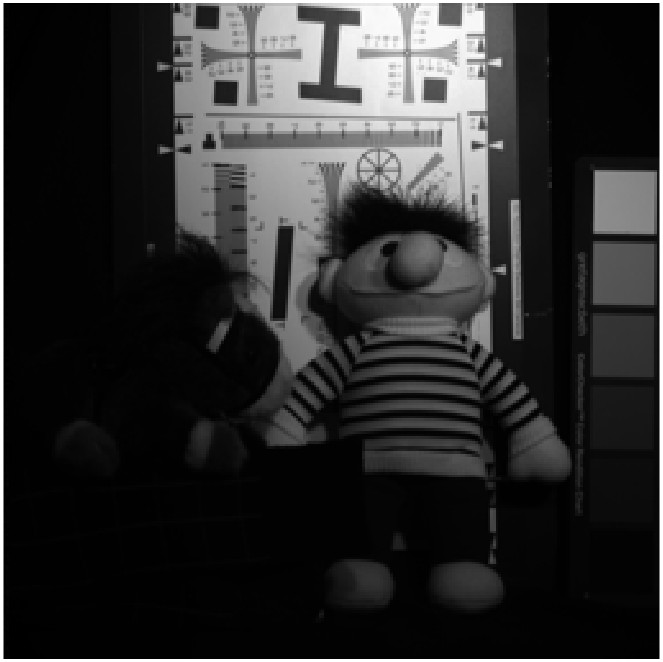}\\
			\includegraphics[width=1\linewidth]{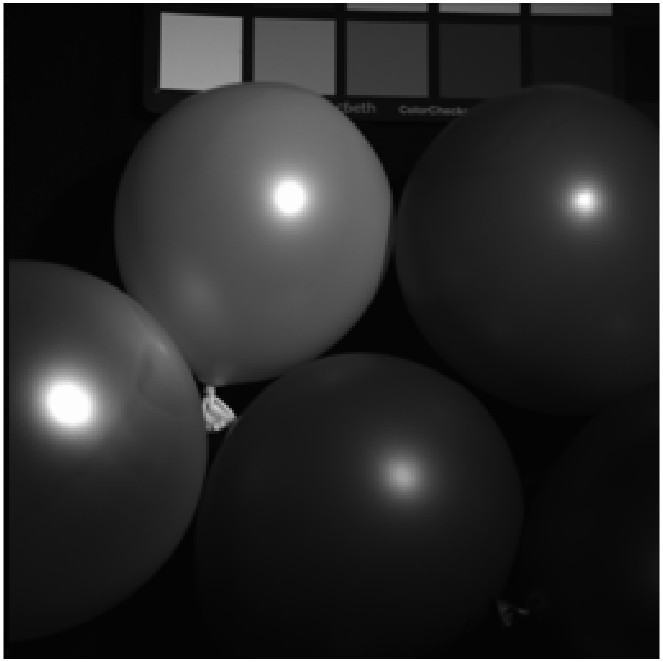}\\
			\includegraphics[width=1\linewidth]{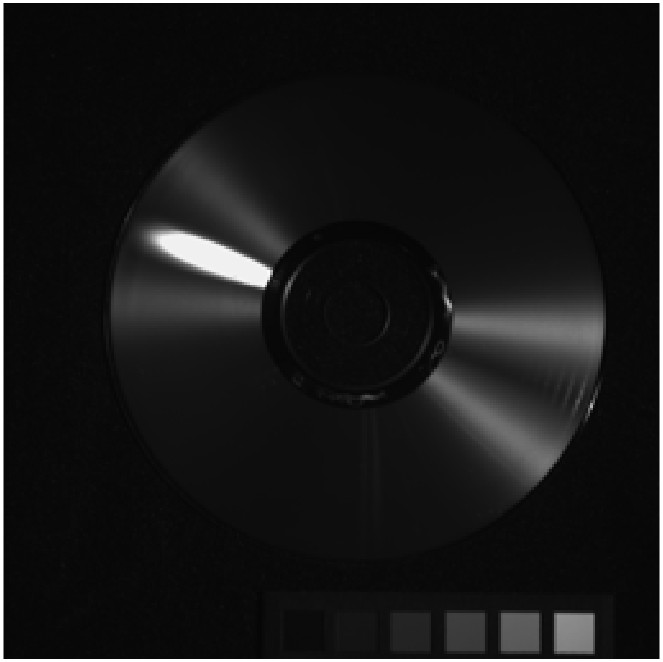}\\
			\includegraphics[width=1\linewidth]{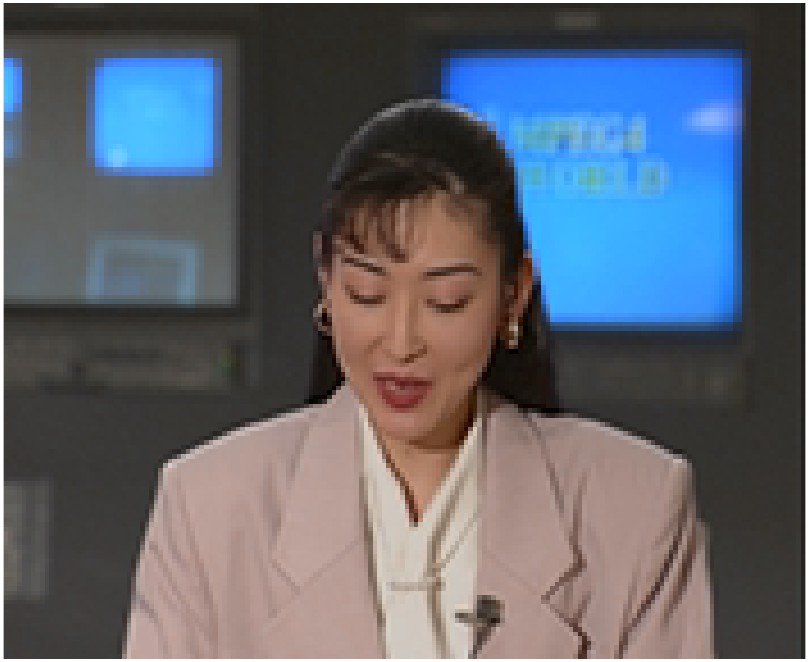}\\
			\includegraphics[width=1\linewidth]{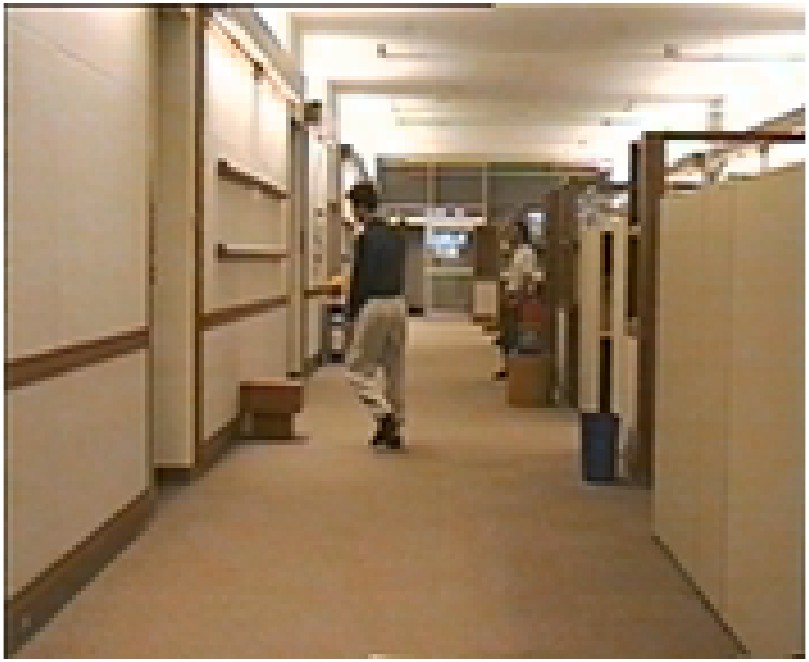}\\
			\includegraphics[width=1\linewidth]{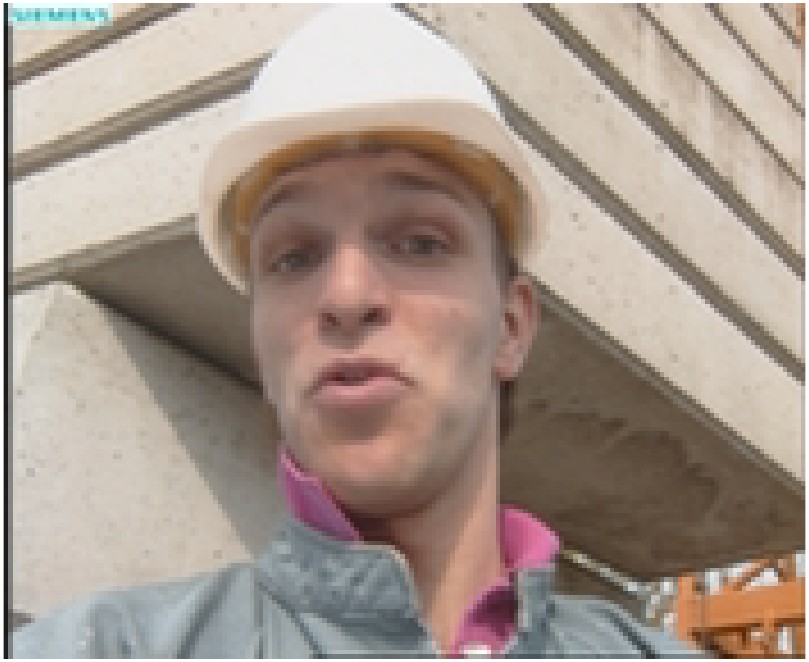}\\
			\includegraphics[width=1\linewidth]{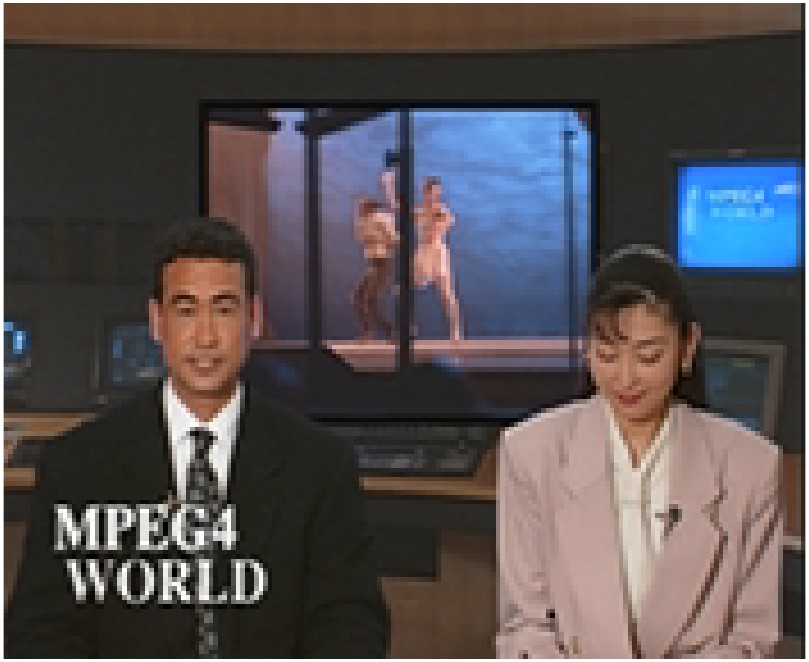}\\
			\includegraphics[width=1\linewidth]{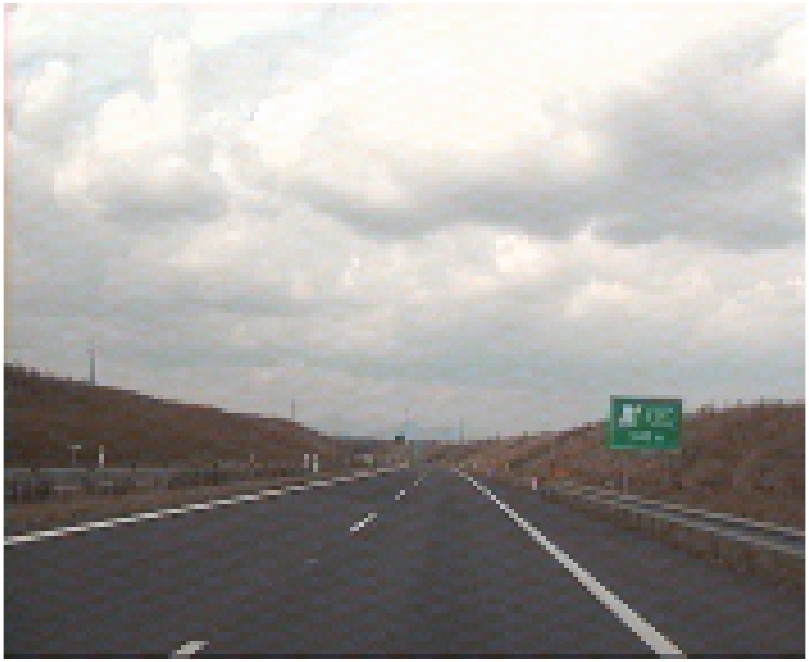}\\
			\includegraphics[width=1\linewidth]{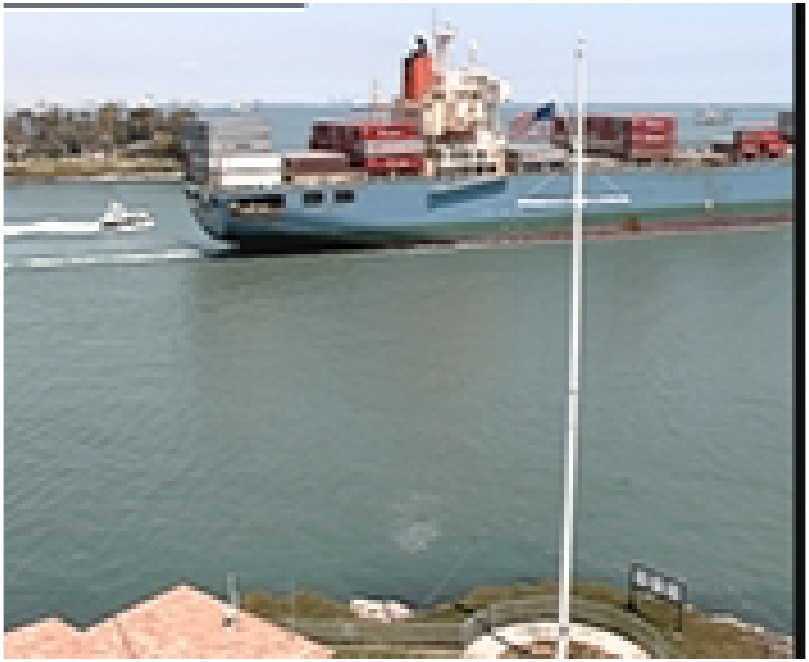} 
	\end{minipage}}\subfigure[MR=99\%]{
		\begin{minipage}[b]{0.105\linewidth}
			\includegraphics[width=1\linewidth]{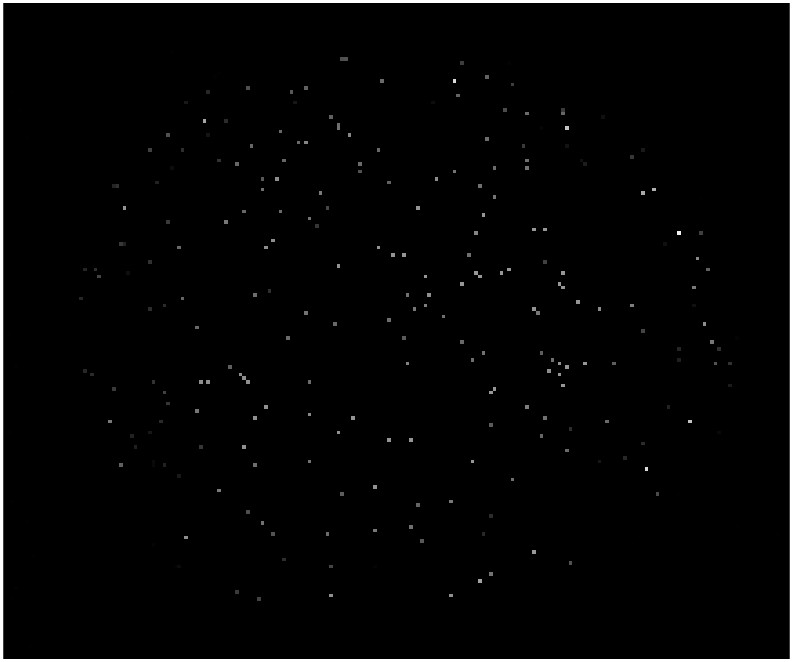}\\
			\includegraphics[width=1\linewidth]{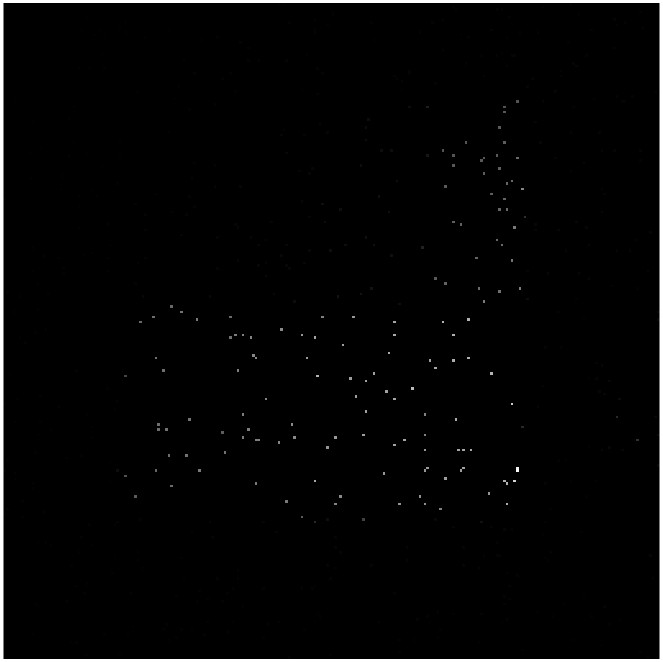}\\
			\includegraphics[width=1\linewidth]{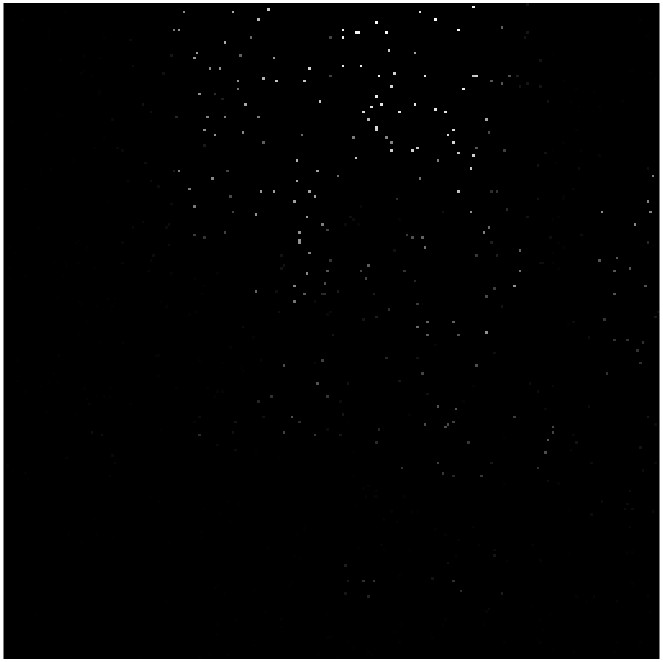}\\
			\includegraphics[width=1\linewidth]{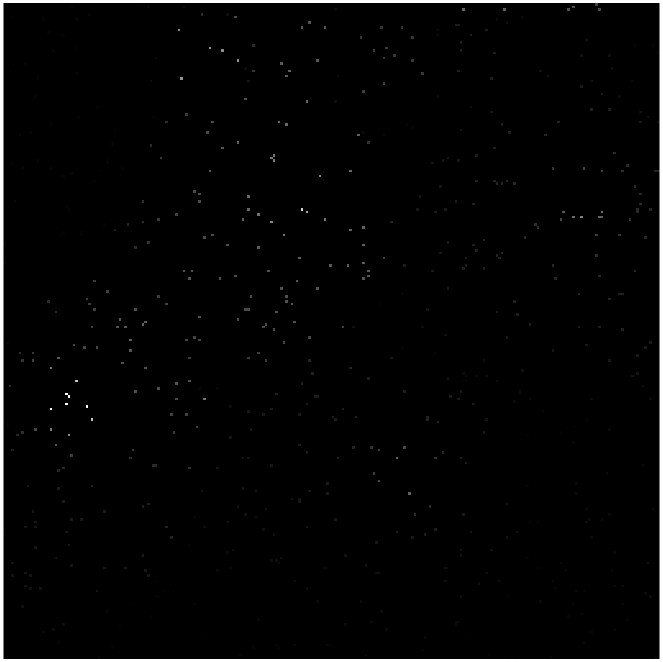}\\
			\includegraphics[width=1\linewidth]{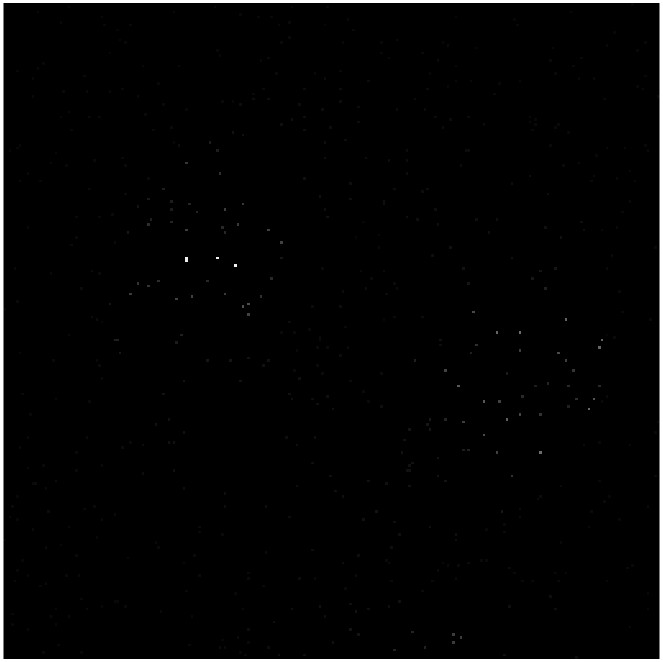}\\
			\includegraphics[width=1\linewidth]{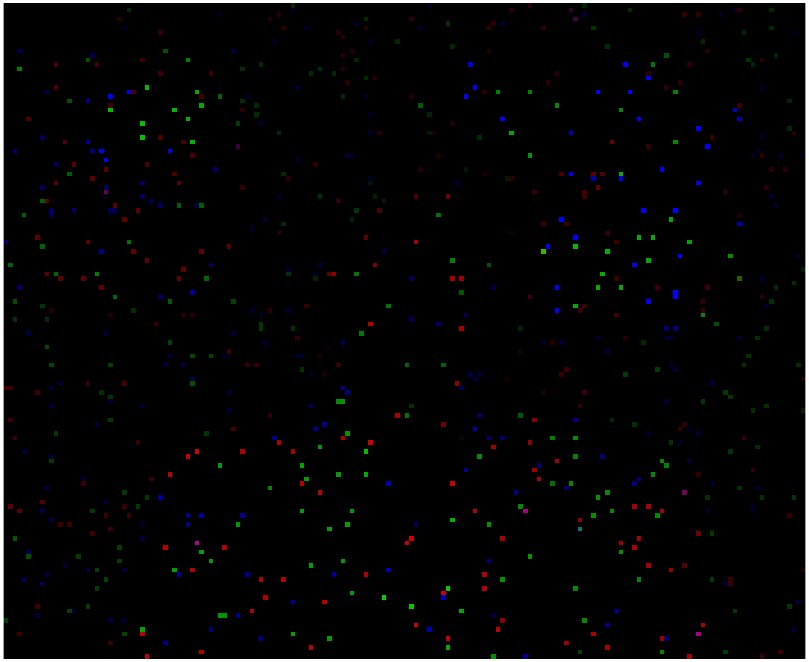}\\
			\includegraphics[width=1\linewidth]{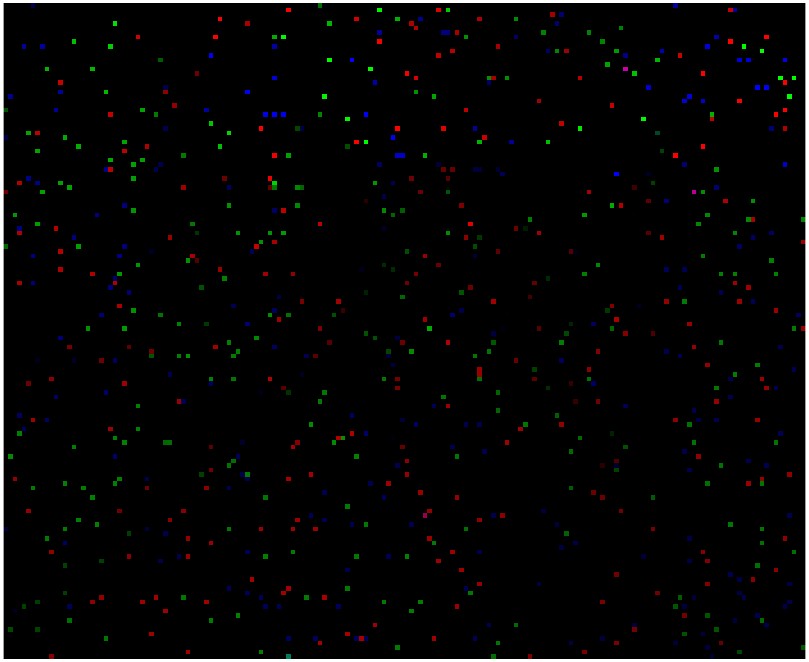}\\
			\includegraphics[width=1\linewidth]{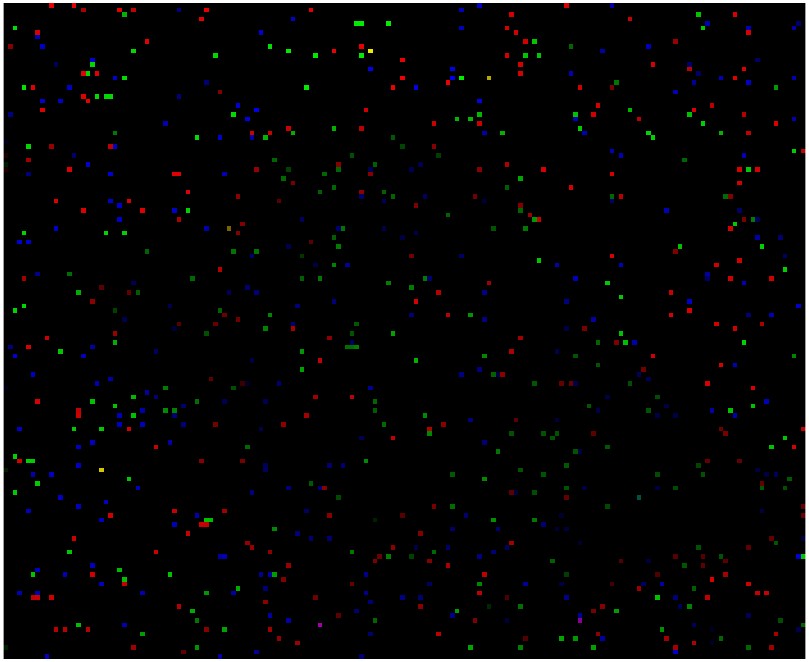}\\
			\includegraphics[width=1\linewidth]{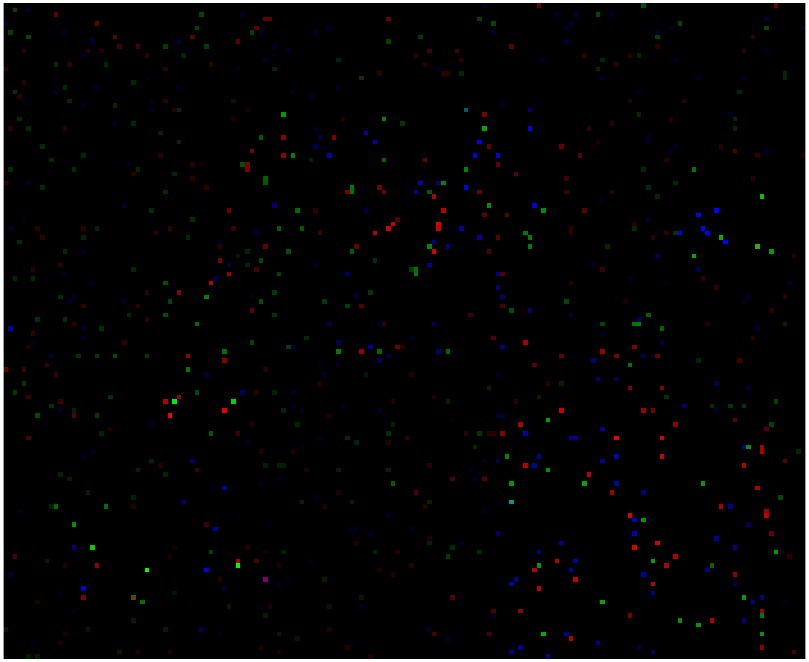}\\
			\includegraphics[width=1\linewidth]{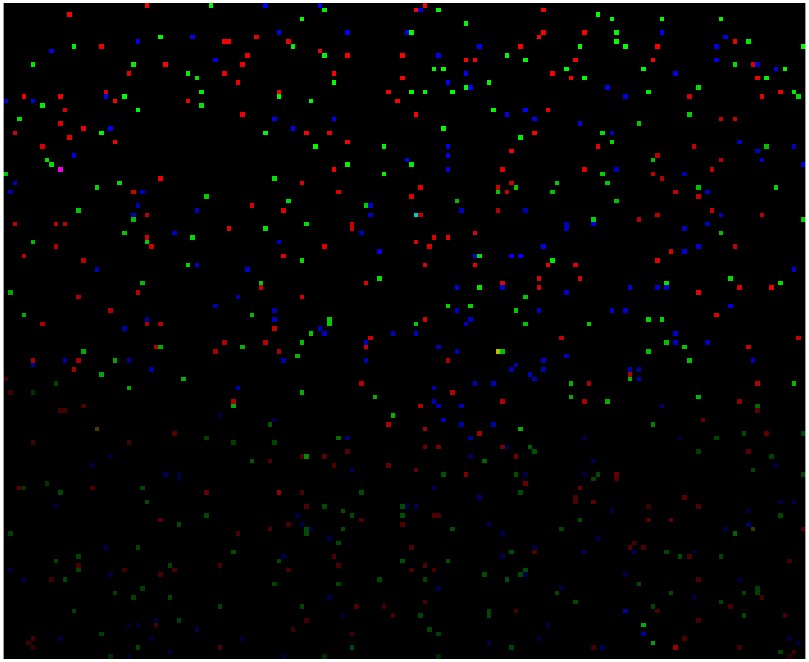}\\
			\includegraphics[width=1\linewidth]{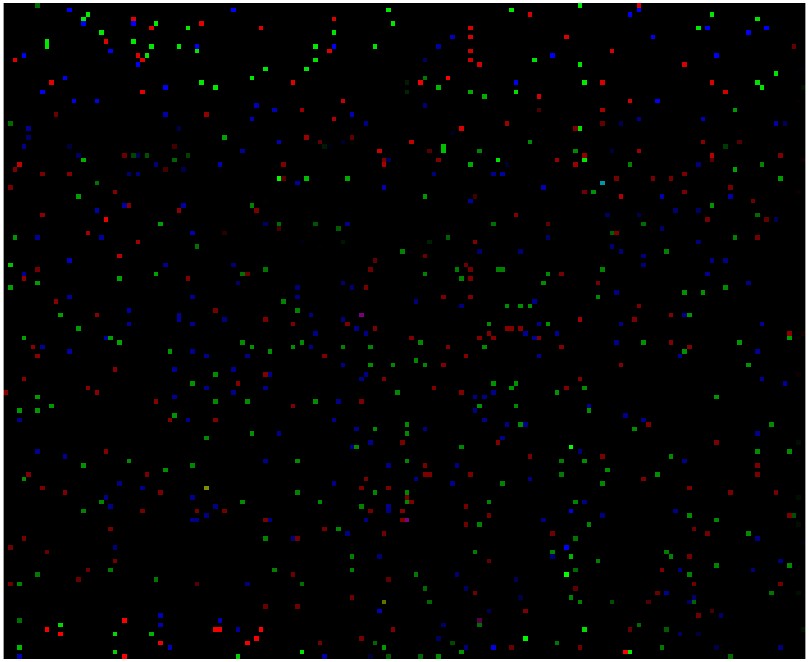} 
	\end{minipage}}\subfigure[TJLC]{
		\begin{minipage}[b]{0.105\linewidth}
		\includegraphics[width=1\linewidth]{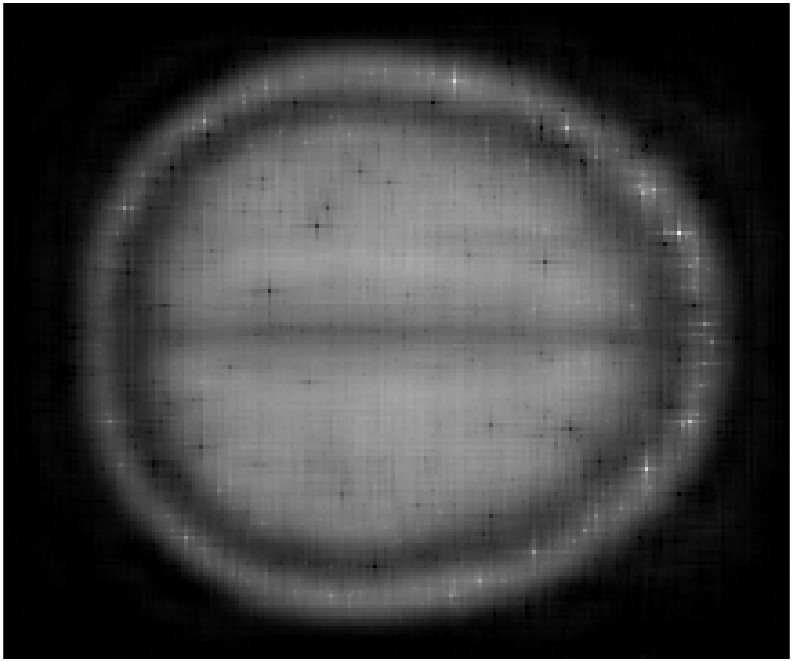}\\
		\includegraphics[width=1\linewidth]{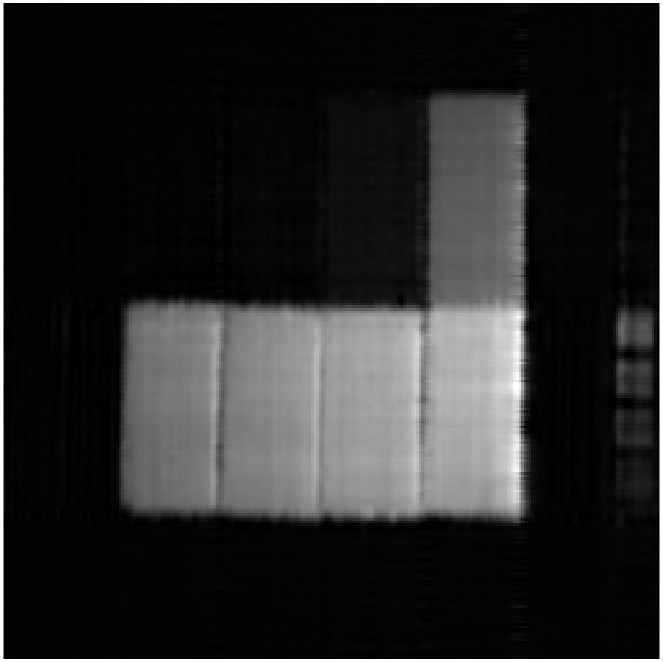}\\
		\includegraphics[width=1\linewidth]{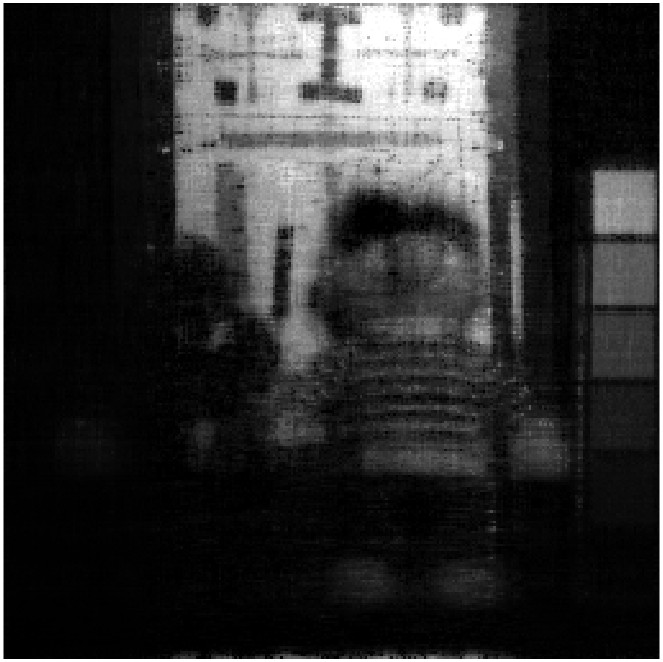}\\
		\includegraphics[width=1\linewidth]{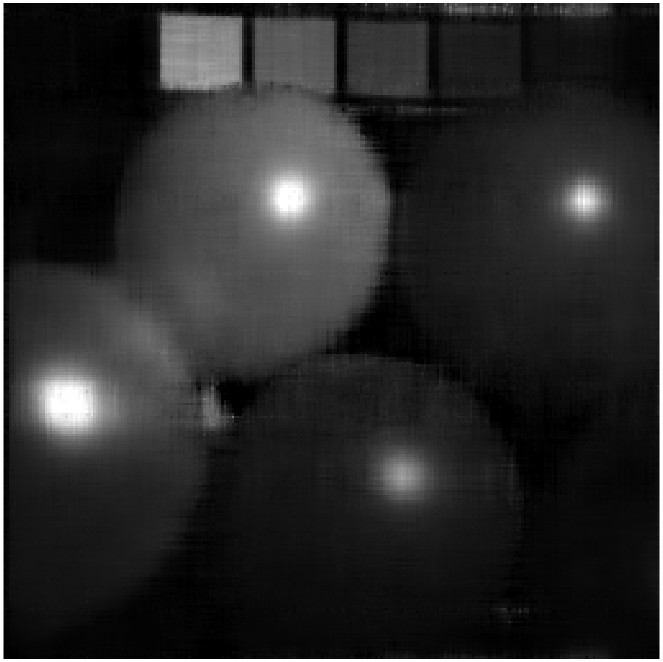}\\
		\includegraphics[width=1\linewidth]{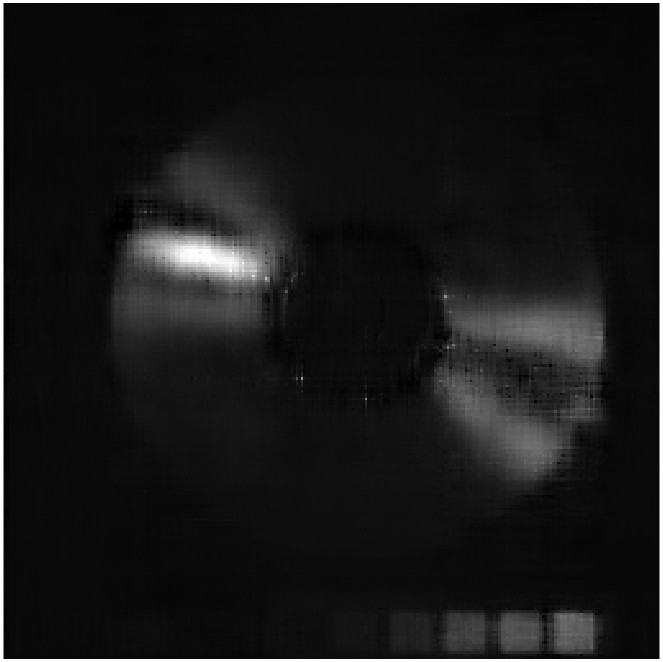}\\
		\includegraphics[width=1\linewidth]{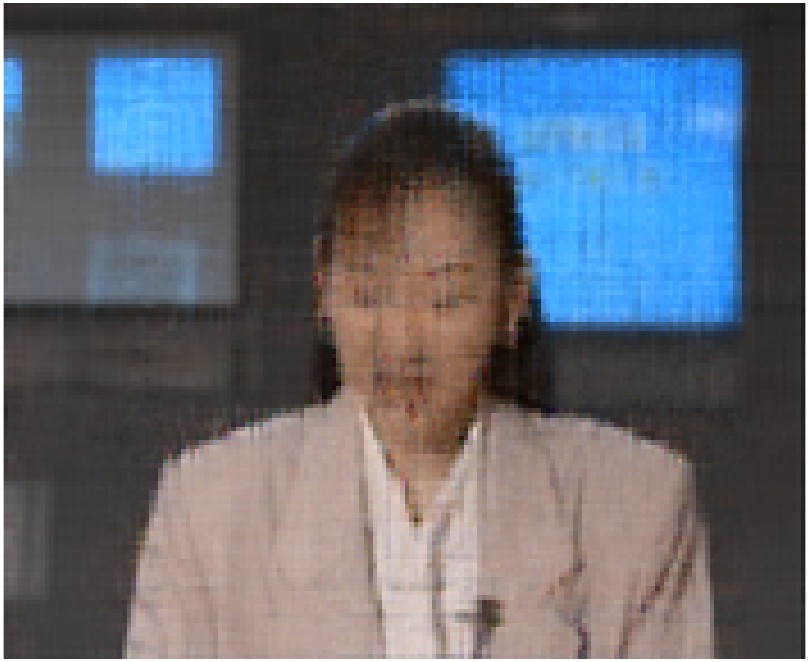}\\
		\includegraphics[width=1\linewidth]{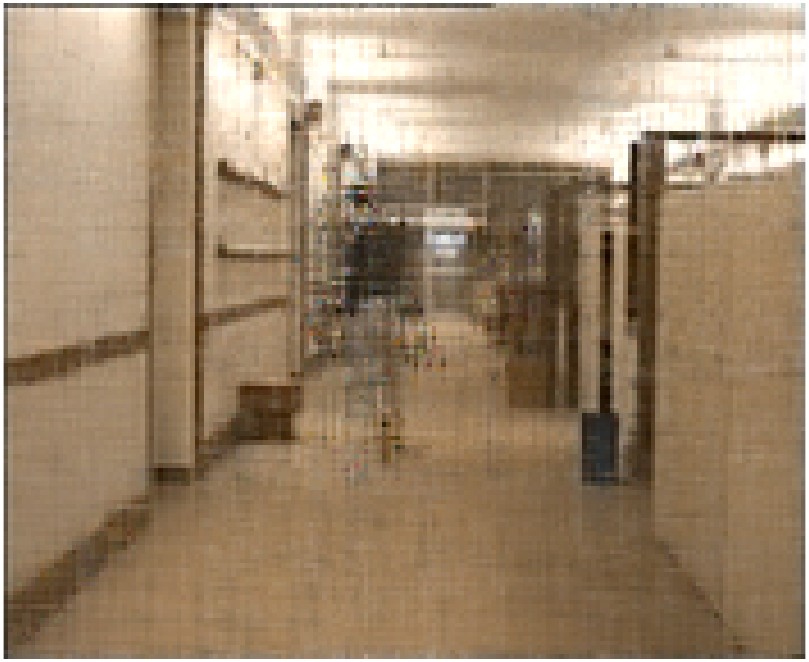}\\
		\includegraphics[width=1\linewidth]{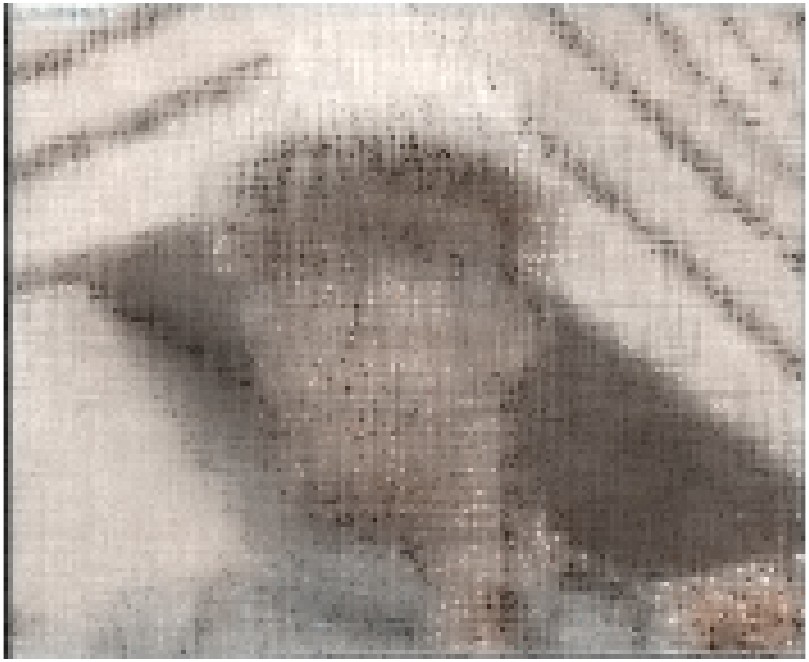}\\
		\includegraphics[width=1\linewidth]{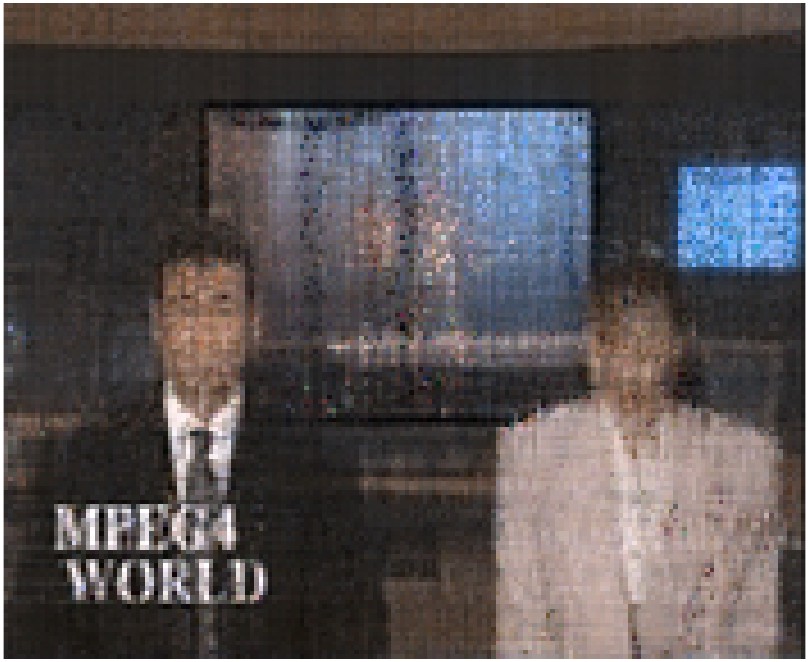}\\
		\includegraphics[width=1\linewidth]{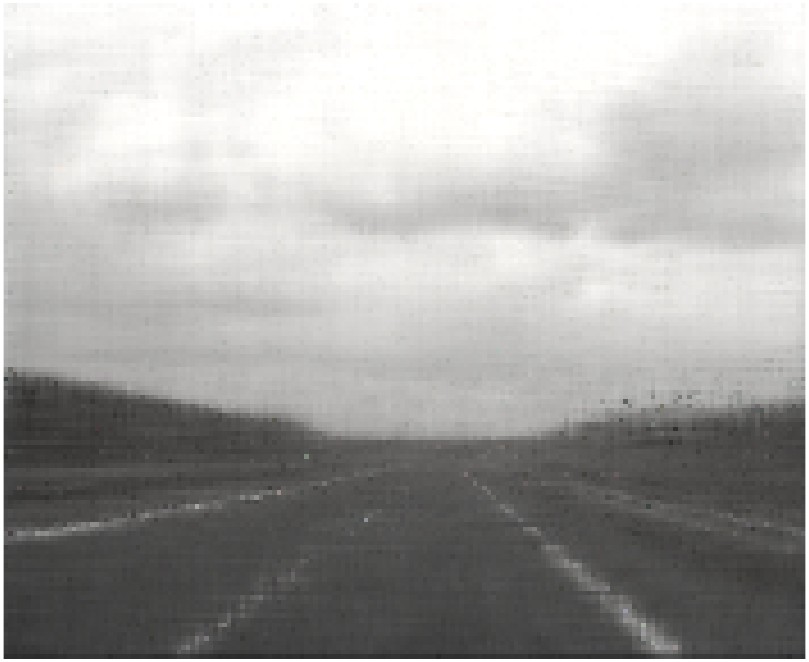}\\
		\includegraphics[width=1\linewidth]{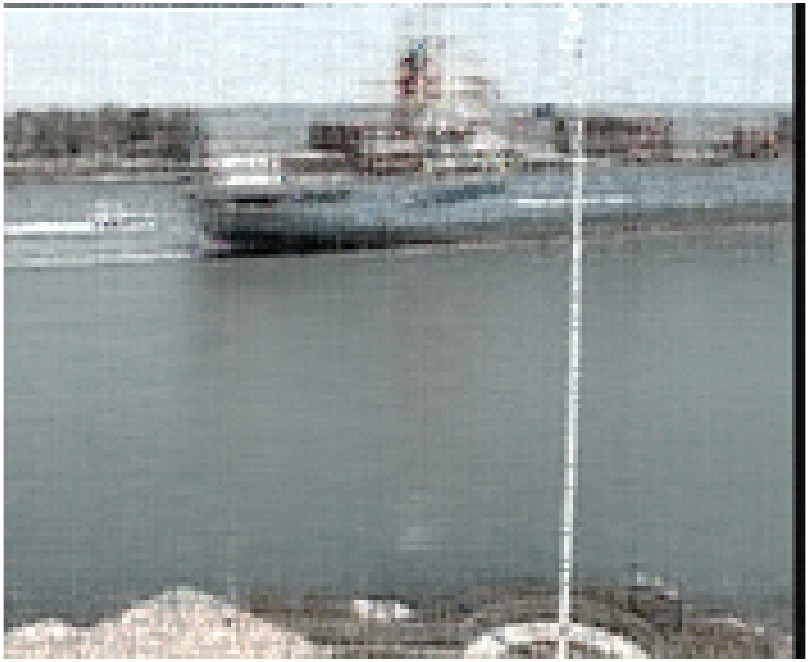} 
	\end{minipage}}
	
	\caption{Visual results for MRI, MSI, and CV datas. }
\label{CVMM}
\end{figure}
To further highlight the strong utilization of observational information by the TJLC method, we conducted experiments with higher missing rates for MRI, MSI, and CV datas. The parameters of the model at this point remain the same as those mentioned in the previous section. Fig. \ref{CVMM} illustrates the visual results for three missing rates: 96\%, 98\%, and 99\%. Specifically, the number of slices for MRI data is 80, 100, and 120 respectively; the number of bands for MSI data is 5, 15, and 25 respectively; and the number of frames for CV data is 5, 20, and 35 respectively. From the figure, it can be observed that even with the missing rate of 99\%, the proposed TJLC method can still perform recovery and identify the overall contours, objects, and scenes in the recovered images. When the missing rate decreases to 98\%, the proposed TJLC method shows a significant improvement in recovery performance. Compared to the missing rate of  99\%, there is a noticeable enhancement in detail recovery in the images at this point. Specifically, the face in the `` foreman " image has evolved from merely showing contour information to displaying discernible detailed facial expressions. At the missing rate of 96\%, the recovery effect of the proposed TJLC method is further significantly enhanced. Compared to the recovery images at the missing rate of 98\%, which still contain a considerable amount of blurry information, the images recovered at the missing rate of 96\% have noticeably regained a large amount of detail, with very little remaining blur. 
\begin{table}[!h]
	\centering
	\caption{The PSNR, SSIM, FSIM and ERGAS values for MRI, MSI, CV datas tested by observed and the TJLC method.}
	\label{HT}
	\resizebox{\textwidth}{!}{%
		\begin{tabular}{|c|c|cccc|cccc|cccc|}
			\hline
			\multicolumn{1}{|l|}{}     & MR       & \multicolumn{4}{c|}{96\%}         & \multicolumn{4}{c|}{98\%}         & \multicolumn{4}{c|}{99\%}         \\ \hline
			Image                       & PQIs     & PSNR   & SSIM  & FSIM  & ERGAS    & PSNR   & SSIM  & FSIM  & ERGAS    & PSNR   & SSIM  & FSIM  & ERGAS    \\ \hline
			\multirow{2}{*}{MRI}       & Observed & 11.354 & 0.308 & 0.522 & 1026.429 & 11.265 & 0.303 & 0.505 & 1037.033 & 11.220 & 0.300 & 0.498 & 1042.367 \\
			& TJLC     & 30.029 & 0.862 & 0.893 & 120.962  & 25.684 & 0.697 & 0.819 & 202.607  & 20.691 & 0.436 & 0.728 & 365.667  \\
			\multirow{2}{*}{clay}      & Observed & 18.674 & 0.244 & 0.770 & 797.808  & 18.581 & 0.222 & 0.790 & 806.173  & 18.537 & 0.211 & 0.804 & 810.500  \\
			& TJLC     & 47.089 & 0.994 & 0.990 & 28.104   & 41.746 & 0.983 & 0.972 & 52.578   & 35.298 & 0.947 & 0.924 & 110.962  \\
			\multirow{2}{*}{chart\_and\_stuffed\_toy}     & Observed & 11.106 & 0.214 & 0.610 & 1135.024 & 11.017 & 0.198 & 0.610 & 1146.743 & 10.972 & 0.189 & 0.617 & 1152.649 \\
			& TJLC     & 35.544 & 0.959 & 0.960 & 67.932   & 30.559 & 0.878 & 0.890 & 121.400  & 25.535 & 0.750 & 0.810 & 217.602  \\
			\multirow{2}{*}{balloons}  & Observed & 13.445 & 0.066 & 0.671 & 863.801  & 13.354 & 0.050 & 0.699 & 872.898  & 13.311 & 0.040 & 0.727 & 877.182  \\
			& TJLC     & 42.126 & 0.990 & 0.985 & 31.938   & 37.747 & 0.972 & 0.960 & 52.759   & 32.506 & 0.922 & 0.922 & 97.637   \\
			\multirow{2}{*}{cd}        & Observed & 17.462 & 0.081 & 0.716 & 720.531  & 17.371 & 0.057 & 0.730 & 728.196  & 17.326 & 0.045 & 0.749 & 731.918  \\
			& TJLC     & 35.568 & 0.972 & 0.959 & 92.203   & 32.470 & 0.953 & 0.939 & 131.314  & 28.849 & 0.914 & 0.902 & 201.202  \\
			\multirow{2}{*}{akiyo}     & Observed & 7.116  & 0.013 & 0.477 & 1134.997 & 7.028  & 0.008 & 0.491 & 1146.611 & 6.983  & 0.005 & 0.516 & 1152.565 \\
			& TJLC     & 35.839 & 0.970 & 0.981 & 41.978   & 31.888 & 0.925 & 0.954 & 65.682   & 27.560 & 0.815 & 0.896 & 107.711  \\
			\multirow{2}{*}{hall}      & Observed & 5.671  & 0.008 & 0.389 & 1189.258 & 5.581  & 0.005 & 0.397 & 1201.700 & 5.537  & 0.003 & 0.423 & 1207.780 \\
			& TJLC     & 32.380 & 0.934 & 0.965 & 56.455   & 29.044 & 0.892 & 0.939 & 81.658   & 25.409 & 0.784 & 0.881 & 124.023  \\
			\multirow{2}{*}{foreman}   & Observed & 4.122  & 0.006 & 0.411 & 1284.712 & 4.032  & 0.003 & 0.421 & 1298.030 & 3.988  & 0.002 & 0.441 & 1304.671 \\
			& TJLC     & 27.328 & 0.818 & 0.886 & 89.754   & 24.203 & 0.685 & 0.827 & 128.062  & 21.131 & 0.523 & 0.768 & 182.270  \\
			\multirow{2}{*}{news}      & Observed & 8.682  & 0.018 & 0.450 & 1060.246 & 8.593  & 0.011 & 0.448 & 1071.163 & 8.549  & 0.006 & 0.457 & 1076.642 \\
			& TJLC     & 30.470 & 0.919 & 0.948 & 88.753   & 26.924 & 0.837 & 0.904 & 132.380  & 22.170 & 0.673 & 0.823 & 227.498  \\
			\multirow{2}{*}{highway}   & Observed & 3.355  & 0.006 & 0.443 & 1362.964 & 3.264  & 0.004 & 0.478 & 1377.222 & 3.221  & 0.002 & 0.517 & 1384.116 \\
			& TJLC     & 31.276 & 0.876 & 0.941 & 55.304   & 29.267 & 0.808 & 0.908 & 69.744   & 27.014 & 0.733 & 0.870 & 90.352   \\
			\multirow{2}{*}{container} & Observed & 4.548  & 0.006 & 0.390 & 1250.345 & 4.458  & 0.003 & 0.406 & 1263.453 & 4.414  & 0.002 & 0.442 & 1269.876 \\
			& TJLC     & 34.724 & 0.961 & 0.979 & 40.321   & 29.341 & 0.906 & 0.943 & 72.791   & 24.449 & 0.800 & 0.875 & 127.235  \\ \hline
		\end{tabular}%
	}
\end{table}

In addition to the visual presentation of results, we also list the quantitative PQIs results for missing rates of 96\%, 98\%, and 99\% in Table \ref{HT}. Through the results in the table, it can be observed that the proposed TJLC method has achieved recovery at a 99\% missing rate, with quantitative metrics such as PSNR and SSIM reaching satisfactory levels. As the amount of observed information increases, i.e., the missing rate decreases to 98\%, the TJLC method exhibits a significant improvement in the recovery of these images. Even with just a 1\% increase in observed information, the improvement in recovery is considerable.

In summary, the proposed TJLC method demonstrates effective recovery even with minimal observed information, as evidenced by its performance at a 99\% missing rate. Moreover, as the amount of observed information increases, both the visual and quantitative results show significant enhancement, highlighting the method's strong utilization of observed information.

\section{Conclusion and future work}
This paper proposes a new LRTC method, namely the TJLC method, which enables a complete tensor to simultaneously possess two low-rank tensor structures, i.e., the tensor Tucker rank and the tensor tubal rank. The proposed tensor joint rank efficiently utilizes this structures, thereby improving the utilization of known element information and ensuring the efficiency of the proposed TJLC method. In addition, the newly proposed tensor logarithmic composite norm not only greatly improves the accuracy of recovery but also solves the problem that the tensor joint rank containing different low-rank tensor structures cannot be directly applied to LRTC problem. Further, the experiments show that the proposed TJLC method can achieve good visual effects and high numerical quantitative results on various data. This further validates the superiority and efficiency of the proposed TJLC method. 

The proposed TJLC method has achieved satisfactory recovery results with observed information as low as 1\%. However, we will delve deeper into how to achieve better results with only 1\% observed information and how to achieve satisfactory recovery with even less observed information. This aspect will be the focus of our further research.



\section*{Acknowledgements}
This work was supported by the National Nature Science Foundation of China under Grant 12071196, Grant 12201267; in part by the Natural Science Foundation of Gansu Province under Grant 22JR5RA559.


\end{document}